\documentclass[11pt]{article}
\usepackage[utf8]{inputenc} 
\usepackage[T1]{fontenc}    
\usepackage{amsmath}
\usepackage{amsthm}
\usepackage{amssymb}
\usepackage[normalem]{ulem}
\usepackage[left=1in,right=1in,top=1in,bottom=1in]{geometry}

\RequirePackage{algorithm}
\RequirePackage{algorithmic}

\usepackage{bbm}
\usepackage{mathtools}
\usepackage{graphicx}
\usepackage{tcolorbox}
\usepackage{enumitem}
\usepackage{hyperref}      
\usepackage{url}            
\usepackage{booktabs}       
\usepackage{amsfonts}       
\usepackage{nicefrac}       
\usepackage{microtype}      
\usepackage{xcolor}         
\usepackage{subcaption}
\usepackage{thmtools}
\usepackage{thm-restate}
\usepackage{array}
\usepackage[english]{babel}
\usepackage[backend=biber, style=alphabetic, minalphanames=3]{biblatex}
\addbibresource{ref.bib}
\usepackage{etoolbox}
\usepackage{xparse}
\usepackage{etoc}

\definecolor{mydarkblue}{rgb}{0,0.08,0.45}
  \hypersetup{ %
    pdftitle={},
    pdfsubject={Proceedings of the International Conference on Machine Learning 2025},
    pdfkeywords={},
    pdfborder=0 0 0,
    pdfpagemode=UseNone,
    colorlinks=true,
    linkcolor=mydarkblue,
    citecolor=mydarkblue,
    filecolor=mydarkblue,
    urlcolor=mydarkblue,
    }

\usepackage{tikz}

\newcommand{\cA}{\mathcal{A}}

\newcommand{\cX}{\ensuremath{\mathcal{X}}}
\newcommand{\cY}{\ensuremath{\mathcal{Y}}}
\newcommand{\cZ}{\ensuremath{\mathcal{Z}}}
\newcommand{\cD}{\mathcal{D}}
\newcommand{\cI}{\mathcal{I}}
\newcommand{\cL}{\mathcal{L}}
\newcommand{\cG}{\mathcal{G}}
\newcommand{\cH}{\mathcal{H}}
\newcommand{\cB}{\mathcal{B}}

\newcommand{\cV}{\mathcal{V}}
\newcommand{\cE}{\mathcal{E}}
\newcommand{\cS}{\mathcal{S}}

\newcommand{\cN}{\mathcal{N}}

\newcommand{\cW}{\mathcal{W}}
\newcommand{\cK}{\mathcal{K}}

\newcommand{\cU}{\mathcal{U}}

\newcommand{\bE}{\mathbf{E}}

\newcommand{\re}{\mathbb{R}}
\newcommand{\E}{\mathbb{E}}
\newcommand{\Prob}{\mathbb{P}}

\newcommand{\priv}{\text{priv}}

\newcommand{\norm}[1]{\left\|#1\right\|}
\newcommand{\fnorm}[1]{\left\|#1\right\|_F}

\newcommand{\twonorm}[1]{\left\|#1\right\|_2}

\DeclareMathOperator*{\polylog}{polylog}
\DeclareMathOperator*{\SG}{SG}
\DeclareMathOperator*{\tr}{tr}
\DeclarePairedDelimiter\abs{\lvert}{\rvert}%
\DeclarePairedDelimiter\Verts{\lVert}{\rVert}

\DeclareMathOperator*{\argmin}{arg\,min}

\DeclarePairedDelimiter\floor{\lfloor}{\rfloor}

\makeatletter
\let\oldabs\abs
\def\abs{\@ifstar{\oldabs}{\oldabs*}}
\let\oldnorm\norm
\def\norm{\@ifstar{\oldnorm}{\oldnorm*}}
\makeatother


\theoremstyle{plain}
\newtheorem{thm}{\protect\theoremname}
\theoremstyle{plain}
\newtheorem{defn}[thm]{\protect\definitionname}
\theoremstyle{plain}
\newtheorem{cor}[thm]{\protect\corollaryname}
\theoremstyle{plain}
\newtheorem{lem}[thm]{\protect\lemmaname}
\theoremstyle{plain}
\newtheorem{prop}[thm]{\protect\propositionname}
\theoremstyle{definition}
\newtheorem{rem}[thm]{\protect\remarkname}
\theoremstyle{plain}
\newtheorem{aspt}[thm]{\protect\assumptionname}
\newtheorem*{thmre}{Theorem}
\theoremstyle{plain}
\newtheorem*{corre}{Corollary}
\theoremstyle{plain}
\newtheorem*{lemre}{Lemma}
\theoremstyle{plain}

\theoremstyle{definition}
\newtheorem*{defnre}{Definition}
\theoremstyle{plain}
\newtheorem*{asptre}{Assumption}

\makeatother

\providecommand{\corollaryname}{Corollary}
\providecommand{\definitionname}{Definition}
\providecommand{\lemmaname}{Lemma}
\providecommand{\theoremname}{Theorem}
\providecommand{\assumptionname}{Assumption}
\providecommand{\propositionname}{Proposition}
\providecommand{\remarkname}{Remark}
\providecommand{\assumptionname}{Assumption}

\newcommand{\ignore}[1]{}

\newif\ifnotes
\notestrue
\ifnotes

\newcommand{\rnote}[1]{ [\textcolor{blue}{Raef: #1}] }
\newcommand{\xnote}[1]{ [\textcolor{magenta}{Xinyu: #1}] }
\newcommand{\cnote}[1]{ [\textcolor{brown}{Conor: #1}] }

\title{Private Model Personalization Revisited}

\author{
\makebox[1.8in]{Conor Snedeker \thanks{Equal Contribution} \thanks{Department of Computer Science \& Engineering, The Ohio State University,
\href{mailto:snedeker.31@buckeyemail.osu.edu}{snedeker.31@buckeyemail.osu.edu}} } 
\makebox[1.6in]{ Xinyu Zhou \footnotemark[1] \thanks{Department of Computer Science \& Engineering, The Ohio State University,
\href{mailto:zhou.3542@buckeyemail.osu.edu}{zhou.3542@buckeyemail.osu.edu}}}
\makebox[1.6in]{Raef Bassily\thanks{Department of Computer Science \& Engineering and the Translational Data Analytics Institute (TDAI), The Ohio State University.  Part of this work was done when RB was a visiting faculty researcher at Google Research, NY. \href{mailto:bassily.1@osu.edu}{bassily.1@osu.edu} }}
 }
\date{}

\begin{document}

\maketitle

\begin{abstract}
We study model personalization  under user-level differential privacy (DP) in the shared representation framework. In this problem, there are $n$ users whose data is statistically heterogeneous, and their optimal parameters share an unknown embedding $U^* \in\mathbb{R}^{d\times k}$ that maps the user parameters in $\re^d$ to low-dimensional representations in $\re^k$, where $k\ll d$. Our goal is to privately recover the shared embedding and the local low-dimensional representations with small excess risk in the federated setting.  We propose a private, efficient federated learning algorithm to learn the shared embedding based on the FedRep algorithm in \cite{collins2021exploiting}. Unlike \cite{collins2021exploiting}, our algorithm satisfies differential privacy, and our results hold for the case of noisy labels. In contrast to prior work on private model personalization \cite{jain2021differentially}, our utility guarantees hold under a larger class of users' distributions (sub-Gaussian instead of Gaussian distributions). Additionally, in natural parameter regimes, we improve the privacy error term in \cite{jain2021differentially} by a factor of $\widetilde{O}(dk)$. Next, we consider the binary classification setting. We  present an information-theoretic construction to privately learn the shared embedding and derive a  margin-based accuracy guarantee that is independent of $d$. Our method utilizes the Johnson-Lindenstrauss transform to reduce the effective dimensions of the shared embedding and the users' data. This result shows that dimension-independent risk bounds are possible in this setting under a margin loss.

\end{abstract}

\section{Introduction}

The rapid advances in machine learning have revolutionized domains such as healthcare, finance, and personalized services. However, this progress relies heavily on the availability of user data, which presents two critical challenges. First, user data is often statistically heterogeneous, that is, individual users have distinct data distributions. This heterogeneity hinders the training of a single global model that performs well for all users. Second, the reliance on sensitive user data raises significant privacy concerns, necessitating robust mechanisms that offer strong privacy protections. 

Model personalization has emerged as a key strategy to address the challenge of statistical heterogeneity by adapting models to individual users, rather than relying on a single global model that may perform suboptimally across diverse data distributions. A widely used framework for personalization is shared representation learning, where users collaborate to learn a low-dimensional embedding that captures commonalities in their tasks. This shared embedding allows users to train their local models more efficiently, leveraging the embedding to complement their unique data. 

However, despite its effectiveness in handling heterogeneity, model personalization introduces additional privacy concerns, as learning shared representations involves user collaboration, which can inadvertently expose sensitive information.  Ensuring rigorous privacy protections in this setting requires formal guarantees, such as user-level differential privacy (DP) \cite{dwork2006calibrating}, which prevents adversaries from inferring an individual's presence in the training dataset based on the trained model. Existing approaches to private model personalization suffer from limitations such as restrictive assumptions about data distributions, centralized processing requirements, and suboptimal privacy-utility trade-offs. 

In this work, we study model personalization under user-level differential privacy via shared representation learning. In this problem, there are $n$ users, where user $i\in[n]$ is associated with a data distribution $\cD_i$ and a dataset $S_i=(z_1,\dots z_m)\sim \cD_i^m$. The optimal parameter of user $i$ is denoted as $w_i^*\in \re^d$. We aim at learning a shared low-dimensional embedding  $U^* \in \mathbb{R}^{d \times k}$, where $k \ll d$, and user-specific parameters  $v^*_1, \ldots, v^*_n \in \mathbb{R}^k$  such that  $w^*_i = U^* v^*_i$  for each user  $i \in [n]$, under user-level DP. We propose a novel private federated algorithm for model personalization via shared representation that achieves superior privacy-utility trade-off under relaxed assumptions. Our contributions are summarized as follows: 

\subsection{Contributions}

\textbf{Efficient private federated algorithm for regression:} We extend the FedRep algorithm of \cite{collins2021exploiting} to ensure user-level DP while addressing the case of noisy labels in statistically heterogeneous user data. The algorithm is iterative, where in each iteration $t$, it alternates between local updates by the users to their local vectors $v_{1,t},\dots,v_{n,t}\in\re^k$ and private aggregated gradient update to the shared embedding $U_t\in\re^{d\times k}$. Previous work \cite{jain2021differentially} required performing exact minimization with respect to both the local vectors and the shared embedding in a centralized manner. Since the embedding update in \cite{jain2021differentially} involves a minimization over data from all the users, transforming their algorithm to a federated algorithm is not directly feasible. Meanwhile, ours is naturally a federated algorithm since the gradient is computed by each user locally, which the server privately aggregates. In natural parameter regimes, we show that our algorithm attains excess population risk of $\widetilde{O}\left(\frac{d^2k}{n^2\epsilon^2 \sigma_{\min,*}^4}+\frac{d}{nm\sigma_{\min,*}^4}\right)+\frac{k}{m}$ where $\sigma_{\min, *}$ is $k$-th largest singular value of $\frac{1}{\sqrt{n}}V^*$.  Compared to \cite{jain2021differentially}, under the same parameter regimes, this reduces the privacy error term by a factor of  $\widetilde{O}(dk)$.

\textbf{Robustness to broader data distributions:} Unlike prior work \cite{jain2021differentially} that assumes Gaussian data, the utility guarantees of our private federated algorithm apply to a broader class of sub-Gaussian distributions. Moreover, our results apply to statistically heterogeneous users in the sense that individual users may have distinct sub-Gaussian distributions, while the work of \cite{jain2021differentially} assumes that the features of all the users are standard Gaussian. 

Moreover, we extend the results of \cite{collins2021exploiting} not only to the private case but also to the non-realizable case of noisy labels. 

\textbf{Improved initialization under privacy constraints:} We introduce a private initialization algorithm for our federated algorithm. Our initialization algorithm is based on the subspace recovery approach of \cite{duchi2022subspace}. Our algorithm leverages a top-k singular value decomposition (SVD) with added Gaussian noise. This ensures a feasible starting point for the shared embedding, even under relaxed data assumptions.

\textbf{Dimension-independent risk guarantees for classification:} We provide an information-theoretic construction for the binary classification setting under the margin loss and derive an risk bound that benefits from margin guarantees. We incorporate the Johnson-Lindenstrauss transform to reduce the effective dimensionality of the embedding space. This leads to a margin-based risk bound independent of the input dimension $d$. This result holds for arbitrary distributions where the feature vectors are bounded in the Euclidean norm. 

\subsection{Related Work}
Our work builds on and extends several lines of research in federated learning, model personalization, and differential privacy:

\textbf{Federated personalization via shared representations:} Shared representation learning has proven effective in addressing user heterogeneity \cite{collins2021exploiting, jain2021differentially}. Additionally, centralized subspace recovery methods \cite{tripuraneni2021provable,duchi2022subspace} may be used to provide a good initialization for the shared representation for better federated personalization guarantees. While prior work focuses on non-private or centralized settings, our approach incorporates user-level DP in a federated framework and achieves better utility guarantees.

\textbf{Differentially private model personalization:} \cite{jain2021differentially} introduced a private algorithm for model personalization with centralized processing and restrictive Gaussian assumptions. Our method improves excess risk guarantees and extends applicability to sub-Gaussian data distributions. There are other prior works studying private model personalization like \cite{bietti2022personalization, hu2021private}, but these works are not under the representation learning framework and their results are not comparable to ours.

\textbf{Optimization methods for representation learning:} Relevant optimization techniques include alternating exact minimization frameworks \cite{thekumparampil2021sample} and gradient-based optimization, which has also been widely used for personalization (e.g., \cite{collins2021exploiting,raghu2019rapid,lee2019meta,tian2020rethinking}). Our alternating minimization framework adapts these gradient techniques to a privacy-preserving federated setting with noisy labels.

\textbf{Dimensionality reduction with DP guarantees:} The use of the Johnson-Lindenstrauss transform in private learning has been explored in recent work (e.g., \cite{le2020efficient,bassily2022differentially, arora2022differentially}). We adapt this technique for federated personalization, deriving margin-based risk guarantees independent of the data dimension~$d$.

\ignore{
In recent years, the availability of training user data has driven a proliferation in machine learning applications, including federated learning. However, a primary challenge for this framework is user data statistical heterogeneity in which user data distributions are distinct. 
The presence of user heterogeneity poses a unique difficulty in training a single global model that performs well for all users. In response, model personalization has emerged as a solution to this problem by modifying a global model into many models each suited for an individual user. A popular framework for model personalization is representation learning, in which users collaborate to learn a shared low-dimensional embedding that captures the common structure of each user's problem. With a well-learned embedding, each user can achieve better sample complexity by learning their own local models in conjunction with this shared representation, rather than directly learning a model that depends on the input dimension.


Meanwhile, the reliance on large amounts of user data in modern machine learning algorithm raises serious privacy concerns, motivated by potential data misuse such as discrimination. Due to the intermingling of local user data in personalized learning it is therefore important to provide rigorous privacy guarantees in this setting.

In this work, we study the model personalization problem under user-level differential privacy (DP) via shared representation learning. We consider the setting of $n$ users where user $i\in[n]$ is associated with distribution $D_i$ and dataset $S_i=(z_1,\dots z_m)\sim D_i^m$. Further, the optimal personal parameter of user $i$ is denoted as $w_i^*\in \re^d$. We assume that the optimal parameters of all users $(w_1^*, \dots w_n^*)$ share a low-dimensional embedding $U^*\in \re^{d\times k}$ with $k\ll d$ such that $w_i^* = U^* v_i^*$ for a local vector $v_i^*\in \re^k$ for $i\in [n]$. Our goal is to privately learn the shared embedding $U^*$ and the local vectors $(v_1^*, \dots v_n^*)$ with low excess risk given this sequence of datasets $(S_1, \dots S_n)$. 

In summary, we present a federated private model personalization based on the FedRep algorithm in \cite{collins2021exploiting} for linear regression. Through alternating minimization our algorithm performs a local minimization step over the local parameters for each user followed by a gradient step w.r.t. the shared representation. Additionally, we provide an information theoretic excess risk for the setting of binary classification. By employing margin loss and dimension reduction via the Johnson-Lindenstrauss transform, our method attains an excess risk that is independent of the input dimension $d$.

\subsection{Contributions}

\textbf{Efficient, Private Federated Algorithm.}
We first propose a federated private model personalization algorithm based on the FedRep algorithm in \cite{collins2021exploiting} with rigorous privacy and utility guarantees for linear regression. In each iteration, the algorithm alternates between the updates of user local vectors $v_{1,t},\dots,v_{n,t}\in\re^k$ and computing a gradient with respect to the shared embedding $U_t\in\re^{d\times k}$ for iterates $t$. This is in contrast to prior work \cite{jain2021differentially}, where exact minimization steps are taken for both the local vectors and the embedding in a centralized manner. Since the embedding update in each iteration involves solving a minimization using the data from all users, transforming their algorithm to a federated algorithm is non-trivial. Meanwhile, our algorithm is naturally described as a federated algorithm since the gradient is computed by each user locally, which the server privately aggregates. 

With a good initialization $U_{\text{init}}\in\re^{d\times k}$ and proper parameter regimes, we show that our algorithm attains an excess risk of $\widetilde{O}\left(\frac{d^2k}{n^2\epsilon^2 \sigma_{\min,*}^4}+\frac{d}{nm\sigma_{\min,*}^4}\right)+\frac{k}{m}$. In comparison, under the same parameter regimes, the $\cA_{\text{Priv-AltMin}}$ algorithm in \cite{jain2021differentially} achieves an excess rate of $\widetilde{O}\left(\frac{d^3k^2}{n^2\epsilon^2\sigma_{\min,*}^4}+\frac{d}{nm\sigma_{\min,*}^4}\right)+\frac{k}{m}$. This demonstrate an order of $\widetilde{O}(dk)$ improvement on the privacy error. \xnote{Do we need to add the dependence on $\sigma_{\min}^*$ or we can follow the Theorem 1.1 of \cite{jain2021differentially} by letting $\Gamma=\sqrt{k}$ and $\sigma_{\min}^* = \theta(1)$.}\rnote{We already mention in this paragraph that we compare under the same parameter regimes in Jain et al.; which regime is this? Is it the one in their Thm 1.1? If so, then there is no need to elaborate here (but we should elaborate in the relevant section)}\xnote{The regime we mainly focus on is $\Gamma$, $R$ and condition number are bounded by $\widetilde{O}(1)$. In this case, we need to add the dependence of $\sigma_{\min,*}$ since it is not constant any more.} To obtain our bound, we modify the original FedRep algorithm \cite{collins2021exploiting} such that, in each iteration, the users samples two independent data batches to update the local vectors and compute the gradient for the shared representation. The independence between the batches allows us to obtain a tighter high probability bound on the Frobenius norm of the gradients. Furthermore, we use a standard isotropic sub-Gaussian matrix concentration bound \cite{vershynin2018high} to bound the spectral norm of the noise matrix we add to the gradient to preserve privacy. 

Moreover, our main result in Theorem~\ref{thm:mainthm} assumes a more general sub-Gaussian distribution for the feature vectors while the results in \cite{jain2021differentially} rely on the assumption that both the isotropic feature vectors and the label noise are Gaussian. We also provide a new private initialization algorithm based on the estimator in \cite{duchi2022subspace} that holds for sub-Gaussian feature vectors. Finally, our analysis extends the results of \cite{collins2021exploiting} not only to the private case but also to the non-realizable case of noisy labels. Refer to Section~\ref{subsec:related} for more details on these comparisons. 

\paragraph{Margin-based Dimension Reduced Construction for the Classification Setting.}

We further study the classification setting under the shared representation framework. We provide an information-theoretic construction for the classification setting under the margin loss and derive a learning bound that benefits from margin guarantees. In particular, we observe that learning a shared linear representation $U^*$ can be viewed as learning a $2$-layer linear network. Our algorithm satisfies joint user-level DP and is based on the exponential mechanism combined with a dimensionality reduction step via the Johnson-Lindenstrauss (JL) transform, a technique also employed in \cite{bassily2022differentially} to obtain margin guarantees in private learning. A more detailed discussion of this technique can be found in our related work section (Section \ref{subsec:related}).

Finally, we present an information theoretic classification algorithm that attains an excess risk that is independent of the original input dimension $d$. The main bound given by Theorem~\ref{thm:probexcessloss} holds for arbitrary distributions with feature vectors bounded in Euclidean norm. This is in contrast to the information-theoretic bound of \cite{jain2021differentially} which applies to regression, assumes standard Gaussian distribution for the feature vectors, and does not benefit from margin guarantees. 

\subsection{Related Work}\label{subsec:related}

\paragraph{Personalized Learning.} Personalized shared representation has been significantly explored in prior research \cite{jain2021differentially,finn2017model,jamal2019task,raghu2019rapid,tian2020rethinking,collins2021exploiting}. A more common instance of this framework is to consider a DNN architecture where the final layer is the personalization layer. In our work we consider a simpler setting of a two-layer linear neural network. Recent work has shown that shared representation for personalization outperforms adjacent settings \cite{raghu2019rapid,tian2020rethinking}. Similar to our algorithm, other authors have studied the use of gradient methods to personalization \cite{collins2021exploiting,raghu2019rapid,lee2019meta,tian2020rethinking}. This method is convenient for both the efficiency of the algorithm and in analyzing its guarantees, similar to the most basic learning settings. Our high dimensional probability analysis methods build off multiple thorough studies of personalized linear representations \cite{tripuraneni2021provable,thekumparampil2021sample,jain2021differentially}.

\paragraph{Linear Representation Learning.} Shared representation learning with linear models has received significant recent interest \cite{balcan2015efficient,collins2021exploiting,jain2021differentially,tripuraneni2021provable,thekumparampil2021sample,bullins2019generalize}. Besides recent work, this framework is also a classical approach to multi-task learning \cite{ando2005framework,baxter2000model}. Our optimization over a low-dimensional shared representations is most similar to \cite{collins2021exploiting,jain2021differentially,du2020few,maurer2016benefit}. In particular, the works \cite{tripuraneni2021provable,du2020few,maurer2016benefit} give convergence rates to a ground-truth representation in the ERM setting. Like our case, \cite{collins2021exploiting,jain2021differentially} focus on the excess population risk and so their work is highly analogous to our own.

\paragraph{Johnson-Lindenstrauss Transform.} The Johnson-Lindenstrauss transform \cite{johnson1984extensions,nelson2020sketching,larsen2017optimality} is popular dimensionality reduction technique used in a wide range of applications \cite{muthukrishnan2005data, candes2006robust, bassily2015local}. This transformation allows us to reduce the dimension of a finite set of vectors to a logarithmic function of its cardinality. Dimensionality reduction of this kind has seen applications in private machine learning for both information-theoretic bounds and efficient gradient methods \cite{le2020efficient,bassily2022differentially, arora2022differentially}. The work of \cite{bassily2022differentially} presents a private information-theoretic algorithm for reducing the dimensions of the weight matrices of a feedforward neural network using a sequence of Johnson-Lindenstrauss transforms. We leverage their technique by considering the personalized classifiers $Uv_1,\dots,Uv_n$ as a sequence of two-layer linear neutral networks.}

\section{Preliminaries}\label{sec:prelims}

We consider a setting of $n$ users, where each user $i\in[n]$ is associated with distribution $D_i$ observed by a dataset $S_i = ((x_1, y_1), \dots (x_m, y_m))\sim D_i^m$ where $x_i\in \re^d$ and $y_i\in \re$. Given a low dimensional embedding matrix $U\in\re^{d\times k}$ with orthonormal columns and local vector $v\in R^k$, the loss incurred by the parameter $(U, v)$ over data point $(x, y)$ is defined as $\ell(U, v, (x, y)) = \ell'(y, \langle x, Uv  \rangle)$. Suppose $k\ll d$. It is useful conceptually to think of the matrix $U$ as a transformation $x\mapsto U^\top x$ that maps $x\in \re^d$ onto the low-dimensional space $\re^k$.  In addition, we assume that $k\ll m$. This assumption is necessary to attain useful utility guarantees as in prior work \cite{jain2018differentially}.

Given a collection of data sets $S=(S_1,\dots,S_n)\in\re^{nm(d+1)}$, embedding matrix $U\in \re^{d\times k}$ and collection of local vectors $V = [v_1, \dots v_n]^\top \in \re^{n\times k}$, we write the average empirical risk of $(U, V)$ over $S$ as 
\[
    \widehat{L}(U,V;S)=\frac{1}{nm}\sum^n_{i=1}\sum_{(x, y)\in S_i}\ell(U,v_i, (x, y))
\]
We also define the average population loss of $(U,V)$ over the data distributions $\cD=(\cD_1,\dots,\cD_n)$ as 
\[L(U,V;\cD)=\frac{1}{n}\sum^n_{i=1}\bE_{(x, y)\sim\cD_i}\left[\ell(U,v_i,(x, y))\right].\]
Given any $(U^*,V^*)\in\re^{d\times k}\times\re^{n\times k}$ for $U^*$ with orthonormal columns, our goal is to minimize the excess population risk w.r.t $(U^*,V^*)$ defined as
\[\cE(U, V) = L(U,V;\cD)-L(U^*,V^*;\cD).\]

Let $M\in\re^{d\times k}$ be any matrix. Recall our setting in which $k\ll n$. Denote the spectral norm of $M$ with $\Verts{M}_2$ and its Frobenius norm with $\Verts{M}_F$. We denote by $\sigma_{\max}(M)$ and $\sigma_{\min}(M)$ the largest and $k$-th largest singular values of $M$, respectively. Further, suppose that $Q\in\re^{d\times k}$ is a matrix with orthonormal columns and $P\in\re^{k\times k}$ an upper triangular matrix where $QP=M$ is the QR decomposition of $M$. We take $\text{QR}(\ \cdot\ )$ to be the function that maps $M$ to $(Q,P)$. Finally, given a subspace $B$, we denote its orthogonal subspace as $B_\bot$.

\begin{defn}
Let $X\in\re$ be a random variable. We call $X$ a centered $R$-\textbf{sub-Gaussian} if $X$ satisfies $\E\left[e^{\lambda X}\right]\leq e^{\lambda^2R^2/2}$
for all $\lambda\in\re$. Furthermore, we denote the distribution of $X$ as $\SG(R^2)$. 
\end{defn}

We now introduce a quantity that is commonly used in matrix completion \cite{jain2013low} and shared representation learning \cite{jain2021differentially, collins2021exploiting} as a measure of distance between subspaces of $\re^{d}$. It is conceptually useful to think of this distance as measuring the dissimilarity of span and alignment between two subspaces.
\begin{defn}\label{def:princ}
We define the \textbf{principal angle} between the column spaces of matrices $M_1,M_2\in\re^{d\times k}$ via $\emph{\text{dist}}(M_1,M_2)=\Verts*{\hat{M}^\top_{1,\bot}\hat{M}_2}_2$ where $\hat{M}^\top_{1,\bot},\hat{M}_2$ are matrices with orthonormal columns that satisfy $\emph{\text{span}}(\hat{M}^\top_{1,\bot})=\emph{\text{span}}(M_1)_\bot$ and $\emph{\text{span}}(\hat{M}_{2})=\emph{\text{span}}(M_2)$.
\end{defn}
Note that we can instantiate the matrix $\hat{M}_{1,\bot}$ with $I_{d\times d}-\hat{M}_1\hat{M}^\top_1$. This is used many times in our analysis. We show how to bound the excess risk with the principal angle in Lemma~\ref{lem:popriskbd} in Appendix~\ref{app:B}.

\begin{defn}[User-level Differential Privacy \cite{dwork2006calibrating}]
A randomized algorithm $\cA$ is $(\epsilon,\delta)$-\textbf{user-level differentially private \emph{(}user-level DP\emph{)}} if for any pair of neighboring collections of datasets $S,S'\in\re^{n m(d+1)}$ differing in a single user dataset and any event $B$ in the output range of $\cA$, we have
\[\Prob\left[\cA(S)\in B\right]\leq e^{\epsilon}\Prob\left[\cA(S')\in B\right]+\delta.\]
\end{defn}

\textbf{Billboard Model.} In this paper, we adopt the billboard model of differential privacy \cite{hsu2014private,kearns2014mechanism}. In this model, there are $n$ users and a central computing server. The server first runs a differentially private algorithm using the sensitive data from the users. The output of the algorithm is then broadcast to all users. Each user will independently train their own model using the broadcast output and the user's own data. The results of these individual computations will be kept to each user and never shared with other users.

\textbf{Personalization via Representation Learning.} We denote the the optimal parameters of user $i$ as $w_i^*$, and we assume all user parameters $\{w_1^*, \dots w_n^*\}$  share a low dimensional embedding matrix $U^*\in \re^{k\times d}$ with $k\ll d$ such that $w_i^* = U^* v_i^*$ with a local vector $v_i^*\in \re^k$ for $i\in [n]$. Given a dataset collection $S = (S_1, \dots S_n)$, the goal is to privately find a shared low dimensional embedding matrix $U\in \re^{d\times k}$ and local vectors $V = [ v_1, \dots v_n ]^\top $ with low excess risk compared with $U^*$ and $V^* = [ v^*_1, \dots v^*_n]^\top  $. Throughout this work, we assume $U$ has orthonormal columns, hence the shared matrix $U$ effectively maps $d$-dimensional feature vectors $x\in\re^d$ to low-dimensional representations $ U^\top x\in\re^k$ that ease local demands on each user. Given parameter spaces $\cU$, $\cV$, the objective can be formulated as a minimization problem written as $\min_{U\in \cU, V\in \cV} L(U, V; \cD)$.

\section{Private FedRep Algorithm}\label{sec:FedRep}
In this section, we propose an efficient federated private model personalization algorithm based on the FedRep algorithm in \cite{collins2021exploiting} for regression problems. Our algorithm is based on an alternating minimization framework where the algorithm alternates between the updates of user local vectors and the shared embedding by first finding an empirical minimizer for each user, followed by computing a gradient with respect to the embedding. After we compute the private shared representation, each user will independently solve for their user-specific local model, which will be kept to each user. We provide rigorous utility and privacy guarantees for our algorithm and give a detailed comparison with prior work. 

We make several modifications to the original FedRep algorithms and its analysis. First, we sample disjoint data batches in each iteration to update the local vectors and shared embedding separately while the original FedRep uses the same data batches for both. This use of disjoint data batches enables us to give a tighter bound on the Frobenius norm of the computed gradients, which in turn reduces the amount of noise required for privacy preservation. Moreover, we consider the setting of noisy labels, as opposed to the original FedRep analysis. Incorporating label noise introduces unique technical challenges into the analysis. Roughly speaking, without label noise, we can show the distance between the computed embedding with the ground truth decreases geometrically with high probability across iterations. However, such monotonic distance decrease is no longer guaranteed in the presence of label noise. Instead, we employ an induction-based argument to upper bound the distance.

Before we introduce the Private FedRep algorithm, we first give some definitions and assumptions that hold for the entire section.

\subsection{Problem Setting and Assumptions}
 In this section, we focus on quadratic loss defined $\ell(U, v, (x, y)) = \left(y - \langle x, Uv\rangle\right)^2$ for all $U\in\re^{d\times k},v\in\re^{k},(x,y)\in\re^{d+1}$. Define $\Gamma=\max_{i\in[n]}\Verts{v^*_i}_2$. Denote $\sigma _{\min,*}=\sigma _{\min}\left(\frac{1}{\sqrt{n}}V^*\right)$ and $\sigma _{\max,*}=\sigma _{\max}\left(\frac{1}{\sqrt{n}}V^*\right)$, respectively. In our setting, we assume that the rank of $\frac{1}{\sqrt{n}}V^*\in \re^{n\times k}$ is $k$, which implies $\sigma_{\min,*}$, the $k$-th largest singular value of $\frac{1}{\sqrt{n}}V^*$, is greater than 0. The values $\Gamma,\sigma_{\max,*},$ and $\sigma_{\min,*}$ are important in this setting and appear in our final bounds. We additionally define the condition number to be $\gamma=\frac{\sigma_{\max,*}}{\sigma_{\min,*}}$.

The guarantees of our algorithm will rely on the following assumptions.
\begin{aspt}[Client Diversity] \label{aspt:client diversity}
There are known $\Lambda,\lambda>~0$ such that $\Lambda\geq\sigma_{\max,*}\geq\sigma_{\min,*}\geq\lambda$.
\end{aspt}
The assumption that $\sigma_{\min,*}>0$ is made explicitly or implicitly in prior work \cite{collins2021exploiting,jain2021differentially}. Observe that Assumption~\ref{aspt:client diversity} implies $\gamma\leq\frac{\Lambda}{\lambda}$. In this sense, $\frac{\Lambda}{\lambda}$ is an estimate of the condition number $\gamma$.

To make our analysis tractable, we will introduce some standard data assumptions similar to those invoked in prior works \cite{collins2021exploiting,du2020few,jain2021differentially,thekumparampil2021sample,tripuraneni2021provable}.
Although we make these assumptions, our algorithm can be used for a wider range of data sources than what is shown here.
\begin{aspt}\label{aspt:sub-Gaussian-covar}
Let $i\in [n]$ and $x\sim \cD_i$. For all $i\in[n]$ the feature distribution $\cD_i$ has covariance matrix $I_{d\times d}$ and is sub-Gaussian in the sense of $\E\left[e^{\langle x,u\rangle}\right]\leq e^{\frac{\Verts{u}^2_2}{2}}$ for all $u\in\re^d$.
\end{aspt}
\begin{aspt}\label{aspt:datagen}
Let $\SG(R)$ be any centered $R$-sub-Gaussian distribution over $\re$. Sample $\zeta\sim\SG(R)$. For all $i\in[n]$ and any data points $(x,y)\sim\cD_i$ we generate the label $y$ for the $i$-th user via $y=x^\top{U^*}{v^*_i}+\zeta$ where $\zeta$ is independent for all $x\sim\cD_i$.
\end{aspt}

Unless stated otherwise, in this section, we assume that Assumptions~\ref{aspt:client diversity}, \ref{aspt:sub-Gaussian-covar}, and \ref{aspt:datagen} hold. However, we note that our algorithms do not require knowledge of Assumptions~\ref{aspt:sub-Gaussian-covar}~and~\ref{aspt:datagen}. 

\fi

\subsection{Main Algorithm}
Algorithm~\ref{alg:PrivateFedRep} uses alternating minimization for convergence to an approximate minimizer of the excess population risk in parameters $U\in\re^{d\times k}$ with orthonormal columns and $V\in\re^{n\times k}$, simultaneously. At each iteration of the inner loop each user samples $b$ data points without replacement from their data set $S_i$. The users then split this sample into disjoint batches $B_{i,t},B'_{i,t}$ for use in the computing parameters in iteration $t$. Our inner loop uses parallel execution at iteration $t$ to locally compute a minimizer $v_{i,t}\in\argmin_{v\in\re^k}\widehat{L}(U_{t},v;B_{i,t})$ and a gradient $\nabla_{i,t}=\nabla_{U}\widehat{L}(v_{i,t},U_{t};B'_{i,t})$ for each user $i$. The server then aggregates clipped gradients $\left\{\mathsf{clip}(\nabla_{i,t}, \psi)\right\}_{t\in[T]}$ for its gradient step and then returns $U_{t+1}$ to the users where $\mathsf{clip}(M,\tau)=\frac{M}{\fnorm{M}}\cdot\min\left\{1,\frac{\tau}{\Verts{M}_F}\right\}$ for any $M\in\re^{d\times k}$ and $\tau\in\re$.

\begin{algorithm}[H]
\caption{\textbf{Private FedRep} for linear regression }\label{alg:PrivateFedRep}
\begin{algorithmic}[1]
{ 
\REQUIRE$S_i=\{(x_{i,1},y_{i,1}),\dots,(x_{i,m},y_{i,m})\}$ data for users $i\in[n]$, learning rate $\eta$, iterations $T$, privacy noise parameters $\hat{\sigma}$, clipping parameters $\psi,\psi_{\text{init}},$ batch size $b\leq\floor{m/2T},$ initial embedding $U_{\text{init}}\in\re^{d\times k}$}\\
Let $S^0_i\gets\{(x_{i,j},y_{i,j}):j\in[m/2]\}\quad\forall i\in[n]$\\
Let $S^1_i\gets S_i\setminus S^0_i\quad\forall i\in[n]$\\
\STATE\textbf{Initialize: }$U_0\gets U_{\text{init}}$\\
\FOR{$t=0,\dots,T-1$}
{
\STATE Server sends $U_t$ to clients $[n]$ \\
\FOR{$\text{Clients } i\in [n] \text{ in parallel}$}
{
\STATE Sample two disjoint batches $B_{i,t}$ and $B'_{i,t}$, each of size $b$, without replacement from $S^0_i$
\STATE Update $v_i$ as $v_{i,t}\gets\argmin_{v\in\re^k}\widehat{L}(U_t,v;B_{i,t})$

\STATE Compute the gradient w.r.t $U$
\[
    \nabla_{i,t}\gets \nabla_{U}\widehat{L}(U_{t},v_{i,t};B'_{i,t})
\]

\STATE Send $\nabla_{i,t}$ to server}
\ENDFOR
\STATE Server aggregates the client gradients as
\begin{align*}
    &\hat{U}_{t+1}\gets U_t - \eta \left(\frac{1}{n}\sum^n_{i=1}  \mathsf{clip}(\nabla_{i,t}, \psi)+ \xi_{t+1}\right) \\
    &U_{t+1},P_{t+1}\gets\text{QR}(\hat{U}_{t+1})
\end{align*}
where $\xi_{t+1} \gets \cN^{d\times k}(0, \hat{\sigma}^2)$
}
\ENDFOR
\STATE Server sends $U^\priv \gets U_T$ to all clients
\FOR{$\text{clients } i\in [n] \text{ independently}$}{
    \STATE$v^\priv_{i}\gets\argmin_{v\in\re^k}\widehat{L}(U^{\priv},v;S^1_i)\quad\forall i\in[n]$
}
\ENDFOR
\STATE\textbf{Return: }$U^\priv,V^\priv\gets[v^\priv_1,\dots,v^\priv_n]^\top$
\end{algorithmic}
\end{algorithm}

Each iteration Algorithm~\ref{alg:PrivateFedRep} ensures that $U_{t+1}$ has orthonormal columns. Due to orthonormality, the noise injected during the aggregation of $(\nabla_{i,t})_{i\in[n]}$ does not affect the norm of $\nabla_{i,t+1}$ for each $i,t$. This lack of noise propagation, $\cD^x_i$ having a covariance matrix $I_{d\times d}$, and the use of disjoint batches $B_{i,t},B'_{i,t}$ ensure the following bound on the size of $\nabla_{i,t}$.

\begin{lem}\label{lem:gradbound}
With probability at least $1-O(T\cdot n^{-10})$, we have
\[
\fnorm{\nabla_{i, t}} \leq\widetilde{O}\left((  R+\Gamma)  \Gamma \sqrt{dk}\right)
\]
for all $i\in[n]$ and all $t\in[T]$ simultaneously.
\end{lem}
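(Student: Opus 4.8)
The plan is to exploit the rank-one structure of the gradient under the quadratic loss. A direct computation gives $\nabla_{i,t}=\nabla_U\widehat{L}(U_t,v_{i,t};B'_{i,t})=g_{i,t}\,v_{i,t}^\top$, where $g_{i,t}:=-\frac{2}{b}\sum_{(x,y)\in B'_{i,t}}(y-\langle x,U_tv_{i,t}\rangle)\,x\in\re^d$, so that $\fnorm{\nabla_{i,t}}=\twonorm{g_{i,t}}\cdot\twonorm{v_{i,t}}$ and it suffices to bound the two scalar factors. For fixed $i,t$ I would condition on all randomness generated before iteration $t$ — this fixes $U_t$, about which we have no distributional control, so every estimate below must hold uniformly over orthonormal $U_t$ — and then on the batch $B_{i,t}$. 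Since $B_{i,t}$ and $B'_{i,t}$ are \emph{disjoint} batches of i.i.d.\ points, conditionally on this information $v_{i,t}$, and hence $\Delta_{i,t}:=U^*v_i^*-U_tv_{i,t}$, is a fixed vector while $B'_{i,t}$ remains a fresh i.i.d.\ sample; this decoupling is exactly what the disjoint-batch modification of FedRep provides and it is the heart of the argument.

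First I would bound $\twonorm{v_{i,t}}$. Because $U_t$ has orthonormal columns, $v_{i,t}=A_{i,t}^{-1}c_{i,t}$ solves a $k$-dimensional least-squares problem, with $A_{i,t}=\frac1b\sum_{B_{i,t}}(U_t^\top x)(U_t^\top x)^\top$ and $c_{i,t}=\frac1b\sum_{B_{i,t}}y\,U_t^\top x$. Since $U_t^\top x$ is a $k$-dimensional isotropic sub-Gaussian vector (Assumption~\ref{aspt:sub-Gaussian-covar} plus orthonormality), a standard sub-Gaussian matrix concentration bound \cite{vershynin2018high} gives $A_{i,t}\succeq\frac12 I_k$ w.h.p.\ once $b=\widetilde{\Omega}(k)$, hence $\twonorm{A_{i,t}^{-1}}\le 2$ (this also settles invertibility). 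Plugging in the label model $y=x^\top U^*v_i^*+\zeta$ and keeping the $U_t^\top(\cdot)U^*$ structure, the signal part of $c_{i,t}$ has norm $\widetilde{O}(\Gamma)$ (the empirical average of $(U_t^\top x)(U^{*\top}x)^\top$ is within $\widetilde{O}(\sqrt{k/b})$ of $U_t^\top U^*$, of norm at most $1$, and $\twonorm{v_i^*}\le\Gamma$), while the label-noise part $\frac1b\sum\zeta\,U_t^\top x$ is an average of $b$ i.i.d.\ mean-zero $k$-vectors of scale $\widetilde{O}(R\sqrt k)$, hence of norm $\widetilde{O}(R\sqrt{k/b})$. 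This gives $\twonorm{v_{i,t}}=\widetilde{O}(\Gamma)$ (the noise being lower order once $b\ge\widetilde{\Omega}(R^2k/\Gamma^2)$; otherwise one gets the harmless $\widetilde{O}(\Gamma+R)$), and consequently $\twonorm{\Delta_{i,t}}\le\twonorm{v_i^*}+\twonorm{v_{i,t}}=\widetilde{O}(\Gamma+R)$ by orthonormality of $U^*$ and $U_t$.

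Next I would bound $\twonorm{g_{i,t}}$. Writing $y-\langle x,U_tv_{i,t}\rangle=\langle x,\Delta_{i,t}\rangle+\zeta$ yields $g_{i,t}=-\frac2b\bigl(\sum_{B'_{i,t}}xx^\top\bigr)\Delta_{i,t}-\frac2b\sum_{B'_{i,t}}\zeta\,x$. Here the conditioning pays off: $\Delta_{i,t}$ is a \emph{fixed} direction, so $\frac1b\sum_{B'_{i,t}}(x^\top\Delta_{i,t})x$ is an average of i.i.d.\ vectors with mean $\Delta_{i,t}$ and concentrates around it with fluctuation only $\widetilde{O}(\twonorm{\Delta_{i,t}}\sqrt{d/b})$, while $\frac1b\sum_{B'_{i,t}}\zeta x$ has mean $0$ (as $\zeta\perp x$) with fluctuation $\widetilde{O}(R\sqrt{d/b})$. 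Hence $\twonorm{g_{i,t}}=\widetilde{O}\bigl((\Gamma+R)(1+\sqrt{d/b})\bigr)=\widetilde{O}\bigl((R+\Gamma)\sqrt{dk}\bigr)$ (using $b\ge1$, $k\ge1$), and multiplying by the Step-2 bound gives $\fnorm{\nabla_{i,t}}=\widetilde{O}\bigl((R+\Gamma)\Gamma\sqrt{dk}\bigr)$ on an event of probability $1-n^{-\Omega(1)}$. A union bound over $i\in[n]$ and $t\in[T]$, with the $\mathrm{polylog}(nT)$ losses absorbed into $\widetilde{O}(\cdot)$, gives the claim with failure probability $O(Tn^{-10})$.

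The step I expect to be the main obstacle is the concentration of $\frac1b\sum_{B'_{i,t}}(x^\top\Delta_{i,t})x$. When $b\ll d$ the empirical covariance $\frac1b\sum_{B'_{i,t}}xx^\top$ is rank-deficient with operator norm of order $d/b\gg1$, so bounding $g_{i,t}$ through this operator norm would cost a factor $d/b$ and blow up the privacy-noise clipping threshold $\psi$. Replacing it by the much smaller $\widetilde{O}(\sqrt{d/b})$ fixed-vector fluctuation genuinely requires $\Delta_{i,t}$ to be independent of $B'_{i,t}$, which is exactly why $v_{i,t}$ must be computed on a batch disjoint from the gradient batch — the shared-batch scheme of the original FedRep would not support this step. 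A secondary, bookkeeping-level difficulty is propagating the independence structure across iterations (as $U_t$ depends on all earlier batches, handled by conditioning on the natural filtration and only invoking bounds uniform over orthonormal $U_t$) and carrying the sub-exponential tail bounds for the bilinear sums $\frac1b\sum(x^\top u)x$ and $\frac1b\sum\zeta x$ with the correct $\sqrt{d/b}$ scaling.
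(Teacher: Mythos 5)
Your proof is correct, but it takes a genuinely different route from the paper. The paper never exploits the rank-one factorization $\nabla_{i,t}=g_{i,t}v_{i,t}^\top$ that you use: it bounds each \emph{entry} $(\nabla_{i,t})_{p,q}=\frac{2}{b}\sum_j(\text{scalar}_j)(x'_{i,j})_p(v_{i,t})_q$ in the worst-case sense — per-$j$ the scalar is $\widetilde{O}((R+\Gamma))$ and the rank-one entry is $\widetilde{O}(\Gamma)$, so each matrix entry is $\widetilde{O}((R+\Gamma)\Gamma)$ — and then invokes $\fnorm{M}\leq\sqrt{dk}\max_{p,q}|M_{p,q}|$. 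This is elementary and needs no Bernstein-type cancellation over the batch; the only use of disjoint batches in the paper is to make $u=U_tv_{i,t}-U^*v_i^*$ a fixed direction so that $|\langle u,x'_{i,j}\rangle|\leq\twonorm{u}\sqrt{\log(nb)}$ is a per-sample sub-Gaussian tail bound. By contrast, you pay for the machinery (Bernstein on $\frac1b\sum\langle x,\Delta\rangle x$ and $\frac1b\sum\zeta x$, with $\Delta_{i,t}$ held fixed by conditioning) and are repaid with a tighter $\widetilde{O}\bigl((R+\Gamma)\Gamma(1+\sqrt{d/b})\bigr)$ — strictly better than $\widetilde{O}((R+\Gamma)\Gamma\sqrt{dk})$ by at least a $\sqrt{k}$ factor, and by much more when $b$ is large, which would translate into a smaller clipping threshold $\psi$ and hence less privacy noise. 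One point worth being sharper on: you call the fallback bound $\widetilde{O}(\Gamma+R)$ on $\twonorm{v_{i,t}}$ ``harmless,'' but it is not — with $R\gg\Gamma$ it would yield $\widetilde{O}((R+\Gamma)^2\sqrt{dk})$, which does not match the lemma. The lemma as stated silently relies on the batch-size condition $b\gtrsim R^2k/\Gamma^2$ (the paper imports this through Proposition~\ref{prop:vparambd}, which itself requires Assumption~\ref{aspt:samplenum}); you need exactly the same hypothesis, so this is not a gap in your argument relative to the paper's, but it should be stated as an assumption rather than waved off.
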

\noindent We use Lemma~\ref{lem:gradbound} to choose the clipping parameter $\psi$ in Algorithm~\ref{alg:PrivateFedRep} in later results.

\begin{thm}\label{thm:privacyparam}
Algorithm~\ref{alg:PrivateFedRep} is $(\epsilon,\delta)$-user-level DP in the billboard model by setting $\hat{\sigma} = C\frac{\psi \sqrt{T\log(1/\delta)}}{n\epsilon}$ for some absolute constant $C>0$. 
\end{thm}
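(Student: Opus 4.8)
The plan is to isolate the single data-dependent primitive in the loop — the noisy average of clipped gradients — and show the whole algorithm reduces to an adaptive composition of $T$ Gaussian mechanisms built on top of it. First I would invoke the billboard model \cite{hsu2014private,kearns2014mechanism}: the only quantities the server broadcasts are the iterates $U_0,\dots,U_T$ (the ``billboard''), and each user's returned vector $v^\priv_i$ is computed solely from the billboard and that user's own data $S^1_i$. Hence, for any pair of user-level neighbors differing in user $j$, the transcript $(U^\priv,\{v^\priv_i\}_{i\ne j})$ is a post-processing of the billboard using data independent of $S_j$ (together with the arbitrary $v^\priv_j$, which is only revealed to user $j$), so it suffices to prove that the billboard $(U_t)_{t=0}^{T}$ is $(\epsilon,\delta)$-DP with respect to replacing any single user's dataset. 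The initialization $U_{\text{init}}$ is either a fixed external input (no privacy cost) or the output of a DP sub-routine on a disjoint split, whose cost composes and is absorbed into the constant $C$; I set it aside.

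Next I would bound the per-round sensitivity. Fix $t$ and condition on $U_t$, which is a deterministic function of $U_0$ and the noisy answers of rounds $<t$. Within round $t$ the only fresh access to the data is through $g_t:=\frac1n\sum_{i=1}^n\mathsf{clip}(\nabla_{i,t},\psi)$; the local minimizers $v_{i,t}$, the gradients $\nabla_{i,t}$, the QR step, and hence $U_{t+1}$ are all deterministic given $U_t$, the data, and $g_t+\xi_{t+1}$. For neighbors differing only in user $j$, only the $j$-th summand of $g_t$ changes, and every clipped summand has Frobenius norm at most $\psi$ by the definition of $\mathsf{clip}$, so $\fnorm{g_t-g_t'}\le 2\psi/n$ by the triangle inequality. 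Thus, identifying the $d\times k$ matrix with a $dk$-vector, the $\ell_2$-sensitivity of the round-$t$ query is $\Delta=2\psi/n$, uniformly in $t$ and in the conditioning. Adding i.i.d.\ $\cN(0,\hat\sigma^2)$ to the $dk$ entries is exactly the Gaussian mechanism, so $g_t+\xi_{t+1}$ — and therefore $U_{t+1}$, by post-processing — is $(\epsilon_0,\delta_0)$-DP as soon as $\hat\sigma\ge \Delta\sqrt{2\log(1.25/\delta_0)}/\epsilon_0$. (The clipping bound $\psi$ should be instantiated using Lemma~\ref{lem:gradbound} so that clipping is inactive with high probability, but that is a utility concern, not needed for privacy.)

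Finally, since $U_{t+1}$ depends adaptively on the earlier iterates, the billboard is the adaptive composition of $T$ Gaussian mechanisms, each with sensitivity $2\psi/n$ and noise $\hat\sigma$. One can finish either by the advanced composition theorem — choosing per-round parameters $\epsilon_0\asymp\epsilon/\sqrt{T\log(1/\delta)}$ and $\delta_0\asymp\delta/T$ and using the Gaussian-mechanism calibration above — or, more cleanly, via zero-concentrated DP: each round is $\rho$-zCDP with $\rho=2\psi^2/(n^2\hat\sigma^2)$, the $T$-fold composition is $T\rho$-zCDP, and $T\rho$-zCDP implies $(\epsilon,\delta)$-DP once $\hat\sigma^2\ge C'\,\frac{\psi^2T\log(1/\delta)}{n^2\epsilon^2}$ for an absolute constant $C'$ (in the standard regime $\epsilon\le\log(1/\delta)$); taking square roots gives precisely $\hat\sigma=C\,\frac{\psi\sqrt{T\log(1/\delta)}}{n\epsilon}$. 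I expect the one genuinely delicate point to be pinning down the per-round access structure so that composition is justified: because the batches are resampled from $S^0_i$ each round, and more fundamentally because a user-level neighbor differs in an \emph{entire} dataset and hence perturbs every round, one really does face adaptive composition over $T$ rounds rather than parallel composition — which is why the $\sqrt{T}$ in $\hat\sigma$ is unavoidable. One also has to confirm (it holds) that conditioning on $U_t$ preserves the sensitivity bound pointwise, and that the argmin subproblems and the QR map, though data-dependent, act only on already-released quantities and are therefore legitimate post-processing.
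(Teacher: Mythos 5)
Your proposal is correct and follows essentially the same route as the paper's proof: reduce to the privacy of the broadcast iterates $(U_t)_t$ via the billboard model, bound the per-round $\ell_2$-sensitivity of the averaged clipped gradient via the clipping definition, treat each round as a Gaussian mechanism, compose over $T$ rounds using z-CDP, and convert back to $(\epsilon,\delta)$-DP. Your write-up is somewhat more explicit than the paper's (e.g., you state the per-round sensitivity of the \emph{averaged} query as $2\psi/n$ rather than $\psi$, and you flag the post-processing role of the QR step and local argmins), but the key lemma and decomposition are identical.
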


We define  $\Delta_{\epsilon,\delta}:=C\frac{\sqrt{\log(1/\delta)}}{\epsilon}$ for any $\epsilon>0$ and $\delta\in[0,1]$. 

For the following result, recall the definition of $\gamma=\frac{\sigma_{\max,*}}{\sigma_{\min,*}}$ and Assumption~\ref{aspt:client diversity}, where we assume there exists $\lambda>0$ such that $\sigma_{\min,*}\geq\lambda$.

\begin{lem}\label{lem:submainthm}
Let $\eta\leq\frac{1}{2\sigma^2_{\max,*}}$. Suppose $1-\emph{\text{dist}}^2(U_0,U^*)\geq c$ for some constant $c>0$. Set the clipping parameter $\psi=\widetilde{O}\left((  R+\Gamma)  \Gamma \sqrt{dk}\right)$. Assume $T=\frac{\log n}{\eta\lambda^2} = \widetilde{O}\bigg(\min \bigg( \frac{nm\lambda^2}{\max\{\Delta_{\epsilon, \delta}, 1\}(R+\Gamma)\Gamma d\sqrt{k} m + R^2\Gamma^2 d \sigma^2_{\min, *}}$$,$ $\frac{m\Gamma^2 \sigma^2_{\max, *}}{\max\{R^2, 1\}\cdot\max\{\Gamma^2, 1\} \gamma^4 k \Gamma^2 + R^2 k \sigma^2_{\max, *}} \bigg)\bigg)$ and Assumptions~\ref{aspt:sub-Gaussian-covar} and \ref{aspt:datagen} hold for all user data. Set $\hat\sigma$ as in Theorem~\ref{thm:privacyparam} and batch size $b=\floor{{m}/{2T}}$. Then, $U^{\emph{\priv}}$, the first output of Algorithm~\ref{alg:PrivateFedRep}, satisfies 
\begin{equation*}
\begin{aligned}
&{\emph{\text{dist}}}(U^{\emph{\priv}},U^*)\leq\left(1-\frac{c\eta\sigma^2_{\min,*}}{4}\right)^{\frac{T}{2}}\emph{\text{dist}}(U_0,U^*)\\
&+\widetilde{O}\left(\frac{(R+\Gamma)\Gamma d\sqrt{kT}}{n\epsilon\sigma^2_{\min,*}}+\sqrt{\frac{(R^2+\Gamma^2)\Gamma^2dT}{nm\sigma^4_{\min,*}}}\right)\\
\end{aligned}
\end{equation*}
with probability at least $1-O(T\cdot n^{-10})$.
\end{lem}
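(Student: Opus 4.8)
The plan is to prove a one-step contraction for $a_t:=\text{dist}(U_t,U^*)$ of the form $a_{t+1}\le(1-\tfrac{c\eta\sigma_{\min,*}^2}{4})a_t+e$ with an additive error $e$ capturing the finite-sample and privacy noise, and then unroll it over $t=0,\dots,T-1$. The first move handles the QR step: since $\hat U_{t+1}=U_{t+1}P_{t+1}$ with $U_{t+1}$ having orthonormal columns, Definition~\ref{def:princ} gives
\[
\text{dist}(U_{t+1},U^*)=\twonorm{(U^*_\bot)^\top\hat U_{t+1}P_{t+1}^{-1}}\le\twonorm{(U^*_\bot)^\top\hat U_{t+1}}\cdot\twonorm{P_{t+1}^{-1}},
\]
so it suffices to upper bound the numerator $\twonorm{(U^*_\bot)^\top\hat U_{t+1}}$ and to lower bound $\sigma_{\min}(\hat U_{t+1})=1/\twonorm{P_{t+1}^{-1}}$.

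For the numerator, first derive the closed form of the local minimizer: since $\ell$ is quadratic and $U_t$ is orthonormal, the normal equations give $v_{i,t}=\widehat\Sigma_{i,t}^{-1}\big(\tfrac1b\sum_{(x,y)\in B_{i,t}}U_t^\top x\,y\big)$ with $\widehat\Sigma_{i,t}=\tfrac1b\sum_{(x,y)\in B_{i,t}}U_t^\top xx^\top U_t$, and since $b=\lfloor m/(2T)\rfloor\gg k$ (from the upper bound on $T$) sub-Gaussian covariance concentration gives $\twonorm{\widehat\Sigma_{i,t}-I_k}=\widetilde{O}(\sqrt{k/b})$; plugging in $y=x^\top U^*v_i^*+\zeta$ yields $v_{i,t}=U_t^\top U^*v_i^*+\delta_{i,t}$ with $\twonorm{\delta_{i,t}}=\widetilde{O}((R+\Gamma)\sqrt{k/b})$. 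Substituting this into $\nabla_{i,t}=\tfrac2b\sum_{(x,y)\in B'_{i,t}}(x^\top U_tv_{i,t}-y)\,x\,v_{i,t}^\top$ and exploiting that $B'_{i,t}$ is disjoint from (hence independent of) $B_{i,t}$ — so we may condition on $v_{i,t}$ and treat $B'_{i,t}$ as fresh i.i.d.\ samples — I split the aggregated clipped gradient $G_t:=\tfrac1n\sum_i\mathsf{clip}(\nabla_{i,t},\psi)$ into: (i) the signal $-2(I-U_tU_t^\top)U^*\big(\tfrac1n(V^*)^\top V^*\big)(U^*)^\top U_t$; (ii) a bias from $\delta_{i,t}\ne0$, of per-user spectral norm $\widetilde{O}((R+\Gamma)\Gamma\sqrt{k/b})$; (iii) a mean-zero statistical term of per-user spectral norm $\widetilde{O}((R+\Gamma)\Gamma\sqrt{d/b})$ that, being independent across users, averages down to $\widetilde{O}((R+\Gamma)\Gamma\sqrt{d/(nb)})$; and (iv) the clipping correction, which vanishes on the high-probability event of Lemma~\ref{lem:gradbound} because $\psi=\widetilde{O}((R+\Gamma)\Gamma\sqrt{dk})$. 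Finally $\hat U_{t+1}=U_t-\eta(G_t+\xi_{t+1})$ with $\twonorm{\xi_{t+1}}=\widetilde{O}(\hat\sigma\sqrt d)$ by the standard operator-norm bound for a sub-Gaussian random matrix.

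The contraction then comes from $(U^*_\bot)^\top U^*=0$: left-multiplying the signal by $(U^*_\bot)^\top$ collapses it to $(U^*_\bot)^\top U_t\big(I-2\eta B^\top(\tfrac1n(V^*)^\top V^*)B\big)$ with $B=(U^*)^\top U_t$, and since $\eta\le\tfrac1{2\sigma_{\max,*}^2}$ the inner matrix is PSD with spectral norm $\le1-2\eta(1-a_t^2)\sigma_{\min,*}^2$ (using $\sigma_{\min}(B)^2=1-a_t^2$). The key observation is that the bias (ii) and the higher-order $\delta_{i,t}$ pieces of (i) also pick up a factor $(U^*_\bot)^\top U_t$ after the projection (via $(U^*_\bot)^\top(I-U_tU_t^\top)U^*=-(U^*_\bot)^\top U_tU_t^\top U^*$), hence are bounded by $a_t\cdot\widetilde{O}((R+\Gamma)\Gamma\sqrt{k/b})$: they merely weaken the contraction rate rather than adding to $e$, and the second upper bound on $T$ — forcing $b\gg(R+\Gamma)^2\Gamma^2 k/\sigma_{\min,*}^4$ — keeps this below $\tfrac{c}{2}\eta\sigma_{\min,*}^2$. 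Combined with $\twonorm{P_{t+1}^{-1}}\le1+\widetilde{O}(\eta\hat\sigma\sqrt d+\eta(R+\Gamma)\Gamma\sqrt{d/(nb)})$ — which follows from $\sigma_{\min}(\hat U_{t+1})^2\ge1-\widetilde{O}(\eta\hat\sigma\sqrt d+\eta(R+\Gamma)\Gamma\sqrt{d/(nb)})$ because $U_t^\top(\text{signal})=0$ makes the increment essentially orthogonal to the column space of $U_t$ — this yields $a_{t+1}\le(1-\tfrac{c\eta\sigma_{\min,*}^2}{4})a_t+e$ with $e=\widetilde{O}(\eta\hat\sigma\sqrt d+\eta(R+\Gamma)\Gamma\sqrt{d/(nb)})$, valid whenever $a_t^2\le1-c/2$. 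That invariant I would establish by induction on $t$: the base case is the hypothesis $1-a_0^2\ge c$, and the inductive step uses the recursion plus the smallness of $e$ relative to $c$ (again guaranteed by the stated $T$-bounds). Unrolling — working with the $\text{dist}^2$ version to produce the $T/2$ exponent after a square root — gives $a_T\le(1-\tfrac{c\eta\sigma_{\min,*}^2}{4})^{T/2}a_0+\tfrac{4e}{c\eta\sigma_{\min,*}^2}$, and substituting $\hat\sigma=C\psi\sqrt{T\log(1/\delta)}/(n\epsilon)$, $\psi=\widetilde{O}((R+\Gamma)\Gamma\sqrt{dk})$, $b=\lfloor m/(2T)\rfloor$ turns $\tfrac{4e}{c\eta\sigma_{\min,*}^2}$ into $\widetilde{O}\big(\tfrac{(R+\Gamma)\Gamma d\sqrt{kT}}{n\epsilon\sigma_{\min,*}^2}\big)+\widetilde{O}\big(\sqrt{\tfrac{(R^2+\Gamma^2)\Gamma^2 dT}{nm\sigma_{\min,*}^4}}\big)$, as claimed; the probability $1-O(Tn^{-10})$ is a union bound over $t\in[T]$ and $i\in[n]$ of the covariance-concentration, sub-exponential, and operator-norm events (consistent with Lemma~\ref{lem:gradbound}).

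The main obstacle is the bookkeeping in the gradient decomposition: one must cleanly separate the genuinely mean-zero fluctuations (item (iii)), which average like $1/\sqrt n$ and must survive into the $\sqrt{\cdot/(nm)}$ term, from the per-user biases caused by $\widehat\Sigma_{i,t}\ne I_k$ and $v_{i,t}\ne U_t^\top U^*v_i^*$ (item (ii)), which do \emph{not} average over users and would be fatal unless shown — via the $(U^*_\bot)^\top$ projection — to be proportional to $a_t$ and hence absorbable into the contraction rate under the lower bound on $b$. Closely related is the induction maintaining $a_t^2\le1-c/2$ in the noisy-label regime: unlike the realizable case of \cite{collins2021exploiting}, $a_t$ need not decrease monotonically, so one must verify that $e$ is small relative to the gap $c/2$ before re-invoking the contraction at the next step — precisely what the assumption on $T$ (equivalently on $m/b$) is calibrated to guarantee.
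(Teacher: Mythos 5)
Your overall strategy — bound $\text{dist}(U_{t+1},U^*)\le\twonorm{(U^*_\bot)^\top\hat U_{t+1}}\cdot\twonorm{P_{t+1}^{-1}}$, split the aggregated gradient into signal/bias/fluctuation/privacy-noise, use $(U^*_\bot)^\top U^*=0$ to get contraction, and unroll with an invariant $a_t^2\le 1-c/2$ maintained by induction — matches the paper's proof in structure. The difference in bookkeeping is real but equivalent: you expand $v_{i,t}=U_t^\top U^*v_i^*+\delta_{i,t}$ and carry $\delta_{i,t}$ explicitly, while the paper works with $\Sigma_{V_t}=\tfrac1n V_t^\top V_t$ directly (via the closed form $V_t=V^*(U^*)^\top U_t-F$, i.e., $\delta_{i,t}=-f_i$) and proves $\sigma_{\min}(\tfrac1{\sqrt n}V_t)\ge\sigma_{\min,*}\sqrt{1-a_t^2}-\sigma_{\max}(\tfrac1{\sqrt n}F)$; the perturbation you call "bias absorbed into the contraction rate" is the same perturbation the paper pushes into the singular-value estimate.

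Where I see a genuine gap is in your estimate $\twonorm{P_{t+1}^{-1}}\le 1+\widetilde O(\eta\hat\sigma\sqrt d+\eta(R+\Gamma)\Gamma\sqrt{d/(nb)})$, justified by "$U_t^\top(\text{signal})=0$ makes the increment essentially orthogonal to $\mathrm{col}(U_t)$." It is true that $U_t^\top\,(\text{signal})=0$, and it is also true that $U_t^\top w_i=0$ for $w_i=(U_tU_t^\top-I)U^*v_i^*$, so the $\sum_i w_i\delta_{i,t}^\top$ piece of the bias vanishes under $U_t^\top$. But the remaining bias pieces, $\tfrac2n\sum_i U_t\delta_{i,t}\bar v_i^\top$ and $\tfrac2n\sum_i U_t\delta_{i,t}\delta_{i,t}^\top$, do \emph{not} vanish — left-multiplying by $U_t^\top$ gives $\tfrac2n\sum_i\delta_{i,t}\bar v_i^\top+\tfrac2n\sum_i\delta_{i,t}\delta_{i,t}^\top$, of spectral norm $\widetilde O\big((R+\Gamma)\sigma_{\max,*}\sqrt{k/b}\big)$ (note $\twonorm{V^*(U^*)^\top U_t}\le\sqrt n\,\sigma_{\max,*}$, which is why the bound is in $\sigma_{\max,*}$ and not $\Gamma$; your $b\gg(R+\Gamma)^2\Gamma^2 k/\sigma_{\min,*}^4$ should therefore read $\sigma_{\max,*}^2$ in place of $\Gamma^2$, which is what the $T$-assumption actually gives). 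That contribution to $\sigma_{\min}^2(\hat U_{t+1})$ is of order $\eta(R+\Gamma)\sigma_{\max,*}\sqrt{k/b}$, comparable to — and under extreme parameter settings larger than — the privacy term $\eta\hat\sigma\sqrt d$ you keep; it is precisely what makes the paper's Lemma~\ref{lem:Rbd0} (a Weyl-inequality argument on $\hat U_{t+1}^\top\hat U_{t+1}=I-\eta(U_t^\top H'_t+{H'_t}^\top U_t)+\dots$) a substantial calculation rather than the two-line estimate you propose. The gap is recoverable — the $T$-assumption is calibrated so that $\eta(R+\Gamma)\sigma_{\max,*}\sqrt{k/b}\ll\eta\sigma_{\min,*}^2 c$, so tracking it only modifies constants — but as written your $\twonorm{P_{t+1}^{-1}}$ bound is missing a term that must be accounted for before the unrolling and the inductive invariant go through.
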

Note that $\text{dist}(U_0,U^*)$ in the right-hand side of the bound in Lemma~\ref{lem:submainthm} is bounded from above by $\sqrt{1-c}$.

Given Lemma~\ref{lem:submainthm}, we may obtain our main excess population risk bound. Let $U^\priv$ be the final shared embedding returned from Algorithm~\ref{alg:PrivateFedRep}. The intuition behind this result is that a small principal angle between the two matrices $U^\priv,U^*$ implies $U^\priv$ approximates $U^*$ as a transformation $\re^d\rightarrow\re^k$. Our choice of $v^\priv_i\in\argmin_{v\in\re^k}\widehat{L}(U^{\priv},v;S^1_i)$ for $S^1_i$ independent of $U^\priv$ means $w_i=U^\priv v^\priv_i$ reliably transforms a fresh feature vector $x\sim\cD^x_i$ into a scalar $x^\top w_i$ that is close to the noisy label $x^\top U^*v^*_i+\zeta$ when $n,m$ are sufficiently large. 

Recall in Assumption~\ref{aspt:client diversity} we assume there exist $\Lambda,\lambda>0$ such that $\Lambda\geq\sigma_{\max,*}\geq\sigma_{\min,*}\geq\lambda$.
\begin{thm}\label{thm:mainthm}
Suppose all conditions of Lemma~\ref{lem:submainthm} hold with $\eta=\frac{1}{2\Lambda^2}$, $T={\Theta}\left(\frac{\Lambda^2\log(n^3)}{\lambda^2}\right)$, and that Assumption~\ref{aspt:client diversity} holds as well. Then,  $U^{\emph{\priv}}$ and $V^{\emph{\priv}}$, the outputs of Algorithm~\ref{alg:PrivateFedRep}, satisfy 
\begin{equation*}
\begin{aligned}
&L(U^{\emph{\priv}},V^{\emph{\priv}};\cD)-L(U^*,V^*;\cD)\\
&\leq\widetilde{O}\left(\frac{( R^2+\Gamma^2)\Gamma^4\Lambda^2d^2k}{n^2\epsilon^2\sigma^4_{\min,*}\lambda^2}+\frac{(R^2+\Gamma^2)\Gamma^4\Lambda^2d}{nm\sigma^4_{\min,*}\lambda^2}\right)+\frac{R^2k}{m}
\end{aligned}
\end{equation*}
with probability at least $1-O(T\cdot n^{-10})$. Furthermore, Algorithm~\ref{alg:PrivateFedRep} is $(\epsilon,\delta)$-user-level DP in the billboard model.
\end{thm}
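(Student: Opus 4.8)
The plan is to obtain Theorem~\ref{thm:mainthm} as a corollary of Lemma~\ref{lem:submainthm} combined with the reduction from principal-angle distance to excess population risk (Lemma~\ref{lem:popriskbd}). First I would check admissibility of the prescribed parameters: Assumption~\ref{aspt:client diversity} gives $\sigma_{\max,*}\le\Lambda$, so $\eta=\tfrac{1}{2\Lambda^2}\le\tfrac{1}{2\sigma^2_{\max,*}}$ meets the step-size hypothesis of Lemma~\ref{lem:submainthm}, and with this $\eta$ the value $T=\Theta\!\left(\tfrac{\Lambda^2\log(n^3)}{\lambda^2}\right)$ matches the quantity $\tfrac{\log n}{\eta\lambda^2}$ of that lemma up to the leading constant; all remaining hypotheses (the setting of $\hat\sigma$ from Theorem~\ref{thm:privacyparam}, the clipping $\psi$, and $b=\lfloor m/2T\rfloor$) are carried over unchanged. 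Hence Lemma~\ref{lem:submainthm} applies and gives, with probability $1-O(T\cdot n^{-10})$, a bound of the form $\text{dist}(U^{\priv},U^*)\le(1-\tfrac{c\eta\sigma^2_{\min,*}}{4})^{T/2}\text{dist}(U_0,U^*)+\widetilde{O}\!\left(\tfrac{(R+\Gamma)\Gamma d\sqrt{kT}}{n\epsilon\sigma^2_{\min,*}}+\sqrt{\tfrac{(R^2+\Gamma^2)\Gamma^2 dT}{nm\sigma^4_{\min,*}}}\right)$, and by the remark just after the lemma $\text{dist}(U_0,U^*)\le\sqrt{1-c}\le 1$.

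Next I would square this inequality using $(a+b+c)^2\le 3(a^2+b^2+c^2)$ and $(R+\Gamma)^2\le 2(R^2+\Gamma^2)$, and then pass to the excess risk via Lemma~\ref{lem:popriskbd}. Applied to $(U^{\priv},V^{\priv})$ — crucially using that each $v^{\priv}_i$ is the empirical risk minimizer over the held-out half-sample $S^1_i$, which is independent of $U^{\priv}$ — that lemma yields $\cE(U^{\priv},V^{\priv})\le O(\Gamma^2)\cdot\text{dist}^2(U^{\priv},U^*)+\widetilde{O}\!\left(\tfrac{R^2k}{m}\right)$, where the $\Gamma^2$ prefactor comes from $\|(I-U^{\priv}(U^{\priv})^\top)U^*v^*_i\|\le\Gamma\,\text{dist}(U^{\priv},U^*)$ and the $R^2k/m$ term is the estimation error of the $k$-dimensional local solve under $R$-sub-Gaussian label noise. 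Substituting $\eta=\tfrac{1}{2\Lambda^2}$ so that $1-\tfrac{c\eta\sigma^2_{\min,*}}{4}=1-\tfrac{c\sigma^2_{\min,*}}{8\Lambda^2}\le 1-\tfrac{c\lambda^2}{8\Lambda^2}$ (using $\sigma_{\min,*}\ge\lambda$), the geometric term is at most $O(\Gamma^2)e^{-c\lambda^2 T/(8\Lambda^2)}$ (using $\text{dist}^2(U_0,U^*)\le 1$); choosing the constant in $T=\Theta(\Lambda^2\log(n^3)/\lambda^2)$ large enough drives this below $\Gamma^2 n^{-3}$, which is dominated by the remaining terms and absorbed into $\widetilde{O}(\cdot)$. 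For the other terms, $T=\widetilde{O}(\Lambda^2/\lambda^2)$ contributes a multiplicative factor $\Lambda^2/\lambda^2$ (with only polylog overhead), turning $\Gamma^2$ times the squared privacy term into $\widetilde{O}\!\left(\tfrac{(R^2+\Gamma^2)\Gamma^4\Lambda^2 d^2 k}{n^2\epsilon^2\sigma^4_{\min,*}\lambda^2}\right)$ and $\Gamma^2$ times the squared statistical term into $\widetilde{O}\!\left(\tfrac{(R^2+\Gamma^2)\Gamma^4\Lambda^2 d}{nm\sigma^4_{\min,*}\lambda^2}\right)$; together with $\tfrac{R^2k}{m}$ this is exactly the claimed bound, and the probability $1-O(T\cdot n^{-10})$ is inherited from Lemma~\ref{lem:submainthm} (Lemma~\ref{lem:popriskbd} costs at most an extra $\mathrm{poly}(nm)^{-1}$, absorbed).

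For the privacy claim, Algorithm~\ref{alg:PrivateFedRep} is $(\epsilon,\delta)$-user-level DP by Theorem~\ref{thm:privacyparam} because $\hat\sigma$ is set as required there; the concluding per-user computation of $v^{\priv}_i$ from the broadcast $U^{\priv}$ and the user's own data is post-processing performed entirely on the user's side, so by billboard-model composition the whole mechanism remains $(\epsilon,\delta)$-user-level DP. The only genuinely delicate points in assembling this argument are (i) applying Lemma~\ref{lem:popriskbd} with the correct $\Gamma^2$ prefactor and the fresh-sample term $\widetilde{O}(R^2k/m)$, where independence of $S^1_i$ from $U^{\priv}$ is essential, and (ii) carefully tracking the parameters $R,\Gamma,\Lambda,\lambda,\sigma_{\min,*},\sigma_{\max,*},d,k,n,m,\epsilon$ through the squaring and the substitution of $\eta,T$ so that the $T$-factor collapses cleanly to $\Lambda^2/\lambda^2$ and the geometric term falls below the $R^2k/m$ floor; neither requires ideas beyond those already supplied by Lemma~\ref{lem:submainthm}.
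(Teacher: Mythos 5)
Your proposal is correct and follows essentially the same route as the paper's own proof: invoke Lemma~\ref{lem:submainthm} to bound $\text{dist}(U^{\priv},U^*)$, then pass to excess population risk via Lemma~\ref{lem:popriskbd} and the identity $\text{dist}(U^{\priv},U^*)=\Verts{(U^{\priv}(U^{\priv})^\top-I)U^*}_2$, and finally substitute $\eta=\tfrac{1}{2\Lambda^2}$ and $T=\Theta(\Lambda^2\log(n^3)/\lambda^2)$ so the geometric term is driven below the other error terms. The paper states this in three sentences; you simply carry out the elementary algebra (squaring, $(R+\Gamma)^2\le 2(R^2+\Gamma^2)$, tracking the $\Gamma^2$ prefactor and the $T\mapsto\Lambda^2/\lambda^2$ substitution) in detail, which is exactly what is needed.
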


Note our results in Theorem~\ref{thm:mainthm} can also be extended to the case where new clients share the same embedding $U^*$. This is formally presented in the following corollary. 

\begin{cor} \label{cor:new clients}
Suppose all conditions of Theorem~\ref{thm:mainthm} hold. Consider a new client with a dataset $S_{n+1}\sim \cD_{n+1}^{m'}$, where Assumptions~\ref{aspt:sub-Gaussian-covar} and \ref{aspt:datagen} hold with $\Verts{v^*_{n+1}}_2\leq \Gamma$. Let $U^\emph{\priv}$ be the output of Algorithm~\ref{alg:PrivateFedRep} and $v^{\emph\priv}_{n+1}=\argmin_{v\in \re^k} \hat{L}(U^\emph{\priv}, v;S_{n+1})$, if $m'\gtrsim k\log m'$. We have
    \begin{align*}
        &L(U^\emph{\priv}, v^\emph{\priv}_{n+1};\cD_{n+1})-L(U^*, v^*_{n+1};\cD_{n+1})\\
        &\leq\widetilde{O}\Bigg(\frac{( R^2+\Gamma^2)\Gamma^4\Lambda^2d^2k}{n^2\epsilon^2\sigma^4_{\min,*}\lambda^2}+\frac{(R^2+\Gamma^2)\Gamma^4\Lambda^2d}{nm\sigma^4_{\min,*}\lambda^2}+\frac{R^2k}{m'}\Bigg)
    \end{align*}
with probability at least $1- O(T\cdot n^{-10} + m'^{-100})$.
\end{cor}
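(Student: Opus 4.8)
The plan is to observe that the corollary is almost entirely a consequence of what has already been established for $U^{\priv}$, combined with the single-client portion of the argument used at the very end of the proof of Theorem~\ref{thm:mainthm}. The two ingredients I would isolate are: (i) the shared embedding $U^{\priv}$ produced by Algorithm~\ref{alg:PrivateFedRep} is computed only from $S_1,\dots,S_n$, hence is independent of the new client's sample $S_{n+1}$ and has orthonormal columns; and (ii) once $U^{\priv}$ is frozen, computing $v^{\priv}_{n+1}=\argmin_{v\in\re^k}\widehat{L}(U^{\priv},v;S_{n+1})$ is ordinary least squares in $\re^k$ on a fresh sub-Gaussian sample of size $m'$.

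\textbf{Importing the embedding guarantee.} Under the hypotheses of Theorem~\ref{thm:mainthm} — in particular $\eta=\tfrac{1}{2\Lambda^2}$ and $T=\Theta(\Lambda^2\log(n^3)/\lambda^2)$ — the geometric term in Lemma~\ref{lem:submainthm} is polynomially small in $n$, so on an event $E_1$ of probability at least $1-O(T\cdot n^{-10})$ we have
\[
\text{dist}^2(U^{\priv},U^*)\le\widetilde{O}\!\left(\frac{(R^2+\Gamma^2)\Gamma^2\Lambda^2 d^2k}{n^2\epsilon^2\sigma^4_{\min,*}\lambda^2}+\frac{(R^2+\Gamma^2)\Gamma^2\Lambda^2 d}{nm\sigma^4_{\min,*}\lambda^2}\right).
\]
This step makes no reference to $\cD_{n+1}$, so $E_1$ and the bound above are independent of $S_{n+1}$.

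\textbf{Single-client local estimation.} I would then condition on $E_1$ and treat $U^{\priv}$ as a fixed matrix with orthonormal columns. Since $x\sim\cD_{n+1}$ has covariance $I_{d\times d}$ and is sub-Gaussian (Assumption~\ref{aspt:sub-Gaussian-covar}), the projected feature $(U^{\priv})^\top x\in\re^k$ is sub-Gaussian with covariance $I_{k\times k}$, and by Assumption~\ref{aspt:datagen} the label decomposes as $y=\langle (U^{\priv})^\top x,\,(U^{\priv})^\top U^* v^*_{n+1}\rangle + \langle x,\,(I-U^{\priv}(U^{\priv})^\top)U^*v^*_{n+1}\rangle + \zeta$. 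Thus $v^{\priv}_{n+1}$ is the OLS estimate of the $k$-dimensional vector $(U^{\priv})^\top U^* v^*_{n+1}$ under $R$-sub-Gaussian noise plus a deterministic bias whose size is governed by $\text{dist}(U^{\priv},U^*)$. Repeating the concentration argument already used for the final local vectors $v^{\priv}_i$ on $S^1_i$ in the proof of Theorem~\ref{thm:mainthm}, but with $m'$ in place of $m/2$ — which requires exactly $m'\gtrsim k\log m'$ so that $\tfrac{1}{m'}\sum_{(x,y)\in S_{n+1}}(U^{\priv})^\top x x^\top U^{\priv}$ concentrates around $I_{k\times k}$ — together with Lemma~\ref{lem:popriskbd} to convert subspace misalignment into excess risk (using $\Verts{v^*_{n+1}}_2\le\Gamma$), I expect to obtain, on an event $E_2$ of probability at least $1-m'^{-100}$,
\[
L(U^{\priv},v^{\priv}_{n+1};\cD_{n+1})-L(U^*,v^*_{n+1};\cD_{n+1})\le\widetilde{O}\!\left(\Gamma^2\,\text{dist}^2(U^{\priv},U^*)\right)+\widetilde{O}\!\left(\frac{R^2k}{m'}\right).
\]

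\textbf{Combining, privacy, and the main obstacle.} Plugging the Step-2 bound for $\text{dist}^2(U^{\priv},U^*)$ into the last display and union bounding over $E_1$ and $E_2$ gives the stated inequality with probability at least $1-O(T\cdot n^{-10}+m'^{-100})$. Privacy is unchanged: $U^{\priv}$ is the output of the $(\epsilon,\delta)$-user-level DP Algorithm~\ref{alg:PrivateFedRep}, and the new client's computation of $v^{\priv}_{n+1}$ is merely post-processing of $U^{\priv}$ with data that does not belong to the original $n$ users, so the billboard-model guarantee for those users is preserved. There is no new heavy machinery here — the embedding bound is imported verbatim — so the only point requiring care is to verify that the client-local part of the Theorem~\ref{thm:mainthm} analysis carries over with its sample requirement reduced to $m'\gtrsim k\log m'$ and with no residual dependence on $n$, $d$, or the conditioning of $V^*$; this holds precisely because $U^{\priv}$ is independent of $S_{n+1}$ and, after projection, the regression lives in $\re^k$.
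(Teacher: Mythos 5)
Your overall strategy matches the paper's almost step for step: independence of $U^{\priv}$ from $S_{n+1}$, importing the $\text{dist}(U^{\priv},U^*)$ bound from Lemma~\ref{lem:submainthm} with the theorem's choice of $T$ so the geometric term vanishes, and then reducing the new-client excess risk to an OLS problem in $\re^k$. The paper also first uses the deterministic identity from the proof of Lemma~\ref{lem:popriskbd} (namely that the excess population risk equals $\Verts{U^*v^*_{n+1}-U^{\priv}v^{\priv}_{n+1}}_2^2$ after the $\E[\zeta^2]$ terms cancel) and then plugs the embedding distance into a high-probability single-client estimation guarantee.

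The one place where your proposal and the paper diverge, and where your sketch has a small gap, is precisely that high-probability single-client step. You propose to ``repeat the concentration argument already used for the final local vectors $v^{\priv}_i$ on $S^1_i$ in the proof of Theorem~\ref{thm:mainthm},'' but no such high-probability argument appears there: the final step of Theorem~\ref{thm:mainthm} invokes Lemma~\ref{lem:popriskbd}, which is an \emph{expectation} bound over both the test point and the training sample, not a tail bound. The corollary you are proving asserts a statement that holds with probability at least $1-O(T\cdot n^{-10}+m'^{-100})$, so you need a genuine high-probability control of $\Verts{U^*v^*_{n+1}-U^{\priv}v^{\priv}_{n+1}}_2^2$ in terms of $\Gamma^2\,\text{dist}^2(U^{\priv},U^*)+R^2k/m'$. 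The paper obtains exactly this by citing Theorem 4 of Tripuraneni et al.\ (restated as Theorem~\ref{thm:new clients restatement} in the appendix), whose hypotheses ($\text{dist}(U^{\priv},U^*)\le\delta$, $m'\gtrsim k\log m'$, orthonormal $U^{\priv}$, sub-Gaussian isotropic features, $R$-sub-Gaussian label noise) align with what you isolate. Your OLS-plus-bias heuristic is correct in spirit and would, if carried out carefully, reprove that theorem; but as written you have not actually supplied the tail bound and have pointed to a nonexistent step in the paper for it. Either invoke the Tripuraneni result as the paper does, or supply the full concentration argument (covariance concentration of $\tfrac{1}{m'}\sum (U^{\priv})^\top xx^\top U^{\priv}$ plus control of the noise and misalignment cross terms) in place of the handwave.
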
 

\textbf{Initialization.}
We require ${\text{dist}}(U_0,U^*)$ in Lemma~\ref{lem:submainthm} to be bounded away from $1$ otherwise the bound is trivial since the first term becomes $1$ when $c=0$. By definition of the principal angle (\ref{def:princ}), ${\text{dist}}(U_0,U^*)=1$ when the column space of $U_0$ and $U^*$ are orthogonal. For now, we assume such a $U_0$ whose column space overlaps modestly with that of $U^*$ in the sense of $\text{dist}(U_0,U^*)<\sqrt{1-c}$ can be found with reasonable effort. In the next subsection we give an efficient algorithm that guarantees such an initialization.  

\subsection{Private Initialization Algorithm}
In this section, we introduce an initialization algorithm  for our private FedRep algorithm (Algorithm \ref{alg:PrivateFedRep}). Our initialization algorithm is based on the estimator in \cite{duchi2022subspace} and the privacy guarantee is achieved by adding Gaussian noise to the estimator before using top-$k$-SVD. Our private initialization algorithm achieves the same convergence rate as the one in \cite{jain2021differentially}. But unlike \cite{jain2021differentially}, which is limited to i.i.d. standard Gaussian feature vectors, the guarantee for our algorithm holds in a broader class of sub-Gaussian feature vectors. The detailed steps of the algorithm are provided in Algorithm \ref{alg:newinit}, while its privacy and utility guarantees are established in Lemma \ref{lem:init0}.

\begin{algorithm}[h]
    \caption{\textbf{Private Initialization} for Private FedRep}\label{alg:newinit}
\begin{algorithmic}[1]
\REQUIRE{ $S_i=\{(x_{i,1},y_{i,1}),\dots,(x_{i,m/2},y_{i,m/2})\}$ data for users $i\in[n],$ privacy parameters $\epsilon,\delta,$ clipping bound $\psi_{\text{init}},$ rank $k$}\\
\STATE Let $\hat{\sigma}_{\text{init}}\gets\frac{\psi_{\text{init}}\sqrt{2\log\left(\frac{1.25}{\delta}\right)}}{n\epsilon}$
\STATE Let $\xi_{\text{init}}\gets\cN^{d\times d}(\vec{0},\hat{\sigma}^2_{\text{init}})$
\FOR{Clients $i\in [n]$ in parallel}
{
    \STATE Send $Z_i\gets\frac{2}{m(m-1)}\sum_{j_1\neq j_2}y_{i,j_1} y_{i, j_2}x_{i,j_1}x^\top_{i,j_2}$ to server
}
\ENDFOR

\STATE Server aggregates $Z_i$ and add noise for privatization
\[
    \hat{Z} = \frac{1}{n} \sum_{i=1}^n \mathsf{clip}(Z_i,\psi_{\text{init}}) + \xi_{\text{init}}
\]
\STATE Server computes
\[
    U_{\text{init}}DU_{\text{init}}^\top\gets\text{rank-$k$-SVD}(\hat{Z})
\]

\STATE\textbf{Return: }$U_{\text{init}}$
\end{algorithmic}
\end{algorithm}

\begin{lem} \label{lem:init0}
Suppose that Assumptions~\ref{aspt:sub-Gaussian-covar} and \ref{aspt:datagen} hold. Let $U_{\emph{\text{init}}}$ be the output of Algorithm~\ref{alg:newinit}. Then, by setting $\psi_\emph{\text{init}} = \widetilde{O}((R^2+\Gamma^2)d)$, we have
\begin{equation*}
    \emph{\text{dist}}(U_\emph{\text{init}}, U^*) \leq\widetilde{O}\left(\frac{(R^2+\Gamma^2)d^{3/2}}{n\epsilon \sigma^2_{\min,*}} + \sqrt{\frac{(R^2+\Gamma^2)\Gamma^2d}{mn\sigma^4_{\min,*}}}\right)
\end{equation*}
with probability at least $1-O(n^{-10})$. Furthermore, Algorithm~\ref{alg:newinit} is $(\epsilon, \delta)$ user-level DP.
\end{lem}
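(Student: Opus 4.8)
The plan is to split the argument into a privacy part and a utility part, with the utility part resting on: (a) identifying the population matrix $\frac{1}{n}\sum_i\E[Z_i]$ and its eigengap; (b) showing that the clipping never triggers; (c) bounding the statistical and the privacy-noise perturbations of $\hat Z$ around that population matrix in spectral norm; and (d) a Davis--Kahan argument. Privacy is the routine part: each $\mathsf{clip}(Z_i,\psi_{\text{init}})$ has Frobenius norm at most $\psi_{\text{init}}$, so replacing one user's dataset changes $\frac{1}{n}\sum_i\mathsf{clip}(Z_i,\psi_{\text{init}})$ by at most $2\psi_{\text{init}}/n$ in Frobenius norm; hence $\hat Z$ is the output of the Gaussian mechanism on a statistic of $\ell_2$-sensitivity $O(\psi_{\text{init}}/n)$, and the calibration of $\hat\sigma_{\text{init}}$ in Algorithm~\ref{alg:newinit} (up to the standard Gaussian-mechanism constant) makes $\hat Z$ be $(\epsilon,\delta)$-user-level DP. Since $U_{\text{init}}$ is a deterministic post-processing of $\hat Z$, and in the billboard model the per-user solves performed later consume no further privacy, Algorithm~\ref{alg:newinit} is $(\epsilon,\delta)$-user-level DP.

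For utility, writing $g_{i,j}:=y_{i,j}x_{i,j}$ and $w_i^*:=U^*v_i^*$, observe that Assumption~\ref{aspt:sub-Gaussian-covar} forces $\E[x]=0$ and $\E[xx^\top]=I_d$, while Assumption~\ref{aspt:datagen} gives $y_{i,j}=x_{i,j}^\top w_i^*+\zeta_{i,j}$ with the $\zeta_{i,j}$ independent and mean-zero; hence $\E[g_{i,j}]=\E[x_{i,j}x_{i,j}^\top]w_i^*=w_i^*$, and since $x_{i,j_1},x_{i,j_2}$ are independent for $j_1\neq j_2$ it follows that $\E[Z_i]=w_i^*(w_i^*)^\top$ (exactly the cancellation that motivates the cross-pair estimator of \cite{duchi2022subspace}, which avoids the identity-matrix contamination present in $\frac{1}{m}\sum_j y_{i,j}^2 x_{i,j}x_{i,j}^\top$). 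Thus $\frac{1}{n}\sum_i\E[Z_i]=U^*M^*(U^*)^\top$ with $M^*:=\frac{1}{n}(V^*)^\top V^*$, a rank-$k$ PSD matrix with column space $\text{span}(U^*)$, $k$-th eigenvalue $\sigma_{\min,*}^2>0$ (by the rank assumption on $V^*$), and $(k+1)$-th eigenvalue $0$: its top-$k$ eigenspace is $\text{span}(U^*)$ with eigengap $\sigma_{\min,*}^2$. A crude union bound over $i,j$ using sub-Gaussian tails gives $\|x_{i,j}\|_2=\widetilde{O}(\sqrt d)$ and $|y_{i,j}|=\widetilde{O}(R+\Gamma)$ for all $i,j$ with probability $1-O(n^{-10})$ (the latter since $x_{i,j}^\top w_i^*$ is $\Gamma$-sub-Gaussian and $\zeta_{i,j}\sim\SG(R)$); feeding this into $\sum_{j_1\neq j_2}g_{i,j_1}g_{i,j_2}^\top=(\sum_j g_{i,j})(\sum_j g_{i,j})^\top-\sum_j g_{i,j}g_{i,j}^\top$ and applying the triangle inequality yields $\|Z_i\|_F=\widetilde{O}((R^2+\Gamma^2)d)$, so the choice $\psi_{\text{init}}=\widetilde{O}((R^2+\Gamma^2)d)$ makes $\mathsf{clip}(Z_i,\psi_{\text{init}})=Z_i$ for all $i$ on this event, whence $\hat Z=\frac{1}{n}\sum_i Z_i+\xi_{\text{init}}$.

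It remains to bound $\|\hat Z-U^*M^*(U^*)^\top\|_2\le\|\frac{1}{n}\sum_i Z_i-U^*M^*(U^*)^\top\|_2+\|\xi_{\text{init}}\|_2$. The first term is the deviation of an average of $n$ independent degree-two U-statistics from its mean; via the H\'ajek decomposition, the linear part $\frac{1}{nm}\sum_{i,j}[(g_{i,j}-w_i^*)(w_i^*)^\top+w_i^*(g_{i,j}-w_i^*)^\top]$ is a sum of $nm$ independent mean-zero rank-$\le2$ matrices, and matrix Bernstein applied with $\|w_i^*\|_2\le\Gamma$ and $\|\mathrm{Cov}(g_{i,j})\|_2=\widetilde{O}(R^2+\Gamma^2)$ (since $g_{i,j}=(x_{i,j}^\top w_i^*)x_{i,j}+\zeta_{i,j}x_{i,j}$) gives an $\widetilde{O}(\sqrt{(R^2+\Gamma^2)\Gamma^2 d/(nm)})$ spectral-norm bound; the remaining degenerate part is controlled by a decoupling/Hanson--Wright estimate and is lower order in the parameter regime of interest, matching \cite{duchi2022subspace} adapted from Gaussian to sub-Gaussian data. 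The second term is a $d\times d$ matrix with i.i.d.\ $\cN(0,\hat\sigma_{\text{init}}^2)$ entries, so by the standard operator-norm tail bound $\|\xi_{\text{init}}\|_2=\widetilde{O}(\hat\sigma_{\text{init}}\sqrt d)=\widetilde{O}((R^2+\Gamma^2)d^{3/2}/(n\epsilon))$ with probability $1-O(n^{-10})$. Since $U_{\text{init}}$ spans the top-$k$ eigenspace of $\hat Z$, the Davis--Kahan $\sin\Theta$ theorem together with the trivial bound $\text{dist}\le1$ gives $\text{dist}(U_{\text{init}},U^*)\le O(\|\hat Z-U^*M^*(U^*)^\top\|_2/\sigma_{\min,*}^2)$, and substituting the two bounds above yields
\[
\text{dist}(U_{\text{init}},U^*)\le\widetilde{O}\!\left(\frac{(R^2+\Gamma^2)d^{3/2}}{n\epsilon\,\sigma_{\min,*}^2}+\sqrt{\frac{(R^2+\Gamma^2)\Gamma^2 d}{mn\,\sigma_{\min,*}^4}}\right),
\]
and a union bound over the $O(1)$ good events gives the claimed success probability.

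The main obstacle is the spectral-norm concentration of the averaged cross-pair U-statistic $\frac{1}{n}\sum_i Z_i$ under only (heterogeneous) sub-Gaussian features and sub-Gaussian label noise, and in particular obtaining the clean rate with the inner factor $\Gamma^2$ rather than $(R^2+\Gamma^2)$: this factor comes from the signal--fluctuation cross term $w_i^*(g_{i,j}-w_i^*)^\top+(g_{i,j}-w_i^*)(w_i^*)^\top$, so one must isolate the H\'ajek-projection part of each U-statistic -- which carries the $1/(nm)$ rate and has variance proxy $\widetilde{O}(\Gamma^2(R^2+\Gamma^2)d)$ -- from the degenerate part and check the latter is lower order in the regime under consideration. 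The remaining ingredients (Gaussian-mechanism privacy, the crude clipping bound, the Gaussian operator-norm tail, and Davis--Kahan) are routine.
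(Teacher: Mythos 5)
Your proposal is correct and follows essentially the same approach as the paper's proof: bound the clipping, bound the Gaussian noise via the operator-norm tail of a Gaussian matrix, concentrate the averaged cross-pair U-statistic around $\frac{1}{n}\sum_i U^*v_i^*(v_i^*)^\top(U^*)^\top$, and finish with Davis--Kahan (Theorem~\ref{thm:duchikahan}) using the eigengap $\sigma_{\min,*}^2$. The only notable difference is that for the U-statistic concentration you sketch a H\'ajek-decomposition argument (linear part via matrix Bernstein, degenerate part lower order), whereas the paper simply invokes the adapted Theorem L.1 of \cite{duchi2022subspace} (stated as Theorem~\ref{thm:duchiinit}), which packages exactly that estimate.
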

\noindent By using Lemma~\ref{lem:init0} in conjunction with Theorem~\ref{thm:mainthm} and an iteration count $T={\Theta}\left(\frac{\Lambda^2\log\left(R^2d/k\right)}{\lambda^2}\right)$, we are able to achieve the same utility guarantee as Theorem~\ref{thm:mainthm} without any assumptions on $\text{dist}(U_{\text{init}},U^*)$.

\textbf{Comparison with \cite{jain2021differentially}}: Under conventional assumptions in prior work \cite{tripuraneni2021provable} to normalize $V^*$ and $\SG(R)$ so that both $\Gamma$ and $R$ are $\widetilde{\Theta}(1)$, the Priv-AltMin algorithm in \cite{jain2021differentially} achieves an excess risk bound of $\widetilde{O}\left(\frac{d^3k^2}{n^2\epsilon^2\sigma^4_{\min,*}}+\frac{d}{nm\sigma^4_{\min,*}}\right)+\frac{k}{m}$ for Gaussian data. Meanwhile, our Private FedRep algorithm achieves a rate of $\widetilde{O}\left(\frac{d^2k\Lambda^2}{n^2\epsilon^2\sigma^4_{\min,*}\lambda^2}+\frac{d\Lambda^2}{nm\sigma^4_{\min,*}\lambda^2}\right)+\frac{k}{m}$. When $\frac{\Lambda}{\lambda}$ (an upper bound on the condition number of $\frac{1}{\sqrt{n}}V^*$) is $\widetilde{O}(1)$, which is a regime considered in \cite{jain2021differentially} and assumed in \cite{collins2021exploiting}, we improve the privacy error term in \cite{jain2021differentially} by a factor of $\widetilde{O}(dk)$.

More generally, in the regime where $n=\widetilde{O}\left(\min\left( \frac{d k m}{\epsilon^2}, \sqrt{\frac{d^3 km}{\epsilon^2 \sigma_{\min, *}^4}}, \sqrt{\frac{d^2 m \Lambda^2}{\epsilon^2 \sigma_{\min, *}^4 \lambda^2}} \right)\right)$, our bound is tighter than the bound of \cite{jain2021differentially} by a factor of $\widetilde{O}\left(\frac{dk\lambda^2}{\Lambda^2}\right)$ when $\frac{\Lambda}{\lambda}=\widetilde{O}(\sqrt{dk})$. 

Meanwhile, in the regime where $n = \widetilde{O}\left(\min\left( \frac{d^2 k^2 m}{\epsilon^2}, \sqrt{\frac{d^3 km}{\epsilon^2 \sigma_{\min, *}^4}}, \frac{d  \Lambda^2}{k \sigma_{\min, *}^4 \lambda^2} \right)\right)$ and $n=\widetilde{\Omega}\left(\frac{dkm}{\epsilon^2}\right)$, we achieve a bound tighter than that of \cite{jain2021differentially} by a factor of $\widetilde{O}\left(\frac{d^2k^2m \lambda^2}{n\epsilon^2 \Lambda^2}\right)$ when $\frac{\Lambda}{\lambda} = \widetilde{O}\left(\frac{dk}{\epsilon}\sqrt{\frac{m}{n}}\right)$.

Additionally, our work improves over that of \cite{jain2021differentially} in two important respects. First, it requires less restrictive data assumptions (sub-Gaussian instead of Gaussian feature vectors). Second, our algorithm can be naturally written as a federated algorithm while the algorithm in \cite{jain2021differentially} requires centralized processing and transforming it to a federated algorithm is infeasible. 
\subsection{Synthetic Data Experiment}
\begin{figure}[h]
\centering
\includegraphics[width=.65\linewidth]{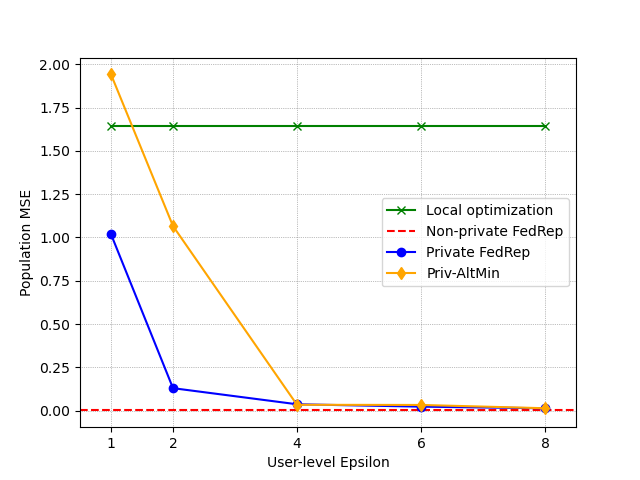}
\caption{Graph of population MSE over choice of privacy parameter $\epsilon\in[1,8]$ for synthetic data comparing Algorithm~\ref{alg:PrivateFedRep} to Priv-AltMin in \cite{jain2021differentially}.
}\label{fig:comparison}
\end{figure}

The results in Figure~\ref{fig:comparison} are obtained via data features from $\cN(0,I_d)$ with problem parameters $n=20,000$, $d=50$, $k=2$, and $m=10$. Our data labels are generated as in Assumption~\ref{aspt:datagen} given label noise sampled from $\cN(0,R^2)$ with $R=0.01$. We use local GD and non-private FedRep as baselines for our comparison. See Appendix~\ref{app:Bexperiments}  for details\footnote{Note as well this \href{https://github.com/learningformality/Private_FedRep/blob/main/FedRep_Ready.py}{GitHub repository} with a copy of our code.}.

\section{Margin-based Dimension Reduced Construction for Classification}\label{sec:JL}
We now introduce our information-theoretic guarantees for private representation learning for classification problem with margin loss. Our approach is based on the observation that learning a shared linear representation can be framed as training a $2$-layer linear neural network, enabling the use of the Johnson-Lindenstrauss (JL) transform \cite{johnson1984extensions} for dimensionality reduction.

We demonstrate this utility guarantee as follows. First, we introduce the assumptions and definitions that hold for this section. This includes the definition for the Johnson-Lindenstrauss transform along with a randomized instantiation. Subsequently, we describe the score function and covering argument required to combine dimension reduction with the exponential mechanism. We then introduce Algorithm~\ref{alg:JLmethod} that exploits the margin loss to attain our main result of the section, Theorem~\ref{thm:probexcessloss}. Finally, we discuss the relevance of the main result and its independence from the data dimension $d$.

Let $k'\ll d$. By applying a linear transformation of the data via an appropriately chosen matrix $M\in\re^{k'\times d}$, we show that learning personalized parameters can support further dimension reduction. This is done by first learning $\widetilde{U}\in\re^{k'\times k}$ over the linearly transformed data in $\re^{k'}$ then constructing the final shared embedding $M^\top\widetilde{U}\in\re^{d\times k}$. We obtain utility guarantees independent of the data dimension $d$ via this process of first learning a low-dimensional embedding then mapping it to $\re^{d\times k}$. 

In this section, we study the classification setting. 
Let $\cX$ be the $d$-dimensional Euclidean ball with constant radius  $r>0$ centered at the origin and $\cY=\{-1,1\}$.
Define that data space $\cZ=\cX\times\cY$. 
Let $\cU\subset\re^{d\times k}$ be the space of $d\times k$ matrices with orthonormal columns and $\cV\subset\re^{n\times k}$ be the space of matrices with rows bounded in Euclidean norm by a universal constant $\Gamma>0$. The data distributions for our $n$ users $\cD_1,\dots,\cD_n$ are possibly distinct distributions over $\cX\times\cY$. We further define the distribution sequence 
$\cD=(\cD_1,\dots,\cD_n)$. 
\begin{defn}\label{def:losses}
Let $(U,v)\in\re^{d\times k}\times\re^k$ and $(x,y)\in\re^d\times\{-1,1\}$ any data point. We define the \textbf{margin loss} as 
$\ell_\rho(U,v,z)=\mathbbm{1}\left[y\langle x,Uv\rangle\leq\rho\right]$
and denote the \textbf{0-1 loss}
$\ell(U,v,z)=\ell_0(U,v,z)=\mathbbm{1}\left[y\langle x,Uv\rangle\leq0\right]$.
\end{defn}
Let $U\in\cU$, $v\in\re^k$, and $S'=(z'_1\dots,z'_m)\in\cZ^{m}$. We define $\widehat{L}_\rho(U,v;S')=\frac{1}{m}\sum^m_{j=1}\ell_\rho(U,v,z'_j)$ for any $U,v,S'$. Suppose $V=[v_1,\dots,v_n]^\top\in\mathcal{V}$ and let $S=(S_1,\dots,S_n)$ be a sequence of datasets $S_i\in\cZ^m$ for all $i\in[n]$. We further define $\widehat{L}_\rho(U,V;S)=\frac{1}{n}\sum^n_{i=1}\widehat{L}_\rho(U,v_i;S_i)$ for all $U,V,S$. Our goal is to privately optimize the population loss $L(U,V;\cD)=\frac{1}{n}\sum^n_{i=1}\E_{Z_i\sim\cD_i}\left[L(U,v_1,Z_i)\right]$ over $U,V$ with personalization of $V$. We accomplish this via an application of the exponential mechanism 
over a cover of a dimension-reduced version of $\cU$.

\begin{defn}\label{def:jldef} 
Let $G\subset\re^d$ be any set of $t$ vectors. Fix $\tau,\beta\in(0,1)$. We call the random matrix $M\in\re^{k'\times d}$ a \textbf{$(t,\tau,\beta)$-Johnson-Lindenstrauss (JL) transform} if for any $u,u'\in G$
\[\abs{\langle Mu,Mu'\rangle-\langle u,u'\rangle}\leq{\tau}\Verts{u}_2\Verts{u'}_2\]
with probability at least $1-\beta$ over $M$. 
\end{defn}

The JL transform is a popular and efficient method of dimension reduction whose existence is ensured by the Johnson-Lindenstrauss lemma 
\cite{johnson1984extensions,nelson2020sketching,woodruff2014sketching}. Via a JL transform $M$ we are able to preprocess a feature vector $x$ into a low-dimensional vector $Mx\in\re^{k'}$ for use in our federated learning problem. This preprocessing showcased in Algorithm~\ref{alg:JLmethod} effectively reduces the dimension of our learning problem.

\begin{lem}\cite{bassily2022differentially}\label{lem:randJL}
Let $\tau,\beta\in(0,1)$. Take $G\subset\re^d$ to be any set of $t$ vectors. Setting $k'=O\left(\frac{\log\left(\frac{t}{\beta}\right)}{ \tau^2}\right)$ for a $k'\times d$ matrix $M$ with entries drawn uniformly and independently from $\left\{\pm\frac{1}{\sqrt{k'}}\right\}$ implies that $M$ is a $(t,\tau,\beta)$-JL transform.
\end{lem}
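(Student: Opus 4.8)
The plan is to reduce the statement to the standard ``distributional'' Johnson--Lindenstrauss guarantee for a single fixed vector and then upgrade it, via polarization and a union bound, to the simultaneous inner-product guarantee demanded by Definition~\ref{def:jldef}. Write the entries of $M$ as $M_{ij}=\sigma_{ij}/\sqrt{k'}$ with the $\sigma_{ij}$ i.i.d.\ uniform on $\{\pm1\}$. For a fixed unit vector $v\in\re^d$, set $Y_i=\sum_j\sigma_{ij}v_j$, so that $\twonorm{Mv}^2=\frac1{k'}\sum_{i=1}^{k'}Y_i^2$ with the $Y_i$ i.i.d.\ and $\E[Y_i^2]=\twonorm{v}^2=1$. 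The first step is to show $\twonorm{Mv}^2$ concentrates around $1$: since $\E[e^{\lambda Y_i}]=\prod_j\cosh(\lambda v_j)\le e^{\lambda^2/2}$, each $Y_i$ is $1$-sub-Gaussian, hence $Y_i^2-1$ is centered sub-exponential with absolute-constant parameters, and Bernstein's inequality yields an absolute constant $c_0>0$ with $\Prob[\,|\twonorm{Mv}^2-1|>\tau\,]\le 2\exp(-c_0k'\tau^2)$ for every $\tau\in(0,1)$; by homogeneity the same bound holds for general $v$ with $1$ replaced by $\twonorm{v}^2$.

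The second step is to pass from norms to inner products. For $u,u'\in G$ (the degenerate case $u=0$ or $u'=0$ being trivial) let $\bar u=u/\twonorm{u}$ and $\bar u'=u'/\twonorm{u'}$, and apply the polarization identity $\langle a,b\rangle=\tfrac14(\twonorm{a+b}^2-\twonorm{a-b}^2)$ to both $(M\bar u,M\bar u')$ and $(\bar u,\bar u')$, using linearity of $M$. On the event that the Step~1 bound holds for the two vectors $\bar u\pm\bar u'$, this gives $|\langle M\bar u,M\bar u'\rangle-\langle\bar u,\bar u'\rangle|\le\tfrac\tau4(\twonorm{\bar u+\bar u'}^2+\twonorm{\bar u-\bar u'}^2)=\tfrac\tau4\cdot2(\twonorm{\bar u}^2+\twonorm{\bar u'}^2)=\tau$, and rescaling by $\twonorm{u}\twonorm{u'}$ yields exactly $|\langle Mu,Mu'\rangle-\langle u,u'\rangle|\le\tau\twonorm{u}\twonorm{u'}$. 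The normalization step is precisely what produces the multiplicative form $\tau\twonorm{u}\twonorm{u'}$ rather than something like $\tfrac\tau2(\twonorm{u}^2+\twonorm{u'}^2)$.

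Finally, I would collect the auxiliary set $G'=\{\,\bar u\pm\bar u':u,u'\in G\,\}$, of cardinality $O(t^2)$, apply the Step~1 tail bound to each member with failure probability $\beta/|G'|$, and union bound. This requires $c_0k'\tau^2\ge\log(2|G'|/\beta)$, i.e.\ $k'=O\bigl(\tau^{-2}\log(t/\beta)\bigr)$, which is the claimed rate. On the resulting event of probability at least $1-\beta$, Step~2 delivers the inner-product bound simultaneously for all pairs in $G$, so $M$ is a $(t,\tau,\beta)$-JL transform.

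The only genuinely technical point is the sub-exponential concentration in Step~1: unlike the Gaussian case, $\twonorm{Mv}^2$ is not a scaled $\chi^2$ variable, so one cannot invoke an exact distributional identity and must instead use a Bernstein-type bound for sums of sub-exponential variables (equivalently a Hanson--Wright estimate), where the hypothesis $\tau\in(0,1)$ is used to collapse the $\min(\tau^2,\tau)$ in the exponent to $\tau^2$. Everything else --- the polarization, the normalization, and the union bound over the $O(t^2)$ auxiliary vectors --- is routine bookkeeping.
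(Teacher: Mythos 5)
The paper does not prove Lemma~\ref{lem:randJL}; it is imported verbatim from \cite{bassily2022differentially}, so there is no in-paper argument to compare yours against. Your proof is correct and is the standard derivation for Rademacher-sign JL matrices: for a unit vector $v$ each coordinate of $\sqrt{k'}Mv$ is a centered $1$-sub-Gaussian random variable (via $\mathbb{E}[e^{\lambda Y_i}]=\prod_j\cosh(\lambda v_j)\le e^{\lambda^2/2}$), so $\|Mv\|_2^2=\tfrac{1}{k'}\sum_iY_i^2$ is an average of i.i.d.\ sub-exponential variables with mean $1$ and Bernstein gives $\Pr[\,|\|Mv\|_2^2-1|>\tau\,]\le 2e^{-c_0k'\min(\tau^2,\tau)}=2e^{-c_0k'\tau^2}$ for $\tau\in(0,1)$; polarization on the normalized pair $\bar u\pm\bar u'$ then converts simultaneous norm preservation into the multiplicative inner-product guarantee $|\langle Mu,Mu'\rangle-\langle u,u'\rangle|\le\tau\|u\|_2\|u'\|_2$ of Definition~\ref{def:jldef}; and a union bound over the $O(t^2)$ auxiliary vectors $G'=\{\bar u\pm\bar u'\}$ gives $k'=O(\tau^{-2}\log(t/\beta))$. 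The one bookkeeping point that matters---that the Step-1 event must hold simultaneously for both $\bar u+\bar u'$ and $\bar u-\bar u'$ before polarization applies, which is what forces the per-vector failure probability down to $\beta/|G'|$ and hence produces the $\log(t/\beta)$ dependence---is handled correctly. As a presentational remark, the paper states the inner-product form as the definition and norm preservation (Proposition~\ref{prop:JLnormprop}) as a separate consequence, which is the reverse of the order in which one actually proves them; your derivation goes in the natural direction and is consistent with the cited source.
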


Suppose $M\in\re^{k'\times d}$ is a random matrix with entries drawn uniformly from $\left\{\pm\frac{1}{\sqrt{k'}}\right\}$. Define $\mathcal{U}_M=\{MU:U\in\mathcal{U}\}$. Let $\cB_F$ be the $k'\times k$-dimensional Frobenius ball of radius $\sqrt{2k}$. We define $\cN^\gamma$ to be a Frobenius norm $\gamma$-cover of $\cB_F$.
Assuming that $\gamma\leq1$, we have  $\cU_M\subseteq\cB_F$ with high probability. 
That is, for any $U'\in\cU_M$ there exists $\widetilde{U}\in\cN^\gamma$ where $\Verts{U'-\widetilde{U}}\leq\gamma$. Moreover, $\abs{\cN^\gamma}\leq O\left(\left(\frac{\sqrt{k}}{\gamma}\right)^{k'k}\right)$.

Algorithm~\ref{alg:JLmethod} first takes as input the user data sequence $S$ and score function $f$. It then constructs a JL transform $M$ in the sense of Lemma~\ref{lem:randJL}. Algorithm~\ref{alg:JLmethod} reduces the dimensionality of the users' data in $S$ by applying $M$ to each feature vector in each user's dataset. The exponential mechanism is run over a cover $\cN^\gamma$ using a score function $f(\ \cdot\ ,S_M)$, which will be defined shortly, 
to obtain a low-dimensional shared embedding $\widetilde{U}\in\re^{k'\times k}$. This $M^\top\widetilde{U}\in\re^{d\times k}$ is used by Algorithm~\ref{alg:JLmethod} where the final embedding $U^\priv=M^\top\widetilde{U}$ and local vectors $V^\priv=[v^\priv_1,\dots,v^\priv_n]^\top$ are computed by the users via $U^\priv$. 

\textbf{Remark:} Our algorithm functions as an improper learner in the sense that the learned shared representation $U^\priv$ is not necessarily an orthonormal matrix. However, this does not affect the performance of the final user-specific classifiers in terms of expected loss. In particular, Theorem \ref{thm:probexcessloss} establishes that the expected loss incurred by our algorithm remains close to that of an optimal orthonormal shared representation, ensuring that the lack of orthonormality does not degrade utility.

Assume for simplicity that $m$ is even. 
We partition $S_i=S^0_i\cup S^1_i$ where $S^0_i=\{z_{1,j},\dots,z_{\frac{m}{2},j}\}$ and $S^1_i=\{z_{\frac{m}{2}+1,j},\dots,z_{m,j}\}$ for each $i\in[n]$. Further, we denote $S^t =(S^t_1,\dots,S^t _n)$ where $t\in\{0,1\}$. 
Suppose that $S=(S_1,\dots,S_n)\subset\cZ^{nm(d+1)}$ is a sequence of $n$ datasets with $m$ samples each. Let $S_M = ((S_1)_M,\dots (S_n)_M)$ where $(S_i)_M = \{ (Mx_{i,j}, y_{i,j}):j\in[m]\}$ for all $i\in [n]$. The score function for Algorithm~\ref{alg:JLmethod} is $f(\widetilde{U},S_M)=-\frac{1}{n}\sum^n_{i=1}\min_{\Verts{v_{i}}\leq\Gamma}\widehat{L}_\rho(\widetilde{U},v_{i};(S_{i})_M)$ 
for $\widetilde{U}\in\cN^\gamma$. Since we provide a user-level privacy guarantee, 
we must bound the sensitivity of $f$ over neighboring sequences $S,S'$ where $S'$ differs from $S$ by at most a single user's entire dataset. It can be shown that the sensitivity is bounded $\frac{1}{n}$, which follows easily from the fact that the margin loss is bounded by 1. 

\begin{algorithm}[h]
\caption{\textbf{Private Representation Learning for Personalized Classification}}\label{alg:JLmethod}
\begin{algorithmic}[1]

\REQUIRE dataset sequences $S^0$ and $S^1$ of equal size, score function $f(U',\ \cdot\ )=-\min_{V\in\cV}\widehat{L}_\rho(U',V;\ \cdot\ )$ over matrices $U'\in\re^{k'\times k},$ privacy parameter $\epsilon>0$, target dimension $k' = O\left(\frac{r^2\Gamma^2\log(nm/\delta)}{\rho^2}\right),$

\STATE Sample $M\in\re^{k'\times d}$ with entries drawn i.i.d uniformly from $\left\{\pm\frac{1}{\sqrt{k'}}\right\}$
\STATE Let $S_M = ((S_1)_M,\dots, (S_n)_M)$ where $(S_i)_M = \left\{(Mx,y): (x, y)\in S_i^0\right\}$ for $i\in [n]$
\STATE Let $\cN^\gamma$ be a Frobenius norm $\gamma$-cover of $\cB_F$
\STATE Run the exponential mechanism over $\cN^\gamma$, privacy parameter $\epsilon$, sensitivity $\frac{1}{n}$, and score function $f(U', S_M)$, to select $\widetilde{U}\in \cN^\gamma$
\STATE Let $U^{\text{priv}}\gets M^\top \widetilde{U}$
\STATE Each user $i\in [n]$ independently computes $v_i^\priv \gets \argmin_{\Verts{v}_2\leq \Gamma} \widehat{L}(U^\priv, v, S_i^1)$ 
\STATE\textbf{Return:} $U^\priv, V^\priv=[v_1^\priv, \dots v_n^\priv]^\top $
\end{algorithmic}
\end{algorithm}

The lemma below is an extension of a fundamental result for the exponential mechanism 
\cite{mcsherry2007mechanism}. Key to obtaining this result is that $\gamma$ is both the error parameter of the JL transform and the radius of the sets in our cover $\cN^\gamma$.
\begin{lem}\label{lem:JLprivbd} 
Fix $\epsilon,\rho>0,\beta\in(0,1)$. Algorithm~\ref{alg:JLmethod} is $(\epsilon,0)$-user-level DP. Sample $S\sim\cD^m$. Then, Algorithm~\ref{alg:JLmethod} returns $U^\emph{\priv}$ from input $S$ such that
\begin{equation*}
\begin{aligned}
\min_{V\in\cV}L(U^\emph{\priv},V;S^0)&\leq \min_{(U,V)\in\cU\times\cV}\widehat{L}_\rho(U,V;S^0)\\
&+\widetilde{O}\left(\frac{r^2\Gamma^2k}{\epsilon\rho^2n}\right)
\end{aligned}
\end{equation*}
with probability at least $1-\beta$ over the randomness of $S$ and the internal randomness of the algorithm.
\end{lem}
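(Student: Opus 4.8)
\textbf{Proof sketch of Lemma~\ref{lem:JLprivbd}.} The privacy claim is immediate: $M$ is data-independent, so neighbouring $S,S'$ (differing in one user's dataset) yield neighbouring $S_M,S'_M$; as already noted, $f(\,\cdot\,,S_M)=-\frac1n\sum_i\min_{\|v_i\|_2\le\Gamma}\widehat L_\rho(\,\cdot\,,v_i;(S_i)_M)$ has user-level sensitivity $\le\frac1n$ because $\widehat L_\rho\in[0,1]$, hence the exponential mechanism run with parameter $\epsilon$ and sensitivity $\frac1n$ is $(\epsilon,0)$-DP; forming $U^\priv=M^\top\widetilde U$ and letting each user solve for $v_i^\priv$ from $U^\priv$ and its own data is post-processing in the billboard model. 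So Algorithm~\ref{alg:JLmethod} is $(\epsilon,0)$-user-level DP.

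For utility, it suffices to prove the bound for an arbitrary fixed $S$, with probability $1-\beta$ over the internal randomness ($M$ and the exponential mechanism). Condition on two events, each of probability $\ge1-\beta/2$: (a) the exponential-mechanism tail bound, so the selected $\widetilde U$ satisfies $f(\widetilde U,S_M)\ge\max_{U'\in\cN^\gamma}f(U',S_M)-\frac{2}{n\epsilon}\log\!\big(\tfrac{2|\cN^\gamma|}{\beta}\big)$; and (b) the ``good JL event'' over $M$, namely $\cU_M\subseteq\cB_F$ together with the inner-product and cover estimates of \cite{bassily2022differentially} holding for every $(x,y)\in S^0$. By Lemma~\ref{lem:randJL} applied to the $\le nm$ feature vectors of $S^0$, event (b) holds once $k'=\Theta\!\big(r^2\Gamma^2\log(nm/\beta)/\rho^2\big)$ and $\gamma=\Theta\!\big(\sqrt{\log(nm/\beta)/k'}\,\big)$; the latter choice also gives $\log|\cN^\gamma|=\widetilde O(k'k)=\widetilde O(r^2\Gamma^2k/\rho^2)$.

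The core of the proof is two margin comparisons. (i) Since $\langle x,U^\priv v\rangle=\langle Mx,\widetilde Uv\rangle$ exactly, $\ell_0(U^\priv,v,(x,y))\le\ell_\rho(\widetilde U,v,(Mx,y))$ for every point (as $\rho>0$); minimising over the per-user $v_i$ and averaging over users gives $\min_{V\in\cV}\widehat L(U^\priv,V;S^0)\le-f(\widetilde U,S_M)$. (ii) Let $(U^*,V^*)$ attain $\min_{(U,V)\in\cU\times\cV}\widehat L_\rho(U,V;S^0)$ and choose $\widetilde U^\star\in\cN^\gamma$ with $\|MU^*-\widetilde U^\star\|_F\le\gamma$; combining the two estimates of event (b) with $\|v_i^*\|_2\le\Gamma$ and $\|x\|_2\le r$ gives, for all $(x,y)\in S_i^0$,
\[
\big|\,(v_i^*)^\top(\widetilde U^\star)^\top Mx-(v_i^*)^\top(U^*)^\top x\,\big|\le O\!\Big(r\Gamma\sqrt{\tfrac{\log(nm/\beta)}{k'}}\,\Big)\le\rho
\]
by the choice of $k'$. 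Hence any point on which $(U^*,v_i^*)$ attains margin more than $2\rho$ is classified by $(\widetilde U^\star,v_i^*)$ on the transformed data with margin more than $\rho$, so $\widehat L_\rho(\widetilde U^\star,v_i^*;(S_i)_M)\le\widehat L_{2\rho}(U^*,v_i^*;S_i^0)$; averaging over $i$ yields $-f(\widetilde U^\star,S_M)\le\widehat L_{2\rho}(U^*,V^*;S^0)$, and running the score with margin $\rho/2$ in place of $\rho$ — which only rescales $k'$, and hence the final error, by a constant — replaces this by $\min_{(U,V)}\widehat L_\rho(U,V;S^0)$. Chaining (i), event (a), and (ii), and using $\widetilde U^\star\in\cN^\gamma$,
\[
\min_{V\in\cV}\widehat L(U^\priv,V;S^0)\le-f(\widetilde U,S_M)\le-f(\widetilde U^\star,S_M)+\tfrac{2}{n\epsilon}\log\!\big(\tfrac{2|\cN^\gamma|}{\beta}\big)\le\min_{(U,V)}\widehat L_\rho(U,V;S^0)+\widetilde O\!\Big(\tfrac{r^2\Gamma^2k}{n\epsilon\rho^2}\Big),
\]
and a union bound over (a) and (b) gives failure probability $\le\beta$.

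The main obstacle is comparison (ii): propagating both the JL inner-product error and the cover-approximation error into a single additive margin shift, and choosing $k'$ and $\gamma$ jointly so that this shift is a small constant fraction of $\rho$ while $\log|\cN^\gamma|$ stays $\widetilde O(r^2\Gamma^2k/\rho^2)$. Everything else is the textbook exponential-mechanism utility bound together with post-processing.
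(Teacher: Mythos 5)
Your proof is correct, and it follows a route that differs from the paper's in a technically significant way. The paper's proof compares the 0-1 loss of $M^\top\hat U$ to the $\rho$-margin loss of the best $U\in\cU$ \emph{directly} (taking JL error $a\rho$ with $a<1$ so that $\ell_0\le\ell_\rho$ after the perturbation), and accordingly its exponential-mechanism step is phrased with the score $-\widehat L(U',V_{M^\top U'};S^0_M)$ built from the 0-1 loss; this is actually slightly inconsistent with the algorithm's stated score function, which uses $\widehat L_\rho$. Your version keeps the score function $-\min_{V}\widehat L_\rho(\cdot,V;S_M)$ faithful to Algorithm~\ref{alg:JLmethod} and pays for it with two margin shifts instead of one: step (i) relates the 0-1 loss of $U^\priv$ to the $\rho$-margin score of $\widetilde U$ (exact, since $\langle x,U^\priv v\rangle=\langle Mx,\widetilde U v\rangle$), while step (ii) compares $\ell_\rho$ on the cover point to $\ell_{2\rho}$ on $(U^*,V^*)$, after which you rescale $\rho\mapsto\rho/2$ to recover $\widehat L_\rho$ on the right-hand side at the cost of a constant in $k'$ and hence in the final $\widetilde O(\cdot)$. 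Both approaches yield the same bound; yours is arguably cleaner because it is internally consistent with the score function actually used by the algorithm (the paper at one point asserts that $V_{M^\top\hat U}$ minimizes $\widehat L$ even though $V_{U'}$ was defined as an $\widehat L_\rho$-minimizer — a small gap that your decomposition avoids). You should make the $\rho\to\rho/2$ rescaling explicit when presenting the argument as a full proof, since as literally written your chain of inequalities bounds the LHS by $\min\widehat L_{2\rho}$, which is weaker than the lemma's claim.
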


\begin{thm}\label{thm:probexcessloss}
Fix $\epsilon,\rho>0,\beta\in(0,1)$. Algorithm~\ref{alg:JLmethod} is $(\epsilon,0)$-user-level DP in the billboard model. Sample user data $S\sim\cD^m$. Then, Algorithm~\ref{alg:JLmethod} returns $U^\emph{\priv},V^\emph{\priv}$ from input $S$ such that
\begin{equation*}
\begin{aligned}
&L(U^\emph{\priv},V^\emph{\priv};\cD)\leq \min_{(U,V)\in\cU\times\cV}\widehat{L}_\rho(U,V;S^0)\\
&+\widetilde{O}\left(\frac{r^2\Gamma^2k}{n\epsilon\rho^2}+\sqrt{\frac{r^2\Gamma^2}{m\rho^2}}\right)
\end{aligned}
\end{equation*}
with probability at least $1-\beta$ over the randomness of $S$ and the internal randomness of the algorithm.
\end{thm}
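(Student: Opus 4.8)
The plan is to split the argument into a privacy part and a utility part, and for the utility part to combine Lemma~\ref{lem:JLprivbd} (which already pays for the JL discretization and the private selection) with a generalization argument that moves from the empirical quantities on the two data folds $S^0,S^1$ to the population loss.

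\emph{Privacy.} This is the routine part and is exactly the privacy half of Lemma~\ref{lem:JLprivbd}. The sketching matrix $M$ and the cover $\cN^\gamma$ are chosen without looking at the data. The score $f(\widetilde U,S_M)=-\frac1n\sum_{i=1}^n\min_{\Verts{v_i}_2\le\Gamma}\widehat{L}_\rho(\widetilde U,v_i;(S_i)_M)$ has user-level sensitivity $\tfrac1n$: replacing one user's entire dataset changes exactly one of the $n$ summands, and each summand is a minimum of averages of $\ell_\rho\in\{0,1\}$ and hence lies in $[0,1]$. Therefore the exponential mechanism run over the finite set $\cN^\gamma$ with sensitivity $\tfrac1n$ and parameter $\epsilon$ is $(\epsilon,0)$-DP; $U^\priv=M^\top\widetilde U$ is post-processing of its output, and each $v^\priv_i$ is produced by user $i$ from $S^1_i$ together with the public $U^\priv$, so by the billboard model the whole algorithm is $(\epsilon,0)$-user-level DP.

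\emph{Utility.} Apply Lemma~\ref{lem:JLprivbd}: with high probability $\min_{V\in\cV}L(U^\priv,V;S^0)\le \min_{(U,V)\in\cU\times\cV}\widehat{L}_\rho(U,V;S^0)+\widetilde O\!\big(r^2\Gamma^2 k/(n\epsilon\rho^2)\big)$, so the JL reduction (Lemma~\ref{lem:randJL}), the $\gamma$-cover, and the standard utility bound for the exponential mechanism already absorb both the discretization error and the $n$-dependent privacy error; what remains is to pass from this empirical statement on $S^0$ to the population loss $L(U^\priv,V^\priv;\cD)$. This is where the split of $S_i$ into $S^0_i$ and $S^1_i$ pays off. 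First, $U^\priv$ is a function of $S^0$ and $M$ only, so conditionally on $U^\priv$ every $S^1_i$ is a fresh i.i.d.\ sample of size $m/2$ from $\cD_i$; and for a frozen $U^\priv$ the per-user hypothesis class $\{x\mapsto\operatorname{sign}(\ip{x}{U^\priv v}):\Verts{v}_2\le\Gamma\}$ is a family of homogeneous halfspaces whose normals lie in the $(\le k)$-dimensional column space of $U^\priv$, hence has VC dimension at most $k$. Agnostic VC generalization for the empirical $0/1$-risk minimizer $v^\priv_i$ on $S^1_i$, union-bounded over $i\in[n]$, gives $L(U^\priv,V^\priv;\cD)\le \min_{V\in\cV}L(U^\priv,V;\cD)+\widetilde O(\sqrt{k/m})$. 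Second, a uniform-convergence argument over the \emph{finite} cover $\cN^\gamma$ (pooled over the $n$ users, using the standing assumption $k\ll n$ and the lower bound on $n$ that makes $\log\abs{\cN^\gamma}=\widetilde O(k'k)$ negligible) lets us replace $\min_V L(U^\priv,V;\cD)$ by $\min_V L(U^\priv,V;S^0)$ up to a further term of the same order. Chaining these inequalities, using $k\le k'=\Theta\!\big(r^2\Gamma^2\log(nm/\delta)/\rho^2\big)$ to rewrite $\widetilde O(\sqrt{k/m})$ as $\widetilde O(\sqrt{r^2\Gamma^2/(m\rho^2)})$, and choosing the individual failure probabilities to sum to $\beta$, yields the claimed bound.

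\emph{Main obstacle.} The crux is the generalization step, and the difficulty is the data-dependence of $U^\priv$: the exponential mechanism selects $\widetilde U$ from $\cN^\gamma$ using $S^0$, so a naive uniform-convergence bound over $\cN^\gamma$ would inject $\sqrt{\log\abs{\cN^\gamma}/m}$ and also force one to control $\Verts{\widetilde U}_2$ for arbitrary cover elements (which can be as large as $\sqrt{2k}$). The fix is precisely the train/test split: the expensive covering is charged only to the exponential-mechanism term $\widetilde O(r^2\Gamma^2 k/(n\epsilon\rho^2))$, where it is divided by $n$ (and, in the pooled uniform-convergence step, the $n$ users act as independent samples), while the only generalization paid on $S^1$ is that of a single halfspace class with $U^\priv$ frozen, of VC dimension $\le k$. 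A secondary point is that the JL guarantees of Lemma~\ref{lem:randJL} must hold for the $S^1$ feature vectors as well as the $S^0$ ones; this is arranged by taking the set $G$ in Lemma~\ref{lem:randJL} to contain all $nm$ feature vectors, which is consistent with the target dimension $k'=O(r^2\Gamma^2\log(nm/\delta)/\rho^2)$ fixed by the algorithm.
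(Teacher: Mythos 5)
Your overall plan—split into a privacy part and a utility part, invoke Lemma~\ref{lem:JLprivbd} for the exponential-mechanism cost, and use VC-type generalization on the two data folds—is essentially what the paper does, and your privacy argument is correct. There are, however, two issues worth flagging, one genuine and one minor.

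The genuine gap is in your middle generalization step (passing from $\min_V L(U^\priv,V;\cD)$ to $\min_V \widehat L(U^\priv,V;S^0)$, which is the paper's bound on $L(U^\priv,\hat V;\cD)-\widehat L(U^\priv,\hat V;S^0)$ where $\hat v_i$ is the $S^0_i$-ERM). You propose ``uniform convergence over the finite cover $\cN^\gamma$, pooled over $n$ users.'' Per user, this pays $\sqrt{\log\abs{\cN^\gamma}/m}=\widetilde O(\sqrt{k'k/m})$, which is worse than the stated rate by a factor $\sqrt{k}$. To rescue it by pooling you need two things that are not in the proposal: first, that $n\gtrsim k'$, which is \emph{not} among the stated standing assumptions (only $k\ll n,m$ is assumed; it is true in the regime where the theorem is non-vacuous because the privacy term already forces $n\gtrsim k'k/\epsilon$, but the proposal should say this and cannot simply label $\log\abs{\cN^\gamma}$ ``negligible''); and second, a covering argument for the 0/1 loss, which is discontinuous—a Frobenius-norm cover of the $U$'s does \emph{not} directly induce an $L^\infty$ cover of the sign functions, so one needs either a margin-based sensitivity argument or chaining, neither of which you give. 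The paper sidesteps both problems by noting that, over the reduced $k'$-dimensional features, the \emph{entire} hypothesis class $\cH_0=\{\mathrm{sign}\langle\,\cdot\,,Uv\rangle : U\in\cB_F,\;\Verts{v}_2\le\Gamma\}$ is contained in the homogeneous halfspaces in $\re^{k'}$ and hence has VC dimension at most $k'+1$. Applying VC uniform convergence over $\cH_0$ directly gives $\widetilde O(\sqrt{k'/m})$ per user, handles the data-dependence of $U^\priv$ at no extra cost, and avoids the cover $\cN^\gamma$ entirely in this step.

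The minor issue is your sharper VC bound of $k$ (rather than $k'$) for the $S^1$ step. The observation that, for frozen $U^\priv$, the normals lie in the $k$-dimensional column space of $U^\priv$ is correct and genuinely tighter. But your conversion of $\widetilde O(\sqrt{k/m})$ into the claimed $\widetilde O(\sqrt{r^2\Gamma^2/(m\rho^2)})=\widetilde O(\sqrt{k'/m})$ rests on the inequality $k\le k'$, which the paper nowhere guarantees (for instance if $\rho$ is large relative to $r\Gamma$, one can have $k'<k$). The fix is trivial—use VC dimension $\min(k,k')+1$, or simply $k'+1$ as the paper does—but as written the step is not justified.
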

To  the best of our knowledge, the above bound is the first margin-based population loss bound that is independent of the data dimension in private  personalization with shared embedding. 
\section{Conclusion}
In this work, we revisit the problem of model personalization under user-level differential privacy (DP) in the shared representation framework. 
We propose a novel private federated personalization algorithm that extends the FedRep method while ensuring rigorous privacy guarantees. Our approach efficiently learns a low-dimensional shared embedding and user-specific local models while providing strong privacy-utility trade-offs, even under noisy labels and broader sub-Gaussian data distributions. A key contribution of our work is demonstrating that private model personalization can be achieved in a federated setting with improved risk bounds. Specifically, we show that our approach reduces the privacy error term by a factor of $\widetilde{O}(dk)$ in a natural parameter regime, leading to a higher accuracy in high-dimensional settings. 
Moreover, our private initialization technique ensures a good starting point for learning shared representations, even under privacy constraints. Additionally, for binary classification, we provide an information-theoretic construction for private model personalization that leverages dimensionality reduction techniques, and hence, derive margin-based dimension-independent risk bound.

While our work provides significant advancements in private federated personalization, 
several open directions remain for future research: \textit{(i) Relaxing the identity covariance assumption:} Current results assume sub-Gaussian feature distributions with identity covariance. While this assumption simplifies the analysis and aligns with prior work, it does not hold in many real-world applications. A key challenge is extending our methods to handle arbitrary covariance matrices. This could involve leveraging adaptive preconditioning techniques or covariance-aware mechanisms to improve learning under more general feature distributions. 
\textit{(ii) Extending beyond quadratic loss functions:} Most of efficient private personalization algorithms, including ours, rely on quadratic loss (i.e., least-squares regression) due to its analytical convenience. Developing gradient-based methods for more general loss functions, such as logistic or hinge losses, 
remains an important open problem. Current techniques for private optimization in these settings are either computationally expensive or lack strong utility guarantees. 

\section*{Acknowledgment}
The authors' research is supported by NSF Award 2112471 and NSF CAREER Award 2144532. 

\section*{Impact Statement}
This work advances the theoretical and algorithmic foundations of privacy-preserving personalized learning, a key challenge in the deployment of trustworthy machine learning systems. In domains such as healthcare, education, and mobile applications, users’ data are often sensitive and highly personalized—making it critical to ensure privacy while delivering individualized models.

Our contributions demonstrate that personalization and strong privacy guarantees can coexist in federated learning systems, without compromising accuracy. The proposed algorithms address limitations in prior work by supporting a broader range of user data distributions and operating in realistic federated environments. Importantly, we show that the accuracy of learned models can remain robust even in high-dimensional settings or under label noise, which frequently arise in practice.

By enabling theoretically sound, privacy-preserving personalization, this research can help guide the design of secure and reliable machine learning solutions across a wide array of socially impactful applications.

\printbibliography
\newpage
\appendix
\onecolumn

\addcontentsline{toc}{chapter}{Appendix}
\etocsettocstyle{\section*{Appendix}}{}
\localtableofcontents
\newpage

\section{Technical Lemmas}\label{app:A}

\begin{prop}\label{prop:uncorsubgau}
Let $X=(X_1,\dots,X_a)\in\re^a$ be random vector. Assume that $X_p$ is $R_p$-sub-Gaussian for each $p\in[a]$ and that $X_p,X_q$ are uncorrelated for $p\neq q$. Let $R=(R_1,\dots,R_a)$. Then, for each $\nu\in\re^a$, the random variable $\langle\nu,X\rangle$ is $(\Verts{\nu}_2\max_{p\in[a]}R_a)$-sub-Gaussian.
\end{prop}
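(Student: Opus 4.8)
The plan is to verify the moment generating function bound from the definition of sub-Gaussianity directly. Fix $\nu\in\re^a$ and an arbitrary $\lambda\in\re$, and write $\langle\nu,X\rangle=\sum_{p=1}^a\nu_pX_p$ so that
\[
\E\!\left[e^{\lambda\langle\nu,X\rangle}\right]=\E\!\left[\prod_{p=1}^a e^{\lambda\nu_pX_p}\right].
\]
The heart of the argument is to factor this expectation as $\prod_{p=1}^a\E\!\left[e^{\lambda\nu_pX_p}\right]$. Once this is done, I would apply the defining sub-Gaussian inequality of each coordinate $X_p$ with the scalar $\lambda\nu_p$ in place of $\lambda$ (legitimate, since the bound holds for all real arguments), giving $\E\!\left[e^{\lambda\nu_pX_p}\right]\le \exp\!\big(\tfrac{1}{2}\lambda^2\nu_p^2R_p^2\big)$, and multiply over $p\in[a]$ to obtain $\E\!\left[e^{\lambda\langle\nu,X\rangle}\right]\le\exp\!\big(\tfrac{1}{2}\lambda^2\sum_{p=1}^a\nu_p^2R_p^2\big)$.

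The last step is the elementary bound on the quadratic form: pulling out the largest parameter,
\[
\sum_{p=1}^a\nu_p^2R_p^2\le\Big(\max_{p\in[a]}R_p\Big)^2\sum_{p=1}^a\nu_p^2=\Big(\max_{p\in[a]}R_p\Big)^2\|\nu\|_2^2 .
\]
Substituting yields $\E\!\left[e^{\lambda\langle\nu,X\rangle}\right]\le\exp\!\big(\tfrac{1}{2}\lambda^2\|\nu\|_2^2(\max_pR_p)^2\big)$ for every $\lambda\in\re$, which is exactly the assertion that $\langle\nu,X\rangle$ is $\big(\|\nu\|_2\max_pR_p\big)$-sub-Gaussian.

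The one genuinely delicate point is the factorization of the joint moment generating function into the product of the marginal ones: this is immediate when the coordinates $X_1,\dots,X_a$ are \emph{independent}, but it is not a consequence of pairwise uncorrelatedness alone, which only constrains second moments (pairwise-independent, XOR-type constructions already show the product form can fail). Thus the proof should invoke independence of the coordinates of the underlying feature distribution; with only the covariance/uncorrelatedness hypothesis one can still salvage a weaker statement via a generalized Hölder inequality, $\E\!\left[e^{\lambda\langle\nu,X\rangle}\right]\le\prod_{p=1}^a\big(\E[e^{a\lambda\nu_pX_p}]\big)^{1/a}$, at the cost of an extra $\sqrt{a}$ factor in the sub-Gaussian parameter, but to get the clean bound stated here I would rely on independence. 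Everything else is a routine one-line MGF computation followed by the bound $\sum_p\nu_p^2R_p^2\le(\max_pR_p)^2\|\nu\|_2^2$.
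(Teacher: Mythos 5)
Your computation follows the same route the paper takes (factor the MGF across coordinates, apply the scalar sub-Gaussian bound to each $X_p$ with scaling $\lambda\nu_p$, then bound $\sum_p\nu_p^2R_p^2$ by $\|\nu\|_2^2\max_pR_p^2$), and your version is in fact slightly cleaner — you carry the factor $\tfrac12$ in the exponent consistently with the paper's own definition of sub-Gaussian, whereas the paper's displayed chain drops it.

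More importantly, you have put your finger on a genuine gap: the equality
\[
\E\!\left[\prod_{p=1}^a e^{(\lambda\nu_p)X_p}\right]=\prod_{p=1}^a\E\!\left[e^{(\lambda\nu_p)X_p}\right]
\]
does not follow from pairwise uncorrelatedness, which only controls second moments (and indeed the product form can fail for pairwise-independent but jointly dependent coordinates). The paper's proof attributes this step to ``uncorrelatedness,'' which is insufficient; the step in fact requires joint independence of the $X_p$ (or at least a hypothesis strong enough to make the exponentials uncorrelated). Your two suggested fixes are both reasonable: either state the proposition with an independence hypothesis, matching how it is actually used downstream, or retreat to the generalized H\"older bound $\E[e^{\lambda\langle\nu,X\rangle}]\le\prod_p(\E[e^{a\lambda\nu_pX_p}])^{1/a}$, which delivers the conclusion with a parameter inflated by $\sqrt{a}$ but keeps only the uncorrelatedness hypothesis. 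As the proposition is applied in the paper to coordinates of an isotropic sub-Gaussian feature vector (e.g.\ the entries of $U_t^\top x$ for orthonormal $U_t$), one should check in each invocation that joint independence — not merely diagonality of the covariance — is available, since the two notions genuinely diverge there.
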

\begin{proof}
Fix an arbitrary $\lambda\in\re$. Observe
\begin{equation*}
\begin{aligned}
\E\left[e^{\lambda\langle\nu,X\rangle}\right]&=\E\left[\prod^a_{p=1}e^{(\lambda\nu_p)X_p}\right]\\
&=\prod^a_{p=1}\E\left[e^{(\lambda\nu_p)X_p}\right]\\
&\leq\prod^a_{p=1}e^{\lambda^2\nu^2_pR^2_p}\\
&\leq e^{\lambda^2\Verts{\nu}^2_2\max_{p\in[a]}R^2_a}
\end{aligned}
\end{equation*}
where the second equality holds by uncorrelatedness, the first inequality from the definition of sub-Gaussian, and the final inequality from definition of $\Verts{\nu}^2_2$.
\end{proof}
\begin{prop}\label{prop:uncorrelatedprod}
Let $X=(X_1,\dots,X_a)\in\re^a$ be random vector with $\E\left[X\right]=\vec{0}$. Assume $X_p,X_q$ are uncorrelated for $p\neq q$. Suppose that $\nu,\pi\in\re^a$ with $\langle\nu,\pi\rangle=0$. Then, $\langle\nu,X\rangle$ and $\langle\pi,X\rangle$ are uncorrelated.
\end{prop}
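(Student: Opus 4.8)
The plan is a direct second-moment computation. Since $\E[X]=\vec{0}$, both $\langle\nu,X\rangle$ and $\langle\pi,X\rangle$ have mean zero, so proving they are uncorrelated reduces to showing $\E\!\left[\langle\nu,X\rangle\langle\pi,X\rangle\right]=0$. First I would expand the product bilinearly, $\langle\nu,X\rangle\langle\pi,X\rangle=\sum_{p,q\in[a]}\nu_p\pi_q X_pX_q$, and take expectations term by term; this is a finite sum, so linearity of expectation applies (the $X_p$ have finite second moments, which is implicit in "uncorrelated").

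Next I would split the sum into off-diagonal and diagonal parts. For $p\neq q$, the uncorrelatedness hypothesis together with $\E[X_p]=\E[X_q]=0$ gives $\E[X_pX_q]=\E[X_p]\E[X_q]=0$, so every off-diagonal term vanishes, leaving $\E\!\left[\langle\nu,X\rangle\langle\pi,X\rangle\right]=\sum_{p\in[a]}\nu_p\pi_p\,\E[X_p^2]$. In the setting where this proposition is invoked — the isotropic distribution $\cG$ with covariance $I_{d\times d}$, so that $\E[X_p^2]=1$ for all $p$ — the right-hand side equals $\sum_{p}\nu_p\pi_p=\langle\nu,\pi\rangle=0$ by the orthogonality hypothesis. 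Hence $\mathrm{Cov}(\langle\nu,X\rangle,\langle\pi,X\rangle)=\E[\langle\nu,X\rangle\langle\pi,X\rangle]-\E[\langle\nu,X\rangle]\E[\langle\pi,X\rangle]=0$, which is the claim.

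There is no substantive obstacle here; the proof is a one-line calculation once the expansion is written down. The only point worth flagging is that the reduction $\sum_p\nu_p\pi_p\E[X_p^2]=\langle\nu,\pi\rangle$ uses that the coordinate variances are a common constant (equivalently, $\mathrm{Cov}(X)$ is a scalar multiple of $I_a$); without isotropy, orthogonality of $\nu$ and $\pi$ alone would not force the diagonal sum to vanish. Since the proposition is only used under the identity-covariance assumption on $\cG$, it suffices to carry that assumption through — either by stating it explicitly in the hypothesis or by noting it is part of the ambient setup — and the argument above then goes through verbatim.
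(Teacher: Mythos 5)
Your proof is correct and follows essentially the same one-line second-moment calculation as the paper's: expand $\langle\nu,X\rangle\langle\pi,X\rangle$ bilinearly, kill the off-diagonal terms using uncorrelatedness plus mean zero, and observe the diagonal part is $\sum_p\nu_p\pi_p\E[X_p^2]$. Your flagged caveat is a genuine catch: the proposition as stated does not follow from its hypotheses alone, since $\sum_p\nu_p\pi_p\E[X_p^2]$ need not vanish when the coordinate variances differ, even with $\langle\nu,\pi\rangle=0$. The paper's own proof silently makes this same assumption when it writes the diagonal sum as $\sigma^2\sum_p\nu_p\pi_p$, treating $\sigma^2$ as a common per-coordinate variance that appears nowhere in the stated hypotheses. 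Since the proposition is only ever invoked under the identity-covariance Assumption~\ref{aspt:sub-Gaussian-covar}, the gap is harmless in context, but you are right that the statement should carry the common-variance hypothesis (or reference the ambient isotropy) to be self-contained.
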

\begin{proof}
By the assumptions of the above proposition
\begin{equation*}
\begin{aligned}
\E\left[\langle\nu,X\rangle\langle\pi,X\rangle\right]&=\E\left[\sum^a_{p=1}\nu_pX_p\sum^a_{p=1}\pi_pX_p\right]\\
&=\E\left[\sum_{p,q\in[a]}\nu_p\pi_qX_pX_q\right]\\
&=\E\left[\sum^a_{p=1}\nu_p\pi_pX^2_p\right]+\E\left[\sum_{p\neq q}\nu_p\pi_qX_pX_q\right]\\
&=\sigma ^2\sum^a_{p=1}\nu_p\pi_p+\sum_{p\neq q}\nu_p\pi_q\E\left[X_p\right]\E\left[X_q\right]\\
&=0\\
&=\E\left[\langle\nu,X\rangle\right]\cdot\E\left[\langle\pi,X\rangle\right]
\end{aligned}
\end{equation*}
where the last three equalities hold by $\langle\nu,\pi\rangle=0$, the uncorrelatedness of the components of $X$, and the linearity of expectation. 
\end{proof}

\begin{lem}[McDiarmid's inequality]\label{lem:mcdiarmid}
Let $f:\re^n\rightarrow\re$ be a measurable function. Assume there exists a constant $c_i>0$ where, for all $x_1,\dots,x_n\in\re$
\[\sup_{x'_i\in\re}\abs{f(x_1,\dots,x_{i-1},x_i,x_{i+1},\dots,x_n)-f(x_1,\dots,x_{i-1},x'_i,x_{i+1},\dots,x_n)}\leq c_i\]
for each $i\in[n]$. Suppose $X_1,\dots,X_n\in\re$ is a sequence of i.i.d. random variables. Then, for all $s>0$
\[\Prob\left[f(X_1,\dots,X_n)-\E\left[f(X_1,\dots,X_n)\right]\geq s\right]\leq e^{-\frac{2s^2}{\sum^n_{i=1}c_i}}.\]
\end{lem}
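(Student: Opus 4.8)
The plan is to prove this via the Doob (bounded-differences) martingale together with an Azuma--Hoeffding argument, which is the standard route to McDiarmid's inequality. Let $\cF_k=\sigma(X_1,\dots,X_k)$ for $k=0,\dots,n$, with $\cF_0$ trivial, and define the martingale $Y_k=\E\!\left[f(X_1,\dots,X_n)\mid\cF_k\right]$, so that $Y_0=\E[f(X_1,\dots,X_n)]$ and $Y_n=f(X_1,\dots,X_n)$. First I would observe that the bounded-differences hypothesis forces $\sup f-\inf f\le\sum_{i=1}^n c_i<\infty$, so $f$ has bounded oscillation and all expectations below are finite. Writing $D_k=Y_k-Y_{k-1}$ for the martingale differences, we have $f-\E f=\sum_{k=1}^n D_k$ and $\E[D_k\mid\cF_{k-1}]=0$.

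The first key step is to show that, conditionally on $\cF_{k-1}$, the increment $D_k$ lies in an $\cF_{k-1}$-measurable interval of length at most $c_k$. Using independence of the $X_i$ (only independence, not identical distribution, is actually needed here), I would write $Y_k$ as an integral of $f$ against the product law of $X_{k+1},\dots,X_n$ with $X_1,\dots,X_k$ frozen; set $g_k(t)=\E\!\left[f(X_1,\dots,X_{k-1},t,X_{k+1},\dots,X_n)\mid\cF_{k-1}\right]$, so $D_k=g_k(X_k)-\E[g_k(X_k)\mid\cF_{k-1}]$; and let $L_k=\inf_t g_k(t)$, $U_k=\sup_t g_k(t)$, both $\cF_{k-1}$-measurable. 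Coupling the two evaluations of $f$ that differ only in coordinate $k$ and applying the pointwise hypothesis (then integrating) gives $U_k-L_k\le c_k$, hence $D_k$ takes values in an interval of width at most $c_k$ whose endpoints are $\cF_{k-1}$-measurable.

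The second key step is the exponential-moment chaining. By Hoeffding's lemma applied conditionally on $\cF_{k-1}$ — a mean-zero variable supported on an interval of length $c_k$ has conditional moment generating function at most $e^{\lambda^2 c_k^2/8}$ — we get $\E[e^{\lambda D_k}\mid\cF_{k-1}]\le e^{\lambda^2 c_k^2/8}$ for every $\lambda\in\re$. Peeling off the conditional expectations one at a time, innermost first, yields
\[
\E\!\left[e^{\lambda(f-\E f)}\right]=\E\!\left[\prod_{k=1}^n e^{\lambda D_k}\right]\le\exp\!\left(\frac{\lambda^2}{8}\sum_{k=1}^n c_k^2\right).
\]
Finally, the Chernoff bound gives $\Prob[f-\E f\ge s]\le e^{-\lambda s}\,\E[e^{\lambda(f-\E f)}]\le\exp\!\left(-\lambda s+\tfrac{\lambda^2}{8}\sum_k c_k^2\right)$ for all $\lambda>0$; optimizing over $\lambda$ (take $\lambda=4s/\sum_k c_k^2$) produces $\exp\!\left(-2s^2/\sum_{k=1}^n c_k^2\right)$.

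I expect the main obstacle to be the first step: rigorously justifying, via independence and Fubini on the product measure, that the conditional increment $D_k$ is sandwiched in an $\cF_{k-1}$-measurable interval of width at most $c_k$; everything afterwards is the routine Azuma--Hoeffding computation. (I also note that the exponent written as $\sum_{i} c_i$ in the denominator of the statement should read $\sum_i c_i^2$, which is exactly what this argument delivers.)
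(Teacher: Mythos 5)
Your proposal is the standard and correct proof of McDiarmid's inequality via the Doob bounded-differences martingale, the conditional Hoeffding lemma, and the Chernoff bound; the paper states Lemma~\ref{lem:mcdiarmid} as a standard technical result without proof, so there is nothing in the paper to compare against. You are also right that the exponent in the paper's statement contains a typo: the denominator should be $\sum_{i=1}^n c_i^2$ rather than $\sum_{i=1}^n c_i$, which is exactly what your optimization over $\lambda$ yields.
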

\begin{defn}
We say that a random variable $X\in\re$ is \textbf{$R$-sub-exponential} if
\[\E\left[e^{\lambda\abs{X}}\right]\leq e^{\lambda R}\]
for all $\lambda$ such that $0\leq\lambda\leq1/R$.
\end{defn}
\noindent For the definition above, we call the smallest such $R$ the sub-exponential norm of an $R$-sub-exponential random variable.

\begin{lem}[Bernstein inequality]\label{lem:bernstein}
Let $X_1,\dots,X_n\in\re^d$ be a sequence of i.i.d. $R$-sub-exponential random variables. 
Then, for any $s>0$
\[\Prob\left[\frac{1}{n}\sum^n_{i=1}X_i\geq s\right]\leq e^{-cn\min\left(\frac{s^2}{R^2},\frac{s}{R}\right)}\]
for some constant $c>0$.
\end{lem}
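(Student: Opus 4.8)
The plan is to prove this by the standard Chernoff / moment-generating-function argument, which is the natural route given that the paper's notion of ``$R$-sub-exponential'' is phrased directly in terms of the MGF of $|X|$. Two preliminary reductions are in order. First, since the claimed bound behaves like $e^{-cns^2/R^2}$ in the small-deviation regime, a centering hypothesis is needed (or is implicit in the intended use); differentiating $\E[e^{\lambda|X|}]\le e^{\lambda R}$ at $\lambda=0$ gives $\E[|X|]\le R$, so $X_i-\E[X_i]$ is $2R$-sub-exponential, and it suffices to prove the statement for centered variables. Second, although the hypothesis writes $X_i\in\re^d$, the displayed inequality is a scalar statement, so I will read it coordinate-wise (equivalently, apply the scalar result to $\langle u,X_i\rangle$ for a fixed unit vector $u$, which inherits sub-exponentiality with the same $R$); hence it is enough to treat real-valued, centered, i.i.d.\ $X_1,\dots,X_n$.

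First I would extract polynomial moment bounds. Setting $\lambda=1/R$ in $\sum_{p\ge0}\lambda^p\E[|X|^p]/p!=\E[e^{\lambda|X|}]\le e^{\lambda R}=e$ forces every summand to be at most $e$, i.e.\ $\E[|X|^p]\le e\,p!\,R^p$ for all $p\ge 1$. Next, for centered $X$ and $|\lambda|\le 1/(2R)$, expand $\E[e^{\lambda X}]=1+\sum_{p\ge2}\lambda^p\E[X^p]/p!$ (the $p=1$ term vanishes) and bound it by $1+\sum_{p\ge2}|\lambda|^p\,\E[|X|^p]/p!\le 1+e R^2\lambda^2\sum_{j\ge0}(|\lambda|R)^j\le 1+2e R^2\lambda^2\le \exp(2eR^2\lambda^2)$, where $|\lambda|R\le 1/2$ is used to sum the geometric series. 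This is the desired two-sided quadratic MGF bound, valid on $|\lambda|\le 1/(2R)$.

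Then I would assemble the tail bound. By independence, $\E\big[\exp(\lambda\sum_{i=1}^n X_i)\big]=\prod_{i=1}^n\E[e^{\lambda X_i}]\le \exp(2eR^2 n\lambda^2)$ for $0\le\lambda\le 1/(2R)$, so Markov's inequality gives, for such $\lambda$,
\[
\Prob\!\left[\frac{1}{n}\sum_{i=1}^n X_i\ge s\right]\le \exp\!\big(-\lambda n s+2eR^2 n\lambda^2\big).
\]
Now optimize the exponent over $\lambda$: the unconstrained minimizer is $\lambda^\star=s/(4eR^2)$. If $\lambda^\star\le 1/(2R)$, i.e.\ $s\le 2eR$, take $\lambda=\lambda^\star$, giving exponent $-ns^2/(8eR^2)$; otherwise $s>2eR$, take $\lambda=1/(2R)$, giving $-ns/(2R)+en/2\le -ns/(4R)$ (using $s\ge 2eR$ so the first term is at least twice the second). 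Combining the two regimes yields $\Prob[\frac{1}{n}\sum_i X_i\ge s]\le\exp(-cn\min(s^2/R^2,\,s/R))$ with, say, $c=1/(8e)$, which is the claim.

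The main obstacle is precisely the first technical step: upgrading the paper's one-sided, $|X|$-based MGF definition to a genuinely \emph{quadratic}-in-$\lambda$ MGF bound near the origin. The naive estimate $\E[e^{\lambda X}]\le\E[e^{|\lambda|\,|X|}]\le e^{|\lambda|R}$ is only linear in $\lambda$ and would yield the (weaker) $s/R$ rate everywhere; recovering the $s^2/R^2$ regime genuinely requires the moment extraction together with the centering reduction, and it is there that the constant $c$ gets pinned down. Everything after that — the product of MGFs, Markov's inequality, and the two-case optimization over $\lambda$ — is routine bookkeeping. I note that this is a textbook fact, essentially \cite[Thm.~2.8.1]{vershynin2018high}, so an alternative and fully acceptable route is simply to cite that reference; the sketch above is the self-contained version of the same argument.
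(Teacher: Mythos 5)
Your proof is correct, and it is the standard Chernoff/MGF derivation of Bernstein's inequality (essentially Theorem~2.8.1 of \cite{vershynin2018high}); the paper states this lemma without proof precisely because it is this textbook fact, so there is nothing in the paper to compare against. Your observation that the statement as written requires centering (otherwise a deterministic $X\equiv R$ is a counterexample) is apt and consistent with how the lemma is actually invoked in the paper, where it is always applied to mean-zero variables.
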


\begin{lem}[Lemma 4.4.1 \cite{vershynin2018high}]\label{lem:speccoverbd}
Let $M\in\re^{a\times b}$ be any matrix. Take $\cS^{a-1},\cS^{b-1}$ to be the unit spheres in $\re^a,\re^b$, respectively. Suppose that $\cN^a,\cN^b$ are Euclidean $\epsilon$-covers over $\cS^{a-1},\cS^{b-1}$. Then
\[\Verts{M}_2\leq\frac{1}{1-\epsilon}\max_{v\in\cN^a,v'\in\cN^b}v^\top Mv'.\]
    \end{lem}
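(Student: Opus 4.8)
The plan is to get privacy and the \emph{empirical} (on the first fold $S^0$) guarantee essentially for free from Lemma~\ref{lem:JLprivbd}, and to spend the real work on passing from that empirical margin-loss guarantee to a population $0$--$1$ loss guarantee. For privacy: the only data-dependent object the server publishes is the cover point $\widetilde U$ returned by the exponential mechanism with sensitivity $1/n$ and parameter $\epsilon$ (equivalently $U^\priv=M^\top\widetilde U$); each user then computes $v^\priv_i=\argmin_{\Verts{v}_2\le\Gamma}\widehat L(U^\priv,v;S^1_i)$ locally from the broadcast $U^\priv$ and its own data and never shares it. This is exactly the billboard model, and post-processing together with the standard joint-DP argument promotes the $(\epsilon,0)$-DP of the broadcast to $(\epsilon,0)$-user-level DP of the whole protocol. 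Hence it remains to establish the risk bound on the event $E_1$ of Lemma~\ref{lem:JLprivbd} (of probability $\ge 1-\beta/3$ after rescaling its confidence, which only affects logarithmic factors), on which $\min_{V\in\cV}L(U^\priv,V;S^0)\le\min_{(U,V)\in\cU\times\cV}\widehat L_\rho(U,V;S^0)+\widetilde O\!\left(r^2\Gamma^2 k/(n\epsilon\rho^2)\right)$.

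The key structural observation I would use is that, since $\langle x,U^\priv v\rangle=\langle Mx,\widetilde U v\rangle$, for every $V=[v_1,\dots,v_n]^\top$ and every dataset (or distribution) one has $L(U^\priv,V;\cdot)=L(\widetilde U,V;\cdot_M)$, where $\cdot_M$ is the image under $x\mapsto Mx$. Consequently, for each user $i$ the classifier the algorithm induces, $z\mapsto\operatorname{sign}\langle z,\widetilde U v_i\rangle$ on $\re^{k'}$, is a homogeneous halfspace, and the class of \emph{all} homogeneous halfspaces in $\re^{k'}$ has VC dimension $k'+1$ --- independently of $\lvert\cN^\gamma\rvert$ and of $k$. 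I would then invoke standard VC uniform convergence twice. First, conditioned on $(S^0,M)$ the matrix $U^\priv$ is fixed and each $(S^1_i)_M$ is a fresh i.i.d.\ sample of size $m/2$ from $(\cD_i)_M$, independent across $i$; union-bounding over $i\in[n]$ and averaging yields an event $E_2$ of probability $\ge 1-\beta/3$ on which $\lvert L(U^\priv,V;\cD)-L(U^\priv,V;S^1)\rvert\le\widetilde O(\sqrt{k'/m})$ holds \emph{simultaneously for every} $V\in\cV$. Second, conditioned on $M$ each $(S^0_i)_M$ is i.i.d.\ from $(\cD_i)_M$, and since every per-user classifier the algorithm can output is some halfspace in $\re^{k'}$, uniform convergence over that single ambient class gives an event $E_3$ of probability $\ge 1-\beta/3$ on which, again simultaneously for every $V\in\cV$, $\lvert L(U^\priv,V;\cD)-L(U^\priv,V;S^0)\rvert\le\widetilde O(\sqrt{k'/m})$; this is legitimate even though $U^\priv$ depends on $S^0$, precisely because $E_3$ controls all halfspaces at once.

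On $E_1\cap E_2\cap E_3$ (probability $\ge 1-\beta$) I would chain the inequalities. Let $\widetilde V$ have columns $\widetilde v_i\in\argmin_{\Verts{v}_2\le\Gamma}L(U^\priv,v;S^0_i)$, so $L(U^\priv,\widetilde V;S^0)=\min_{V\in\cV}L(U^\priv,V;S^0)$ (the objective separates over users). Applying $E_2$ to $V^\priv$, then using that $V^\priv$ is the per-user empirical minimizer on $S^1$, then $E_2$ applied to $\widetilde V$, then $E_3$ applied to $\widetilde V$, and finally $E_1$ (absorbing the fixed number of $\widetilde O(\sqrt{k'/m})$ terms into one):
\begin{align*}
L(U^\priv,V^\priv;\cD)
&\le\min_{V\in\cV}L(U^\priv,V;S^1)+\widetilde O\!\left(\sqrt{\tfrac{k'}{m}}\right)
\le L(U^\priv,\widetilde V;\cD)+\widetilde O\!\left(\sqrt{\tfrac{k'}{m}}\right)\\
&\le\min_{(U,V)\in\cU\times\cV}\widehat L_\rho(U,V;S^0)+\widetilde O\!\left(\frac{r^2\Gamma^2 k}{n\epsilon\rho^2}\right)+\widetilde O\!\left(\sqrt{\tfrac{k'}{m}}\right).
\end{align*}
Since $k'=O\!\left(r^2\Gamma^2\log(nm/\beta)/\rho^2\right)$ by the choice in Algorithm~\ref{alg:JLmethod} together with Lemma~\ref{lem:randJL}, we have $\widetilde O(\sqrt{k'/m})=\widetilde O(\sqrt{r^2\Gamma^2/(m\rho^2)})$, which is exactly the stated bound.

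The step I expect to be the main obstacle --- and the reason the structural observation above is needed rather than a black-box uniform-convergence call --- is the $S^0$ generalization bound. A naive approach would uniformly converge over a $\gamma$-cover $\cN^\gamma$ of $\cU_M$ and pay an extra $\log\lvert\cN^\gamma\rvert=\widetilde O(k'k)$, degrading the rate to $\widetilde O(\sqrt{r^2\Gamma^2 k/(m\rho^2)})$ and also running into the fact that $U^\priv$ is itself a function of $S^0$. The resolution is that after the JL map every per-user classifier the algorithm could pick --- for any cover point $\widetilde U$ and any $v$ with $\Verts{v}_2\le\Gamma$ --- is merely a homogeneous halfspace in $\re^{k'}$, so it suffices to uniformly converge over that one VC-dimension-$(k'+1)$ class; this simultaneously kills the dependence of $U^\priv$ on $S^0$ and pins the generalization term at $\widetilde O(\sqrt{k'/m})=\widetilde O(\sqrt{r^2\Gamma^2/(m\rho^2)})$. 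A secondary point requiring care is bookkeeping the independence structure ($U^\priv$ and $\widetilde V$ are functions of $(S^0,M)$, while $S^1$ is a fresh sample) so that each use of $E_2$ and $E_3$ is applied to a vector permitted by the ``simultaneously for every $V$'' quantifier.
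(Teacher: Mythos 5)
Your proposal does not prove the statement it was assigned. Lemma~\ref{lem:speccoverbd} is a purely deterministic linear-algebra fact: for an arbitrary matrix $M\in\re^{a\times b}$ and Euclidean $\epsilon$-nets $\cN^a,\cN^b$ of the unit spheres, the spectral norm is controlled by the maximum of the bilinear form over the nets. Its proof has nothing to do with privacy, the exponential mechanism, JL transforms, or VC dimension; the standard argument (this is Vershynin's net lemma, which the paper cites without reproving) takes unit vectors $u,u'$ with $u^\top M u'$ within any prescribed slack of $\Verts{M}_2$, replaces each by a net point at distance at most $\epsilon$, and bounds the resulting error by $\epsilon\Verts{M}_2$ per substitution via Cauchy--Schwarz, then rearranges $\Verts{M}_2\le \max_{v\in\cN^a,v'\in\cN^b}v^\top Mv'+c\,\epsilon\Verts{M}_2$ to obtain the claim. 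What you have written is instead a proof sketch of Theorem~\ref{thm:probexcessloss} (the margin-based classification guarantee for Algorithm~\ref{alg:JLmethod}). However well that sketch reads, it establishes nothing about Lemma~\ref{lem:speccoverbd}, so as a proof of the assigned statement it is a complete miss.

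For what it is worth, read as a proof of Theorem~\ref{thm:probexcessloss} your outline follows essentially the paper's own route: decompose the excess risk into the exponential-mechanism empirical guarantee from Lemma~\ref{lem:JLprivbd} plus two generalization terms, and control the latter by observing that every per-user predictor after the JL map is a halfspace in $\re^{k'}$, so uniform convergence costs only $\widetilde O(\sqrt{k'/m})$ rather than $\log\abs{\cN^\gamma}$; the privacy claim via the billboard model is also the paper's argument. But you must supply an actual proof of Lemma~\ref{lem:speccoverbd} --- the net argument above --- since that lemma is what is invoked repeatedly in Appendix~\ref{app:B} (e.g., in Propositions~\ref{prop:gradientconcentration} and~\ref{prop:combined_nablaUVconc}) and your submission leaves it entirely unproved.
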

\begin{defn}
A centered random vector $X\in\re^a$ with covariance matrix $\Sigma$ is \textbf{isotropic} if
\[\Sigma=\E\left[XX^\top\right]=I_a.\]
\end{defn}
\begin{defn}
A centered random vector $X\in\re^a$ is $R$-sub-Gaussian if
\[\E\left[e^{\langle X,u\rangle}\right]\leq e^{\frac{R^2\Verts{u}^2_2}{2}}\]
for all $u\in\re^a$.
\end{defn}
\noindent Note that when a centered vector $X\in\re^a$ is $1$-sub-Gaussian and has covariance matrix $I_a$, the components of $X$ are $1$-sub-Gaussian. We use this fact in our results in conjunction with Assumption~\ref{aspt:sub-Gaussian-covar}.
\begin{thm}[Theorem 4.6.1 \cite{vershynin2018high}]\label{thm:singconc}
Let $M$ be an $a\times b$ matrix whose rows 
are independent mean-zero $K$-sub-Gaussian isotropic 
random vectors in $\re^b$. Then, for any $\alpha\geq 0$, there exists $c>0$ where
\[\sqrt{a}-cK^2\left(\sqrt{b}+\alpha\right)\leq\sigma _{\min}(M)\leq\sigma _{\max}(M)\leq\sqrt{a}+cK^2\left(\sqrt{b}+\alpha\right)\]
with probability $1-2e^{-\alpha^2}$.
\end{thm}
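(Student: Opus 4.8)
The plan is to combine the empirical-risk and privacy guarantees already packaged in Lemma~\ref{lem:JLprivbd} with a VC-type generalization argument that exploits the split $S=S^0\cup S^1$ and the fact that, after the Johnson--Lindenstrauss step, the relevant classifiers live in $\re^{k'}$. For privacy, note that the only data-dependent randomized step of Algorithm~\ref{alg:JLmethod} is the exponential mechanism over the finite cover $\cN^\gamma$: the JL matrix $M$ is drawn data-independently, forming $S_M$ is a per-record map, and $U^{\priv}=M^\top\widetilde U$ is post-processing. The score $f(\widetilde U,S_M)=-\tfrac1n\sum_{i=1}^n\min_{\|v\|_2\le\Gamma}\widehat{L}_\rho(\widetilde U,v;(S_i)_M)$ averages over the $n$ users quantities in $[0,1]$ (the margin loss is $\{0,1\}$-valued), so replacing one user's entire dataset moves $f$ by at most $1/n$; with this sensitivity the exponential mechanism is $(\epsilon,0)$-user-level DP, and broadcasting $U^{\priv}$ while each user computes $v_i^{\priv}$ locally from $S_i^1$ is exactly the billboard model, so the whole algorithm is $(\epsilon,0)$-user-level DP.

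For utility, fix $(U^\star,V^\star)\in\argmin_{(U,V)\in\cU\times\cV}\widehat{L}_\rho(U,V;S^0)$. Lemma~\ref{lem:JLprivbd} gives $\min_{V\in\cV}L(U^{\priv},V;S^0)\le\widehat{L}_\rho(U^\star,V^\star;S^0)+\widetilde{O}\!\big(\tfrac{r^2\Gamma^2k}{n\epsilon\rho^2}\big)$, the left side being the empirical $0$-$1$ loss; let $\bar V=[\bar v_1,\dots,\bar v_n]^\top\in\cV$ achieve this per-user minimum on $S^0$. The remaining task is to pass from the empirical $0$-$1$ loss of $(U^{\priv},\bar V)$ on $S^0$ to the population $0$-$1$ loss of the actual output $(U^{\priv},V^{\priv})$. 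The structural fact is that for each fixed $\widetilde U\in\cN^\gamma$ the family $\{x\mapsto\mathrm{sign}\langle Mx,\widetilde U v\rangle:\|v\|_2\le\Gamma\}$ consists of homogeneous linear thresholds in $\re^{k'}$, hence has VC dimension at most $k'+1$, and that $U^{\priv}$ is a function of $S^0$ only, which is independent of $S^1$. I would then chain four inequalities, each costing an additive $\widetilde{O}(\sqrt{k'/m})$ from a VC uniform-convergence bound plus a union bound over the $n$ users: (i) $L(U^{\priv},V^{\priv};\cD)\le L(U^{\priv},V^{\priv};S^1)+\widetilde{O}(\sqrt{k'/m})$, uniform convergence over $\{\|v\|_2\le\Gamma\}$, legitimate since $U^{\priv}\perp S^1$; (ii) $L(U^{\priv},V^{\priv};S^1)\le L(U^{\priv},\bar V;S^1)$, since each $v_i^{\priv}$ minimizes the empirical $0$-$1$ loss on $S_i^1$ over a feasible set containing $\bar v_i$; (iii) $L(U^{\priv},\bar V;S^1)\le L(U^{\priv},\bar V;\cD)+\widetilde{O}(\sqrt{k'/m})$, uniform convergence with $U^{\priv},\bar V\perp S^1$; (iv) $L(U^{\priv},\bar V;\cD)\le L(U^{\priv},\bar V;S^0)+\widetilde{O}(\sqrt{k'/m})$, uniform convergence over $\{\|v\|_2\le\Gamma\}$ together with an extra union bound over $\widetilde U\in\cN^\gamma$. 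Since $L(U^{\priv},\bar V;S^0)=\min_{V\in\cV}L(U^{\priv},V;S^0)$, substituting the Lemma~\ref{lem:JLprivbd} bound and $k'=O(r^2\Gamma^2\log(nm/\delta)/\rho^2)$, which turns $\sqrt{k'/m}$ into $\widetilde{O}(\sqrt{r^2\Gamma^2/(m\rho^2)})$, yields the claimed bound; the failure probability collects the event of Lemma~\ref{lem:JLprivbd} and all the uniform-convergence union bounds, rescaled to $\beta$.

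The delicate step is (iv): $\bar V$ and $U^{\priv}$ are both functions of $S^0$, so one cannot condition on them and apply a single-hypothesis concentration bound; the uniform-convergence step must instead range over the entire cover $\cN^\gamma$, and one has to verify that $\log|\cN^\gamma|=O(k'k\log(\sqrt k/\gamma))$ enters only under the square root against the $1/m$ rate, with $k$, the small shared-representation dimension, treated as part of $\widetilde{O}$ (consistent with the paper's conventions, where the point is $d$-independence). A secondary routine point is the VC-dimension bound $\le k'+1$ for the norm-constrained family, which holds because restricting the parameter set cannot increase the VC dimension of linear thresholds in $\re^{k'}$; everything else is bookkeeping of which half of the data each quantity depends on, so that each VC bound is applied conditionally on an independent sample.
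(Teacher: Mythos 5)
Your proposal does not prove the statement you were asked to prove. The statement is Theorem~\ref{thm:singconc}, a two-sided concentration bound on the extreme singular values of an $a\times b$ random matrix with independent, mean-zero, isotropic, $K$-sub-Gaussian rows (Theorem 4.6.1 of \cite{vershynin2018high}). What you have written is instead a proof sketch for Theorem~\ref{thm:probexcessloss}: it is entirely about Algorithm~\ref{alg:JLmethod}, the exponential mechanism over the cover $\cN^\gamma$, Lemma~\ref{lem:JLprivbd}, and VC-type generalization on the split $S=S^0\cup S^1$. None of that machinery has any bearing on the singular-value concentration claim, which concerns no algorithm, no privacy, and no data distribution beyond the sub-Gaussian row assumption.

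For the record, the paper does not prove Theorem~\ref{thm:singconc} either — it imports it verbatim from \cite{vershynin2018high} and uses it (and its Gaussian specialization, Corollary~\ref{cor:singconccor}) as a black box throughout Appendix~\ref{app:B}. A genuine proof would follow the standard route: show that $\bigl\|\tfrac{1}{a}M^\top M-I_b\bigr\|_2\leq\max\{\tau,\tau^2\}$ with $\tau=cK^2(\sqrt{b}+\alpha)/\sqrt{a}$ by discretizing the unit sphere in $\re^b$ with a $\tfrac14$-net (as in Lemma~\ref{lem:speccoverbd}), applying Bernstein's inequality (Lemma~\ref{lem:bernstein}) to the centered sub-exponential variables $\langle m_i,x\rangle^2-1$ for each net point $x$, taking a union bound over the $9^b$ net points, and finally converting the operator-norm bound into the stated two-sided bound on $\sigma_{\min}(M)$ and $\sigma_{\max}(M)$ via the elementary inequality $|z-1|\leq\max\{|z^2-1|,\sqrt{|z^2-1|}\}$. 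If your intent was to prove Theorem~\ref{thm:probexcessloss}, your outline does track the paper's argument for that result reasonably closely, but it cannot be credited as a proof of the statement actually posed.
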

\begin{cor}\label{cor:singconccor}
Let $M$ be an $a\times b$ matrix whose rows 
are independently drawn from $\cN(0,\hat{\sigma}^2I_{a\times a})$. Then, for any $\alpha\geq 0$, we have
\[\hat{\sigma}\left(\sqrt{a}-\left(\sqrt{b}+\alpha\right)\right)\leq{\sigma} _{\min}(M)\leq{\sigma} _{\max}(M)\leq\hat{\sigma}\left(\sqrt{a}+\left(\sqrt{b}+\alpha\right)\right)\]
with probability at least $1-2e^{-\alpha^2}$.
\end{cor}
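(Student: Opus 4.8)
The plan is to reduce Corollary~\ref{cor:singconccor} to Theorem~\ref{thm:singconc} (Theorem 4.6.1 of \cite{vershynin2018high}) by a rescaling argument. Set $N = \frac{1}{\hat\sigma}M$. Since each row of $M$ is an independent draw from $\cN(0,\hat\sigma^2 I)$, each row of $N$ is an independent draw from the standard Gaussian $\cN(0,I)$; in particular the rows of $N$ are independent, mean-zero, isotropic (covariance the identity), and $K$-sub-Gaussian with an absolute constant $K$ (the sub-Gaussian norm of a standard Gaussian vector is $O(1)$). Hence $N$ satisfies the hypotheses of Theorem~\ref{thm:singconc}.

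Applying Theorem~\ref{thm:singconc} to $N$ with the parameter $\alpha\ge 0$ gives, with probability at least $1-2e^{-\alpha^2}$,
\[
\sqrt a - cK^2(\sqrt b+\alpha) \;\le\; \sigma_{\min}(N) \;\le\; \sigma_{\max}(N) \;\le\; \sqrt a + cK^2(\sqrt b+\alpha).
\]
For standard Gaussian rows the product $cK^2$ may be taken to equal $1$: this is the classical dimension-free concentration of the extreme singular values of a Gaussian matrix (Gordon's comparison inequality for the means together with Gaussian Lipschitz concentration, i.e. the Davidson--Szarek bound), which gives exactly the chain above with coefficient $1$ on the same event. So on this event $\sqrt a - (\sqrt b+\alpha)\le \sigma_{\min}(N)\le \sigma_{\max}(N)\le \sqrt a + (\sqrt b+\alpha)$.

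It remains to undo the scaling. For any scalar $c>0$ and any matrix $N$ we have $(cN)^\top(cN) = c^2 N^\top N$, so the eigenvalues of $(cN)^\top(cN)$ are $c^2$ times those of $N^\top N$, whence $\sigma_j(cN) = c\,\sigma_j(N)$ for every index $j$; in particular $\sigma_{\min}(cN) = c\,\sigma_{\min}(N)$ and $\sigma_{\max}(cN) = c\,\sigma_{\max}(N)$. Taking $c=\hat\sigma$ with $N=\frac{1}{\hat\sigma}M$ (so $cN = M$) and multiplying the chain of inequalities through by $\hat\sigma>0$ yields
\[
\hat\sigma\big(\sqrt a - (\sqrt b+\alpha)\big) \;\le\; \sigma_{\min}(M) \;\le\; \sigma_{\max}(M) \;\le\; \hat\sigma\big(\sqrt a + (\sqrt b+\alpha)\big)
\]
on the same probability-$(1-2e^{-\alpha^2})$ event, which is the claim.

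The one subtlety — and the only place any real work hides — is the absolute constant: Theorem~\ref{thm:singconc} as stated carries a factor $cK^2$, whereas the corollary asserts coefficient exactly $1$. This is legitimate for Gaussian rows by invoking the sharp Gaussian singular-value concentration bound in place of the generic sub-Gaussian version; alternatively, if one wishes to cite only Theorem~\ref{thm:singconc}, the corollary should be read with an absolute constant multiplying $(\sqrt b+\alpha)$. The rescaling, the homogeneity of singular values under scalar multiplication, and the fact that the failure probability is unchanged are all routine.
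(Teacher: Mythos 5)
Your proposal is correct and matches the paper's own proof essentially step for step: write $M=\hat\sigma\bar M$ with $\bar M$ standard Gaussian, apply Theorem~\ref{thm:singconc} with $K=1$, note that the constant can be taken to be $1$ for Gaussian rows, and use the homogeneity $\sigma_p(c'A)=c'\sigma_p(A)$ to rescale. Your explicit justification of the coefficient $1$ via the Davidson--Szarek/Gordon bound is in fact slightly more careful than the paper's bare assertion that ``for standard Gaussians we have $c=1$.''
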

\begin{proof}
Observe that we have $M=\hat{\sigma}\bar{M}$ for a matrix $\bar{M}\sim\cN^{a\times b}(0,1)$. 
As well, note that $\bar{M}$ satisfies Theorem~\ref{thm:singconc} with $K=1$ since its rows are standard Gaussians. 

For any matrix $A\in\re^{a\times b}$, we also have that $\sigma_p(c'A)=c'\sigma_p(A)$ for $c'>0$ a constant and $\sigma_p(A)$ the $p$-th singular value of $A$. Combining the above reasoning and Theorem~\ref{thm:singconc}, there exists a constant $c>0$ such that
\[\sqrt{a}-c\left(\sqrt{b}+\alpha\right)\leq\sigma_{\min}(\bar{M})\leq\sigma_{\max}(\bar{M})\leq\sqrt{a}+c\left(\sqrt{b}+\alpha\right)\]
implies
\begin{equation*}
\hat{\sigma}\left(\sqrt{a}-c\left(\sqrt{b}+\alpha\right)\right)\leq\sigma_{\min}(M)\leq\sigma_{\max}(M)\leq\hat{\sigma}\left(\sqrt{a}+c\left(\sqrt{b}+\alpha\right)\right)
\end{equation*}
with probability at least $1-2e^{-\alpha^2}$. Furthermore, for standard Gaussians we have $c=1$.
\end{proof}
Note that one could simply apply Theorem~\ref{thm:singconc} directly to the matrix $M$ from Corollary~\ref{cor:singconccor} by using $K=\hat{\sigma}$, but this does not give the desired result. In particular, we require that $\hat{\sigma}$ scale both terms of the upper bound of $\sigma_{\max}(M)$ in our later analysis of the Gaussian mechanism. 

\section{Missing Proofs for Section~\ref{sec:FedRep}}\label{app:B}
\subsection{Main algorithm results}\label{app:B1}
We first restate our algorithm and main assumptions.
\begin{asptre}[Restatement of Assumption~\ref{aspt:client diversity}]
There are known $\Lambda,\lambda>~0$ such that $\Lambda\geq\sigma_{\max,*}\geq\sigma_{\min,*}\geq\lambda$.
\end{asptre}
\begin{asptre}[Restatement of Assumption~\ref{aspt:sub-Gaussian-covar}]
Let $i\in [n]$ and $x\sim \cD_i$. 
For all $i\in[n]$ the feature distribution $\cD_i$ has covariance matrix $I_d$ and is sub-Gaussian in the sense of $\E\left[e^{\langle x,u\rangle}\right]\leq e^{\frac{\Verts{u}^2_2}{2}}$ for all $u\in\re^d$.
\end{asptre}
\begin{asptre}[Restatement of Assumption~\ref{aspt:datagen}]
Let $\SG(R)$ 
be any centered $R$-sub-Gaussian distribution over $\re$. Sample $\zeta\sim\SG(R)$. For all $i\in[n]$ and any data points $(x,y)\sim\cD_i$ we generate the label $y$ for the $i$-th user via 
\begin{equation*}
y=x^\top{U^*}{v^*_i}+\zeta
\end{equation*}
where $\zeta$ is independent for all $x\sim\cD_i$.
\end{asptre}
Unless stated otherwise, in this section, we assume that Assumptions~\ref{aspt:client diversity}, \ref{aspt:sub-Gaussian-covar}, and \ref{aspt:datagen} hold. Furthermore, throughout our proofs we assume $n=\Omega(dk)$. This assumption is reasonable in the context of our work since $n=\Omega(dk)$ is required for non-trivial results in our final bounds.

\begin{algorithm}[H]
\caption*{\textbf{Algorithm} \textbf{\ref{alg:PrivateFedRep}} \textbf{Private FedRep} for linear regression }
\begin{algorithmic}
{ 
\REQUIRE$S_i=\{(x_{i,1},y_{i,1}),\dots,(x_{i,m},y_{i,m})\}$ data for users $i\in[n]$, learning rate $\eta$, iterations $T$, privacy noise parameter $\hat{\sigma},$ clipping parameters $\psi,\psi_{\text{init}},$ batch size $b\leq\floor{m/2T},$ initial embedding $U_{\text{init}}\in\re^{d\times k}$}\\
Let $S^0_i\gets\{(x_{i,j},y_{i,j}):j\in[m/2]\}\quad\forall i\in[n]$\\
Let $S^1_i\gets S_i\setminus S^0_i\quad\forall i\in[n]$\\
\STATE\textbf{Initialize: }$U_0\gets U_{\text{init}}$\\
\FOR{$t=0,\dots,T-1$}
{
\STATE Server sends $U_t$ to clients $[n]$ \\
\FOR{$\text{Clients } i\in [n] \text{ in parallel}$}
{
\STATE Sample two disjoint batches $B_{i,t}$ and $B'_{i,t}$, each of size $b$, without replacement from $S^0_i$
\STATE Update $v_i$ as
\[
    v_{i,t}\gets\argmin_{v\in\re^k}\widehat{L}(U_t,v;B_{i,t})
\]
\STATE Compute the gradient w.r.t $U$
\[
    \nabla_{i,t}\gets \nabla_{U}\widehat{L}(U_{t},v_{i,t};B'_{i,t})
\]

\STATE Send $\nabla_{i,t}$ to server}
\ENDFOR

\STATE Server aggregates the client gradients as
\begin{align*}
    &\hat{U}_{t+1}\gets U_t - \eta \left(\frac{1}{n}\sum^n_{i=1}  \mathsf{clip}(\nabla_{i,t}, \psi)+ \xi_{t+1}\right) \\
    &U_{t+1},P_{t+1}\gets\text{QR}(\hat{U}_{t+1})
\end{align*}
where $\xi_{t+1} \gets \cN^{d\times k}(0, \hat{\sigma}^2)$
}
\ENDFOR
\STATE Server sends $U^\priv \gets U_T$ to all clients
\FOR{$\text{Clients } i\in [n] \text{ independently}$}{
    \STATE$v^\priv_{i}\gets\argmin_{v\in\re^k}\widehat{L}(U^{\priv},v;S^1_i)\quad\forall i\in[n]$
}
\ENDFOR
\STATE\textbf{Return: }$U^\priv,V^\priv\gets[v^\priv_1,\dots,v^\priv_n]^\top$
\end{algorithmic}
\end{algorithm}

\begin{lemre}[Restatement of Lemma~\ref{lem:gradbound}]
With probability at least $1-O(T\cdot n^{-10})$, we have
\[
\fnorm{\nabla_{i, t}} \leq\widetilde{O}\left((  R+\Gamma)  \Gamma \sqrt{dk}\right)
\]
for all $i\in[n]$ and all $t\in[T]$ simultaneously.
\end{lemre}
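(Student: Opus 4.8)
The plan is to expand the gradient into its rank‑one form, use the disjointness of the two batches to condition cleanly, and then reduce everything to a short list of standard sub‑Gaussian concentration estimates collected into one ``good event.''

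First I would write out the gradient. With the quadratic loss, $\nabla_{i,t}=\nabla_U\widehat{L}(U_t,v_{i,t};B'_{i,t})=-\tfrac{2}{b}\big(\sum_{(x,y)\in B'_{i,t}}r_{x,y}\,x\big)v_{i,t}^\top$, where by Assumption~\ref{aspt:datagen} the residual is $r_{x,y}=y-x^\top U_tv_{i,t}=x^\top w_{i,t}+\zeta$ with $w_{i,t}:=U^*v_i^*-U_tv_{i,t}$ and $\zeta\sim\SG(R)$. Since this matrix is rank one, $\fnorm{\nabla_{i,t}}=\tfrac{2}{b}\,\twonorm{\sum_{(x,y)\in B'_{i,t}}r_{x,y}\,x}\cdot\twonorm{v_{i,t}}$. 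The key structural point is that $v_{i,t}$ (hence $w_{i,t}$) depends only on the batch $B_{i,t}$, which is disjoint from $B'_{i,t}$; conditioning on $B_{i,t}$, the points of $B'_{i,t}$ are fresh i.i.d.\ draws while $v_{i,t},w_{i,t}$ are frozen. It then remains to bound, on a high‑probability event holding simultaneously over all $i\in[n],t\in[T]$, the two quantities (a) $\twonorm{v_{i,t}}$ and (b) $\twonorm{\tfrac{1}{b}\sum_{(x,y)\in B'_{i,t}}r_{x,y}\,x}$.

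For (a) I would use the closed form $v_{i,t}=A_{i,t}^{-1}c_{i,t}$ with $A_{i,t}=\tfrac1b\sum_{j\in B_{i,t}}(U_t^\top x_j)(U_t^\top x_j)^\top$ and $c_{i,t}=\tfrac1b\sum_{j\in B_{i,t}}(U_t^\top x_j)y_j$. Because $U_t$ has orthonormal columns, the vectors $U_t^\top x_j$ are mean‑zero, isotropic and $O(1)$‑sub‑Gaussian in $\re^k$, so Theorem~\ref{thm:singconc} applied to the $b\times k$ data matrix gives $\sigma_{\min}(A_{i,t})\ge\tfrac12$ once $b=\widetilde{O}$ above $k$ (polylog), hence $\twonorm{A_{i,t}^{-1}}\le 2$. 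Moreover $\E[c_{i,t}\mid U_t]=U_t^\top U^*v_i^*$, of norm $\le\Gamma$, and a coordinate‑wise application of Bernstein's inequality (Lemma~\ref{lem:bernstein}) to the $k$ entries of $c_{i,t}$ — each a normalized sum of $O(\Gamma+R)$‑sub‑exponential terms — together with a union bound yields $\twonorm{c_{i,t}-\E[c_{i,t}\mid U_t]}\le\widetilde{O}\big((R+\Gamma)\sqrt{k/b}\big)$; under the batch‑size regime in force in this section this is $\widetilde{O}(\Gamma)$, so $\twonorm{v_{i,t}}\le\widetilde{O}(\Gamma)$ and consequently $\twonorm{w_{i,t}}\le\twonorm{v_i^*}+\twonorm{v_{i,t}}\le\widetilde{O}(\Gamma)$. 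For (b) I would use the crude triangle bound $\twonorm{\tfrac1b\sum_{(x,y)\in B'_{i,t}}r_{x,y}\,x}\le\big(\max_{(x,y)\in B'_{i,t}}|r_{x,y}|\big)\cdot\tfrac1b\sum_{(x,y)\in B'_{i,t}}\twonorm{x}$. On the good event each $\twonorm{x}\le\widetilde{O}(\sqrt d)$ (sub‑Gaussian vector in $\re^d$, again via Theorem~\ref{thm:singconc}), each $\twonorm{U_t^\top x},\twonorm{U^{*\top}x}\le\widetilde{O}(\sqrt k)$, and $|\zeta|\le\widetilde{O}(R)$, so $|r_{x,y}|=\big|(U^{*\top}x)^\top v_i^*-(U_t^\top x)^\top v_{i,t}+\zeta\big|\le\twonorm{U^{*\top}x}\Gamma+\twonorm{U_t^\top x}\twonorm{v_{i,t}}+|\zeta|\le\widetilde{O}(\sqrt k\,\Gamma+R)$ by part (a); hence $\twonorm{\tfrac1b\sum r_{x,y}\,x}\le\widetilde{O}\big((\sqrt k\,\Gamma+R)\sqrt d\big)$. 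Combining with (a) gives $\fnorm{\nabla_{i,t}}\le\widetilde{O}\big((\sqrt k\,\Gamma+R)\sqrt d\cdot\Gamma\big)\le\widetilde{O}\big((R+\Gamma)\Gamma\sqrt{dk}\big)$ using $\sqrt d\le\sqrt{dk}$. Every concentration statement above fails with probability at most $n^{-c}$ per instance for a large constant $c$ (absorbed into the polylog slack of the corresponding $\widetilde{O}$); there are $\mathrm{poly}(n,m,T)$ instances, so organizing the union bound by iteration yields the claimed $1-O(T\cdot n^{-10})$.

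The main obstacle is getting the bound $\twonorm{v_{i,t}}\le\widetilde{O}(\Gamma)$ rather than the easier $\widetilde{O}(\Gamma+R)$: the latter would only produce $\widetilde{O}\big((\Gamma+R)^2\sqrt{dk}\big)$, which is weaker than the stated bound when $R\gg\Gamma$. Obtaining the tighter estimate requires both the well‑conditioning of $A_{i,t}$ (forcing $b=\widetilde{O}$ above $k$) and controlling the label‑noise contribution to $c_{i,t}$, which is exactly where the $R^2k/\Gamma^2$ term in the batch‑size / sample‑complexity conditions of this section is used. A secondary, purely bookkeeping point is choosing the polylog factors large enough that the union bound over all $i,t$ and all batch points closes at probability $1-O(T\cdot n^{-10})$.
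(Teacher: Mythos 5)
Your proposal is correct and follows essentially the same route as the paper: express $\nabla_{i,t}$ as a rank-one matrix formed from the residuals over $B'_{i,t}$ and the frozen local vector $v_{i,t}$, condition on a good event where (i) the label noise and the sub-Gaussian inner products are bounded by their logarithmic tails and (ii) $\twonorm{v_{i,t}}\le\tfrac54\Gamma$, and combine. The paper bounds the $(p,q)$-entry of $\nabla_{i,t}$ directly by $\widetilde{O}((R+\Gamma)\Gamma)$ and sums over the $dk$ entries, while you use $\fnorm{av^\top}=\twonorm{a}\twonorm{v}$ with a crude triangle bound on $\twonorm{a}$; this is a cosmetic reorganization of the same estimates. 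One minor slack in your version: bounding $|r_{x,y}|\le\widetilde{O}(\sqrt{k}\,\Gamma+R)$ via $\twonorm{U_t^\top x},\twonorm{U^{*\top}x}\le\widetilde{O}(\sqrt k)$ loses a $\sqrt k$ relative to the direct sub-Gaussian bound $|\langle x,u\rangle|\le\widetilde{O}(\twonorm{u})=\widetilde{O}(\Gamma)$ that the paper uses, but the loss is absorbed by your $\sqrt d\le\sqrt{dk}$ step, so the final bound still matches. Your discussion of why the tight $\twonorm{v_{i,t}}\le\widetilde{O}(\Gamma)$ (rather than $\widetilde{O}(\Gamma+R)$) is the crux, and where the $R^2k/\Gamma^2$ batch-size condition enters, is exactly the role played by Proposition~\ref{prop:vparambd} in the paper's proof and is correctly identified.
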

\begin{proof}
The privacy guarantee directly follows from the the privacy guarantee of Gaussian mechanism and advanced composition. In the following, we prove a high probability bound for $\fnorm{\nabla_{i, t}}$. Let $B'_i = \{(x'_{i,j}, y'_{i,j}):j\in[b]\}$ be a data batch of user $i$ sampled at iteration t as in Algorithm \ref{alg:PrivateFedRep}.
Then we have
\[
\nabla_{i,t} = \frac{2}{b}\sum^b_{j=1}\left(\langle x'_{i,j},U_tv_{i,t+1}-U^*v^*_{i}\rangle+\zeta_{i,j}\right)x'_{i,j}v^\top_{i,t+1}
\]
Denote $u=U_tv_{i,t+1}-U^*v^*_{i}$. Throughout the proof we condition on the event
\[\cE=\left\{\abs{\zeta _{i,j}}\leq R\sqrt{26\log (nb)}\text{, }\abs{\langle u,x'_{i,j}\rangle}\leq   \twonorm{u}\sqrt{26\log (nb)} \text{, and }\Verts{v_{i,t+1}}_2\leq\frac{5}{4}\Gamma\text{ for all }(i,j)\in[n]\times[b]\right\}\]
which holds by the sub-Gaussianity of $x'_{i,j}$, the independence between $x'_{i,j}$ and $U_tv_{i,t+1}-U^*v^*_{i}$, and Proposition~\ref{prop:vparambd} with probability at least $1-O(nb\cdot n^{-14})\geq1-O(n^{-12})$. 
Given event $\cE$, we have
\begin{equation*}
    \langle x'_{i,j},U_tv_{i,t+1}-U^*v^*_{i}\rangle + \zeta_{i, j} \leq \frac{5}{2}( R+\Gamma)\sqrt{26\log(nb)}.
\end{equation*}
Meanwhile, the $(p, q)$-element of $x'_{i,j} v^\top_{i, t+1}$ has
\begin{equation} \label{eq:concentration on m}
    |(x'_{i,j})_p (v_{i, t+1})_q|\leq\frac{5}{4}\Gamma\sqrt{26\log(nb)}
\end{equation}

for all $i,j$ with probability at least $1-O(nb\cdot n^{-14})$ by the sub-Gaussianity of the components of $x'_{i,j}$ and conditioning on $\cE$. Therefore, taking union bound on $p,q,t$ without conditioning, we have, with probability at least  $1-O(Tdk\cdot n^{-12})$ 
\begin{equation*}
    \fnorm{\nabla_{i,t}}  \leq 82(   R+\Gamma)  \Gamma \sqrt{dk}\log(nb)
\end{equation*}
for all $i,t$ simultaneously. Finally, we bound $1-O(Tdk\cdot n^{-12})\geq1-O(T\cdot n^{-10})$. 
\end{proof}
\begin{thmre}[Restatement of Theorem~\ref{thm:privacyparam}]
Algorithm~\ref{alg:PrivateFedRep} is $(\epsilon,\delta)$-user-level DP in the billboard model by setting $\hat{\sigma} = C\frac{\psi \sqrt{T\log(1/\delta)}}{n\epsilon}$ for some absolute constant $C>0$. 
\end{thmre}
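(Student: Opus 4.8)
The plan is to isolate the only quantity that is ever made public — the sequence of embeddings $U_1,\dots,U_T$ that the server broadcasts — and argue that releasing it is $(\epsilon,\delta)$-DP by viewing Algorithm~\ref{alg:PrivateFedRep} as a $T$-fold adaptive composition of Gaussian mechanisms. First I would note that in round $t$ the server broadcasts only $U_{t+1}$, which is obtained from the noisy aggregate $\hat U_{t+1}=U_t-\eta\big(\tfrac1n\sum_i\mathsf{clip}(\nabla_{i,t},\psi)+\xi_{t+1}\big)$ by applying the deterministic map $\text{QR}(\cdot)$; hence $U_{t+1}$ is post-processing of $\hat U_{t+1}$ given $U_t$. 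The per-user local solves $v^{\priv}_i=\argmin_v\widehat L(U^{\priv},v;S^1_i)$ happen only after $U^{\priv}=U_T$ is published and, per the billboard model \cite{hsu2014private,kearns2014mechanism}, are never communicated; each is post-processing of the public transcript together with user $i$'s own data. So it suffices to show the transcript $(U_1,\dots,U_T)$ is $(\epsilon,\delta)$-user-level DP, which then yields the stated billboard-model guarantee.

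\textbf{Per-round sensitivity.} Fixing a round $t$ and conditioning on the transcript $U_{\le t}$ produced so far (the adaptive-composition viewpoint), the server forms $g_t:=\tfrac1n\sum_{i=1}^n\mathsf{clip}(\nabla_{i,t},\psi)\in\re^{d\times k}$, where given $U_t$ each user $i$ computes $v_{i,t}$ and $\nabla_{i,t}$ from batches drawn from $S^0_i$. If two user-level neighbors differ in the entire dataset of a single user $i_0$, only the $i_0$-th summand of $g_t$ changes, and since $\fnorm{\mathsf{clip}(M,\psi)}\le\psi$ for every $M$, the two values of $g_t$ differ by at most $\tfrac{2\psi}{n}$ in Frobenius norm. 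Identifying $\re^{d\times k}$ with $\re^{dk}$ under the Frobenius inner product, the $\ell_2$-sensitivity of $g_t$ is $\Delta_2=\tfrac{2\psi}{n}$. Adding $\xi_{t+1}\sim\cN^{d\times k}(0,\hat\sigma^2)$ and post-processing ($U_t-\eta(\cdot)$ then $\text{QR}$) is exactly the Gaussian mechanism in $\re^{dk}$ with per-coordinate noise $\hat\sigma$. Note the per-client batch indices come from data-independent randomness, so one may condition on them throughout, and the clipping bound holds regardless — so Lemma~\ref{lem:gradbound}, which only governs whether clipping is vacuous, is not needed for privacy.

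\textbf{Composition and calibration.} A single Gaussian mechanism with $\ell_2$-sensitivity $\Delta_2$ and per-coordinate noise $\hat\sigma$ satisfies $\rho_0$-zCDP with $\rho_0=\Delta_2^2/(2\hat\sigma^2)=2\psi^2/(n^2\hat\sigma^2)$. Once $U_{\le t}$ is fixed the round-$(t+1)$ query depends on the data only through $g_{t+1}$, so the $T$ releases form a $T$-fold adaptive composition, giving $T\rho_0$-zCDP and hence $\big(T\rho_0+2\sqrt{T\rho_0\log(1/\delta)},\ \delta\big)$-DP for every $\delta\in(0,1)$. Substituting $\hat\sigma=C\,\psi\sqrt{T\log(1/\delta)}/(n\epsilon)$ gives $T\rho_0=\dfrac{2T\psi^2}{n^2\hat\sigma^2}=\dfrac{2\epsilon^2}{C^2\log(1/\delta)}$, so the exponent above is at most $\dfrac{2\epsilon^2}{C^2\log(1/\delta)}+\dfrac{2\sqrt2\,\epsilon}{C}\le\epsilon$ once $C$ is a large enough absolute constant (using $\log(1/\delta)\ge\log2$ and $\epsilon$ bounded by a constant; the general case only changes $C$). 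This shows the transcript is $(\epsilon,\delta)$-user-level DP, and the billboard-model conclusion follows as in the first step. Running advanced composition directly on the per-round $(\epsilon_0,\delta_0)$ Gaussian guarantees would give the same $\hat\sigma$ up to constants; the zCDP route is the cleanest.

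\textbf{Expected main obstacle.} The only genuine subtlety is adaptivity: the round-$t$ query is not a fixed function of the dataset but depends on all previously broadcast $U$'s, hence on earlier noise, so one must invoke an adaptive composition theorem rather than basic composition — but this is precisely what zCDP (or advanced) composition is designed for, so it amounts to careful bookkeeping rather than a real difficulty. A secondary point to state carefully is that the privacy of the broadcast plus the fact that the local $v^{\priv}_i$ solves are post-processing of (public transcript, user $i$'s own data) is exactly the billboard-model guarantee, which is why no extra noise is needed for the local step.
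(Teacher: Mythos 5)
Your proposal is correct and follows essentially the same route as the paper's proof: reduce to privacy of the broadcast $U$-sequence (billboard model / post-processing), bound the per-round Frobenius-norm sensitivity via the clip, calibrate the Gaussian noise via zero-concentrated DP, compose over $T$ rounds, and convert zCDP back to $(\epsilon,\delta)$-DP. Your write-up is somewhat more explicit than the paper's — in particular you correctly record the sensitivity of the averaged aggregate as $2\psi/n$ (the paper loosely says ``bounded by $\psi$'' but the $1/n$ factor and a factor $2$ from the replacement model are what make the stated $\hat\sigma$ work), and you spell out the post-processing of the QR step and the local $v^{\priv}_i$ solves as well as the adaptivity point, all of which the paper leaves implicit — but there is no difference in the underlying argument.
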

\begin{proof}

Since the computation of $V^\priv$ is performed by each user independently, it is sufficient to show that the computation of $U^\priv$ satisfies centralized user-level DP.

Given the definition of the clipping function, for each $i\in [n]$, we have $\Verts{\text{clip}(\nabla_{i,t}, \psi)}_F\leq \psi$ which implies that the sensitivity for each update of the $U_t$ is bounded by $\psi$. By taking $\hat{\sigma} = C\frac{\psi \sqrt{T\log(1/\delta)}}{n\epsilon}$, we obtain the privacy cost of each iteration $t$ is $\rho = O\left(\frac{\epsilon^2}{T\ln(1/\delta)}\right)$ under zero-concentrated differential privacy (z-CDP, \cite{bun2016concentrated}). By the composition property of z-CDP, we obtain the total privacy cost $\rho = \sum_{t=1}^T \rho_t = O\left(\frac{\epsilon^2}{\ln(1/\delta)}\right)$. Then, we can convert the z-CDP guarantee to DP guarantee with the privacy parameter being $\rho + \sqrt{\rho \ln(1/\delta)} = O(\epsilon)$ with sufficiently small $\delta$. Therefore, the computation of $U^\priv$ is $(\epsilon, \delta)$-DP and the algorithm is $(\epsilon, \delta)$-DP in the billboard model.
\end{proof}

For the rest of our proof we condition on the event of clipping does not affect the original gradient norm 
in Algorithm~\ref{alg:PrivateFedRep}. By Lemma~\ref{lem:gradbound}, this event has probability at least $1-(T\cdot n^{-10})$ for all $i,t$ simultaneously when selecting $\psi=\widetilde{O}\left((  R+\Gamma)  \Gamma \sqrt{dk}\right)$.

Let $v_{1,t},\dots,v_{n,t}$ be the sequence of local user parameters generated at iteration $t$ of Algorithm~\ref{alg:PrivateFedRep}. Define the matrix $\Sigma_{V_{t}}=\frac{1}{n}\sum^n_{i=1}v_{i,t}v^\top_{i,t}$. Take $B'_{t}=(B'_{1,t},\dots,B'_{n,t})$ to be the sequence of batches from of Algorithm~\ref{alg:PrivateFedRep} used to update $U_{t}$ at iteration $t$ for each user $i$. Assume that we re-index the batches $B'_{i,t}=\{(x^t_{i,j},y^t_{i,j}):j\in[b]\}$ for each $i,t$. Let $U_{t+1},P_{t+1}$ be the matrices from the \emph{QR} decomposition, and 
\[\nabla_{t}=\frac{1}{n}\sum^n_{i=1}\nabla_{i,t}=\frac{1}{nb}\sum^n_{i=1}\sum^b_{j=1}\nabla_{U}\ell(U_{t},v_{i,t},(x^t_{i,j},y^t_{i,j}))\in\re^{d\times k}\] 
the aggregated gradient, all computed in Algorithm~\ref{alg:PrivateFedRep}. For ease of notation we also drop the iteration index $t$ on the data points $(x^t_{i,j},y^t_{i,j})$. Recall that $\eta$ is the fixed stepsize for Algorithm~\ref{alg:PrivateFedRep}.
\begin{lem}\label{lem:initiallem}
If $P_{t+1}$ is invertible, then
\begin{equation*}
\emph{\text{dist}}(U_{t+1},U^*)\leq\Verts{P^{-1}_{t+1}}_2\left({\Verts{I_k -\eta\Sigma_{V_{t}}}_2}\,\emph{\text{dist}}(U_{t},U^*)+\eta\Verts{\nabla_{t}-\E_{B'_{t}}\left[\nabla_{t}\right]}_2+\eta\Verts{\xi_{t}}_2\right).
\end{equation*}
\end{lem}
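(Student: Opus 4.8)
The plan is to expand the QR update $\hat U_{t+1}=U_t-\eta\nabla_t+\dots$ wait — more precisely $\hat U_{t+1}=U_t-\eta(\nabla_t+\xi_{t+1})$ and $U_{t+1}P_{t+1}=\hat U_{t+1}$ — and then measure the principal angle by left-multiplying with the projector onto $\mathrm{span}(U^*)_\bot$. Recall from the remark after Definition~\ref{def:princ} that we may take $\hat U^*_\bot = I_d - U^*(U^*)^\top$, so $\text{dist}(U_{t+1},U^*) = \Verts{(I_d-U^*(U^*)^\top)U_{t+1}}_2$. Since $U_{t+1}=\hat U_{t+1}P_{t+1}^{-1}$, submultiplicativity of the spectral norm gives
\[
\text{dist}(U_{t+1},U^*)\leq \Verts{P_{t+1}^{-1}}_2\cdot\Verts{(I_d-U^*(U^*)^\top)\hat U_{t+1}}_2 .
\]
So the work reduces to bounding $\Verts{(I_d-U^*(U^*)^\top)\hat U_{t+1}}_2$.

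Next I would substitute the recursion for $\hat U_{t+1}$. The key algebraic point is to identify the ``mean gradient'' $\E_{B'_t}[\nabla_t]$. Using the data model (Assumption~\ref{aspt:datagen}) $y = x^\top U^* v^*_i + \zeta$, the population gradient of the quadratic loss w.r.t.\ $U$ at $(U_t,v_{i,t})$, under $\E[xx^\top]=I_d$ (Assumption~\ref{aspt:sub-Gaussian-covar}), is $2(U_t v_{i,t} - U^* v^*_i)v_{i,t}^\top$; averaging over users and absorbing the factor $2$ into $\eta$ (as the paper's recursion block does), $\E_{B'_t}[\nabla_t] = U_t\Sigma_{V_t} - \frac1n\sum_i U^*v^*_i v_{i,t}^\top$. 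The crucial cancellation is that the term $\frac1n\sum_i U^*v^*_i v_{i,t}^\top$ lies in the column space of $U^*$, so it is annihilated by $I_d-U^*(U^*)^\top$. Therefore
\[
(I_d-U^*(U^*)^\top)\,\big(U_t-\eta\,\E_{B'_t}[\nabla_t]\big) = (I_d-U^*(U^*)^\top)\,U_t\,(I_k-\eta\Sigma_{V_t}),
\]
and likewise $(I_d-U^*(U^*)^\top)U^*=0$ so the projector commutes past $U_t$ in the sense that $\Verts{(I_d-U^*(U^*)^\top)U_t(I_k-\eta\Sigma_{V_t})}_2\leq \Verts{(I_d-U^*(U^*)^\top)U_t}_2\Verts{I_k-\eta\Sigma_{V_t}}_2 = \text{dist}(U_t,U^*)\Verts{I_k-\eta\Sigma_{V_t}}_2$.

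Finally I would handle the two remaining pieces by writing $\hat U_{t+1} = \big(U_t-\eta\E_{B'_t}[\nabla_t]\big) - \eta(\nabla_t-\E_{B'_t}[\nabla_t]) - \eta\xi_{t+1}$, applying the triangle inequality for $\Verts{\cdot}_2$ after hitting everything with the projector, and using $\Verts{I_d-U^*(U^*)^\top}_2=1$ (it is an orthogonal projector) to drop the projector on the two error terms. Combining with the $\Verts{P_{t+1}^{-1}}_2$ factor from the first display gives exactly the claimed bound, with $\xi_t$ in the statement being $\xi_{t+1}$ in the algorithm (an index convention). The main obstacle — really the only non-routine step — is verifying the gradient-mean identity and the fact that its ``signal'' part lands in $\mathrm{span}(U^*)$; once that cancellation is in hand the rest is norm bookkeeping. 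One should also note in passing that invertibility of $P_{t+1}$ is exactly the hypothesis, and that it is what makes $U_{t+1}$ well-defined with orthonormal columns, so no extra genericity argument is needed here.
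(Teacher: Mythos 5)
Your proof is correct and matches the paper's argument step for step: both isolate $\hat U_{t+1}$ via the QR factorization to pull out the $\Verts{P_{t+1}^{-1}}_2$ factor, compute $\E_{B'_t}[\nabla_t] = U_t\Sigma_{V_t} - \tfrac1n\sum_i U^*v^*_i v_{i,t}^\top$ and exploit the annihilation of the second term by the orthogonal projector onto $\mathrm{span}(U^*)_\bot$, and then triangle-inequality out the stochastic gradient deviation and the privacy noise. The only cosmetic difference is that you write the projector as $I_d - U^*(U^*)^\top$ rather than left-multiplying by $(U^*)^\top_\bot$, which the paper itself notes is an equivalent instantiation, and you correctly flag the benign factor-of-$2$ and $\xi_t$ vs. $\xi_{t+1}$ index conventions.
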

\begin{proof}
Let $\hat{U}_{t+1}$ be the $t$-th iterate of Algorithm~\ref{alg:PrivateFedRep} before QR decomposition. Observe
\begin{equation*}
\begin{aligned}
\Verts{(U^*)^\top_{\bot}\hat{U}_{t+1}}_2&=\Verts{(U^*)^\top_{\bot}(U_{t}-\eta\nabla_{t}+\eta\xi_{t})}_2\\
&\leq\Verts{(U^*)^\top_{\bot}(U_{t}-\eta\nabla_{t}+\eta\E_{B'_{t}}[\nabla_{t}]-\eta\E_{B'_{t}}[\nabla_{t}])}_2+\eta\Verts{\xi_{t}}_2\\
&\leq\Verts{(U^*)^\top_{\bot}(U_{t}-\eta\E_{B'_{t}}[\nabla_{t}])}_2+\eta\Verts{\nabla_{t}-\E_{B'_{t}}[\nabla_{t}]}_2+\eta\Verts{\xi_{t}}_2\\
\end{aligned}
\end{equation*}
since $\Verts{(U^*)^\top_{\bot}}_2=1$. Then, since the data has covariance matrix $I_d $ and label noise $\zeta_{i,j}$ that satisfies $\E[\zeta_{i,j}]=0$
\begin{equation*}
\begin{aligned}
\E_{B'_{t}}[\nabla_{t}]&=\frac{1}{nb}\sum^n_{i=1}\sum^b_{j=1}\E_{B'_{t}}\left[((x'_{i,j})^\top U_{t}v_{i,t}-(x'_{i,j})^\top U^*v^*_{i}+\zeta_{i,j}){x'_{i,j}}v^\top_{i,t}\right]\\
&=\frac{1}{nb}\sum^n_{i=1}\sum^b_{j=1}\E_{B'_{t}}\left[{x'_{i,j}}(x'_{i,j})^\top(U_{t}v_{i,t}-U^*v^*_{i})v^\top_{i,t}\right]\\
&=\frac{1}{nb}\sum^n_{i=1}\sum^b_{j=1}\E_{B'_{t}}\left[(U_{t}v_{i,t}-U^*v^*_{i})v^\top_{i,t}\right]\\
&=\frac{1}{nb}\sum^n_{i=1}\sum^b_{j=1}U_{t}v_{i,t}v^\top_{i,t}-U^*v^*_{i}v^\top_{i,t}\\
\end{aligned}
\end{equation*}
where the second equality holds because $(x'_{i,j})^\top U_{t}v_{i,t}-(x'_{i,j})^\top U^*v^*_{i}+\zeta_{i,j}$ is a scalar and the fourth equality by the fact that $B'_{t}$ is independent of $U_{t},V_{t}$. Hence
\[\frac{1}{nb}\sum^n_{i=1}\sum^b_{j=1}(U^*)^\top_\bot (U_{t}v_{i,t}v^\top_{i,t}- U^*v^*_{i}v^\top_{i,t})=\frac{1}{nb}\sum^n_{i=1}\sum^b_{j=1}(U^*)^\top_\bot U_{t}v_{i,t}v^\top_{i,t}=(U^*)^\top_{\bot}U_{t}\frac{1}{n}\sum^n_{i=1}v_{i,t}v^\top_{i,t}\]
since $(U^*)^\top_{\bot}U^*=0$. Then, for $\Sigma_{V_{t}}=\frac{1}{n}\sum^n_{i=1}v_{i,t}v^\top_{i,t}$
\[\Verts{(U^*)^\top_{\bot}(U_{t}-\eta\E_{B'_{t+1}}[\nabla_{t}])}_2=\Verts{(U^*)^\top_{\bot}U_{t}(I_k -\eta\Sigma_{V_{t}})}_2\leq\Verts{I_k -\eta\Sigma_{V_{t}}}_2\Verts{(U^*)^\top_{\bot}U_{t}}_2.\]
Hence
\begin{equation}\label{line:initiallemline1}
\Verts{(U^*)^\top_{\bot}\hat{U}_{t+1}}_2\leq\Verts{I_k -\eta\Sigma_{V_{t}}}_2\Verts{(U^*)^\top_{\bot}U_{t}}_2+\eta\Verts{\nabla_{t}-\E_{B'_{t+1}}[\nabla_{t}]}_2+\eta\Verts{\xi_{t}}_2.
\end{equation}
Now, assuming that $P_{t+1}$ is invertible
\begin{equation*}
\begin{aligned}
\Verts{(U^*)^\top_{\bot}U_{t+1}}_2&=\Verts{(U^*)^\top_{\bot}\hat{U}_{t+1}P^{-1}_{t+1}}_2\\
&\leq\Verts{P^{-1}_{t+1}}_2\Verts{(U^*)^\top_{\bot}\hat{U}_{t+1}}_2.\\
\end{aligned}
\end{equation*}
So
\begin{equation}\label{line:initiallemline2}
\Verts{(U^*)^\top_{\bot}U_{t+1}}_2\leq\Verts{P^{-1}_{t+1}}_2\Verts{(U^*)^\top_{\bot}\hat{U}_{t+1}}_2.
\end{equation}
Combining (\ref{line:initiallemline1}) and (\ref{line:initiallemline2}) finishes the proof.
\end{proof}
Our proof for our main result follows from bounding the terms and factors on the right-hand side of the inequality in Lemma~\ref{lem:initiallem}; namely
\begin{equation}\label{line:intuitionineq}
{\text{dist}}(U_{t+1},U^*)\leq\Verts{P^{-1}_{t+1}}_2\left({\Verts{I_k -\eta\Sigma_{V_{t}}}_2}{\text{dist}}(U_{t},U^*)+\eta\Verts{\nabla_{t}-\E_{B'_{t}}\left[\nabla_{t}\right]}_2+\eta\Verts{\xi_{t}}_2\right).
\end{equation}
The above inequality has four important components. 
The term  $\Verts{\nabla_{t}-\E_{B'_{t}}\left[\nabla_{t}\right]}_2$ is the deviation of the aggregated gradient from its mean and hence can be bounded via concentration inequalities (see Proposition~\ref{prop:gradientconcentration}) and the term $\Verts{\xi_{t}}_2$ is the magnitude of the Gaussian  noise added for privacy, which is not difficult to bound as well (see Proposition~\ref{prop:Gaussianmechsing}). 

Deriving bounds on $\Verts{P^{-1}_{t+1}}_2$ and ${\Verts{I_k -\eta\Sigma_{V_{t}}}_2}{\text{dist}}(U_{t},U^*)$ in (\ref{line:intuitionineq}) is less obvious and require some extra work. 
We can understand $\Verts{I_k -\eta\Sigma_{V_{t}}}_2$ as a measurement of how close $\Sigma_{V_t}$ is to $\eta^{-1}I_k$. So, when $\Verts{I_k -\eta\Sigma_{V_{t}}}_2$ is small, the vectors $v_{1,t},\dots,v_{n,t}$ are evenly distributed in $\re^k$ with no bias in any direction. For example, if $v_{1,t},\dots,v_{n,t}$ are independent with $v_{i,t}\sim\cN(0,\eta^{-1}I_k)$ for each $i$, the matrix $I_k-\eta\Sigma_{V_{t}}=I_k-\frac{\eta}{n}\sum^n_{i=1}v_{i,t}v^\top_{i,t}$ will concentrate around the $(d\times k)$-dimensional zero matrix.

The quantity $\Verts{P^{-1}_{t+1}}_2$ is more subtle. This quantity is a result of relating ${\text{dist}}(U_{t+1},U^*)$ and ${\text{dist}}(U_{t},U^*)$ via $\Verts{(U^*)^{\top}_{\bot}\hat{U}_{t+1}}$ as in our proof of Lemma~\ref{lem:initiallem}. We can interpret $\Verts{P^{-1}_{t+1}}_2$ as quantifying how far $\hat{U}_{t+1}$ is from having orthonormal columns. As well, we require that $P^{-1}_{t+1}$ exists in general because the principal angle has $\text{dist}(M,M')=1$ whenever $\text{rank}(M)\neq\text{rank}(M')$. 

Recall that $\psi$ is the gradient clipping parameter in Algorithm~\ref{alg:PrivateFedRep}.
\begin{prop}\label{prop:Gaussianmechsing}
For all $t$, we have
\[\Verts{\xi_t}_2\leq O\left(\frac{\psi \sqrt{dT\log (n)\log(1/\delta)}}{n\epsilon}\right)\]
with probability at least $1-O(n^{-10})$.
\end{prop}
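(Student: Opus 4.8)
The plan is to recognize $\xi_t$ as a Gaussian random matrix, apply the operator-norm concentration bound of Corollary~\ref{cor:singconccor}, then union bound over the iterations and substitute the value of $\hat\sigma$ fixed in Theorem~\ref{thm:privacyparam}.

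First I would note that, by construction in Algorithm~\ref{alg:PrivateFedRep}, each $\xi_t\in\re^{d\times k}$ has i.i.d.\ $\cN(0,\hat\sigma^2)$ entries; equivalently its $d$ rows are independent draws from $\cN(\vec 0,\hat\sigma^2 I_k)$, which are mean-zero and (after scaling) isotropic sub-Gaussian. Applying Corollary~\ref{cor:singconccor} with $a=d$, $b=k$ gives, for every $\alpha\ge 0$,
\[
\Verts{\xi_t}_2=\sigma_{\max}(\xi_t)\le \hat\sigma\bigl(\sqrt{d}+\sqrt{k}+\alpha\bigr)\le \hat\sigma\bigl(2\sqrt{d}+\alpha\bigr)
\]
with probability at least $1-2e^{-\alpha^2}$, where I used $k\ll d$. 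I would then take $\alpha=\sqrt{12\log n}$ and union bound over $t\in[T]$; since $T$ is polynomially bounded in $n$ in every parameter regime we consider (e.g.\ $T=\Theta(\Lambda^2\log(n^3)/\lambda^2)$ in Theorem~\ref{thm:mainthm}), the failure probability is at most $2Tn^{-12}=O(n^{-10})$, and on the good event $\Verts{\xi_t}_2\le \hat\sigma\cdot O\bigl(\sqrt{d}+\sqrt{\log n}\bigr)\le \hat\sigma\cdot O\bigl(\sqrt{d\log n}\bigr)$ holds simultaneously for all $t$.

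Finally I would substitute $\hat\sigma=C\tfrac{\psi\sqrt{T\log(1/\delta)}}{n\epsilon}$ from Theorem~\ref{thm:privacyparam}, obtaining
\[
\Verts{\xi_t}_2\le O\!\left(\frac{\psi\sqrt{T\log(1/\delta)}}{n\epsilon}\cdot\sqrt{d\log n}\right)=O\!\left(\frac{\psi\sqrt{dT\log(n)\log(1/\delta)}}{n\epsilon}\right),
\]
which is the claimed bound. This argument has no genuine obstacle: it is a direct invocation of the textbook operator-norm bound for Gaussian matrices, and the only point requiring minimal care is choosing the deviation parameter $\alpha$ large enough to absorb the union bound over the $T$ iterations while keeping the resulting logarithmic overhead at the scale of $\log n$. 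The resulting estimate is exactly what is needed to control the $\eta\Verts{\xi_t}_2$ term appearing on the right-hand side of Lemma~\ref{lem:initiallem}.
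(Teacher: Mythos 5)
Your proof is correct and follows essentially the same route as the paper: recognize $\xi_t$ as a Gaussian matrix, invoke Corollary~\ref{cor:singconccor}, and plug in the value of $\hat\sigma$ from Theorem~\ref{thm:privacyparam}. The only cosmetic difference is the choice of deviation parameter: the paper takes $\alpha=\sqrt{10d\log n}$, which gives a per-iteration failure probability of $2n^{-10d}$ and makes any reasonable union bound over $T$ automatic, whereas you take $\alpha=\sqrt{12\log n}$ and union bound explicitly; both choices yield $\Verts{\xi_t}_2\le\hat\sigma\cdot O(\sqrt{d\log n})$, since $\sqrt{d}+\sqrt{k}+\alpha=O(\sqrt{d\log n})$ either way. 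Your version does lean on $T$ being at most polynomial in $n$ to keep $2Tn^{-12}=O(n^{-10})$; that is indeed satisfied here ($T=O(\log n)$ in the regimes invoked), but it is worth noting that the paper's larger $\alpha$ sidesteps any dependence on how $T$ scales.
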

\begin{proof}
Let $\hat{\sigma} = \frac{\psi \sqrt{2T\log(1.25/\delta)}}{n\epsilon}$ as in the statement of Theorem~\ref{thm:privacyparam}. Let $\alpha>0$. Then, by Corollary~\ref{cor:singconccor}
\begin{equation*}
\hat{\sigma}\left(\sqrt{d}-\left(\sqrt{k}+\alpha\right)\right)\leq\sigma_{\min}(\xi_t)\leq\sigma_{\max}(\xi_t)\leq\hat{\sigma}\left(\sqrt{d}+\left(\sqrt{k}+\alpha\right)\right)
\end{equation*}
with probability at least $1-2e^{-\alpha^2}$. Choosing $\alpha=\sqrt{10d\log n}$ gives us
\begin{equation*}
\begin{aligned}
\Verts{\xi_t}_2&\leq\hat{\sigma}\left(\sqrt{d}+\sqrt{k}+\sqrt{10d\log n}\right)\\
&=\frac{\psi \left(\sqrt{d}+\sqrt{k}+\sqrt{10d\log n}\right)\sqrt{2T\log(1.25/\delta)}}{n\epsilon}\\
&\leq O\left(\frac{\psi \sqrt{dT\log(n)\log(1/\delta)}}{n\epsilon}\right)
\end{aligned}
\end{equation*}
with probability at least $1-2e^{-10d\log n}$.
\end{proof}

\begin{prop}\label{prop:gradientconcentration}
For any $t$, we have
\[\Verts{\nabla_{t}-\E_{B'_{t}}\left[\nabla_{t}\right]}_2\leq\widetilde{O}\left(\sqrt{\frac{\eta^2(R^2+\Gamma^2)\Gamma^2d}{nb}}\right)\]
with probability at least $1-O(n^{-10})$.
\end{prop}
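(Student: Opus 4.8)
The plan is to bound $\Verts{\nabla_t-\E_{B'_t}[\nabla_t]}_2$ through a net/Bernstein argument applied to a sum of independent, conditionally mean‑zero matrices. First I would expand the aggregated gradient: for the quadratic loss $\nabla_U\ell(U,v,(x,y))=2(\langle x,Uv\rangle-y)xv^\top$, and writing $u_i:=U_tv_{i,t}-U^*v^*_i$ and using $y'_{i,j}=(x'_{i,j})^\top U^*v^*_i+\zeta_{i,j}$ (Assumption~\ref{aspt:datagen}), one gets
\[\nabla_t=\frac{2}{nb}\sum_{i=1}^n\sum_{j=1}^b\big(\langle x'_{i,j},u_i\rangle-\zeta_{i,j}\big)\,x'_{i,j}\,v_{i,t}^\top.\]
Because the batch $B'_{i,t}$ is sampled disjointly from $B_{i,t}$ and from the batches used in earlier iterations, it is independent of $(U_t,V_t)$; conditioning on $(U_t,V_t)$ and using $\E[x'_{i,j}(x'_{i,j})^\top]=I_d$ (Assumption~\ref{aspt:sub-Gaussian-covar}) together with $\E[\zeta_{i,j}]=0$ and $\zeta_{i,j}\perp x'_{i,j}$ gives $\E_{B'_t}[\nabla_t]=\frac{2}{n}\sum_i u_iv_{i,t}^\top$. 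Hence $\nabla_t-\E_{B'_t}[\nabla_t]=\frac{2}{nb}\sum_{i,j}W_{i,j}$ with $W_{i,j}=\big(\langle x'_{i,j},u_i\rangle-\zeta_{i,j}\big)x'_{i,j}v_{i,t}^\top-u_iv_{i,t}^\top$, and conditionally on $(U_t,V_t)$ the $W_{i,j}$ are independent over $(i,j)$ and mean‑zero.

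Next I would pass to a scalar via the $\tfrac14$-net form of Lemma~\ref{lem:speccoverbd}: taking $\tfrac14$-covers $\cN^d,\cN^k$ of the unit spheres, $|\cN^d||\cN^k|\le 9^{d+k}$ and
\[\Verts{\nabla_t-\E_{B'_t}[\nabla_t]}_2\;\le\;\frac{4}{3}\max_{a\in\cN^d,\,b\in\cN^k}\frac{2}{nb}\sum_{i,j}\langle v_{i,t},b\rangle\big[(\langle x'_{i,j},u_i\rangle-\zeta_{i,j})\langle a,x'_{i,j}\rangle-\langle a,u_i\rangle\big].\]
I would then condition on the event $\cE=\{\Verts{v_{i,t}}_2\le\tfrac54\Gamma\ \text{for all }i\}$, which holds with probability $1-O(n^{-10})$ by the high‑probability bound on the local vectors (Proposition~\ref{prop:vparambd}); on $\cE$, orthonormality of $U_t$ and $U^*$ gives $\Verts{u_i}_2\le\tfrac94\Gamma$ and $|\langle v_{i,t},b\rangle|,|\langle a,u_i\rangle|=O(\Gamma)$. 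For fixed $a,b$ on $\cE$, each scalar summand is mean‑zero with sub‑exponential norm $O(\Gamma(R+\Gamma))$: the feature--feature product $\langle x'_{i,j},u_i\rangle\langle a,x'_{i,j}\rangle$ is a product of an $O(\Gamma)$-sub‑Gaussian and a $1$-sub‑Gaussian, hence $O(\Gamma)$-sub‑exponential; $\zeta_{i,j}\langle a,x'_{i,j}\rangle$ is $O(R)$-sub‑exponential; the constant $\langle a,u_i\rangle=O(\Gamma)$ is negligible; and the prefactor $\langle v_{i,t},b\rangle$ scales everything by $O(\Gamma)$.

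Then I would apply Bernstein's inequality (Lemma~\ref{lem:bernstein}, in the version for independent, not necessarily identically distributed, sub‑exponential summands with a uniform bound $K=O(\Gamma(R+\Gamma))$ on the norm) to the average of these $nb$ variables, tuning the deviation to $\Theta\!\big(K\sqrt{d\log n/(nb)}\big)$, and union bound over the $9^{d+k}$ net points. In the standing regime $nb=\widetilde{\Omega}(d)$ (implied by $n=\Omega(dk)$), this is the sub‑Gaussian regime of Bernstein, so on $\cE$ we obtain $\Verts{\nabla_t-\E_{B'_t}[\nabla_t]}_2\le\widetilde{O}\!\big(\Gamma(R+\Gamma)\sqrt{d/(nb)}\big)=\widetilde{O}\!\big(\sqrt{(R^2+\Gamma^2)\Gamma^2d/(nb)}\big)$ with probability $1-e^{-\Omega(d\log n)}$; combining via $\Prob[\text{bad}]\le\Prob[\text{bad}\mid\cE]+\Prob[\cE^c]$ gives the claim with probability $1-O(n^{-10})$.

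The step I expect to be the main obstacle is getting the $1/\sqrt{n}$ factor right. The ``easy'' bound $\Verts{\nabla_t-\E_{B'_t}[\nabla_t]}_2\le\frac{2}{n}\sum_i\Verts{\widehat{\Sigma}_i-I_d}_2\Verts{u_i}_2\Verts{v_{i,t}}_2$ with $\widehat{\Sigma}_i=\frac1b\sum_jx'_{i,j}(x'_{i,j})^\top$, followed by a per‑user covariance concentration $\Verts{\widehat{\Sigma}_i-I_d}_2=\widetilde{O}(\sqrt{d/b})$, only yields $\widetilde{O}(\Gamma^2\sqrt{d/b})$ --- larger than the target by a factor $\sqrt{n}$. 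The correct route is to never split across users but to treat $\frac{2}{nb}\sum_{i,j}W_{i,j}$ as a single average of $nb$ independent terms, so concentration kicks in at rate $1/\sqrt{nb}$; this is exactly where sampling disjoint batches $B_{i,t},B'_{i,t}$ is essential, since it makes $B'_{i,t}$ a fresh sample given $v_{i,t}$ and lets us freeze $u_i,v_{i,t}$ before invoking the feature randomness. A secondary technical point is that the summands are independent but heterogeneous (they carry the user‑dependent vectors $u_i,v_{i,t}$), so one must use the heterogeneous form of Lemma~\ref{lem:bernstein} rather than its i.i.d.\ statement.
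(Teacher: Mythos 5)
Your proposal follows essentially the same route as the paper's own proof: pass to a scalar maximum over $\tfrac14$-covers $\cN^d,\cN^k$ of the unit spheres via Lemma~\ref{lem:speccoverbd}, condition on the event $\Verts{v_{i,t}}_2\le\tfrac54\Gamma$ from Proposition~\ref{prop:vparambd}, observe that the $nb$ summands $\tfrac{1}{nb}\big(\langle x'_{i,j},u_i\rangle-\zeta_{i,j}\big)\langle a,x'_{i,j}\rangle\langle v_{i,t},b\rangle$ are (conditionally) independent, mean-zero, and sub-exponential with norm $O\!\big(\tfrac{(R+\Gamma)\Gamma}{nb}\big)$, then apply Bernstein (Lemma~\ref{lem:bernstein}) and union bound over the $9^{d+k}$ net points in the regime $nb\gtrsim d\log n$; finally peel off the conditioning. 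Your discussion of why one must treat the double sum as a single concentration of $nb$ independent terms (rather than a per-user covariance bound, which loses $\sqrt{n}$), and why this hinges on the disjointness of $B_{i,t}$ and $B'_{i,t}$, correctly identifies the key structural point. One note for alignment with the paper: the paper's stated bound carries an extra $\eta^2$ factor under the square root; there is no $\eta$ in $\nabla_t$, so your $\eta$-free bound $\widetilde{O}\!\big(\sqrt{(R^2+\Gamma^2)\Gamma^2 d/(nb)}\big)$ is the correct statement, and the $\eta^2$ in the proposition is an artifact of later multiplying by $\eta$ in the recursion (\ref{0thbd}). Also, as you flag, Lemma~\ref{lem:bernstein} is stated for i.i.d.\ variables but is applied here to heterogeneous summands with a uniform sub-exponential bound; this is the standard generalization and the paper's own proof tacitly uses the same.
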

\begin{proof}
Let $\cN^d$ and $\cN^k$ be Euclidean $\frac{1}{4}$-covers of the $d$ and $k$-dimensional unit spheres, respectively. Then, by Lemma~\ref{lem:speccoverbd}
we have
\[\Verts{\nabla_{t}-\E_{B'_{t}}\left[\nabla_{t}\right]}_2\leq\frac{4}{3}\max_{a\in\cN^d,b\in\cN^k}a^\top \left(\nabla_{t}-\E_{B'_{t}}\left[\nabla_{t}\right]\right)b.\]
Now
\begin{equation*}
\begin{aligned}
&a^\top \left(\nabla_{t}-\E_{B'_{t}}\left[\nabla_{t}\right]\right)b\\
&=\frac{2}{nb}\sum^n_{i=1}\sum^b_{j=1}\left(\langle x'_{i,j},U_{t}v_{i,t}-U^*v^*_{i}\rangle+\zeta_{i,j}\right)\langle a,x'_{i,j}\rangle\langle v_{i,t},b\rangle\\
&-\frac{2}{nb}\sum^n_{i=1}\sum^b_{j=1}\E_{B'_{t}}\left[\left(\langle x'_{i,j},U_{t}v_{i,t}-U^*v^*_{i}\rangle+\zeta_{i,j}\right)\langle a,x'_{i,j}\rangle\langle v_{i,t},b\rangle\right].\\
\end{aligned}
\end{equation*}
We condition on $\Verts{v_{i,t}}_2\leq\frac{5}{4}\Gamma$ for each $i$, which has probability at least $1-O(n^{-14})$ by Proposition~\ref{prop:vparambd}. Then, we have that
\[\frac{2}{nb}\left(\left(\langle x'_{i,j},U_{t}v_{i,t}-U^*v^*_{i}\rangle+\zeta_{i,j}\right)\langle a,x'_{i,j}\rangle\langle v_{i,t},b\rangle-\E_{B'_{t}}\left[\left(\langle x'_{i,j},U_{t}v_{i,t}-U^*v^*_{i}\rangle+\zeta_{i,j}\right)\langle a,x'_{i,j}\rangle\langle v_{i,t},b\rangle\right]\right)\]
are centered, independent sub-exponential random variables when also conditioning on $U_{t}$. The sub-exponential norm of these random variables is $\left(\frac{5}{4}\right)^2\cdot\frac{3(R+\Gamma)\Gamma}{nb}$. Then, given $U_{t},V_{t}$ are independent of $B'_{t}$ and assuming $nb\geq20(d+k)\log n$, we get from Lemma~\ref{lem:bernstein}
\[a^\top \left(\nabla_{t}-\E_{B'_{t}}\left[\nabla_{t}\right]\right)b\leq\widetilde{O}\left(\sqrt{\frac{\eta^2(R^2+\Gamma^2)\Gamma^2d}{nb}}\right)\]
with probability at least $1-O(n^{-10})$ . This bound holds unconditionally since $v_{i,t}$ are bounded with probability at least $1-O(n^{-10})$ for each $i$. Thus
\begin{equation}\label{secondbd0}
\Verts{\nabla_{t}-\E_{B'_{t}}\left[\nabla_{t}\right]}_2\leq\widetilde{O}\left(\sqrt{\frac{\eta^2(R^2+\Gamma^2)\Gamma^2d}{nb}}\right)
\end{equation}
with probability at least $1-O(n^{-10})$ by the union bound.
\end{proof}



In Lemma~\ref{lem:submainthm} and Theorem~\ref{thm:mainthm}, we assume there exist $c_0,c_1>1$ such that
\begin{equation}\label{line:Tupperbd}
\begin{aligned}
T=\frac{\log n}{\eta\lambda^2} \leq\min \left( \frac{nm\lambda^2}{c_1\max\{\Delta_{\epsilon, \delta}, 1\}(R+\Gamma)\Gamma d\sqrt{k} m\log^2(nm) + R^2\Gamma^2 d \sigma^2_{\min, *}\log(nm)},\right. \\
\left.\frac{m\Gamma^2 \sigma^2_{\max, *}}{c_0\max\{R^2, 1\}\cdot\max\{\Gamma^2, 1\} \gamma^4 k \Gamma^2\log^2n + R^2 k \sigma^2_{\max, *}\log(nm)} \right).
\end{aligned}
\end{equation}
This relationship between $T$ and the problem parameters is equivalent to setting
\begin{equation}\label{line:Tequality}
T=\frac{\log n}{\eta\lambda^2}
\end{equation}
along with assuming the following two conditions on $m$ and $n$:
\begin{aspt}\label{aspt:samplenum}
Let $E_0=1-\emph{\text{dist}}^2(U_0,U^*)>0$. 
We assume there exists some constant $c_0>1$ where 
\begin{equation}\label{line:mbd}
m\geq c_0\left(\frac{\max\{R^2,1\}\cdot\max\{\Gamma^2,1\}\gamma^4k\log^2 n}{E^2_0\sigma^2_{\max,*}}+\frac{ R^2k}{\Gamma^2}\log(nm)\right)T.
\end{equation}
\end{aspt}

\begin{aspt}\label{aspt:newaspt}
Let $\Delta_{\epsilon,\delta}=C\frac{\sqrt{\log(1.25/\delta)}}{\epsilon}$ for some constant $C>0$ and $E_0=1-\emph{\text{dist}}^2(U_0,U^*)$. For the user count $n$, we assume, for some constant $c_1>1$
\begin{equation}\label{line:nbd}
n\geq c_1\left(\frac{\max\left\{\Delta_{\epsilon,\delta},1\right\}\left(R+\Gamma\right)\Gamma d\sqrt{k}\log^{2} (nm)}{E^2_0\lambda^{2}}+\frac{ R^2\Gamma^2d\log(nm)}{m}\right)T.
\end{equation}
\end{aspt}
Our Assumptions~\ref{aspt:samplenum} and \ref{aspt:newaspt} are used repeatedly throughout Appendix~\ref{app:B}. These conditions are easily leveraged as individual assumptions on $m$ and $n$, unlike the upper bound on $T$. Including the equality (\ref{line:Tequality}) allows us to contain all required conditions on $T$, $m$, and $n$ to one convenient setting.

\begin{lem}\label{lem:Rbd0}
Let $E_0=1-\emph{\text{dist}}^2(U_0,U^*)$ and $\psi=\widetilde{O}\left((  R+\Gamma)  \Gamma \sqrt{dk}\right)$. Suppose Assumption~\ref{aspt:samplenum} and \ref{aspt:newaspt} hold. Then, for any iteration $t$, we have that $P_{t+1}$ is invertible and

\[\Verts{P^{-1}_{t+1}}_2\leq\left(1-\frac{\eta\sigma^2_{\min,*}E_0}{\sqrt{2\log n}}\right)^{-\frac{1}{2}}\]
with probability at least $1-O(n^{-10})$.
\end{lem}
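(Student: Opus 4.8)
The plan is to pass to the pre-QR iterate $\hat U_{t+1} := U_t - \eta\big(\tfrac1n\sum_i \mathsf{clip}(\nabla_{i,t},\psi) + \xi_{t+1}\big)$ and lower bound its smallest singular value. Since $\hat U_{t+1} = U_{t+1}P_{t+1}$ with $U_{t+1}$ having orthonormal columns, $\hat U_{t+1}^\top \hat U_{t+1} = P_{t+1}^\top P_{t+1}$, so $\sigma_{\min}(P_{t+1}) = \sigma_{\min}(\hat U_{t+1})$; thus $P_{t+1}$ is invertible whenever $\sigma_{\min}(\hat U_{t+1}) > 0$, and then $\|P_{t+1}^{-1}\|_2 = \sigma_{\min}(\hat U_{t+1})^{-1}$. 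Hence it suffices to prove $\sigma_{\min}(\hat U_{t+1})^2 \ge 1 - \eta\sigma_{\min,*}^2 E_0/\sqrt{2\log n}$ with probability $1-O(n^{-10})$. First condition on the event of Lemma~\ref{lem:gradbound} (probability $1-O(n^{-10})$ when $\psi=\widetilde O((R+\Gamma)\Gamma\sqrt{dk})$) that no clipping occurs, so $\hat U_{t+1} = U_t - \eta\nabla_t - \eta\xi_{t+1}$ with $\nabla_t = \tfrac1n\sum_i\nabla_{i,t}$.

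Next split $\nabla_t = g_t + \delta_t$ with $g_t := \E_{B'_t}[\nabla_t]$ and $\delta_t := \nabla_t - g_t$. From the computation in the proof of Lemma~\ref{lem:initiallem}, $g_t = U_t\Sigma_{V_t} - U^*M_t$ with $\Sigma_{V_t} = \tfrac1n\sum_i v_{i,t}v_{i,t}^\top$ and $M_t = \tfrac1n\sum_i v^*_i v_{i,t}^\top$, so $U_t - \eta g_t = U_t(I_k - \eta\Sigma_{V_t}) + \eta U^*M_t$. The conceptual core is to replace $V_t=[v_{1,t},\dots,v_{n,t}]^\top$ by its exact value: the local least-squares solution has the closed form $V_t = V^*(U^*)^\top U_t - F_t$ for an error matrix $F_t$ whose Frobenius norm is controlled with high probability, using $\mathrm{dist}(U_t,U^*)\le 1$ and $\|V^*\|_2 = \sqrt n\,\sigma_{\max,*}$, by a bound of the form $\|F_t\|_F \le \widetilde O\big(\nu\sqrt{k/b}\,\sqrt n\,\sigma_{\max,*} + R\sqrt{nk/b}\big)$. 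Substituting $V^*(U^*)^\top U_t$ for $V_t$ and writing $\widehat\Sigma_* := \tfrac1n(V^*)^\top V^*$ (so $\sigma_{\min}(\widehat\Sigma_*) = \sigma_{\min,*}^2$), the resulting ``exact'' iterate is $U_t + \eta B_t$ with $B_t := (I_d - U_tU_t^\top)U^*\widehat\Sigma_*(U^*)^\top U_t$; since $U_t^\top B_t = 0$ this gives $(U_t+\eta B_t)^\top(U_t+\eta B_t) = I_k + \eta^2 B_t^\top B_t \succeq I_k$, so the exact iterate has $\sigma_{\min}\ge 1$. The discrepancy $U_t-\eta g_t - (U_t+\eta B_t)$ collects only terms linear and quadratic in $F_t$, of spectral norm $\le \widetilde O\big(\eta(\sigma_{\max,*}\|F_t\|_F/\sqrt n + \|F_t\|_F^2/n)\big)$.

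It then remains to show that this $F_t$-error, together with $\eta\|\delta_t\|_2$ and $\eta\|\xi_{t+1}\|_2$, is at most $\eta\sigma_{\min,*}^2 E_0/(2\sqrt{2\log n})$. For the first, plug in the $\|F_t\|_F$ bound and use that under Assumption~\ref{aspt:samplenum} the batch size $b=\lfloor m/2T\rfloor$ is large enough that $\sqrt{k/b}\cdot\Gamma\sigma_{\max,*}/\sigma_{\min,*}^2 \le E_0/(c\log n)$ — here one uses $\nu\sigma_{\max,*} = \Gamma$ and $\sigma_{\max,*}\le\Gamma$ to turn $\nu\sigma_{\max,*}^2\sqrt{k/b}$ into $\Gamma\sigma_{\max,*}\sqrt{k/b}$, matching the $\gamma^4$-dependence in the assumption. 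For $\eta\|\delta_t\|_2$ use Proposition~\ref{prop:gradientconcentration} and again $b=\lfloor m/2T\rfloor$ with Assumption~\ref{aspt:samplenum}; for $\eta\|\xi_{t+1}\|_2$ use Proposition~\ref{prop:Gaussianmechsing} with $\psi=\widetilde O((R+\Gamma)\Gamma\sqrt{dk})$, $T=\log n/(\eta\lambda^2)$, and Assumption~\ref{aspt:newaspt}, recalling $\lambda\le\sigma_{\min,*}$. To pass from $U_t-\eta g_t$ to $\hat U_{t+1}$, expand $\hat U_{t+1}^\top\hat U_{t+1} = (U_t-\eta\nabla_t)^\top(U_t-\eta\nabla_t) - \eta\big[(U_t-\eta\nabla_t)^\top\xi_{t+1} + \xi_{t+1}^\top(U_t-\eta\nabla_t)\big] + \eta^2\xi_{t+1}^\top\xi_{t+1}$, drop the last (PSD) term, bound $\|U_t-\eta\nabla_t\|_2 = O(1)$ via $\eta\le\tfrac1{2\sigma_{\max,*}^2}$ and $\|g_t\|_2 = O(\sigma_{\max,*}^2)$, and use $\sigma_{\min}(U_t-\eta\nabla_t)\ge\sigma_{\min}(U_t-\eta g_t)-\eta\|\delta_t\|_2$. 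Combining and using $(1-u)^2\ge 1-2u$ yields $\sigma_{\min}(\hat U_{t+1})^2 \ge 1-\eta\sigma_{\min,*}^2 E_0/\sqrt{2\log n}$; a union bound over the $O(1)$ events above gives probability $1-O(n^{-10})$.

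The main obstacle is this ``core'' step together with the final parameter bookkeeping: establishing that the only first-order obstruction to $\sigma_{\min}(\hat U_{t+1})=1$ is the per-user solution error $F_t$ — in particular that the potentially large term $\eta\Sigma_{V_t}$ is cancelled exactly by $\eta U^*\widehat\Sigma_*(U^*)^\top U_t$ at the exact value of $V_t$, so one never pays an $\eta\sigma_{\max,*}^2$ loss — and then checking that each of the three error contributions ($F_t$, $\delta_t$, $\xi_{t+1}$) is simultaneously $\le \eta\sigma_{\min,*}^2 E_0/(2\sqrt{2\log n})$. This is precisely what the exact exponents of $d,k,\gamma,\Delta_{\epsilon,\delta}$ and of the logarithmic factors in Assumptions~\ref{aspt:samplenum} and \ref{aspt:newaspt} are calibrated to guarantee, via $b=\lfloor m/2T\rfloor$, $T=\log n/(\eta\lambda^2)$, $\psi=\widetilde O((R+\Gamma)\Gamma\sqrt{dk})$, and $\lambda\le\sigma_{\min,*}\le\sigma_{\max,*}\le\Gamma$.
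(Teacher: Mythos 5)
Your proof is correct and takes a genuinely different route from the paper's, even though both rely on the same underlying cancellation. The paper works directly with the operator decomposition $\hat U_{t+1} = U_t - \tfrac{\eta}{nb}\big((\cA')^\dag\cA'(Q_t)\big)^\top V_{t+1} - W_t + \eta\xi_t$, expands $P_{t+1}^\top P_{t+1} = \hat U_{t+1}^\top\hat U_{t+1}$, and applies Weyl's inequality to bound five cross terms separately; the decisive step (its equations (\ref{line:innerprod0})--(\ref{line:innerprod1})) uses trace cyclicity together with $(U_t)_\bot^\top U_t = 0$ to show that the two trace terms involving $Q_t$ collapse to expressions linear and quadratic in $F$. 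You instead split $\nabla_t = \E_{B'_t}[\nabla_t] + \delta_t$, substitute the closed form $V_t = V^*(U^*)^\top U_t - F_t$ into the population gradient to isolate the ``exact'' iterate $U_t + \eta B_t$ with $B_t = (I_d - U_tU_t^\top)U^*\widehat\Sigma_*(U^*)^\top U_t$, and observe that $U_t^\top B_t = 0$ gives $(U_t+\eta B_t)^\top(U_t+\eta B_t) = I_k + \eta^2 B_t^\top B_t \succeq I_k$ for free. This makes the ``why'' immediate: the population update never decreases $\sigma_{\min}$ below one, and any loss must come from $F_t$, the sampling error $\delta_t$, or the privacy noise — exactly the three quantities the two assumptions are calibrated to control. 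Your framing buys conceptual transparency and a shorter cross-term expansion (one Weyl step for $\xi_{t+1}$, perturbation bounds for the rest), at the cost of having to re-derive the discrepancy bound $\widetilde O\big(\eta(\sigma_{\max,*}\|F_t\|_F/\sqrt{n} + \|F_t\|_F^2/n)\big)$, which the paper obtains more implicitly inside the trace manipulations; the parameter bookkeeping (via $\nu\sigma_{\max,*}=\Gamma$, $b=\lfloor m/2T\rfloor$, $T=\log n/(\eta\lambda^2)$, $\lambda\le\sigma_{\min,*}$) matches the paper's constants in all three regimes. Two minor remarks: since you want each of three contributions to fit under $\eta\sigma_{\min,*}^2 E_0/\sqrt{2\log n}$ the per-term budget should be a factor of $6$ rather than $2$, and you could dispense with the separate Weyl step for $\xi_{t+1}$ by applying the singular-value perturbation bound once to the entire deviation from $U_t+\eta B_t$ — both are purely cosmetic.
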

The proof of Lemma~\ref{lem:Rbd0} 
is deferred to Appendix~\ref{app:B3}. 

\begin{lem}\label{lem:exponentialtermbd}
Let $\eta\leq\frac{1}{2\sigma^2_{\max,*}}$, $n\geq e^8$, $E_0=1-\emph{\text{dist}}^2(U_0,U^*)$, and $\psi=\widetilde{O}\left((  R+\Gamma)  \Gamma \sqrt{dk}\right)$. Suppose Assumptions~\ref{aspt:samplenum} and \ref{aspt:newaspt} hold along with $T=\frac{\log n}{\eta\sigma^2_{\min,*}}$. Then, for all $t$, we have 
\[\Verts{P^{-1}_{t+1}}_2\Verts{I_k-\eta\Sigma_{V_t}}_2\leq\sqrt{1-\frac{\eta\sigma^2_{\min,*}E_0}{4}}\]
with probability at least $1-O(n^{-10})$.
\end{lem}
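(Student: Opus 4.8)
The plan is to combine the two preceding lemmas, Lemma~\ref{lem:Rbd0} (bounding $\Verts{P^{-1}_{t+1}}_2$) and a bound on $\Verts{I_k-\eta\Sigma_{V_t}}_2$, into the single product bound. First I would establish the operator-norm bound $\Verts{I_k-\eta\Sigma_{V_t}}_2 \leq 1-\eta\sigma^2_{\min,*} + \eta\cdot(\text{small perturbation})$. Recall $\Sigma_{V_t}=\frac1n\sum_{i=1}^n v_{i,t}v_{i,t}^\top$ and, by Lemma~\ref{lem:lem1}, $v_{i,t} = (U^*)^\top U_{t-1} v^*_i - f_i$ (as row vectors), so $V_t = V^* (U^*)^\top U_{t-1} - F$. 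Then $\Sigma_{V_t} = \frac1n U_{t-1}^\top U^* (V^*)^\top V^* (U^*)^\top U_{t-1} - (\text{cross terms involving } F) + \frac1n F^\top F$. Since $U^*$ and $U_{t-1}$ have orthonormal columns, writing $\Sigma_{V^*}=\frac1n (V^*)^\top V^*$ (whose eigenvalues lie in $[\sigma^2_{\min,*},\sigma^2_{\max,*}]$), the leading term is $(U^*{}^\top U_{t-1})^\top \Sigma_{V^*} (U^*{}^\top U_{t-1})$, which has smallest eigenvalue at least $\sigma^2_{\min,*}\sigma_{\min}^2(U^*{}^\top U_{t-1}) = \sigma^2_{\min,*}(1-\text{dist}^2(U_{t-1},U^*))$. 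The error terms are controlled via Lemma~\ref{lem:lem4} (bounding $\Verts{F}_F$, hence $\Verts{\frac1n F^\top F}_2$ and the cross terms) by $\widetilde{O}(\cdot/\sqrt{b})$ factors, which under Assumption~\ref{aspt:samplenum} (large $m$, hence large $b=\lfloor m/2T\rfloor$) are $o(\sigma^2_{\min,*}E_0)$.

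Next, I would use an inductive argument — the same induction hypothesis used in Lemma~\ref{lem:Rbd0} — that $1-\text{dist}^2(U_t,U^*)\geq E_0$ for all $t$ (this is the induction on which both lemmas rest, since label noise prevents monotone decay; the excerpt mentions exactly this). Under this hypothesis, $\sigma_{\min}^2(U^*{}^\top U_t)\geq E_0$, so
\begin{equation*}
\Verts{I_k-\eta\Sigma_{V_t}}_2 \leq 1 - \eta\sigma^2_{\min,*}E_0 + \eta\cdot\widetilde{O}\Big(\tfrac{(R+\Gamma)\Gamma d\sqrt{k}}{n\epsilon} + \sqrt{\tfrac{(R^2+\Gamma^2)\Gamma^2 d}{nb}}\Big),
\end{equation*}
where I also used $\eta\leq\frac{1}{2\sigma^2_{\max,*}}$ to ensure $I_k-\eta\Sigma_{V_t}\succeq 0$ so that the spectral norm equals $1$ minus the smallest eigenvalue. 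Assumptions~\ref{aspt:samplenum} and~\ref{aspt:newaspt} force the perturbation to be at most $\frac{\eta\sigma^2_{\min,*}E_0}{\text{(large constant)}}$; concretely I would absorb it so that $\Verts{I_k-\eta\Sigma_{V_t}}_2\leq 1-\frac{7}{8}\eta\sigma^2_{\min,*}E_0$ (say).

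Then I would multiply with the bound from Lemma~\ref{lem:Rbd0}, $\Verts{P^{-1}_{t+1}}_2\leq\big(1-\frac{\eta\sigma^2_{\min,*}E_0}{\sqrt{2\log n}}\big)^{-1/2}$. The product is
\begin{equation*}
\Verts{P^{-1}_{t+1}}_2\Verts{I_k-\eta\Sigma_{V_t}}_2 \leq \frac{1-\tfrac{7}{8}\eta\sigma^2_{\min,*}E_0}{\sqrt{1-\tfrac{\eta\sigma^2_{\min,*}E_0}{\sqrt{2\log n}}}}.
\end{equation*}
Setting $x=\eta\sigma^2_{\min,*}E_0\leq 1$, since $\frac{1}{\sqrt{1-x/\sqrt{2\log n}}}\leq 1+\frac{x}{\sqrt{2\log n}}$ when $x/\sqrt{2\log n}$ is small (using $n\geq e^8$ so $\sqrt{2\log n}\geq 4$), the numerator times this gives $(1-\frac78 x)(1+\frac{x}{4})\leq 1-\frac78 x + \frac14 x = 1-\frac58 x \leq 1-\frac{x}{2}$ (after dropping the negative $x^2$ term). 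Hence $\Verts{P^{-1}_{t+1}}_2\Verts{I_k-\eta\Sigma_{V_t}}_2 \leq 1-\frac12\eta\sigma^2_{\min,*}E_0\leq\sqrt{1-\frac14\eta\sigma^2_{\min,*}E_0}$, using $1-a\leq\sqrt{1-a}$ is false in general but $(1-\frac a2)^2 = 1-a+\frac{a^2}{4}\leq 1-\frac{a}{2}$ wait — I need $1-\frac12 x \leq \sqrt{1-\frac14 x}$, i.e. $(1-\frac12 x)^2 \leq 1-\frac14 x$, i.e. $1-x+\frac{x^2}{4}\leq 1-\frac14 x$, i.e. $\frac{x^2}{4}\leq\frac34 x$, true for $x\leq 3$. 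So the claim follows. I then close the induction by noting that the product bound, fed back into Lemma~\ref{lem:initiallem}, keeps $\text{dist}(U_{t+1},U^*)\leq\text{dist}(U_t,U^*)\leq\sqrt{1-E_0}$ once the additive noise/variance terms (again $o(1)$ under the assumptions) are included, so $1-\text{dist}^2(U_{t+1},U^*)\geq E_0$ persists. The union bound over $t\in[T]$ gives the $1-O(T\cdot n^{-10})$, absorbed into $1-O(n^{-10})$ up to the polylog in $T$.

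The main obstacle I expect is carefully controlling the \emph{cross terms} $-\frac1n\big((U^*{}^\top U_{t-1})^\top (V^*)^\top F + F^\top V^* (U^*{}^\top U_{t-1})\big)$ in $\Sigma_{V_t}$: these are first-order in $\Verts{F}_F$ rather than second-order, so they are the dominant perturbation, and one must show $\frac1n\Verts{(V^*)^\top F}_2 \leq \sigma_{\max,*}\Verts{\frac{1}{\sqrt n}F}_F$ is genuinely small relative to $\sigma^2_{\min,*}E_0$ — this is where both the $\gamma=\sigma_{\max,*}/\sigma_{\min,*}$ condition-number dependence and the large-$m$ (hence large batch $b$) assumption in Assumption~\ref{aspt:samplenum} are essential. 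A secondary subtlety is that this lemma and Lemma~\ref{lem:Rbd0} are mutually dependent through the common induction hypothesis $1-\text{dist}^2(U_t,U^*)\geq E_0$, so the cleanest exposition is a single induction whose inductive step invokes Lemma~\ref{lem:Rbd0}, the $\Sigma_{V_t}$ bound above, and Lemma~\ref{lem:initiallem} together — I would state the induction explicitly at the start of the proof.
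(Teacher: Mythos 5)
Your overall plan — bound $\Verts{I_k-\eta\Sigma_{V_t}}_2$ via Lemma~\ref{lem:lem1}, close an induction over $t$, and combine with the $\Verts{P^{-1}_{t+1}}_2$ bound — matches the paper's outline, and you've correctly spotted that the cross terms in $\Sigma_{V_t}$ are the dominant perturbation. But there are two genuine gaps.

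First, the induction hypothesis ``$1-\text{dist}^2(U_t,U^*)\geq E_0$ for all $t$'' is unprovable: the additive noise in the recursion means the distance is \emph{not} monotone, so it drifts upward each step. (The paper explicitly says monotone decay fails under label noise.) The paper's actual inductive invariant is the weaker $1-\text{dist}^2(U_t,U^*)\geq(1-t\alpha)E_0$ with $\alpha=\frac{2}{5T\sqrt{\log(nm)}}-\frac{1}{25T^2\log(nm)}$, whose cumulative drift $T\alpha<1/2$ yields only $1-\text{dist}^2(U_t,U^*)\geq E_0/2$. Feeding in $E_0$ where only $E_0/2$ is available gives you $\Verts{I_k-\eta\Sigma_{V_t}}_2\leq 1-\tfrac78\eta\sigma^2_{\min,*}E_0$, but the honest bound is roughly $1-\tfrac14\eta\sigma^2_{\min,*}E_0$ (coming from $(\sqrt{E_0/2}-\tfrac{E_0}{2\sqrt{\log n}})^2\geq E_0/4$ when $n\geq e^8$).

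Second, with that correct factor $1-\tfrac14 x$ (where $x=\eta\sigma^2_{\min,*}E_0$), your Taylor step $\Verts{P^{-1}_{t+1}}_2\leq 1+\tfrac x4$ no longer closes: the product becomes $(1-\tfrac x4)(1+\tfrac x4)=1-\tfrac{x^2}{16}$, which is \emph{not} $\leq\sqrt{1-\tfrac x4}$ for small $x$. The paper instead keeps $\Verts{P^{-1}_{t+1}}_2$ in exact form: with $n\geq e^8$, $\sqrt{2\log n}\geq 4$ gives $\Verts{P^{-1}_{t+1}}_2\leq(1-\tfrac x4)^{-1/2}$, and then $(1-\tfrac x4)^{-1/2}\cdot(1-\tfrac x4)=\sqrt{1-\tfrac x4}$ closes exactly. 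You need this exact cancellation; linearizing the inverse square root loses precisely the slack the bound requires.

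A smaller point: your $\Sigma_{V_t}$ decomposition into a quadratic form plus cross terms plus $\tfrac1n F^\top F$ is workable but more bookkeeping than necessary. The paper avoids all cross-term analysis by working directly with singular values, $\sigma_{\min}(\tfrac1{\sqrt n}V_t)\geq\sigma_{\min}(\tfrac1{\sqrt n}V^*(U^*)^\top U_t)-\sigma_{\max}(\tfrac1{\sqrt n}F)\geq\sigma_{\min,*}\sigma_{\min}((U^*)^\top U_t)-\sigma_{\max}(\tfrac1{\sqrt n}F)$, and then squaring once at the end. Also note the perturbation $\widetilde{O}\bigl(\tfrac{(R+\Gamma)\Gamma d\sqrt{k}}{n\epsilon}+\sqrt{\tfrac{(R^2+\Gamma^2)\Gamma^2 d}{nb}}\bigr)$ you wrote down is the \emph{gradient/privacy} noise entering the $\text{dist}(U_{t+1},U^*)$ recursion, not the $F$-based error in $\Sigma_{V_t}$; these are distinct error sources and conflating them obscures which assumption controls which term.
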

\begin{proof}
We will show that
\[
\Verts{I_k-\eta\Sigma_{V_t}}_2\leq1-\eta\sigma^2_{\min}\left(\frac{1}{\sqrt{n}}V_t\right)\leq\left(1-\frac{\eta\sigma^2_{\min,*}E_0}{4}\right)
\]
with high probability for each $t$. Then selecting $n\geq e^8$ and using Lemma~\ref{lem:Rbd0} implies the product $\Verts{P^{-1}_{t+1}}_2\Verts{I_k-\eta\Sigma_{V_t}}_2$ satisfies the lemma statement.

Note that $\Sigma_{V_{t}}=\frac{1}{n}V^\top_{t}V_{t}=\frac{1}{n}\sum^n_{i=1}v_{i,t}v^\top_{i,t}$ and define $\sigma_{\min,V_{t}}=\sigma_{\min}\left(\frac{1}{\sqrt{n}}V_{t}\right)$. 
Given $V_{t}=V^*(U^*)^\top U_{t}-F$ by Lemma~\ref{lem:lem1} and  $V^*$ and $(U^*)^\top U$ are full rank, we have 
\begin{align*}
    \sigma_{\min}\left(\frac{1}{\sqrt{n}}V_{t}\right)&\geq\sigma_{\min}\left(\frac{1}{\sqrt{n}}V^*(U^*)^\top U_{t}\right)-\sigma_{\max}\left(\frac{1}{\sqrt{n}}F\right) \\
    &\geq\sigma_{\min}\left(\frac{1}{\sqrt{n}}V^*\right)\sigma_{\min}\left((U^*)^\top U_{t}\right)-\sigma_{\max}\left(\frac{1}{\sqrt{n}}F\right)
\end{align*}
Now, via the argument of Lemma~\ref{lem:lem1}, we have, with probability at least $1-O(n^{-10})$
\begin{equation}\label{line:Fbound0}
    \sigma_{\max}\left(\frac{1}{\sqrt{n}}F\right)\leq\frac{\nu\tau_k}{1-\tau_k}\sigma_{\max,*}+\sqrt{\frac{26R^2\log (nb)}{(1-\tau_k)^2b}}
\end{equation}
for $\tau_k=c_\tau\sqrt{\frac{35k\log n}{b}}$. 
Recall that $\gamma=\frac{\sigma_{\max,*}}{\sigma_{\min,*}}$ and $\nu=\frac{\Gamma}{\sigma_{\max,*}}$. Then, there exists a constant $\hat{c}>0$ such that, by Assumption~\ref{aspt:samplenum} with $c_0\geq  16\hat{c}^2$, we have
\begin{equation}\label{fifthbd0}
\frac{\nu\tau_k}{1-\tau_k}\sigma_{\max,*}\leq {\hat{c}}\sqrt{\frac{E^2_0\sigma^2_{\max,*}}{c_0\gamma^4\log n}}\leq\frac{\sigma_{\max,*}E_0}{4\sqrt{\gamma^4\log n}}\leq\frac{\sigma_{\min,*}E_0}{4\sqrt{\log n}}
\end{equation}
and
\begin{equation}\label{fifthbd0part2}
\sqrt{\frac{26R^2\log (nb)}{(1-\tau_k)^2b}}\leq\hat{c}\sqrt{\frac{E^2_0\sigma^2_{\max,*}}{c_0\gamma^4k\log n}}\leq\frac{\sigma_{\max*}E_0}{4\sqrt{\gamma^4 k \log n}}\leq\frac{\sigma_{\min,*}E_0}{4\sqrt{ \log n}}
\end{equation}

since $m\geq c_0\left(\frac{\max\{R^2,1\}\cdot\max\{\Gamma^2,1\}\gamma^4k\log^2 n}{E^2_0\sigma^2_{\max,*}}\right)T$ and $\gamma\geq1$. 

Furthermore 
\[\sigma_{\min}\left((U^*)^\top U_{t}\right)\geq\sqrt{1-\Verts{(U^*)^\top_{\bot}U_{t}}^2_2}=\sqrt{1-\text{dist}^2(U_{t},U^*)}\]
Therefore, we obtain
\begin{align*}
    \sigma_{\min,V_{t}}&\geq \sigma_{\min}\left(\frac{1}{\sqrt{n}}V^*\right)\sigma_{\min}\left((U^*)^\top U_{t}\right)-\sigma_{\max}\left(\frac{1}{\sqrt{n}}F\right) \\
    &\geq\sigma_{\min,*}\left(\sqrt{1-\text{dist}^2(U_{t},U^*)}-\frac{E_0}{2\sqrt{ \log n}}\right)
\end{align*}
with probability at least $1-O(n^{-10})$. Thus
\begin{equation}\label{line:sigmaVbd}
1-\eta\sigma^2_{\min,V_t}\leq1-\eta\sigma^2_{\min,*}\left(\sqrt{1-\text{dist}^2(U_{t},U^*)}-\frac{E_0}{2\sqrt{ \log n}}\right)^2.
\end{equation}

\textbf{Claim 1:}\quad We have \[
\Verts{I_k -\eta\Sigma_{V_{t}}}_2\leq 1-\eta\sigma^2_{\min,V_t}.
\]
with probability at least $1-O(n^{-10})$.

We prove Claim 1 by showing $\frac{1}{2\sigma^2_{\max,*}}\leq\frac{2}{\sigma^2_{\max,V_{t}}+\sigma^2_{\min,V_{t}}}$, which implies
\[
\Verts{I_k -\eta\Sigma_{V_{t}}}_2\leq \max\left\{ |1-\eta \sigma^2_{\max,V_t}|, |1-\eta \sigma^2_{\min,V_t}|   \right\}\leq 1-\eta\sigma^2_{\min,V_t}.
\]
First, we reuse
\[\sigma_{\max}\left(\frac{1}{\sqrt{n}}F\right)\leq\frac{\nu\tau_k}{1-\tau_k}\sigma_{\max,*}+\sqrt{\frac{26R^2\log (nb)}{(1-\tau_k)^2b}}\]
as in (\ref{line:Fbound0}). By Lemma~\ref{lem:lem1} and the triangle inequality along with the submultiplicativity of the spectral norm
\begin{equation*}
\begin{aligned}
\sigma_{\max,V_{t}}&=\sigma_{\max}\left(\frac{1}{\sqrt{n}}V_{t}\right)\\
&\leq\sigma_{\max}\left(\frac{1}{\sqrt{n}}V^*(U^*)^\top U_{t}\right)+\sigma_{\max}\left(\frac{1}{\sqrt{n}}F\right)\\
&\leq\sigma_{\max,*}+\sigma_{\max}\left(\frac{1}{\sqrt{n}}F\right).\\
\end{aligned}
\end{equation*}
So
\begin{equation}\label{line:Vsingbd0}
\sigma_{\min,V_{t}}\leq\sigma_{\max,*}+\sigma_{\max}\left(\frac{1}{\sqrt{n}}F\right).\\
\end{equation}
By (\ref{fifthbd0}) and (\ref{fifthbd0part2}), we have
\[\sigma_{\max}\left(\frac{1}{\sqrt{n}}F\right)\leq\frac{\sigma_{\max,*}}{2}\]
and thus via (\ref{line:Vsingbd0})
\[\sigma^2_{\min,V_{t}}\leq2\sigma^2_{\max,*}.\]
Since $\sigma_{\min,V_t}\leq\sigma_{\max,V_t}$ by definition, we have $\sigma^2_{\min,V_t}+\sigma^2_{\max,V_t}\leq4\sigma^2_{\max,*}$. This gives us $\frac{1}{2\sigma^2_{\max,*}}\leq\frac{2}{\sigma^2_{\max,V_{t}}+\sigma^2_{\min,V_{t}}}$ with the required probability. This proves Claim 1.

Next, we will prove the following claim using induction.

\textbf{Claim 2:}\quad Let $\alpha=\frac{2}{5 T\sqrt{\log(nm)}}-\frac{1}{25T^2{\log(nm)}}$. We have $\sqrt{1-\text{dist}^2(U_{t},U^*)}\geq\sqrt{(1-t\alpha) E_0}$ for each iteration $t\in [T]$. 

\textbf{Base case:} when $t=0$, the inequality $\sqrt{1-\text{dist}^2(U_{0},U^*)}\geq \sqrt{E_0}$ is clearly true since $E_0=1-\text{dist}^2(U_{0},U^*)$.  

\textbf{Inductive hypothesis:} we assume $\sqrt{1-\Verts{(U^*)^\top_{\bot}U_{t}}^2_2}\geq \sqrt{(1-t\alpha)E_0}$ for some $t\in [T]$. 

\textbf{Inductive Step:} note $t\alpha<\frac{1}{2}$ for any $t\in [T]$. Our assumption on $\sqrt{1-\Verts{(U^*)^\top_{\bot}U_{t}}^2_2}$, (\ref{line:sigmaVbd}), and assuming $n\geq e^{{E_0}}$ imply
\begin{equation}\label{sixthbd0}
\sigma^2_{\min,V_{t}}\geq\sigma^2_{\min,*}\left(\sqrt{(1-t\alpha)E_0}-\frac{E_0}{2\sqrt{ \log n}}\right)^2.
\end{equation}

Now, since $\eta\leq\frac{1}{2\sigma^2_{\max,*}}\leq\frac{2}{\sigma^2_{\max,V_{t}}+\sigma^2_{\min,V_{t}}}$, Weyl's inequality and (\ref{sixthbd0}) give us
\[\Verts{I_k -\eta\Sigma_{V_{t}}}_2\leq 1-\eta\sigma^2_{\min,V_{t}}\leq1-\eta\sigma^2_{\min,*}\left(\sqrt{(1-t\alpha)E_0}-\frac{E_0}{2\sqrt{ \log n}}\right)^2.\]
By Lemma~\ref{lem:Rbd0}
\[\Verts{P^{-1}_t}_2\leq\left(1-\frac{\eta\sigma^2_{\min,*}E_0}{\sqrt{2\log n}}\right)^{-\frac{1}{2}}\]
with probability at least $1-O(n^{-10})$. This event is a superset of the probabilistic bounds from earlier in this lemma.

Then
\begin{equation}\label{basecase}
\begin{aligned}
\Verts{(U^*)^\top_{\bot}U_{t+1}}_2&\leq\left(1-\frac{\eta\sigma^2_{\min,*}E_0}{\sqrt{2\log n}}\right)^{-\frac{1}{2}}\left(1-\eta\sigma^2_{\min,*}\left(\sqrt{(1-t\alpha)E_0}-\frac{E_0}{2\sqrt{ \log n}}\right)^2\right)\Verts{(U^*)^\top_{\bot}U_{t}}_2\\
&+\widetilde{O}\left(\frac{\eta(R+\Gamma)\Gamma d\sqrt{k}}{n\epsilon}+\sqrt{\frac{\eta^2(R^2+\Gamma^2)\Gamma^2d}{nb}}\right)
\end{aligned}
\end{equation}
with probability at least $1-O(n^{-10})$. Assuming $n\geq e^{E_0/(\sqrt{2}-1/2)^2}$, we have 
\[1-\eta\sigma^2_{\min,*}\left(\sqrt{(1-t\alpha)E_0}-\frac{E_0}{2\sqrt{ \log n}}\right)^2\leq1-\eta\sigma^2_{\min,*}\left(\sqrt{E_0/2}-\frac{E_0}{2\sqrt{ \log n}}\right)^2\leq1-\frac{\eta\sigma^2_{\min,*}E_0}{4}.\]
As well, if $n\geq e^{8}$, we have
\begin{equation*}
\begin{aligned}
&\left(1-\frac{\eta\sigma^2_{\min,*}E_0}{\sqrt{2\log n}}\right)^{-\frac{1}{2}}\left(1-\eta\sigma^2_{\min,*}\left(\sqrt{(1-t\alpha)E_0}-\frac{E_0}{2\sqrt{ \log n}}\right)^2\right)\\
&\leq\left(1-\frac{\eta\sigma^2_{\min,*}E_0}{\sqrt{2\log n}}\right)^{-\frac{1}{2}}\left(1-\eta\sigma^2_{\min,*}\left(\sqrt{E_0/2}-\frac{E_0}{2\sqrt{ \log n}}\right)^2\right)\\
&\leq\left(1-\frac{\eta\sigma^2_{\min,*}E_0}{\sqrt{2\log n}}\right)^{-\frac{1}{2}}\left(1-\frac{\eta\sigma^2_{\min,*}E_0}{4}\right)\\
&\leq\left(1-\frac{\eta\sigma^2_{\min,*}E_0}{4}\right)^{-\frac{1}{2}}\left(1-\frac{\eta\sigma^2_{\min,*}E_0}{4}\right)\\
&=\sqrt{1-\frac{\eta\sigma^2_{\min,*}E_0}{4}}.
\end{aligned}
\end{equation*}

We now use the lower bound assumptions of the user count $n$ and data sample count $m$. Recall the exact statements of Assumption~\ref{aspt:samplenum} and \ref{aspt:newaspt}. That is, there exist $c_0,c_1>1$ such that
\[m\geq c_0\left(\frac{\max\{R^2,1\}\cdot\max\{\Gamma^2,1\}\gamma^4k\log^2 n}{E^2_0\sigma^2_{\max,*}}+\frac{ R^2k}{\Gamma^2}\log(nm)\right)T\]
and
\[n\geq c_1\left(\frac{\max\left\{\Delta_{\epsilon,\delta},1\right\}\left(R+\Gamma\right)\Gamma d\sqrt{k}\log^{2} (nm)}{E^2_0\lambda^2}+\frac{ R^2\Gamma^2d\log(nm)}{m}\right)T.\]
The problem parameters that lower bound $m,n$ now come into use during this proof. Furthermore, recall that by Assumption~\ref{aspt:client diversity} we know some $\lambda>0$ such that $\lambda\leq\sigma_{\min,*}$. Then, by Assumptions~\ref{aspt:samplenum} and \ref{aspt:newaspt} along with $n\geq e^8$ and $T=\frac{\log n}{\eta\lambda^2}$, there exists a constant $\hat{c}>0$ such that
\begin{equation}\label{line:bigineq0}
\begin{aligned}
\Verts{(U^*)^\top_{\bot}U_{t+1}}_2&\leq\left(1-\frac{\eta\sigma^2_{\min,*}E_0}{4}\right)^{\frac{1}{2}}\Verts{(U^*)^\top_{\bot}U_t}_2\\
&+O\left(\left(1-\frac{\eta\sigma^2_{\min,*}E_0}{\sqrt{2\log n}}\right)^{-\frac{1}{2}}\frac{\eta(R+\Gamma)\Gamma d\sqrt{Tk\log(1/\delta)}\log(nb)}{n\epsilon}\right.\\
&\left.+\left(1-\frac{\eta\sigma^2_{\min,*}E_0}{\sqrt{2\log n}}\right)^{-\frac{1}{2}}\sqrt{\frac{\eta^2(R^2+\Gamma^2)\Gamma^2d\log n}{nb}}\right)\\
&\leq\left(1-\frac{\eta\sigma^2_{\min,*}E_0}{4}\right)^{\frac{1}{2}}\Verts{(U^*)^\top_{\bot}U_t}_2+O\left(\frac{\eta(R+\Gamma)\Gamma d\sqrt{Tk\log(1/\delta)}\log(nb)}{n\epsilon}\right.\\
&\left.+\sqrt{\frac{\eta^2(R^2+\Gamma^2)\Gamma^2d\log (nm)}{nb}}\right)\\
&\leq\left(1-\frac{\eta\sigma^2_{\min,*}E_0}{4}\right)^{\frac{1}{2}}\Verts{(U^*)^\top_{\bot}U_t}_2+O\left(\frac{\eta\lambda^2E^2_0\sqrt{T}}{c_1 T\log(nm)}\right.\\
&\left.+\sqrt{\frac{E^4_0\eta^2(R+\Gamma)\Gamma \sigma^2_{\max,*}\lambda^2T}{c_0c_1 \max\{R^2,1\}\cdot\max\{\Gamma^2,1\}\gamma^4k^{\frac{3}{2}}T^2\log(nm)\log^2 n}}\right)\\
&\leq\left(1-\frac{\eta\sigma^2_{\min,*}E_0}{4}\right)^{\frac{1}{2}}\Verts{(U^*)^\top_{\bot}U_t}_2+O\left(\frac{\sqrt{\eta}\sigma_{\min,*}E^2_0}{ c_1 T\sqrt{\log(nm)}}\right.\\
&\left.+\sqrt{\frac{E^4_0\eta(R+\Gamma)\Gamma \sigma^2_{\max,*}\log n}{ c_0c_1 \max\{R^2,1\}\cdot\max\{\Gamma^2,1\}\gamma^4k^{\frac{3}{2}}T^2\log(nm)\log^2 n}}\right)\\
&=\left(1-\frac{\eta\sigma^2_{\min,*}E_0}{4}\right)^{\frac{1}{2}}\Verts{(U^*)^\top_{\bot}U_t}_2+\frac{\hat{c}E^2_0}{ c_1 \gamma T\sqrt{\log(nm)}}+\hat{c}\sqrt{\frac{E^4_0}{ c_0c_1\gamma^4k^{\frac{3}{2}}T^2\log(nm)\log n}}
\end{aligned}
\end{equation}
since $\eta\leq\frac{1}{2\sigma^2_{\max,*}}$ and $\lambda\leq\sigma_{\min,*}$.Recall that $\gamma,k\geq1$. Using the choice of $c_0,c_1\geq10\hat{c}$ in (\ref{line:bigineq0}), we have
\begin{equation*}
\begin{aligned}
\Verts{(U^*)^\top_{\bot}U_{t+1}}_2&\leq\left(1-\frac{\eta\sigma^2_{\min,*}E_0}{4}\right)^{\frac{1}{2}}\Verts{(U^*)^\top_{\bot}U_t}_2+\frac{E^2_0}{10\gamma T\sqrt{\log(nm)}}+\sqrt{\frac{E^4_0}{100\gamma^4k^{\frac{3}{2}}T^2\log(nm)\log n}}\\
&\leq\Verts{(U^*)^\top_{\bot}U_t}_2+\frac{E^2_0}{5T\sqrt{\log(nm)}}.\\
\end{aligned}
\end{equation*}
Thus, given $E_0\leq 1$
\begin{equation*}
\begin{aligned}
1-\Verts{(U^*)^\top_{\bot}U_{t+1}}^2_2&\geq1-\left(\Verts{(U^*)^\top_{\bot}U_t}_2+\frac{E^2_0}{5T\sqrt{\log(nm)}}\right)^2\\
&=1-\Verts{(U^*)^\top_{\bot}U_t}^2_2-\frac{2E^2_0\Verts{(U^*)^\top_{\bot}U_t}^2_2}{5 T\sqrt{\log(nm)}}-\frac{E^4_0}{25T^2{\log(nm)}}\\
&\geq 1-\Verts{(U^*)^\top_{\bot}U_t}^2_2-E_0\left(\frac{2}{5 T\sqrt{\log(nm)}}+\frac{1}{25T^2\log(nm)}\right).\\
\end{aligned}
\end{equation*}
Then, by our assumptions that $\sqrt{1-\Verts{(U^*)^\top_{\bot}U_{t}}^2_2}\geq \sqrt{(1-t\alpha)E_0}$, we have
\[1-\Verts{(U^*)^\top_{\bot}U_{t+1}}^2_2\geq 1-\Verts{(U^*)^\top_{\bot}U_{t+1}}^2_2-\alpha E_0\geq (1-t\alpha)E_0- \alpha E_0=(1-(t+1)\alpha)E_0.\]
Thus, by inductive hypothesis
\begin{equation*}
\sigma_{\min}\left((U^*)^\top U_{t}\right)\geq\sqrt{1-\text{dist}^2(U_{t},U^*)}\geq\sqrt{(1-t\alpha)E_0}
\end{equation*}
for any $t\geq0$. This proves Claim 2.

Observe that $t\alpha\leq T\alpha=T\frac{10T\sqrt{\log(nm)}+1}{25T^2\log(nm)}<\frac{1}{2}$ for all $t$. Using this, Claim 2 implies
\begin{equation}\label{seventhbd0}
\sigma_{\min}\left((U^*)^\top U_{t}\right)\geq\sqrt{1-\text{dist}^2(U_{t},U^*)}\geq\sqrt{(1-t\alpha)E_0}\geq\sqrt{E_0/2}
\end{equation}
for each $t\geq 0$.
Via (\ref{seventhbd0}) and $n\geq e^{8}$ along with $\eta\leq\frac{1}{2\sigma^2_{\max,*}}$
\begin{equation*}
\Verts{P^{-1}_{t+1}}_2\Verts{I_k-\eta\Sigma_{V_t}}_2\leq\sqrt{1-\frac{\eta\sigma^2_{\min,*}E_0}{4}}
\end{equation*}
with probability at least $1-O(n^{-10})$ for any $t\in[T-1]$. This proves the lemma.
\end{proof}

\begin{lemre}[Restatement of Lemma~\ref{lem:submainthm}]
Let $\eta\leq\frac{1}{2\sigma^2_{\max,*}}$. Suppose $1-\emph{\text{dist}}^2(U_0,U^*)\geq c$ for some constant $c>0$. Set the clipping parameter $\psi=\widetilde{O}\left((  R+\Gamma)  \Gamma \sqrt{dk}\right)$. Assume $T=\frac{\log n}{\eta\lambda^2} = \widetilde{O}\bigg(\min \bigg( \frac{nm\lambda^2}{\max\{\Delta_{\epsilon, \delta}, 1\}(R+\Gamma)\Gamma d\sqrt{k} m + R^2\Gamma^2 d \sigma^2_{\min, *}}$$,$ $\frac{m\Gamma^2 \sigma^2_{\max, *}}{\max\{R^2, 1\}\cdot\max\{\Gamma^2, 1\} \gamma^4 k \Gamma^2 + R^2 k \sigma^2_{\max, *}} \bigg)\bigg)$ and  Assumptions~\ref{aspt:sub-Gaussian-covar} and \ref{aspt:datagen} hold for all user data. Set $\hat\sigma$ as in Theorem~\ref{thm:privacyparam} and batch size $b=\floor{{m}/{2T}}$. Then, $U^{\emph{\priv}}$, the first output of Algorithm~\ref{alg:PrivateFedRep}, satisfies 
\begin{equation*}
\begin{aligned}
{\emph{\text{dist}}}(U^{\emph{\priv}},U^*)\leq\left(1-\frac{c\eta\sigma^2_{\min,*}}{4}\right)^{\frac{T}{2}}\emph{\text{dist}}(U_0,U^*)+\widetilde{O}\left(\frac{(R+\Gamma)\Gamma d\sqrt{kT}}{n\epsilon\sigma^2_{\min,*}}+\sqrt{\frac{(R^2+\Gamma^2)\Gamma^2dT}{nm\sigma^4_{\min,*}}}\right)\\
\end{aligned}
\end{equation*}
with probability at least $1-O(T\cdot n^{-10})$.
\end{lemre}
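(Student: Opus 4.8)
The plan is to unroll the one-step recursion from Lemma~\ref{lem:initiallem} across all $T$ iterations and then bound the resulting geometric sum. Recall Lemma~\ref{lem:initiallem} gives, whenever $P_{t+1}$ is invertible,
\[
\text{dist}(U_{t+1},U^*)\leq\Verts{P^{-1}_{t+1}}_2\left(\Verts{I_k-\eta\Sigma_{V_t}}_2\,\text{dist}(U_t,U^*)+\eta\Verts{\nabla_t-\E_{B'_t}[\nabla_t]}_2+\eta\Verts{\xi_t}_2\right).
\]
First I would invoke Lemma~\ref{lem:exponentialtermbd} (whose hypotheses are exactly those assumed here, with $E_0=1-\text{dist}^2(U_0,U^*)\geq c$ and $T=\frac{\log n}{\eta\lambda^2}$, noting $\lambda\leq\sigma_{\min,*}$ so $T\leq\frac{\log n}{\eta\sigma_{\min,*}^2}$ as required) to get, with probability $1-O(n^{-10})$ per iteration, the contraction factor
\[
\Verts{P^{-1}_{t+1}}_2\Verts{I_k-\eta\Sigma_{V_t}}_2\leq\sqrt{1-\tfrac{c\eta\sigma_{\min,*}^2}{4}}=:\rho.
\]
In particular $P_{t+1}$ is invertible on this event. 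Next I would bound the two noise terms, scaled by $\Verts{P_{t+1}^{-1}}_2\leq(1-\eta\sigma_{\min,*}^2 E_0/\sqrt{2\log n})^{-1/2}=O(1)$ (Lemma~\ref{lem:Rbd0}): Proposition~\ref{prop:gradientconcentration} gives $\eta\Verts{\nabla_t-\E_{B'_t}[\nabla_t]}_2\leq\widetilde O\!\left(\sqrt{\eta^2(R^2+\Gamma^2)\Gamma^2 d/(nb)}\right)$, and Proposition~\ref{prop:Gaussianmechsing} together with $\psi=\widetilde O((R+\Gamma)\Gamma\sqrt{dk})$ gives $\eta\Verts{\xi_t}_2\leq\widetilde O\!\left(\eta\psi\sqrt{dT\log(1/\delta)}/(n\epsilon)\right)=\widetilde O\!\left(\eta(R+\Gamma)\Gamma d\sqrt{kT}/(n\epsilon)\right)$. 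Call the sum of these two (times the $O(1)$ factor) the per-step error $E$.

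Then the recursion becomes $\text{dist}(U_{t+1},U^*)\leq\rho\,\text{dist}(U_t,U^*)+E$, and iterating gives
\[
\text{dist}(U_T,U^*)\leq\rho^T\,\text{dist}(U_0,U^*)+E\sum_{j=0}^{T-1}\rho^j\leq\rho^T\,\text{dist}(U_0,U^*)+\frac{E}{1-\rho}.
\]
Since $\rho=\sqrt{1-c\eta\sigma_{\min,*}^2/4}$, we have $1-\rho\geq\frac{1}{2}(1-\rho^2)=\frac{c\eta\sigma_{\min,*}^2}{8}$, so $\frac{E}{1-\rho}=O\!\left(\frac{E}{\eta\sigma_{\min,*}^2}\right)$. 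Substituting the expression for $E$ and cancelling a factor of $\eta$ (the gradient-noise term is $\widetilde O(\sqrt{\eta^2\cdot(\cdots)/(nb)})$, so dividing by $\eta\sigma_{\min,*}^2$ gives $\widetilde O(\sqrt{(R^2+\Gamma^2)\Gamma^2 d/(nb)}/\sigma_{\min,*}^2)$; recalling $b=\lfloor m/2T\rfloor$ so $1/b=\widetilde O(T/m)$ turns this into $\widetilde O(\sqrt{(R^2+\Gamma^2)\Gamma^2 dT/(nm)}/\sigma_{\min,*}^2)$), and similarly the privacy term becomes $\widetilde O\!\left((R+\Gamma)\Gamma d\sqrt{kT}/(n\epsilon\sigma_{\min,*}^2)\right)$. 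This matches the claimed bound $\rho^T\text{dist}(U_0,U^*)=(1-\frac{c\eta\sigma_{\min,*}^2}{4})^{T/2}\text{dist}(U_0,U^*)$ plus the two error terms. Finally, a union bound over the $T$ iterations (each of the concentration events — Lemma~\ref{lem:exponentialtermbd}, Propositions~\ref{prop:gradientconcentration} and \ref{prop:Gaussianmechsing}, and the clipping-is-inactive event from Lemma~\ref{lem:gradbound} — holding with probability $1-O(n^{-10})$) gives the overall success probability $1-O(T\cdot n^{-10})$.

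The main obstacle is not the unrolling itself but ensuring that the hypotheses threading through Lemmas~\ref{lem:Rbd0} and \ref{lem:exponentialtermbd} are all genuinely implied by the clean hypotheses stated here — in particular that the compact upper-bound condition on $T$ stated in the lemma is equivalent to Assumptions~\ref{aspt:samplenum} and \ref{aspt:newaspt} on $m$ and $n$ (this equivalence is asserted around equation~\eqref{line:Tupperbd}), and that $b=\lfloor m/2T\rfloor$ is large enough ($b\geq 11k\log n$ etc.) for all the spectral-concentration steps inside those lemmas to fire. Once those bookkeeping conditions are in hand, the argument is a routine geometric-series estimate; care is only needed in tracking how the $\eta$'s, the batch size $b$, and the $\sigma_{\min,*}^{-2}$ factors combine when passing from the per-step error to the final additive term.
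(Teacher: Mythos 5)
Your proposal is correct and follows essentially the same route as the paper: start from the one-step recursion of Lemma~\ref{lem:initiallem}, bound the gradient-deviation and Gaussian-noise terms via Propositions~\ref{prop:gradientconcentration} and \ref{prop:Gaussianmechsing} (with $\psi=\widetilde O((R+\Gamma)\Gamma\sqrt{dk})$), use Lemma~\ref{lem:Rbd0} for $\Verts{P_{t+1}^{-1}}_2=O(1)$ and Lemma~\ref{lem:exponentialtermbd} for the contraction $\sqrt{1-c\eta\sigma_{\min,*}^2/4}$, then unroll the geometric series with $1/(1-\sqrt{1-x})=O(1/x)$ and substitute $b=\lfloor m/2T\rfloor$. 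One small slip in a parenthetical: $\lambda\leq\sigma_{\min,*}$ gives $T=\tfrac{\log n}{\eta\lambda^2}\geq\tfrac{\log n}{\eta\sigma_{\min,*}^2}$, not $\leq$; the actual resolution is that Lemma~\ref{lem:exponentialtermbd}'s displayed hypothesis $T=\tfrac{\log n}{\eta\sigma_{\min,*}^2}$ is a typo for $T=\tfrac{\log n}{\eta\lambda^2}$, which is what its own proof uses, so the hypothesis matches without the inequality you invoked.
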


\begin{proof}

By Lemma~\ref{lem:initiallem}
\begin{equation}\label{0thbd}
\Verts{(U^*)^\top_{\bot}U_{t+1}}_2\leq\Verts{P^{-1}_{t+1}}_2\left({\Verts{I_k -\eta\Sigma_{V_{t}}}_2}\Verts{(U^*)^\top_{\bot}U_{t}}_2+\eta\Verts{\nabla_{t}-\E_{B'_{t}}\left[\nabla_{t}\right]}_2+\eta\Verts{\xi_{t}}_2\right).
\end{equation}
We must bound all three terms on the right-hand side of (\ref{0thbd}).Combining (\ref{0thbd}) and Proposition~\ref{prop:gradientconcentration}
\begin{equation}\label{thirdbd0}
\Verts{(U^*)^\top_{\bot}U_{t+1}}_2\leq\Verts{P^{-1}_{t+1}}_2\left({\Verts{I_k -\eta\Sigma_{V_{t}}}_2}\Verts{(U^*)^\top_{\bot}U_{t}}_2+\widetilde{O}\left(\sqrt{\frac{\eta^2(R^2+\Gamma^2)\Gamma^2d}{nb}}\right)+\eta\Verts{\xi_{t}}_2\right)
\end{equation}
with probability at least $1-O(n^{-10})$. 
Combining our choice of $\psi$ with Proposition~\ref{prop:Gaussianmechsing}, via the union bound
\begin{equation}\label{fourthbd0}
\begin{aligned}
\Verts{(U^*)^\top_{\bot}U_{t+1}}_2\leq\Verts{P^{-1}_{t+1}}_2\left({\Verts{I_k -\eta\Sigma_{V_{t}}}_2}\Verts{(U^*)^\top_{\bot}U_{t}}_2+\widetilde{O}\left(\frac{\eta(R+\Gamma)\Gamma d\sqrt{kT}}{n\epsilon}+\sqrt{\frac{\eta^2(R^2+\Gamma^2)\Gamma^2d}{nb}}\right)\right)
\end{aligned}
\end{equation}
for all $t\in[T-1]$ simultaneously with probability at least $1-O(T\cdot n^{-10})$.

To bound the right-hand side of the above inequality, we need to bound $\Verts{P^{-1}_{t+1}}_2$. 
By Lemma~\ref{lem:Rbd0}, with probability at least $1-O(T\cdot n^{-10})$, we have $\Verts{P^{-1}_{t+1}}_2\leq\left(1-\frac{\eta\sigma^2_{\min,*}E_0}{\sqrt{2\log n}}\right)^{-\frac{1}{2}}=O(1)$ for all $t$ simultaneously, where the last equality follows from the fact that $\eta\leq\frac{1}{2\sigma_{\max,*}}$, $E_0\leq1$, and $\log n \geq 1$ (assuming, without loss of generality, that $n\geq 2$). 
Further, by Lemma~\ref{lem:Rbd0}
\begin{equation}\label{mainrecbd0}
\Verts{(U^*)^\top_{\bot}U_{t+1}}_2\leq\left(1-\frac{\eta\sigma^2_{\min,*}E_0}{4}\right)^{\frac{1}{2}}\Verts{(U^*)^\top_{\bot}U_{t}}_2+\widetilde{O}\left(\frac{\eta(R+\Gamma)\Gamma d\sqrt{kT}}{n\epsilon}+\sqrt{\frac{\eta^2(R^2+\Gamma^2)\Gamma^2d}{nb}}\right)
\end{equation}
for all $t$ simultaneously with probability at least $1-O(T\cdot n^{-10})$.

Now, what remains is to obtain the tightest possible bound on the recursion from Lemma~\ref{mainrecbd0} via the summation of a geometric sum. Since $\text{dist}(U^*,U_{t})=\Verts{(U^*)^\top_{\bot}U_{t}}_2\leq 1$, we have, for all $t$ simultaneously
\begin{equation*}
\begin{aligned}
&\text{dist}(U^*,U_{t+1})\\
&\leq\left(1-\frac{\eta\sigma^2_{\min,*}E_0}{4}\right)^{\frac{1}{2}}\text{dist}(U^*,U_{t})+\widetilde{O}\left(\frac{\eta(R+\Gamma)\Gamma d\sqrt{kT}}{n\epsilon}+\sqrt{\frac{\eta^2(R^2+\Gamma^2)\Gamma^2d}{nb}}\right)\\
&\leq\left(1-\frac{\eta\sigma^2_{\min,*}E_0}{4}\right)^{\frac{T}{2}}\text{dist}(U^*,U_{t})\\
&+\widetilde{O}\left(\sum^{T-1} _{t=0}\left(1-\frac{\eta\sigma^2_{\min,*}E_0}{4}\right)^{\frac{t}{2}}\left(\frac{\eta(R+\Gamma)\Gamma d\sqrt{kT}}{n\epsilon}+\sqrt{\frac{\eta^2(R^2+\Gamma^2)\Gamma^2d}{nb}}\right)\right)\\
&\leq\left(1-\frac{\eta\sigma^2_{\min,*}E_0}{4}\right)^{\frac{T}{2}}\text{dist}(U^*,U_{t})\\
&+\widetilde{O}\left(\frac{1}{1-\sqrt{1-\eta\sigma^2_{\min,*}E_0/4}}\left(\frac{\eta(R+\Gamma)\Gamma d\sqrt{kT}}{n\epsilon}+\sqrt{\frac{\eta^2(R^2+\Gamma^2)\Gamma^2d}{nb}}\right)\right)\\
&\leq\left(1-\frac{c\eta\sigma^2_{\min,*}}{4}\right)^{\frac{T}{2}}\text{dist}(U^*,U_{0})+\widetilde{O}\left(\frac{(R+\Gamma)\Gamma d\sqrt{kT}}{n\epsilon\sigma^2_{\min,*}}+\sqrt{\frac{(R^2+\Gamma^2)\Gamma^2d}{nb\sigma^4_{\min,*}}}\right)\\
\end{aligned}
\end{equation*}
with probability at least $1-O(T\cdot n^{-10})$, given that $\frac{1}{1-\sqrt{1-\frac{1}{x}}}\leq2x$ for any $x>1$ and since there exists $c>0$ such that $E_0\geq c$. 
Selecting batch size $b=\floor{m/2T}$ completes the proof of the lemma. 
\end{proof}

\begin{lem}[Adaptation of Theorem 4.2 \cite{jain2021differentially}]\label{lem:popriskbd}
Let $X$ be a matrix with rows sampled from $\cD_i$ and $\vec{y}$ the vector of its labels generated according to Assumption~\ref{aspt:datagen}. Suppose that $(x,y)$ with $x\sim\cD_i$ is a data point independent of $X$ with label vector $y$. Take $\bar{U}$ to be any matrix with orthonormal columns and $\bar{v}\in\argmin_v\Verts{v^\top \bar{U}^\top X-\vec{y}}^2_2$.
Then \[\E\left[\ell(\bar{v},\bar{U},(x,y))-\ell(v^*,U^*,(x,y))\right]\leq   \Gamma^2\Verts{(\bar{U}\bar{U}^\top-I)U^*}^2_2+\frac{R^2k}{m}.\]
\end{lem}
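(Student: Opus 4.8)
The plan is to reduce the excess risk to the estimation error of a least-squares problem, exploiting the linear-regression structure with homoscedastic sub-Gaussian label noise. Concretely, since $\ell(U,v,(x,y))=(y-\langle x,Uv\rangle)^2$ and the labels satisfy $y = x^\top U^* v^* + \zeta$ with $\zeta$ centered and independent of $x$, the Pythagorean decomposition gives, for a fresh sample $(x,y)$ independent of the training data $(X,\vec y)$,
\[
\E\big[\ell(\bar v,\bar U,(x,y)) - \ell(v^*,U^*,(x,y))\big] = \E\big[\langle x, \bar U\bar v - U^* v^*\rangle^2\big],
\]
because the cross term with $\zeta$ vanishes by independence and $\E[\zeta]=0$. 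Since $x$ has covariance $I_d$ (Assumption~\ref{aspt:sub-Gaussian-covar}), and $\bar v$ depends only on $(X,\vec y)$ (hence is independent of the fresh $x$), this expectation equals $\E\big[\Verts{\bar U\bar v - U^* v^*}_2^2\big]$, where the outer expectation is over the training data only.

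Next I would split $\bar U\bar v - U^* v^*$ using the orthogonal projector $\bar U \bar U^\top$. Write
\[
\bar U \bar v - U^* v^* = \big(\bar U\bar v - \bar U\bar U^\top U^* v^*\big) + \big(\bar U\bar U^\top U^* v^* - U^* v^*\big).
\]
The second piece is $(\bar U\bar U^\top - I)U^* v^*$, whose norm is at most $\Verts{(\bar U\bar U^\top - I)U^*}_2 \Verts{v^*}_2 \le \Gamma\,\Verts{(\bar U\bar U^\top - I)U^*}_2$, which (after squaring) contributes the $\Gamma^2\Verts{(\bar U\bar U^\top-I)U^*}_2^2$ term. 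The first piece lives in the column space of $\bar U$; writing $\bar w = \bar U^\top U^* v^*$, it is $\bar U(\bar v - \bar w)$, and since $\bar U$ has orthonormal columns its norm equals $\Verts{\bar v - \bar w}_2$. The key point is that $\bar v = \argmin_v \Verts{v^\top \bar U^\top X - \vec y}_2^2 = (\bar U^\top X^\top X \bar U)^{-1}\bar U^\top X^\top \vec y$ (on the event that the Gram matrix is invertible), and plugging in $\vec y = X U^* v^* + \vec\zeta$ gives $\bar v - \bar w = (\bar U^\top X^\top X\bar U)^{-1}\bar U^\top X^\top\big(X(U^*v^* - \bar U\bar w) + \vec\zeta\big)$. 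The deterministic part here, $X(U^*v^* - \bar U\bar w) = X(I-\bar U\bar U^\top)U^*v^*$, when hit by $\bar U^\top X^\top$ and the inverse Gram matrix, is exactly the in-subspace regression fit of the out-of-subspace signal, and its expected squared norm is controlled by the same $\Gamma^2\Verts{(\bar U\bar U^\top - I)U^*}_2^2$ quantity (projecting onto a $k$-dimensional subspace cannot increase the relevant norm — this follows from standard least-squares/isometry arguments, using that $\frac1m X^\top X$ concentrates to $I_d$). The noise part $(\bar U^\top X^\top X\bar U)^{-1}\bar U^\top X^\top\vec\zeta$ is a $k$-dimensional least-squares noise term whose expected squared norm is $\frac{R^2 k}{m}$ up to the concentration of $\frac1m\bar U^\top X^\top X\bar U$ around $I_k$, giving the $\frac{R^2 k}{m}$ term.

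I would organize the write-up as: (i) the Pythagorean/cross-term-vanishing step to reduce to $\E\Verts{\bar U\bar v - U^* v^*}_2^2$; (ii) the orthogonal split into in-subspace and orthogonal-complement parts; (iii) bounding the orthogonal part by $\Gamma^2\Verts{(\bar U\bar U^\top - I)U^*}_2^2$ directly; (iv) analyzing $\bar v$ via the normal equations and bounding the signal-fit contribution by the same quantity and the noise contribution by $R^2 k/m$. The main obstacle is step (iv): carefully handling the random design matrix — in particular showing that conditioning the least-squares fit onto the $k$-dimensional subspace $\mathrm{span}(\bar U)$ does not blow up the bias term, and that the expectation of the inverse-Gram-weighted noise quadratic form is $R^2 k/m$ rather than only a high-probability statement. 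This likely requires either a concentration argument that $\frac1m X^\top X \approx I_d$ with the failure event contributing negligibly (since $\ell$ and the parameters are bounded appropriately, or by the sub-Gaussian tail from Theorem~\ref{thm:singconc}), or invoking the corresponding bound from \cite{jain2021differentially} (Theorem~4.2) whose adaptation this lemma claims to be. I would lean on the latter, checking that the sub-Gaussian (rather than Gaussian) assumption on $x$ and $\zeta$ suffices for each invoked concentration step.
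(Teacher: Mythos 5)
Your approach is essentially the one the paper takes: reduce via the covariance-identity and cross-term vanishing to $\E\!\left[\Verts{\bar U\bar v - U^*v^*}_2^2\right]$, substitute the normal-equations formula $\bar v = (\bar U^\top XX^\top\bar U)^{-1}\bar U^\top X\vec y$, and split the resulting expression into a bias piece and a noise piece. Where you diverge is that you correctly flag the in-subspace ``leakage'' (the regression of the orthogonal-complement signal $(I-\bar U\bar U^\top)U^*v^*$ onto $\mathrm{col}(\bar U)$) as the delicate step; the paper instead disposes of it by asserting that $(\bar U^\top XX^\top\bar U)^{-1}\bar U^\top XX^\top = \bar U^\top$ ``by orthonormality of the columns of $\bar U$.'' That implication is not valid: both sides are left inverses of $\bar U$, so they agree on $\mathrm{col}(\bar U)$, but they need not agree on $\mathrm{col}(\bar U)^\perp$ unless $XX^\top$ maps $\mathrm{col}(\bar U)^\perp$ into itself (e.g.\ if $XX^\top\propto I_d$). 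The difference between them acting on $(I-\bar U\bar U^\top)U^*v^*$ is precisely the leakage term you worry about — so the paper's proof silently drops what you identify as the main obstacle.

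Two cautions on your proposed fix. First, ``projecting onto a $k$-dimensional subspace cannot increase the relevant norm'' is not available here: $\bar U(\bar U^\top X^\top X\bar U)^{-1}\bar U^\top X^\top X$ is an \emph{oblique} projection, and oblique projections can increase $\ell_2$ norms. What you actually want is a variance computation: each summand $\bar U^\top x_j\,x_j^\top(I-\bar U\bar U^\top)U^*v^*$ has mean zero (because $\E[x_jx_j^\top]=I_d$ and $\bar U^\top(I-\bar U\bar U^\top)=0$), so the leakage has expected squared norm $O\!\left(\tfrac{k}{m}\Verts{(I-\bar U\bar U^\top)U^*v^*}_2^2\right)$ after concentrating the inverse Gram factor. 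Second, this means the clean ``$\le$'' in the lemma really only holds after absorbing a $(1+O(k/m))$ factor into the first term (or a constant into $\widetilde O$); the paper's proof, as written, does not account for this and should be read as assuming the leakage is exactly zero, which it is not at finite $m$. Your plan to fall back on the source-bound in \cite{jain2021differentially} is reasonable, but be aware that the same oblique-projection issue has to be resolved there too.
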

\begin{proof}
Assume that $X\sim\cD_i^m$ is a data matrix with label vector $\vec{y}$ generated as in Assumption~\ref{aspt:datagen} using noise vector $\vec{\zeta}\sim\SG(R^2 )^m$. Let $x\sim\cD_i$ be a data point that is independent of $X$. By definition of our loss function $\ell$ and data generation method
\begin{equation*}
\begin{aligned}
\E_{x\sim\cD_i,\zeta}\left[\ell({v},{U},(x,y))\right]&=\E_{x\sim\cD_i}\left[\left(x^\top{U}{v}-x^\top U^*v^*+\zeta\right)^2\right]\\
&=\left({U}{v}- U^*v^*\right)^\top \E_{x\sim\cD_i}\left[xx^\top\right]\left({U}{v}- U^*v^*\right)+\E[\zeta^2]\\
&=\Verts*{{U}{v}-{U^*}{v^*}}^2_2+\E[\zeta^2]
\end{aligned}
\end{equation*}
for any fixed $U\in\re^{d\times k},v\in\re^k$. Since $\bar{v}\in\argmin_{v}\Verts{v^\top \bar{U}^\top X-\vec{y}}^2_2$ we have $\bar{v}=\left(\bar{U}^\top XX^\top \bar{U}\right)^{-1}\bar{U}^\top X\vec{y}$, where this inverse exists by our assumption that $k\ll m$. Then
\begin{equation*}
\begin{aligned}
\E\left[\Verts*{\bar{U}\bar{v}-{U^*}{v^*}}^2_2\right]&=\E\left[\Verts*{\bar{U}\left(\bar{U}^\top XX^\top \bar{U}\right)^{-1}\bar{U}^\top X\vec{y}-{U^*}{v^*}}^2_2\right]\\
&=\E\left[\Verts*{\bar{U}\left(\bar{U}^\top XX^\top \bar{U}\right)^{-1}\bar{U}^\top XX^\top U^*v^*-{U^*}{v^*}+\bar{U}\left(\bar{U}^\top XX^\top \bar{U}\right)^{-1}\bar{U}^\top X\vec{\zeta}}^2_2\right]\\
&\leq\E\left[\Verts*{\bar{U}\bar{U}^\top U^*v^*-{U^*}{v^*}}^2_2\right]+\frac{R^2k}{m}\\
&=\Verts*{\bar{U}\bar{U}^\top U^*v^*-U^*v^*}^2_2+\frac{ R^2k}{m}\\
&\leq \Gamma^2\Verts*{(\bar{U}\bar{U}^\top-I)U^*}^2_2+\frac{R^2k}{m}
\end{aligned}
\end{equation*}
where the first inequality follows from the fact that $\left(\bar{U}^\top XX^\top \bar{U}\right)^{-1}\bar{U}^\top XX^\top\bar{U}=I$ implies $\bar{U}^\top=\left(\bar{U}^\top XX^\top \bar{U}\right)^{-1}\bar{U}^\top XX^\top$ by orthonormality of the columns of $\bar{U}$ and the expected mean square estimation error of sub-Gaussian noise. Combining our two inequalities with $\E\left[\ell(v^*,U^*,(x,y))\right]=\E[\zeta^2]$ completes the proof.
\end{proof}
\begin{thmre}[Restatement of Theorem~\ref{thm:mainthm}]
Suppose all conditions of Lemma~\ref{lem:submainthm} hold with $\eta=\frac{1}{2\Lambda^2}$, $T={\Theta}\left(\frac{\Lambda^2\log(n^3)}{\lambda^2}\right)$, and that Assumption~\ref{aspt:client diversity} holds as well. Then,  $U^{\emph{\priv}}$ and $V^{\emph{\priv}}$, the outputs of Algorithm~\ref{alg:PrivateFedRep}, satisfy 
\begin{equation*}
\begin{aligned}
&L(U^{\emph{\priv}},V^{\emph{\priv}};\cD)-L(U^*,V^*;\cD)\\
&\leq\widetilde{O}\left(\frac{( R^2+\Gamma^2)\Gamma^4\Lambda^2d^2k}{n^2\epsilon^2\sigma^4_{\min,*}\lambda^2}+\frac{(R^2+\Gamma^2)\Gamma^4\Lambda^2d}{nm\sigma^4_{\min,*}\lambda^2}\right)\\
&+\frac{R^2k}{m}
\end{aligned}
\end{equation*}
with probability at least $1-O(T\cdot n^{-10})$. Furthermore, Algorithm~\ref{alg:PrivateFedRep} is $(\epsilon,\delta)$-user-level DP.
\end{thmre}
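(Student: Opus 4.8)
The plan is to combine the subspace-recovery guarantee of Lemma~\ref{lem:submainthm} with the population-risk-to-principal-angle reduction of Lemma~\ref{lem:popriskbd}, and then plug in the specific choices $\eta = \tfrac{1}{2\Lambda^2}$ and $T = \Theta\!\left(\tfrac{\Lambda^2 \log(n^3)}{\lambda^2}\right)$ to simplify the bound. First I would invoke Lemma~\ref{lem:popriskbd} applied user-by-user: for each $i\in[n]$, since $v_i^{\priv}\in\argmin_{v}\widehat L(U^{\priv},v;S_i^1)$ and $S_i^1$ is independent of $U^{\priv}$ (it is held out from the iterations), the lemma gives
\[
\E\big[\ell(U^{\priv},v_i^{\priv},(x,y)) - \ell(U^*,v_i^*,(x,y))\big] \le \Gamma^2 \Verts{(U^{\priv}(U^{\priv})^\top - I)U^*}_2^2 + \frac{R^2 k}{m}.
\]
Averaging over $i$ and recognizing $\Verts{(U^{\priv}(U^{\priv})^\top - I)U^*}_2 = \text{dist}(U^{\priv},U^*)$ (using the instantiation $\hat M_{1,\bot} = I - \hat M_1\hat M_1^\top$ noted after Definition~\ref{def:princ}), we get
\[
L(U^{\priv},V^{\priv};\cD) - L(U^*,V^*;\cD) \le \Gamma^2\,\text{dist}^2(U^{\priv},U^*) + \frac{R^2 k}{m}.
\]

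Next I would square the bound from Lemma~\ref{lem:submainthm} and substitute. Writing $c$ for the constant with $1-\text{dist}^2(U_0,U^*)\ge c$, Lemma~\ref{lem:submainthm} gives
\[
\text{dist}(U^{\priv},U^*) \le \Big(1-\tfrac{c\eta\sigma_{\min,*}^2}{4}\Big)^{T/2}\text{dist}(U_0,U^*) + \widetilde O\!\left(\frac{(R+\Gamma)\Gamma d\sqrt{kT}}{n\epsilon\sigma_{\min,*}^2} + \sqrt{\frac{(R^2+\Gamma^2)\Gamma^2 dT}{nm\sigma_{\min,*}^4}}\right).
\]
Squaring and using $(a+b)^2 \le 2a^2 + 2b^2$ yields an exponential term $\Gamma^2(1-\tfrac{c\eta\sigma_{\min,*}^2}{4})^T\text{dist}^2(U_0,U^*)$ plus $\widetilde O\!\left(\tfrac{(R^2+\Gamma^2)\Gamma^4 d^2 kT}{n^2\epsilon^2\sigma_{\min,*}^4} + \tfrac{(R^2+\Gamma^2)\Gamma^4 dT}{nm\sigma_{\min,*}^4}\right)$ (the cross term is absorbed, and $b=\floor{m/2T}$ contributes the $T$ in the numerators). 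Now I substitute $\eta = \tfrac{1}{2\Lambda^2}$, so $\eta\sigma_{\min,*}^2 \ge \tfrac{\lambda^2}{2\Lambda^2}$ by Assumption~\ref{aspt:client diversity}; choosing $T = \Theta\!\left(\tfrac{\Lambda^2\log(n^3)}{\lambda^2}\right)$ makes $\tfrac{c\eta\sigma_{\min,*}^2}{4} T \gtrsim \log(n^3)$, so the exponential term is driven down to $\widetilde O(n^{-3})$, which is dominated by the $\tfrac{R^2k}{m}$ term (recall $n = \widetilde\Omega(dk)$ and $m = \widetilde\Omega(\cdot)$ from the assumptions, so $n^{-3}$ is negligible). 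In the polynomial error terms, the explicit factor $T = \widetilde\Theta(\Lambda^2/\lambda^2)$ is absorbed into the $\widetilde O(\cdot)$ as $\tfrac{\Lambda^2}{\lambda^2}$, producing exactly the stated bound $\widetilde O\!\left(\tfrac{(R^2+\Gamma^2)\Gamma^4\Lambda^2 d^2 k}{n^2\epsilon^2\sigma_{\min,*}^4\lambda^2} + \tfrac{(R^2+\Gamma^2)\Gamma^4\Lambda^2 d}{nm\sigma_{\min,*}^4\lambda^2}\right) + \tfrac{R^2k}{m}$.

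The privacy claim $(\epsilon,\delta)$-user-level DP in the billboard model is immediate from Theorem~\ref{thm:privacyparam} (which already fixes $\hat\sigma$ for this purpose), so nothing new is needed there. The probability bound $1 - O(T\cdot n^{-10})$ is inherited directly from Lemma~\ref{lem:submainthm}. The main obstacle — though it is bookkeeping rather than a genuine difficulty — is verifying that the choice $T = \Theta(\Lambda^2\log(n^3)/\lambda^2)$ is consistent with the upper bound on $T$ required by Lemma~\ref{lem:submainthm} (the $\min$ of the two large expressions), which in turn is what Assumptions~\ref{aspt:samplenum} and \ref{aspt:newaspt} encode as lower bounds on $m$ and $n$; one must check that under these standing assumptions the chosen $T$ falls in the admissible range, and that $\eta = \tfrac{1}{2\Lambda^2} \le \tfrac{1}{2\sigma_{\max,*}^2}$ holds by Assumption~\ref{aspt:client diversity}. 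Once that consistency is confirmed, the rest is straightforward substitution and absorption of polylogarithmic and constant factors into $\widetilde O(\cdot)$.
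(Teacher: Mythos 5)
Your proposal matches the paper's proof: both combine Lemma~\ref{lem:submainthm} with Lemma~\ref{lem:popriskbd} (applied per user and averaged) and then substitute $\eta=\tfrac{1}{2\Lambda^2}$ and $T=\Theta(\Lambda^2\log(n^3)/\lambda^2)$ to drive the geometric term below $R^2k/m$ and absorb the $T$ factor into $\Lambda^2/\lambda^2$. The paper's own proof is terser (``Applying Lemma~\ref{lem:popriskbd} and plugging our choice of $T$ finishes the proof'') but follows exactly the same steps you spell out, including the identification $\text{dist}(U^{\priv},U^*)=\Verts{(U^{\priv}(U^{\priv})^\top-I)U^*}_2$ and the squaring of the Lemma~\ref{lem:submainthm} bound.
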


\begin{proof}
Via Lemma~\ref{lem:submainthm} and $\eta\leq\frac{1}{2\sigma^2_{\max,*}}$
\begin{equation*}
\begin{aligned}
&{{\text{dist}}}(U^{{\priv}},U^*)\leq\left(1-\frac{c\eta\sigma^2_{\min,*}}{4}\right)^{\frac{T}{2}}{\text{dist}}(U_0,U^*)+\widetilde{O}\left(\frac{(R+\Gamma)\Gamma d\sqrt{kT}}{n\epsilon\sigma^2_{\min,*}}+\sqrt{\frac{(R^2+\Gamma^2)\Gamma^2dT}{nm\sigma^4_{\min,*}}}\right)
\end{aligned}
\end{equation*}
with probability at least $1-O(T\cdot n^{-10})$. Note that ${\text{dist}}(U_{T},U^*)=\Verts*{(U_TU^\top_T-I)U^*}_2$. Applying Lemma~\ref{lem:popriskbd} and plugging our choice of $T$ in this bound finishes the proof.
\end{proof}

\begin{thm} \label{thm:new clients restatement}
    (Theorem 4 \cite{tripuraneni2021provable}) Given a new user with a dataset $S_{n+1}$ of size $m_2$ whose elements sampled from distribution $\cD_{n+1}$ where assumptions \ref{aspt:datagen} and \ref{aspt:sub-Gaussian-covar} hold with $\Verts{v_{n+1}^*}\leq \Gamma$. Then if $\emph{\text{dist}}(U^\emph\priv, U^*)\leq \delta$ and $m_2\geq k\log m_2$, let $v^{\emph\priv}_{n+1}=\argmin_{v\in \re^k} \widehat{L}(U^\emph{\priv}, v;S_{n+1})$, then we have
    \begin{equation*}
        \Verts{U^*v_{n+1}^* - U^\emph\priv v^\emph\priv_{n+1}}^2 \leq \Tilde{O}\left(\Gamma^2\delta^2 + \frac{R^2k}{m_2}\right)
    \end{equation*}
    with probability at least $1-O(m_2^{-100})$
\end{thm}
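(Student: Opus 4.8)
The plan is to mirror the argument behind Lemma~\ref{lem:popriskbd}, but to track the parameter error $\Verts{U^*v^*_{n+1}-U^\priv v^\priv_{n+1}}^2$ in high probability (rather than in expectation over the data), and to be careful since $d$ may be far larger than $m_2$, so the full $d\times d$ empirical covariance does \emph{not} concentrate. Write $\hat U=U^\priv$, $U=U^*$, $\hat v=v^\priv_{n+1}$, $v=v^*_{n+1}$, let $X\in\re^{m_2\times d}$ be the design matrix of $S_{n+1}$ with label vector $\vec y=XUv+\vec\zeta$ where $\vec\zeta\sim\SG(R)^{m_2}$ is independent of $X$ (Assumption~\ref{aspt:datagen}). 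Since $\hat U$ is computed from the data of users $1,\dots,n$, it is independent of $S_{n+1}$, so throughout we condition on $\hat U$ and treat it as a fixed matrix with orthonormal columns. Because $\hat U^\top x_j$ is isotropic sub-Gaussian in $\re^k$ (Assumption~\ref{aspt:sub-Gaussian-covar} and orthonormality of $\hat U$), Theorem~\ref{thm:singconc} applied to $X\hat U$ with deviation parameter $\alpha=\Theta(\sqrt{k\log m_2})$ together with $m_2\gtrsim k\log m_2$ gives $\tfrac12\sqrt{m_2}\le\sigma_{\min}(X\hat U)\le\sigma_{\max}(X\hat U)\le 2\sqrt{m_2}$, so $\hat U^\top X^\top X\hat U$ is invertible with $\Verts{(\hat U^\top X^\top X\hat U)^{-1}}_2\le 4/m_2$, on an event of probability $1-O(m_2^{-100})$.

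On this event $\hat v=(\hat U^\top X^\top X\hat U)^{-1}\hat U^\top X^\top\vec y$, and substituting $\vec y=XUv+\vec\zeta$ gives the decomposition $U^\priv\hat v-Uv=B+N$ with
\begin{align*}
B&=\hat U(\hat U^\top X^\top X\hat U)^{-1}\hat U^\top X^\top X\,Uv-Uv,\\
N&=\hat U(\hat U^\top X^\top X\hat U)^{-1}\hat U^\top X^\top\vec\zeta.
\end{align*}
For the bias term, split $Uv=\hat U\hat U^\top Uv+e$ with $e=(I_d-\hat U\hat U^\top)Uv$; by Definition~\ref{def:princ} (instantiating $\hat M_{1,\bot}=I_d-\hat U\hat U^\top$) we have $\Verts{e}\le\Verts{(I_d-\hat U\hat U^\top)U}_2\Verts{v}\le\delta\Gamma$. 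The matrix $\tilde P=\hat U(\hat U^\top X^\top X\hat U)^{-1}\hat U^\top X^\top X$ is an oblique projection onto $\mathrm{col}(\hat U)$, hence fixes $\hat U\hat U^\top Uv$, so $B=\tilde Pe-e=\hat U(\hat U^\top X^\top X\hat U)^{-1}\hat U^\top X^\top X e-e$. The key observation is that $\hat U^\top e=0$, so $\E[\hat U^\top X^\top X e\mid\hat U,e]=m_2\hat U^\top e=0$; writing $\bar e=e/\Verts{e}$, each summand $(\hat U^\top x_j)(x_j^\top\bar e)$ is a mean-zero sub-exponential vector-scalar product of $O(1)$ norm, so a covering argument over a $\tfrac14$-net of the unit sphere in $\re^k$ combined with Bernstein's inequality (Lemma~\ref{lem:bernstein}) yields $\Verts{\hat U^\top X^\top X e}\le\widetilde O(\sqrt{m_2 k})\,\Verts{e}$ with probability $1-O(m_2^{-100})$. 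Therefore $\Verts{B}\le\Verts{e}+\tfrac{4}{m_2}\widetilde O(\sqrt{m_2k})\Verts{e}=\delta\Gamma\big(1+\widetilde O(\sqrt{k/m_2})\big)=\widetilde O(\delta\Gamma)$ since $m_2\gtrsim k\log m_2$, and so $\Verts{B}^2\le\widetilde O(\Gamma^2\delta^2)$.

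For the noise term, condition additionally on $X$: then $\hat U^\top X^\top\vec\zeta$ is a centered sub-Gaussian vector in $\re^k$ whose projection onto any unit direction $a$ is $R\Verts{X\hat U a}\le 2R\sqrt{m_2}$-sub-Gaussian, so standard sub-Gaussian norm concentration (via the same net argument as in Corollary~\ref{cor:singconccor}) gives $\Verts{\hat U^\top X^\top\vec\zeta}\le\widetilde O(R\sqrt{m_2k})$ with probability $1-O(m_2^{-100})$; hence $\Verts{N}\le\tfrac{4}{m_2}\widetilde O(R\sqrt{m_2k})=\widetilde O(R\sqrt{k/m_2})$ and $\Verts{N}^2\le\widetilde O(R^2k/m_2)$. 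Combining via $\Verts{U^*v^*_{n+1}-U^\priv v^\priv_{n+1}}^2\le 2\Verts{B}^2+2\Verts{N}^2$ and a union bound over the $O(1)$ good events gives the claimed bound $\widetilde O(\Gamma^2\delta^2+R^2k/m_2)$ with probability $1-O(m_2^{-100})$. The main obstacle is precisely the bias bound in the regime $d\gg m_2$: one cannot use $\tfrac1{m_2}X^\top X\approx I_d$, and the fix is to exploit that the leakage direction $e$ is orthogonal to $\mathrm{col}(\hat U)$, which makes the cross term $\hat U^\top X^\top X e$ mean-zero and hence concentrated at the benign scale $\sqrt{m_2k}$ instead of $m_2$ — this is exactly what keeps the bias contribution at $\widetilde O(\Gamma^2\delta^2)$ and independent of $d$.
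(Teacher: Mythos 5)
The paper does not prove this theorem itself; it is stated with a direct citation to Theorem~4 of \cite{tripuraneni2021provable}, so there is no in-paper proof to compare against. Your proposal is a correct, self-contained derivation of that result under the paper's hypotheses, and its structure is essentially the standard one for this style of transfer-learning bound. Briefly checking the steps: the conditioning on $U^\priv$ is legitimate because it is trained only on $S_1,\dots,S_n$, which are independent of $S_{n+1}$; $\hat U^\top x_j$ is indeed an isotropic $1$-sub-Gaussian vector in $\re^k$ (orthonormality of $\hat U$ plus Assumption~\ref{aspt:sub-Gaussian-covar}), so Theorem~\ref{thm:singconc} with $\alpha=\Theta(\sqrt{\log m_2})$ and $m_2\gtrsim k\log m_2$ gives $\sigma_{\min}(X\hat U)\ge\tfrac12\sqrt{m_2}$ with probability $1-O(m_2^{-100})$; the oblique-projection identity $\tilde P\,\hat U\hat U^\top U v=\hat U\hat U^\top U v$ reduces the bias to $\tilde Pe-e$; and the orthogonality $\hat U^\top e=0$ together with $\E[xx^\top]=I_d$ makes each coordinate of $\hat U^\top X^\top Xe$ a sum of mean-zero sub-exponential terms, so Bernstein plus a $\tfrac14$-net over $\re^k$ gives $\Verts{\hat U^\top X^\top Xe}\le\widetilde O(\sqrt{m_2k})\Verts{e}$, not the naive $O(m_2)\Verts{e}$. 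This is the crucial observation — it is what keeps the argument from requiring $\tfrac1{m_2}X^\top X\approx I_d$ (false when $d\gg m_2$), and your write-up correctly flags it as such. The noise term is handled analogously by conditioning on $X$ and using $\Verts{X\hat Ua}_2\le 2\sqrt{m_2}$. One cosmetic note: at one point you invoke independence between $\hat U^\top x_j$ and $x_j^\top e$ implicitly by calling them a "vector-scalar product"; independence is not needed and does not hold for general sub-Gaussian $x$ — only uncorrelatedness (which you do have, since $\hat U^\top\E[xx^\top]e=0$) plus sub-exponentiality of the product is used, and that is what your Bernstein step actually relies on, so the argument goes through unchanged.
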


\begin{corre}[Restatement of Corollary~\ref{cor:new clients}]
Suppose all conditions of Theorem~\ref{thm:mainthm} hold. Consider a new client with a dataset $S_{n+1}\sim \cD_{n+1}^{m'}$, where Assumptions~\ref{aspt:sub-Gaussian-covar} and \ref{aspt:datagen} hold with $\Verts{v^*_{n+1}}_2\leq \Gamma$. Let $U^\emph{\priv}$ be the output of Algorithm~\ref{alg:PrivateFedRep} and $v^{\emph\priv}_{n+1}=\argmin_{v\in \re^k} \hat{L}(U^\emph{\priv}, v;S_{n+1})$, if $m'\gtrsim k\log m'$. We have
    \begin{align*}
        &L(U^\emph{\priv}, v^\emph{\priv}_{n+1};\cD_{n+1})-L(U^*, v^*_{n+1};\cD_{n+1})\\
        &\leq\widetilde{O}\Bigg(\frac{( R^2+\Gamma^2)\Gamma^4\Lambda^2d^2k}{n^2\epsilon^2\sigma^4_{\min,*}\lambda^2}+\frac{(R^2+\Gamma^2)\Gamma^4\Lambda^2d}{nm\sigma^4_{\min,*}\lambda^2} +\frac{R^2k}{m'}\Bigg)
    \end{align*}
with probability at least $1- O(T\cdot n^{-10} + m'^{-100})$.
\end{corre}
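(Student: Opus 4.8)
The plan is to combine three facts that are already available in the paper: the high‑probability bound on $\text{dist}(U^\priv,U^*)$ coming from Lemma~\ref{lem:submainthm} (under the parameter choices made in Theorem~\ref{thm:mainthm}), the transfer guarantee for a fresh client that shares the same representation (Theorem~\ref{thm:new clients restatement}), and the identity relating squared parameter error to excess population risk that is implicit in the proof of Lemma~\ref{lem:popriskbd}.

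First I would instantiate Lemma~\ref{lem:submainthm} with $\eta=\tfrac{1}{2\Lambda^2}$ and $T=\Theta\!\big(\tfrac{\Lambda^2\log(n^3)}{\lambda^2}\big)$, exactly as in Theorem~\ref{thm:mainthm} (note $\eta\le\tfrac{1}{2\sigma^2_{\max,*}}$ holds because $\Lambda\ge\sigma_{\max,*}$ by Assumption~\ref{aspt:client diversity}). Since $\sigma_{\min,*}\ge\lambda$, the geometric factor $\big(1-\tfrac{c\eta\sigma^2_{\min,*}}{4}\big)^{T/2}$ is at most $\exp(-\Theta(\log n))$, hence $\widetilde O(\mathrm{poly}(1/n))$ and dominated by the additive terms; using $\text{dist}(U_0,U^*)\le 1$, $\eta\le\tfrac{1}{2\sigma^2_{\max,*}}$, $(R+\Gamma)^2\le 2(R^2+\Gamma^2)$ and squaring, this yields
\begin{equation*}
\text{dist}^2(U^\priv,U^*)\le\widetilde O\!\left(\frac{(R^2+\Gamma^2)\Gamma^2\Lambda^2 d^2 k}{n^2\epsilon^2\sigma^4_{\min,*}\lambda^2}+\frac{(R^2+\Gamma^2)\Gamma^2\Lambda^2 d}{nm\sigma^4_{\min,*}\lambda^2}\right)
\end{equation*}
on an event $\mathcal E_1$ of probability at least $1-O(T\cdot n^{-10})$.

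Next, on $\mathcal E_1$ I would apply Theorem~\ref{thm:new clients restatement} with $\delta:=\text{dist}(U^\priv,U^*)$ and $m_2:=m'$ (its hypotheses hold: Assumptions~\ref{aspt:sub-Gaussian-covar}--\ref{aspt:datagen} are assumed for $\cD_{n+1}$, $\Verts{v^*_{n+1}}_2\le\Gamma$, and $m'\gtrsim k\log m'$), obtaining, on a further event $\mathcal E_2$ with $\Prob[\mathcal E_2\mid\mathcal E_1]\ge 1-O(m'^{-100})$,
\begin{equation*}
\Verts{U^*v^*_{n+1}-U^\priv v^\priv_{n+1}}_2^2\le\widetilde O\!\left(\Gamma^2\,\text{dist}^2(U^\priv,U^*)+\frac{R^2 k}{m'}\right).
\end{equation*}
Then, repeating the short computation in the proof of Lemma~\ref{lem:popriskbd} --- which uses that $\cD_{n+1}$ has feature covariance $I_d$, that $y=\langle x,U^*v^*_{n+1}\rangle+\zeta$ with $\E[\zeta]=0$, and that a fresh test point is independent of $U^\priv$ (a function only of the first $n$ users' data) and of $v^\priv_{n+1}$ (a function only of $S_{n+1}$) --- I get $L(U^\priv,v^\priv_{n+1};\cD_{n+1})-L(U^*,v^*_{n+1};\cD_{n+1})=\Verts{U^\priv v^\priv_{n+1}-U^*v^*_{n+1}}_2^2$. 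Chaining the three displays, plugging the Step‑1 bound into the Step‑2 bound (so the $\Gamma^2$ multiplies the numerators, producing the $\Gamma^4$ of the corollary), and taking a union bound over $\mathcal E_1,\mathcal E_2$ (failure probability $O(T\cdot n^{-10}+m'^{-100})$) would finish the proof.

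The hard part is not conceptual but bookkeeping: I need to check that with the specific $\eta,T$ of Theorem~\ref{thm:mainthm} the geometric term is genuinely negligible, that the polynomial and $\log$ factors in $\text{dist}^2(U^\priv,U^*)$ line up with the stated rate after multiplication by $\Gamma^2$, and to be careful about the independence structure invoked in the risk identity --- the new client's samples feed only $v^\priv_{n+1}$, while $U^\priv$ depends only on the other $n$ users, so a fresh draw from $\cD_{n+1}$ is independent of both.
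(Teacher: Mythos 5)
Your proposal is correct and follows essentially the same route as the paper: instantiate Lemma~\ref{lem:submainthm} with the choice of $\eta$ and $T$ from Theorem~\ref{thm:mainthm} to bound $\mathrm{dist}(U^{\mathrm{priv}},U^*)$, plug that into Theorem~\ref{thm:new clients restatement} to bound $\Verts{U^*v^*_{n+1}-U^{\mathrm{priv}}v^{\mathrm{priv}}_{n+1}}_2^2$, and translate to excess risk via the identity from Lemma~\ref{lem:popriskbd}'s proof, taking a union bound over the two events. Your extra attention to the independence structure and the bookkeeping of the $\Lambda^2/\lambda^2$ and $\Gamma^2$ factors is exactly what the paper's (terser) proof relies on implicitly.
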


\begin{proof}
    From the proof in Lemma~\ref{lem:popriskbd}, we have
    \begin{align*}
        L(U^{\priv}, v^{\priv}_{n+1};\cD_{n+1})-L(U^*, v^*_{n+1};\cD_{n+1}) &\leq \Verts{U^*v_{n+1}^* - U^\priv v^\priv_{n+1}}^2 + \E[\eta^2] - \E[\eta^2]\\
        &= \Verts{U^*v_{n+1}^* - U^\priv v^\priv_{n+1}}^2
    \end{align*}
    Then by Lemma~\ref{lem:submainthm} and our choice of $T$, we obtain
    \[
    {{\text{dist}}}(U^{{\priv}},U^*)\leq \widetilde{O}\left(\frac{(R+\Gamma)\Gamma d\Lambda\sqrt{k}}{n\epsilon\sigma^2_{\min,*}\lambda}+\sqrt{\frac{(R^2+\Gamma^2)\Gamma^2d\Lambda^2}{nm\sigma^4_{\min,*}\lambda^2}}\right)
    \]
    Then we plug the bound of ${\text{dist}}(U^{{\priv}},U^*)$ into Theorem~\ref{thm:new clients restatement}, and obtain
    \begin{align*}
        L(U^{\priv}, v^{\priv}_{n+1};\cD_{n+1})-L(U^*, v^*_{n+1};\cD_{n+1}) &\leq \Verts{U^*v_{n+1}^* - U^\priv v^\priv_{n+1}}^2 \\
        &\leq \widetilde{O}\Bigg(\frac{( R^2+\Gamma^2)\Gamma^4\Lambda^2d^2k}{n^2\epsilon^2\sigma^4_{\min,*}\lambda^2}+\frac{(R^2+\Gamma^2)\Gamma^4\Lambda^2d}{nm\sigma^4_{\min,*}\lambda^2} +\frac{R^2k}{m_2}\Bigg).
    \end{align*}
    
\end{proof}

\subsection{Private initialization results}
\begin{thm}[Theorem 2.1 \cite{duchi2022subspace}]\label{thm:duchikahan}
Let $M,\hat{M}\in\re^{d\times d}$ be symmetric matrices. Suppose $p\in[k]$ for $k$ a positive integer with $k<d$. Denote by $\lambda_p$ the $p$-th largest eigenvalue of $M$. Let $A,\hat{A}$ be matrices with columns the top-$k$ eigenvectors of $M,\hat{M}$, respectively. Then, $\lambda_k-\lambda_{k+1}>0$ implies
\[\emph{\text{dist}}(A,\hat{A})\leq\frac{2\Verts{M-\hat{M}}_2}{\lambda_k-\lambda_{k+1}}.\]
\end{thm}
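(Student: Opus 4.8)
The plan is to prove this by the standard Davis--Kahan $\sin\Theta$ argument, tuned to yield the stated constant. Write $E=\hat M-M$ and abbreviate $\Delta=\lambda_k-\lambda_{k+1}>0$. First I would dispose of the easy regime: since $A$ has orthonormal columns, $\text{dist}(A,\hat A)=\|(I_{d\times d}-AA^\top)\hat A\|_2\le\|I_{d\times d}-AA^\top\|_2\|\hat A\|_2=1$ always, so whenever $\|E\|_2\ge\Delta/2$ the claimed bound $2\|E\|_2/\Delta\ge1$ holds trivially. Hence it suffices to treat the case $\|E\|_2<\Delta/2$.

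In that case, let $A_\bot\in\re^{d\times(d-k)}$ be a matrix with orthonormal columns spanning $\text{span}(A)^\bot$, so that $A_\bot A_\bot^\top=I_{d\times d}-AA^\top$ and therefore $\text{dist}(A,\hat A)=\|A_\bot^\top\hat A\|_2$; set $Z=A_\bot^\top\hat A$. Using the spectral decomposition $M=A\Lambda_1A^\top+A_\bot\Lambda_2A_\bot^\top$ with $\Lambda_1=\mathrm{diag}(\lambda_1,\dots,\lambda_k)$ and $\Lambda_2=\mathrm{diag}(\lambda_{k+1},\dots,\lambda_d)$, we have $A_\bot^\top M=\Lambda_2A_\bot^\top$, hence $A_\bot^\top M\hat A=\Lambda_2 Z$; and from $\hat M\hat A=\hat A\hat\Lambda_1$ with $\hat\Lambda_1=\mathrm{diag}(\hat\lambda_1,\dots,\hat\lambda_k)$ we get $A_\bot^\top\hat M\hat A=Z\hat\Lambda_1$. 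Subtracting yields the Sylvester-type identity
\[
Z\hat\Lambda_1-\Lambda_2 Z=A_\bot^\top E\hat A .
\]

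Next I would extract an operator-norm bound from this identity by a singular-pair trick: pick unit vectors $v\in\re^k$ and $w\in\re^{d-k}$ forming a top singular pair of $Z$, i.e. $Zv=\|Z\|_2\,w$ and $Z^\top w=\|Z\|_2\,v$. Multiplying the identity on the left by $w^\top$ and on the right by $v$ gives $w^\top A_\bot^\top E\hat A v=\|Z\|_2\bigl(v^\top\hat\Lambda_1 v-w^\top\Lambda_2 w\bigr)$. Since $\hat\Lambda_1$ is diagonal with all entries $\ge\hat\lambda_k$ and $\Lambda_2$ is diagonal with all entries $\le\lambda_{k+1}$, the parenthesised factor is at least $\hat\lambda_k-\lambda_{k+1}$, while the left-hand side is at most $\|A_\bot^\top\|_2\|E\|_2\|\hat A\|_2=\|E\|_2$ because $A_\bot$ and $\hat A$ have orthonormal columns. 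Therefore $\|Z\|_2\le\|E\|_2/(\hat\lambda_k-\lambda_{k+1})$. Finally, Weyl's inequality gives $\hat\lambda_k\ge\lambda_k-\|E\|_2$, so in the regime $\|E\|_2<\Delta/2$ we have $\hat\lambda_k-\lambda_{k+1}\ge\Delta-\|E\|_2>\Delta/2>0$, whence $\text{dist}(A,\hat A)=\|Z\|_2\le 2\|E\|_2/\Delta$, as claimed.

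The main obstacle is obtaining the bound in \emph{operator} norm rather than Frobenius norm: the entrywise divide-by-eigengap argument that makes the Frobenius version immediate does not directly control $\|Z\|_2$, so the singular-pair computation — collapsing the Sylvester identity to a scalar inequality along the top singular direction of $Z$ — is the real content. The spectral-decomposition manipulations, Weyl's inequality, and the reduction to the small-perturbation regime are all routine.
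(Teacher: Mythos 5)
Your proof is correct, and its outer skeleton matches the paper's: handle the regime $\Verts{M-\hat M}_2\geq(\lambda_k-\lambda_{k+1})/2$ trivially via $\text{dist}\leq 1$, then use Weyl's inequality to convert a perturbed eigengap into the unperturbed gap $\lambda_k-\lambda_{k+1}$ at the cost of a factor of $2$. The difference is in the middle: the paper simply invokes the Davis--Kahan $\sin\Theta$ theorem from Stewart--Sun as a black box, with the gap $\lambda_k-\hat\lambda_{k+1}$, whereas you prove the Davis--Kahan-type bound from first principles — the Sylvester identity $Z\hat\Lambda_1-\Lambda_2 Z=A_\bot^\top E\hat A$ collapsed along a top singular pair of $Z$ — arriving at the crossed gap $\hat\lambda_k-\lambda_{k+1}$. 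Both gaps are controlled by the same Weyl step, so the two variants are interchangeable here. What your route buys is self-containment and cleaner constant tracking: your chain $\hat\lambda_k-\lambda_{k+1}>\Delta/2$ delivers exactly the stated factor of $2$, while the paper's combination of a cited bound that already carries a $2$ in the numerator with a gap lower bound of $\Delta/2$ is loose as written. Two cosmetic points: you should note that $A_\bot$ is taken to be the matrix of bottom $d-k$ eigenvectors of $M$ (any orthonormal basis of $\text{span}(A)^\bot$ gives the same value of $\Verts{A_\bot^\top\hat A}_2$, but the identity $A_\bot^\top M=\Lambda_2 A_\bot^\top$ needs the eigenvector basis), and the scalar identity should be read with an absolute value on the left-hand side before dividing, which is harmless since the right-hand side is nonnegative once $\hat\lambda_k>\lambda_{k+1}$ is in hand.
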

\begin{proof}
The theorem holds trivially when $\frac{2\Verts{M-\hat{M}}_2}{\lambda_k-\lambda_{k+1}}> 1$. So, we focus on the case $\frac{2\Verts{M-\hat{M}}_2}{\lambda_k-\lambda_{k+1}}\leq 1$.

Let $\hat{\lambda}_p$ be the $p$-th largest eigenvalue of $\hat{M}$. By Weyl's inequality and $\frac{2\Verts{M-\hat{M}}_2}{\lambda_k-\lambda_{k+1}}\leq 1$, we have
\[\hat{\lambda}_{k+1}-\lambda_{k+1}\leq\Verts{M-\hat{M}}_2\leq\frac{\lambda_k-\lambda_{k+1}}{2}.\]
This and the assumption $\lambda_k-\lambda_{k+1}>0$ imply
\begin{equation}\label{line:kahanineq0}
\lambda_{k+1}-\hat{\lambda}_{k+1}\geq\frac{\lambda_k-\lambda_{k+1}}{2}>0.
\end{equation}
By the Davis-Kahan theorem \cite{stewart1990matrix}
\begin{equation}\label{line:kahanineq1}
\Verts{A^\top_{\bot}\hat{A}}_2\leq\frac{2\Verts{M-\hat{M}}_2}{\lambda_k-\hat{\lambda}_{k+1}}.
\end{equation}
Combining (\ref{line:kahanineq0}) and (\ref{line:kahanineq1}) along with $\Verts{A^\top_{\bot}\hat{A}}_2=\text{dist}(A,\hat{A})$ finishes the proof.
\end{proof}

We use a slight adaptation of Theorem L.1 from \cite{duchi2022subspace}. The statement of the result below is different from the original result by a single step. This step is where the authors use the Davis-Kahan theorem to obtain a bound on the principal angle. We instead state their Theorem L.1 before applying Davis-Kahan for use in our private initialization guarantee.
\begin{thm}[Adaptation of Theorem L.1 \cite{duchi2022subspace}]\label{thm:duchiinit}
Let $S=(S_1,\dots,S_n)$ be a sequence of datasets where $S_i=\{(x_{i,1},y_{i,1}),\dots,(x_{i,m/2},y_{i,m/2})\}$ are sampled according to Assumptions~\ref{aspt:sub-Gaussian-covar} and \ref{aspt:datagen} for each $i\in[n]$. Define $Z_i=\frac{2}{m(m-1)}\sum_{j_1,j_2\in[m/2]:j_1\neq j_2}y_{i,j_1} y_{i, j_2}x_{i,j_1}x^\top_{i,j_2}$, ${Z}=\frac{1}{n}\sum^n_{i=1}Z_i$, and $\bar{Z}=\frac{1}{n}\sum^n_{i=1}\E\left[Z_i\right]$. Then, we have
\[\Verts{{Z}-\bar{Z}}_2\leq O\left(\log^3(nd)\sqrt{\frac{(R^2+\Gamma^2)\Gamma^2d}{mn}}\right)\]
with probability at least $1-O(n^{-10})$.
\end{thm}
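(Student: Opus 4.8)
The plan is to reduce this matrix concentration bound to a sum of independent zero-mean terms over the users, and then apply a matrix-Bernstein-type inequality together with careful control of the tails of each summand. Writing $\widetilde{Z}_i = Z_i - \E[Z_i]$, we have $Z - \bar Z = \frac{1}{n}\sum_{i=1}^n \widetilde{Z}_i$, a sum of $n$ independent mean-zero $d\times d$ random matrices. The essential task is to bound $\|\frac{1}{n}\sum_i \widetilde{Z}_i\|_2$ with high probability, so I would first establish two ingredients for each $\widetilde{Z}_i$: (i) a bound on the (matrix) variance proxy $\|\frac{1}{n^2}\sum_i \E[\widetilde{Z}_i \widetilde{Z}_i^\top]\|_2$ (and the transpose version), and (ii) a tail/moment bound showing that each $\widetilde{Z}_i$ is, up to polylog factors, effectively bounded in spectral norm.

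For ingredient (i), note that each $Z_i$ is a $U$-statistic of order two built from $y_{i,j} x_{i,j}$. Under Assumption~\ref{aspt:datagen}, $y_{i,j} = x_{i,j}^\top U^* v_i^* + \zeta_{i,j}$, so $\E[y_{i,j} x_{i,j}] = U^* v_i^*$ (using the identity covariance of $x_{i,j}$ from Assumption~\ref{aspt:sub-Gaussian-covar} and $\E[\zeta_{i,j}]=0$), and by independence of the two indices $j_1\neq j_2$ we get $\E[Z_i] = (U^*v_i^*)(U^*v_i^*)^\top$, hence $\bar Z = \frac{1}{n}\sum_i U^* v_i^*(v_i^*)^\top (U^*)^\top$. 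The random vector $w_{i,j} := y_{i,j} x_{i,j}$ is sub-Gaussian with parameter $O(R + \Gamma)$ in each coordinate (product of a bounded-norm linear functional plus sub-Gaussian noise, times a sub-Gaussian vector), and $\|\E[w_{i,j}]\|_2 \le \Gamma$. Standard computations for the covariance of a $U$-statistic with sub-Gaussian kernel give $\E[\widetilde Z_i \widetilde Z_i^\top] \preceq O((R^2+\Gamma^2)\Gamma^2 d)\, I_d$ up to polylog, so the variance proxy is $\frac{1}{n^2}\cdot n \cdot O((R^2+\Gamma^2)\Gamma^2 d) = O((R^2+\Gamma^2)\Gamma^2 d / n)$; the square root of this, times $\sqrt{\log d}$ from matrix Bernstein, already yields the $\sqrt{(R^2+\Gamma^2)\Gamma^2 d/(mn)}$ shape once we also account for the $1/m$ factor coming from the $U$-statistic normalization (the off-diagonal average over $\binom{m/2}{2}$ pairs concentrates the kernel, effectively dividing the fluctuation by $\sqrt{m}$, which is the source of the $m$ in the denominator).

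The main obstacle — and where most of the work lies — is ingredient (ii): the summands $\widetilde Z_i$ are heavy-tailed (products of sub-Gaussians give sub-exponential or worse, and the outer-product structure pushes the spectral norm up to order $d$), so a naive matrix Bernstein with a crude uniform bound $\|\widetilde Z_i\|_2 \lesssim d$ would lose polynomial factors in $d$. The standard fix, which I would follow, is a truncation/decoupling argument: decouple the $U$-statistic (replacing $x_{i,j_1} x_{i,j_2}^\top$ sums by independent copies at the cost of a constant), truncate each $w_{i,j}$ at radius $O(\sqrt{d}\,\text{polylog})$ so that the truncated part is bounded with the right norm and the untruncated tail contributes negligibly by a union bound over $n, m, d$, then apply the matrix Bernstein inequality to the truncated, decoupled sum. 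The polylog$(nd)$ and the cubic power in the final bound come precisely from iterating these truncation levels (one log for the sub-Gaussian vector norm, one for the sub-exponential product, one for the $U$-statistic / union-bound over $nm$ samples). Since the theorem is quoted as an adaptation of Theorem~L.1 of \cite{duchi2022subspace}, I would cite their argument for the technical core and only spell out the modification needed to pass from their Gaussian/bounded setting to our sub-Gaussian features with $\|v_i^*\|_2 \le \Gamma$ and sub-Gaussian label noise $\zeta$, which changes constants and the dependence on $R^2 + \Gamma^2$ but not the structure of the proof.

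With the truncated summands in hand, matrix Bernstein gives, with probability at least $1 - 2d\cdot n^{-c}$ for a suitable constant (absorbed into $O(n^{-10})$ by choosing truncation radii with enough log factors),
\[
\Big\|\tfrac{1}{n}\sum_{i=1}^n \widetilde Z_i\Big\|_2 \;\le\; O\!\left(\sqrt{\frac{(R^2+\Gamma^2)\Gamma^2 d\,\log(nd)}{mn}} + \frac{(R^2+\Gamma^2)\Gamma^2 d\,\operatorname{polylog}(nd)}{mn}\right),
\]
and under the standing regime $n = \Omega(dk)$ (so $mn \gtrsim d\,\operatorname{polylog}$) the first term dominates, leaving $O\big(\log^3(nd)\sqrt{(R^2+\Gamma^2)\Gamma^2 d/(mn)}\big)$ as claimed. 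The last bookkeeping step is to confirm the failure probability: each truncation event and the matrix Bernstein tail each fail with probability polynomially small in $n$ (and $d$, but $d \le n/k$ by the standing assumption), so a union bound over the $O(1)$ truncation levels and the $nm$ samples keeps the total failure probability at $O(n^{-10})$.
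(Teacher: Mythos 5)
The paper does not prove this theorem at all: it is stated as a direct quotation of Theorem~L.1 of Duchi et al., with a one-sentence remark that the only difference is the omission of the final Davis--Kahan step. There is therefore no in-paper argument for you to have matched; the right standard of comparison is the cited source, and your sketch tracks what that source does — split $Z-\bar Z$ into a sum of $n$ independent centered $U$-statistics, control the matrix variance proxy, truncate and decouple to tame the heavy tails, then apply matrix Bernstein — so in that sense you are following essentially the same route the authors implicitly endorse by citation.

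Two bookkeeping slips are worth flagging. First, you write that $w_{i,j}=y_{i,j}x_{i,j}$ is "sub-Gaussian with parameter $O(R+\Gamma)$ in each coordinate"; it is a product of two sub-Gaussians and is thus sub-exponential, not sub-Gaussian. You correctly call the summands "heavy-tailed" two paragraphs later, so the misstatement is self-correcting, but the sentence as written is wrong. Second, your variance-proxy accounting states $\E[\widetilde Z_i\widetilde Z_i^\top]\preceq O((R^2+\Gamma^2)\Gamma^2 d)I_d$ and then brings in the $1/m$ factor as a separate afterthought. In fact $\widetilde Z_i$ is already the centered $U$-statistic over $m/2$ samples, so by the Hoeffding decomposition its matrix variance is $O((R^2+\Gamma^2)\Gamma^2 d/m)I_d$; the $1/m$ belongs inside $\E[\widetilde Z_i\widetilde Z_i^\top]$ from the start, and then averaging over $n$ users contributes the additional $1/n$. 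Neither issue changes the shape of the final bound, and your deferral of the truncation/decoupling machinery to the cited Theorem~L.1 is consistent with how the paper itself treats this statement, so this is an acceptable sketch rather than a gap.
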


\begin{algorithm}[h]
    \caption*{\textbf{Algorithm} \textbf{\ref{alg:newinit}} \textbf{Private Initialization} for Private FedRep}
\begin{algorithmic}[1]
\REQUIRE{ $S_i=\{(x_{i,1},y_{i,1}),\dots,(x_{i,m/2},y_{i,m/2})\}$ data for users $i\in[n],$ privacy parameters $\epsilon,\delta,$ clipping bound $\psi_{\text{init}},$ rank $k$}\\
\STATE Let $\hat{\sigma}_{\text{init}}\gets\frac{\psi_{\text{init}}\sqrt{\log\left(\frac{1.25}{\delta}\right)}}{n\epsilon}$
\STATE Let $\xi_{\text{init}}\gets\cN^{d\times d}(\vec{0},\hat{\sigma}^2_{\text{init}})$
\FOR{Clients $i\in [n]$ in parallel}
{
    \STATE Send $Z_i\gets\frac{2}{m(m-1)}\sum_{j_1\neq j_2}y_{i,j_1} y_{i, j_2}x_{i,j_1}x^\top_{i,j_2}$ to server
}
\ENDFOR

\STATE Server aggregates $Z_i$ and add noise for privatization
\[
    \hat{Z} = \frac{1}{n} \sum_{i=1}^n \mathsf{clip}(Z_i,\psi_{\text{init}}) + \xi_{\text{init}}
\]
\STATE Server computes
\[
    U_{\text{init}}DU_{\text{init}}^\top\gets\text{rank-$k$-SVD}(\hat{Z})
\]

\STATE\textbf{Return: }$U_{\text{init}}$
\end{algorithmic}
 \label{alg:init}
\end{algorithm}

\begin{lemre}[Restatement of Lemma~\ref{lem:init0}]
    Suppose that Assumptions~\ref{aspt:sub-Gaussian-covar} and \ref{aspt:datagen} hold. Let $U_{\emph{\text{init}}}$ be the output of Algorithm~\ref{alg:newinit}. Then, by setting $\psi_\emph{\text{init}} = \widetilde{O}((R^2+\Gamma^2)d)$, we have
    \begin{equation*}
        \emph{\text{dist}}(U_\emph{\text{init}}, U^*) \leq\widetilde{O}\left(\frac{(R^2+\Gamma^2)d^{3/2}}{n\epsilon \sigma^2_{\min,*}} + \sqrt{\frac{(R^2+\Gamma^2)\Gamma^2d}{mn\sigma^4_{\min,*}}}\right)
    \end{equation*}
    with probability at least $1-O(n^{-10})$. Furthermore, Algorithm~\ref{alg:newinit} is $(\epsilon, \delta)$ user-level DP.
\end{lemre}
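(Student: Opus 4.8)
The plan is to establish the privacy guarantee first, then the utility bound, each by combining the relevant structural results already available in the paper.

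\textbf{Privacy.} The computation of $U_{\text{init}}$ proceeds by aggregating the per-user statistics $Z_i = \frac{2}{m(m-1)}\sum_{j_1 \neq j_2} y_{i,j_1} y_{i,j_2} x_{i,j_1} x_{i,j_2}^\top$, clipping each to Frobenius norm $\psi_{\text{init}}$, averaging, adding the Gaussian noise matrix $\xi_{\text{init}} \sim \cN^{d\times d}(\vec 0, \hat\sigma_{\text{init}}^2)$, and then running a (post-processing) rank-$k$ SVD. The clipping guarantees that replacing a single user's dataset changes $\frac{1}{n}\sum_i \mathsf{clip}(Z_i,\psi_{\text{init}})$ by at most $\frac{2\psi_{\text{init}}}{n}$ in Frobenius norm (two terms differ, each bounded by $\psi_{\text{init}}$, divided by $n$), so the $\ell_2$-sensitivity is $O(\psi_{\text{init}}/n)$. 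First I would invoke the standard Gaussian mechanism guarantee: with $\hat\sigma_{\text{init}} = \frac{\psi_{\text{init}}\sqrt{\log(1.25/\delta)}}{n\epsilon}$ (matching the absolute constants up to the usual $\sqrt 2$ factor in the definition), the release of $\hat Z$ is $(\epsilon,\delta)$-DP. Since $U_{\text{init}}$ is a deterministic function of $\hat Z$, post-processing gives $(\epsilon,\delta)$-user-level DP; the final local solves by each user use only their own data and the public $U_{\text{init}}$, so the whole procedure is $(\epsilon,\delta)$-user-level DP in the billboard model.

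\textbf{Utility.} The idea is to apply the eigenvector perturbation bound (Theorem~\ref{thm:duchikahan}) to the symmetric matrix $\bar Z = \frac1n\sum_i \E[Z_i]$, whose top-$k$ eigenspace is $\text{span}(U^*)$, and its noisy empirical version $\hat Z$. Because each $(x_{i,j_1}, x_{i,j_2})$ with $j_1\neq j_2$ is an \emph{independent} pair, $\E[Z_i] = \E[y_{i,j_1}x_{i,j_1}]\E[y_{i,j_2}x_{i,j_2}]^\top = U^*v_i^*(v_i^*)^\top (U^*)^\top$ (using $\E[xx^\top] = I_d$ from Assumption~\ref{aspt:sub-Gaussian-covar} and $\E[\zeta]=0$ from Assumption~\ref{aspt:datagen}), so $\bar Z = U^*\left(\frac1n\sum_i v_i^*(v_i^*)^\top\right)(U^*)^\top = U^* \Sigma_{V^*} (U^*)^\top$ has rank $k$, top-$k$ eigenvectors spanning $\text{span}(U^*)$, eigenvalue gap $\lambda_k - \lambda_{k+1} = \sigma_{\min}^2(\frac{1}{\sqrt n}V^*) - 0 = \sigma_{\min,*}^2 > 0$. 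Then
\[
\text{dist}(U_{\text{init}}, U^*) \leq \frac{2\Verts{\hat Z - \bar Z}_2}{\sigma_{\min,*}^2} \leq \frac{2}{\sigma_{\min,*}^2}\left(\Verts{Z - \bar Z}_2 + \Verts{\xi_{\text{init}}}_2 + \max_i \Verts{Z_i - \mathsf{clip}(Z_i,\psi_{\text{init}})}_2\right),
\]
writing $Z = \frac1n\sum_i Z_i$ for the unclipped average. The statistical term $\Verts{Z - \bar Z}_2 \leq \widetilde O\big(\sqrt{(R^2+\Gamma^2)\Gamma^2 d/(mn)}\big)$ is exactly Theorem~\ref{thm:duchiinit}, and this holds with probability $1-O(n^{-10})$. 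The noise term $\Verts{\xi_{\text{init}}}_2$ is bounded via a Gaussian matrix spectral-norm concentration (Corollary~\ref{cor:singconccor} with $a=b=d$, $\alpha = \Theta(\sqrt{d\log n})$): $\Verts{\xi_{\text{init}}}_2 \leq O(\hat\sigma_{\text{init}}\sqrt{d}) = \widetilde O\big(\psi_{\text{init}}\sqrt d/(n\epsilon)\big)$ with probability $1-O(n^{-10})$. It remains to justify the clipping term vanishing: one shows that with the choice $\psi_{\text{init}} = \widetilde O((R^2+\Gamma^2)d)$, each $\Verts{Z_i}_2 \leq \psi_{\text{init}}$ with probability $1 - O(n^{-11})$ — this follows from a sub-exponential tail bound on $\frac{2}{m(m-1)}\sum_{j_1\neq j_2} y_{i,j_1}y_{i,j_2} x_{i,j_1}x_{i,j_2}^\top$ (each $y_{i,j}x_{i,j}$ has sub-Gaussian-type norm $\widetilde O(\sqrt{(R^2+\Gamma^2)d})$, and the decoupled U-statistic concentrates around $\E[Z_i]$ which itself has spectral norm $\Verts{v_i^*}_2^2 \leq \Gamma^2$), so clipping is inactive with high probability and contributes $0$. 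A union bound over $i\in[n]$ keeps the failure probability at $O(n^{-10})$. Substituting $\psi_{\text{init}} = \widetilde O((R^2+\Gamma^2)d)$ gives $\Verts{\xi_{\text{init}}}_2 \leq \widetilde O((R^2+\Gamma^2)d^{3/2}/(n\epsilon))$, and combining the three terms yields the stated bound.

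\textbf{Main obstacle.} The delicate part is the high-probability spectral-norm bound on $Z_i$ (and on $Z - \bar Z$, i.e., the content already packaged into Theorem~\ref{thm:duchiinit}): these are degree-two polynomials of sub-Gaussian vectors in dimension $d$, so controlling $\Verts{\cdot}_2$ requires either a decoupling argument plus a net over the sphere with a matrix Bernstein / Hanson-Wright-type inequality, or the machinery of \cite{duchi2022subspace}. For the present lemma I would lean entirely on Theorem~\ref{thm:duchiinit} for $\Verts{Z-\bar Z}_2$ and only need the (easier) one-sided statement $\Verts{Z_i}_2 \leq \psi_{\text{init}}$ to verify clipping is benign; the latter follows from the same concentration reasoning applied to a single user with $m/2$ samples, giving deviation $\widetilde O(\sqrt{(R^2+\Gamma^2)^2 d^2/m}) = \widetilde O((R^2+\Gamma^2)d/\sqrt m)$ on top of $\Verts{\E[Z_i]}_2 \leq \Gamma^2$, which is comfortably below $\psi_{\text{init}} = \widetilde O((R^2+\Gamma^2)d)$. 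Everything else — sensitivity accounting, Gaussian-mechanism privacy, post-processing, Davis–Kahan — is routine given the cited results.
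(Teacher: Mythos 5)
Your proposal follows the same route as the paper: privacy by Frobenius-norm sensitivity accounting and the Gaussian mechanism plus post-processing, utility by Theorem~\ref{thm:duchikahan} applied to the gap $\lambda_k(\bar Z) - \lambda_{k+1}(\bar Z) = \sigma_{\min,*}^2$, with the triangle-inequality decomposition $\Verts{\hat Z - \bar Z}_2 \leq \Verts{\hat Z - Z}_2 + \Verts{Z - \bar Z}_2$, Theorem~\ref{thm:duchiinit} for the statistical term, Corollary~\ref{cor:singconccor} for the noise term, and a high-probability argument that clipping is inactive. The one place where you and the paper genuinely diverge is in justifying why clipping contributes nothing, and there your reasoning has a small wrong-norm slip: the clipping operator $\mathsf{clip}(M,\psi_{\text{init}})$ rescales by the \emph{Frobenius} norm, so you need $\Verts{Z_i}_F \leq \psi_{\text{init}}$, but the ``concentrates around $\E[Z_i]$'' argument you sketch bounds the \emph{spectral} deviation $\Verts{Z_i - \E[Z_i]}_2 = \widetilde O((R^2+\Gamma^2)d/\sqrt m)$. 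For a $d\times d$ matrix, $\Verts{\cdot}_F$ can exceed $\Verts{\cdot}_2$ by a factor $\sqrt d$, so for small $m$ the spectral bound does not give $\Verts{Z_i}_F \leq \widetilde O((R^2+\Gamma^2)d)$. The fix is what the paper does (and what your own parenthetical ``each $y_{i,j}x_{i,j}$ has norm $\widetilde O(\sqrt{(R^2+\Gamma^2)d})$'' already supplies): condition directly on $|y_{i,j}|$ and $\Verts{x_{i,j}}_2$ being bounded, so that each rank-one summand $y_{i,j_1}y_{i,j_2}x_{i,j_1}x_{i,j_2}^\top$ has Frobenius norm $\widetilde O((R^2+\Gamma^2)d)$, and conclude by convexity of $\Verts{\cdot}_F$ that $\Verts{Z_i}_F \leq \widetilde O((R^2+\Gamma^2)d)$ with no concentration argument at all; the U-statistic concentration step is both unnecessary and the wrong tool here. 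With that one correction, the remainder of the argument goes through exactly as in the paper.
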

\begin{proof}

The privacy guarantee follows directly from the guarantee of Gaussian mechanism. For our utility guarantee, we condition on the event
\begin{equation*}
    \cE = \left\{|y_{i,j}| \leq O((\Gamma + R)\log(mn)),  \left|x^p_{i, j}\right| \leq O\left(\sqrt{\log dmn}\right) \text{ for all } (i,j,p)\in[n]\times[m]\times[d] \text{ simultaneously}   \right\}
\end{equation*}
where $x^p_{i,j}$ represents the $p$-th entry of $x_{i, j}$.
The condition $\cE$ holds with probability at least $1- O(n^{-10})$.

Conditioning on event $\cE$, we obtain
\[
    \Verts{Z_i}_F \leq \widetilde{O}\left((R^2+\Gamma^2)d\right)
\]
and the clipping will not change the gradient norm. Let $Z = \frac{1}{n} \sum_{i=1}^n Z_i$, $\bar{Z}=\frac{1}{n}\sum^n_{i=1}\E\left[Z_i\right] $. Via Corollary \ref{cor:singconccor} and the fact that the clipping does not affect the bound, we have
\begin{equation}\label{line:initbd0}
\Verts{\hat{Z} - Z}_2 = \Verts{\xi_{\text{init}}}_2 \leq O\left(\sqrt{d} \hat{\sigma}_{\text{init}}\right) = O\left(\frac{(R^2+\Gamma^2)d^{3/2}\sqrt{\log n}\log^2(dmn)}{n\epsilon}\right)
\end{equation}
with probability at least $1-2e^{-10\log n}$.

By Theorem~\ref{thm:duchiinit}, we have
\begin{equation}\label{line:initbd1}
    \Verts{Z - \bar{Z}}_2 \leq O\left(\log^3(nd)\sqrt{\frac{(R^2+\Gamma^2)\Gamma^2d}{mn}}\right)
\end{equation}
with probability over $1-O(n^{-10})$. Therefore, by (\ref{line:initbd0}) and (\ref{line:initbd1}), we have 
\begin{equation}\label{line:initbd2}
\begin{aligned}
    \Verts{\hat{Z} - \bar{Z}}_2 &\leq \Verts{\hat{Z} - Z}_2 + \Verts{\hat{Z} - Z}_2 \\
    &\leq O\left(\frac{(R^2+\Gamma^2)d^{3/2}\sqrt{\log n}\log^2(dmn)}{n\epsilon} + \log^3(nd)\sqrt{\frac{(R^2+\Gamma^2)\Gamma^2d}{mn}}\right)
\end{aligned}
\end{equation}
with probability at least $1-O(n^{-10})$ via the union bound. Finally, using Theorem~\ref{thm:duchikahan} and the fact that $\bar{Z} = \frac{1}{n} \sum_{i=1}^n (U^*v_i^*) (U^* v_i^*)^\top$ with (\ref{line:initbd2}), we obtain
\begin{align*}
    \text{dist}(U_\text{init}, U^*) &\leq \frac{2\Verts{\hat{Z} - \bar{Z}}_2}{\sigma_{\min,*}^2} \\
    &\leq O\left(\frac{(R^2+\Gamma^2)d^{3/2}\sqrt{\log n}\log^2(dmn)}{n\epsilon\sigma_{\min,*}^2 } + \log^3(nd)\sqrt{\frac{(R^2+\Gamma^2)\Gamma^2d}{mn\sigma_{\min,*}^4}}\right)
\end{align*}
with probability at least $1-O(n^{-10})$.

\end{proof}

\subsection{Auxiliary lemmas}\label{app:B3}
The results of this section include multiple adaptations of those from \cite{collins2021exploiting} such as Lemma~\ref{lem:lem5} and Lemma~\ref{lem:Rbd0}. Our proofs, when they are adaptations, are substantially more complex due to the addition of label noise and differential privacy to design Private FedRep (Algorithm~\ref{alg:PrivateFedRep}). 

This section has the following structure. We first characterize the solution of the local minimization step of Algorithm~\ref{alg:PrivateFedRep} in Lemma~\ref{lem:lem1}. Next, we introduce in Proposition~\ref{prop:combined_nablaUVconc} terms quantifying the label noise terms that periodically appear throughout our proofs. Using this proposition, we give a bound on the error from estimating $v^*_1,\dots,v^*_n$ incurred during the local minimization step of Algorithm~\ref{alg:PrivateFedRep} in Lemma~\ref{lem:lem4combined}. Using similar methods, in Lemma~\ref{lem:lem5} we bound the spectral distance of the linear operator that defines the gradient step of Algorithm~\ref{alg:PrivateFedRep} from the identity operator. Finally, using all of these intermediate results allows us to prove Lemma~\ref{lem:Rbd0}, a key lemma in our main proof of Section~\ref{app:B1}.

Take $\cI_t,\cI'_t$ to be the index sets of our batches in Algorithm~\ref{alg:PrivateFedRep}. Let $B_{i,t}=\{(x_{i,j},y_{i,j}):j\in\cI_t\}$ and $B'_{i,t}=\{(x'_{i,j},y'_{i,j}):j\in\cI'_t\}$. We omit iterations $t$ on our quantities for ease of notation. Further, we reindex the elements of $B_{i,t},B'_{i,t}$ to $B_{i,t}=\{(x_{i,j},y_{i,j}):j\in[b]\}$ and $B'_{i,t}=\{(x'_{i,j},y'_{i,j}):j\in[b]\}$. Let $A_{i,j}=e_ix^\top_{i,j},A'_{i,j}=e_i{x'}^\top_{i,j}$ for each $(i,j)\in[n]\times\cI_t$. Define $\cA:\re^{n\times d}\rightarrow\re^{nb}$ where $\cA(M)=\left(\langle A_{i,j},M\rangle_F\right)_{(i,j)\in[n]\times\cI_t}$ for all matrices $M\in\re^{n\times d}$. We analogously define the operator $\cA'$ with respect to the matrices $A'_{i,j}$. 

Denote $(\cA')^\dag:\re^{nb}\rightarrow\re^{n\times d}$ the adjoint operator of $\cA'$ defined as $(\cA')^\dag(M)=\sum^n_{i=1}\sum^b_{j=1}\langle A'_{i,j},M\rangle A'_{i,j}$. In this sense $(\cA)^\dag\cA:\re^{n\times d}\rightarrow\re^{n\times d}$ is a single operator. Furthermore, recall that $\xi_{t} \sim \cN(0, \hat{\sigma}^2)^{d\times k}$ and choose $\hat{\sigma}=\widetilde{O}\left(\frac{ (   R+\Gamma)\Gamma \sqrt{dk}}{n\epsilon}\right)$. Define the following recursion from Algorithm~\ref{alg:PrivateFedRep}.

\begin{tcolorbox}
\centerline{Algorithm~\ref{alg:PrivateFedRep} recursion}
\vspace{2pt}
\begin{equation}\label{recursion}
\begin{aligned}
V_{t+1}&\gets\argmin_{V\in\re^{n\times k}}\frac{1}{nb}\Verts*{\cA(V^*(U^*)^\top-V(U_t)^\top)+\vec{\zeta} }^2_2\\
\hat{U}_{t+1}&\gets U_{t}-\frac{\eta}{nb}\left((\cA')^\dag\cA'(V_{t+1}(U_t)^\top-V^*(U^*)^\top)\right)^\top V_{t+1}\\
&\ \ \ -\frac{2\eta}{nb}\nabla_{U}\langle\cA'(V_{t+1}(U_t)^\top),\vec{\zeta} \rangle+\eta\xi_{t}\\    
U_{t+1},P_{t+1}&\gets\text{QR}(\hat{U}_{t+1}).
\end{aligned}
\end{equation}
\end{tcolorbox}

We will now state a theorem that gives an analytic form for $V_{t+1}$. Suppose $p,q\in[d]$. Let $u_{q,t},u^*_q$ be the $q$-th column of $U_t,U^*$, respectively. Define
\begin{equation}\label{matrices0}
\begin{aligned}
    &G_{p,q}=\frac{1}{b}\sum^n_{i=1}\sum^b_{j=1}\left(A_{i,j}u_{p,t}u^\top_{q,t}(A_{i,j})^\top \right)\in\re^{n\times n}\\
    &C_{p,q}=\frac{1}{b}\sum^n_{i=1}\sum^b_{j=1}\left(A_{i,j}u_{p,t}(u^*_q)^\top  (A_{i,j})^\top \right)\in\re^{n\times n}\\
    &D_{p,q}=\langle u_{p,t},u^*_q\rangle I_{n\times n}\in\re^{n\times n}\\
    &W_p=\frac{2}{b}\sum^n_{i=1}\sum^b_{j=1}\zeta _{i,j}A_{i,j}u_{p,t}\in\re^n.\\
\end{aligned}
\end{equation} 
Take $G,C,D$ to be $nk\times nk$ block matrices with blocks $G_{p,q},C_{p,q},C_{p,q}$ and $W$ an $nk$-dimensional vector created by concatenating $W_p$ for each $p\in[k]$. Denote $\widetilde{v}^*=\text{Vec}(V^*)$, a column vector of dimension $nk$.

For the following lemma we define 
\begin{equation}\label{line:Fdefinition}
    F=[(G^{-1}((GD-C)\widetilde{v}^*+W))_1\dots (G^{-1}((GD-C)\widetilde{v}^*+W))_k]\in\re^{n\times k}
\end{equation}
where $(G^{-1}((GD-C)\widetilde{v}^*+W))_p$ is the $p$-th $n$-dimensional block of the $nk$-dimensional vector $G^{-1}((GD-C)\widetilde{v}^*+W)$.
\begin{lem}\label{lem:lem1}
The matrix $V_{t+1}$ satisfies 
\[V_{t+1}=V^*(U^*)^\top U_t-F\]
at each iteration $t+1$ for the error matrix $F\in\re^{n\times k}$.
\end{lem}
\begin{proof}
Note that $V_{t+1}$ minimizes the function $F(V,U_t)=\frac{2}{nb}\Verts*{\cA(V^*(U^*)^\top-V(U_t)^\top)+\zeta }^2_2$ and so $\nabla_{v_p}F(V_{t+1},U_t)=0$ for $v_p$ the $p$-th column of $V$ for each $p\in[k]$. Recall our definition of $A_{i,j}=e_ix^\top_{i,j}$ for $e_i$ the $i$-th $n$-dimensional standard basis vector. Given that
\[\Verts*{\cA(V^*(U^*)^\top-V(U_t)^\top)+\zeta }^2_2=\Verts*{\cA(V^*(U^*)^\top-V(U_t)^\top)}^2_2+2\langle\cA(V^*(U^*)^\top-V(U_t)^\top),\zeta \rangle+\Verts{\zeta }^2_2\]
we have for $u_{q,t}$ the $q$-th column of $U_t$
\begin{equation*}
\begin{aligned}
0&=\nabla_{v_p}F(V_{t+1},U_t)\\
&=\frac{2}{nb}\sum^k_{q=1}\sum^n_{i=1}\sum^b_{j=1}\left(u^\top_{q,t}(A_{i,j})^\top v_{q,t+1}-(u^*_q)^\top  (A_{i,j})^\top v^*_q\right)A_{i,j}u_{p,t}+\frac{4}{nb}\nabla_{v_p}\langle\cA(V^*(U^*)^\top-V_{t+1}(U_t)^\top),\zeta \rangle.
\end{aligned}
\end{equation*}
Let $(M)_{*,p}$ be the $p$-th column of a matrix $M$. Then
\begin{equation*}
\begin{aligned}
\nabla_{v_p}\langle\cA(V^*(U^*)^\top-V_{t+1}(U_t)^\top),\zeta \rangle&=-\nabla_{v_p}\langle\cA(V_{t+1}(U_t)^\top),\zeta \rangle\\
&=-\nabla_{v_p}\langle (\langle A_{i,j},V_{t+1}(U_t)^\top\rangle_F)_{(i,j)\in[n]\times\cI_t},\zeta \rangle\\
&=-\nabla_{v_p}\sum_{(i,j)\in[n]\times[b]}\zeta _{i,j}\langle A_{i,j},V_{t+1}(U_t)^\top\rangle_F\\
&=-\nabla_{v_p}\left\langle \sum_{(i,j)\in[n]\times[b]}\zeta _{i,j}A_{i,j},V_{t+1}(U_t)^\top\right\rangle_F\\
&=-\nabla_{v_p}\left\langle \sum_{(i,j)\in[n]\times[b]}\zeta _{i,j}A_{i,j}U_t,V_{t+1}\right\rangle_F\\
&=\left(-\sum_{(i,j)\in[n]\times[b]}\zeta _{i,j}A_{i,j}U_t\right)_{*,p}
\end{aligned}
\end{equation*}
the $p$-th column of the matrix $-\sum_{(i,j)\in[n]\times[b]}\zeta _{i,j}A_{i,j}U_t\in\re^{n\times k}$. 
Hence
\[\frac{1}{b}\sum^k_{q=1}\sum^n_{i=1}\sum^b_{j=1}\left(A_{i,j}u_{p,t}u^\top_{q,t}(A_{i,j})^\top \right)v_{q,t+1}=\frac{1}{b}\sum^k_{q=1}\sum^n_{i=1}\sum^b_{j=1}\left(A_{i,j}u_{p,t}(u^*_q)^\top  (A_{i,j})^\top \right)v^*_{q}+\frac{2}{b}\sum^n_{i=1}\sum^b_{j=1}\zeta _{i,j}A_{i,j}u_{p,t}.\]

Then, for $\widetilde{v}_{t+1}=(v^\top_{1,t+1},\dots,v^\top_{k,t+1})^\top\in\re^{nk}$ and $\widetilde{v}^*=((v^*_1)^\top,\dots,(v^*_{k})^\top)^\top\in\re^{nk}$ we have
\[\widetilde{v}_{t+1}=G^{-1}C(\widetilde{v}^*+W)=D\widetilde{v}^*-G^{-1}((GD-C)\widetilde{v}^*+W)\]
conditioned on the event that $G^{-1}$ exists. We denote $F=[(G^{-1}((GD-C)\widetilde{v}^*+W))_1\dots (G^{-1}((GD-C)\widetilde{v}^*+W))_k]\in\re^{n\times k}$ where $(G^{-1}((GD-C)\widetilde{v}^*+W))_p$ is the $p$-th $n$-th dimensional block of the $nk$-dimensional vector $G^{-1}((GD-C)\widetilde{v}^*+W)$. Recalling the definition of $D$, we have that $V_{t+1}=V^*(U^*)^\top U_t-F$.
\end{proof}

In order to evaluate the final bounds with label noise included we must bound the following terms
\[\frac{1}{b}\nabla_{V}\langle\cA(V_{t+1}(U_t)^\top),\vec{\zeta} \rangle\]
and
\[\frac{1}{nb}\nabla_{U}\langle\cA(V_{t+1}(U_t)^\top),\vec{\zeta} \rangle\]
in spectral norm.

\begin{prop}\label{prop:combined_nablaUVconc}
With probability $1-O(n^{-11})$, we have
\[\emph{\text{(1)}}\quad\Verts*{\frac{1}{b}\nabla_{V}\langle\cA(V_{t+1}(U_t)^\top),\vec{\zeta} \rangle}_2\leq\sqrt{\frac{26R^2n\log (nb)}{b}}.\]
Furthermore, with probability at least $1-O(n^{-10})$, we have
\[\emph{\text{(2)}}\quad\Verts*{\frac{1}{nb}\nabla_{U}\langle\cA(V_{t+1}(U_t)^\top),\vec{\zeta}  \rangle}_2\leq\frac{4}{3}\sqrt{\frac{2\cdot15R^2\Gamma^2 d\log n}{nb}}.\]
\end{prop}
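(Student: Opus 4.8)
The plan is to prove the two spectral-norm bounds separately, in each case reducing to a scalar concentration problem. \emph{For Claim (1):} I would first expand the gradient. Since $A_{i,j}=e_ix_{i,j}^\top$, we have $\langle\cA(V_{t+1}U_t^\top),\vec\zeta\rangle=\big\langle\sum_{i,j}\zeta_{i,j}A_{i,j}U_t,\,V_{t+1}\big\rangle_F$, so the $(p,q)$ entry of $\frac1b\nabla_V\langle\cA(V_{t+1}U_t^\top),\vec\zeta\rangle$ is $\frac1b\sum_{j=1}^b\zeta_{p,j}\langle x_{p,j},u_{q,t}\rangle$, where $u_{q,t}$ denotes the $q$-th column of $U_t$. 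The structural facts I would use are that $U_t$ is built only from the batches of earlier iterations, hence independent of the current batch, and that $\zeta_{p,j}$ is independent of $x_{p,j}$ and of $U_t$. Condition on $\cE=\{|\zeta_{i,j}|\le R\sqrt{26\log(nb)}\ \forall i,j\}$, which holds with probability at least $1-(nb)^{-12}$; on $\cE$ the variable $\zeta_{p,j}\langle x_{p,j},u_{q,t}\rangle$ is mean-zero and $R\sqrt{26\log(nb)}$-sub-Gaussian in the $x$-randomness, using that $\langle x_{p,j},u_{q,t}\rangle$ is $1$-sub-Gaussian by Assumption~\ref{aspt:sub-Gaussian-covar} with $\Verts{u_{q,t}}_2=1$. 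Hence the $p$-th row $\big(\frac1b\sum_j\zeta_{p,j}\langle x_{p,j},u_{q,t}\rangle\big)_{q\in[k]}$ is centered, independent across $p$, and — because $\Sigma_i=I_d$ and $U_t$ has orthonormal columns — has uncorrelated coordinates with sub-Gaussian variance proxy $\bar\sigma^2\le 26R^2\log(nb)/b$, so by Proposition~\ref{prop:uncorsubgau} these rows are $O(1)$-sub-Gaussian after scaling by $1/\bar\sigma$. Applying the one-sided form of Theorem~\ref{thm:singconc} to $\frac1{b\bar\sigma}\nabla_V\langle\cA(V_{t+1}U_t^\top),\vec\zeta\rangle$ with $\alpha=\sqrt n$ and multiplying back by $\bar\sigma$ gives the stated bound with conditional probability at least $1-e^{-n}$; undoing the conditioning by $\Prob[\text{bad}]\le\Prob[\text{bad}\mid\cE]+\Prob[\cE^c]$ yields probability at least $1-O(n^{-11})$.

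\emph{For Claim (2):} the relevant noise term is taken over the \emph{disjoint} gradient batch $B'_{i,t}$ (cf.\ the recursion (\ref{recursion})), so $x'_{i,j}$ and $\zeta'_{i,j}$ are independent of $v_{i,t+1}$, which depends only on $B_{i,t}$. Expanding, $\frac1{nb}\nabla_U\langle\cA'(V_{t+1}U_t^\top),\vec\zeta\rangle=\frac1{nb}\sum_{i,j}\zeta'_{i,j}(A'_{i,j})^\top V_{t+1}=\frac1{nb}\sum_{i,j}\zeta'_{i,j}\,x'_{i,j}\,v_{i,t+1}^\top$, since $(A'_{i,j})^\top$ has the single nonzero column $x'_{i,j}$ and $v_{i,t+1}$ is the $i$-th row of $V_{t+1}$. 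Taking Euclidean $\tfrac14$-nets $\cN^d,\cN^k$ of the unit spheres, Lemma~\ref{lem:speccoverbd} reduces the spectral norm to $\tfrac43\max_{a\in\cN^d,b\in\cN^k}\frac1{nb}\sum_{i,j}\zeta'_{i,j}\langle a,x'_{i,j}\rangle\langle v_{i,t+1},b\rangle$. Condition on the event that $\Verts{v_{i,t+1}}_2\le\tfrac54\Gamma$ for all $i$, which holds with probability $1-O(n^{-10})$ by Proposition~\ref{prop:vparambd}; on that event $|\langle v_{i,t+1},b\rangle|\le\tfrac54\Gamma$, so each summand $\frac1{nb}\zeta'_{i,j}\langle a,x'_{i,j}\rangle\langle v_{i,t+1},b\rangle$ is a mean-zero sub-exponential variable with norm $O(R\Gamma/(nb))$ (product of the $R$-sub-Gaussian $\zeta'$, the $1$-sub-Gaussian $\langle a,x'\rangle$, and the bounded factor), and these are independent over $(i,j)$ conditionally on $U_t,V_{t+1}$. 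Bernstein's inequality (Lemma~\ref{lem:bernstein}), in its quadratic regime (valid once $nb\gtrsim d\log n$), bounds this by $O(R\Gamma\sqrt{d\log n/(nb)})$ with probability $1-e^{-15d\log n}$ for a fixed $(a,b)$; a union bound over the $9^{d+k}$ net points still leaves probability $1-e^{-10d\log n}$, and combining with the $\tfrac43$ factor and then discharging the conditioning on $\Verts{v_{i,t+1}}_2$ delivers the claim with probability $1-O(n^{-10})$.

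The main obstacle is bookkeeping rather than any conceptual difficulty: one must keep straight which batch each object depends on so that the required independences genuinely hold (current $U_t$ versus the current batch in Claim (1); $v_{i,t+1}$ versus the disjoint gradient batch in Claim (2)), handle the two nested conditioning steps (on $\cE$ in Claim (1) and on the $v$-norm event in Claim (2)) via total probability without eroding the target failure probabilities, and — since the statement records explicit constants $26$, $2\cdot15$, and $\tfrac43$ — track the sub-Gaussian and sub-exponential norm constants precisely, including the mild subtlety that the per-row variance in Claim (1) must be replaced by its uniform upper bound $\bar\sigma^2$ when invoking the matrix concentration inequality.
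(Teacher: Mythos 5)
Your proposal matches the paper's proof almost step for step: the same entrywise expansion and conditioning on $\cE$ with a sub-Gaussian row matrix and Theorem~\ref{thm:singconc} for Claim~(1), and the same rank-one expansion, $\frac14$-net reduction via Lemma~\ref{lem:speccoverbd}, conditioning on $\Verts{v_{i,t+1}}_2\le\frac54\Gamma$ via Proposition~\ref{prop:vparambd}, and Bernstein plus union bound for Claim~(2). If anything you are a touch more careful than the paper about which batch the quantities in Claim~(2) live on (the paper mixes $\zeta_{i,j}$ with $x'_{i,j}$), and your Cauchy--Schwarz bound $|\langle v_{i,t+1},b\rangle|\le\frac54\Gamma$ is tighter than the paper's $\frac{25}{16}\Gamma$, but these are cosmetic and the argument is the same.
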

\begin{proof}
\textbf{Claim 1:}\quad With probability $1-e^{-11\log (nb)}$, we have
\[\Verts*{\frac{1}{b}\nabla_{V}\langle\cA(V_{t+1}(U_t)^\top),\vec{\zeta} \rangle}_2\leq\sqrt{\frac{26R^2n\log (nb)}{b}}.\]

Note that for any given $p\in [n]$ and $q\in [k]$
    \begin{equation*}
    \begin{aligned}
        \left(\frac{1}{b}\nabla_{V}\langle\cA(V_{t+1}(U_t)^\top),\vec{\zeta} \rangle\right)_{p,q}&=\left(\frac{1}{b}\nabla_{V}\langle(\langle e_ix^\top_{i,j},V_{t+1}(U_t)^\top\rangle_F)_{(i,j)\in[n]\times[m]},\vec{\zeta} \rangle\right)_{p,q}\\
        &=\left(\frac{1}{b}\sum_{(i,j)\in[n]\times[m]}{\zeta}_{i,j}\nabla_{V}(\langle e_ix^\top_{i,j},V_{t+1}(U_t)^\top\rangle_F)_{(i,j)} \right)_{p,q}\\
        &=\left(\frac{1}{b}\sum_{(i,j)\in[n]\times[m]}{\zeta}_{i,j}\nabla_{V}(\langle e_ix^\top_{i,j}U_t,V_{t+1}\rangle_F)_{(i,j)} \right)_{p,q}\\
        &=\left(\frac{1}{b}\sum_{(i,j)\in[n]\times[m]}{\zeta}_{i,j}e_ix^\top_{i,j}U_t \right)_{p,q}\\
        &=\frac{1}{b}\sum^b_{j=1}\zeta _{p,j}\langle x_{p,j},u_{q,t}\rangle
    \end{aligned}
    \end{equation*}
    for $u_{q,t}$ the $q$-th column of $U_t$. Observe that $\zeta_{p,j}$ is independent of both $x_{p,j}$ and $u_{q,t}$ for all $j\in [m]$. Condition on the event
    \[\cE=\{\abs{\zeta_{i,j}}\leq R\sqrt{26\log(nb)}\text{ for all }(i,j)\in[n]\times[m]\}\]
    which has probability at least $1-e^{-13\log(nb)}$. Via this conditioning the random variable $\zeta _{p,j}\langle x_{p,j},u_{q,t}\rangle$ is $R\sqrt{26\log(nb)}$-sub-Gaussian given that $\langle x_{p,j},u_{q,t}\rangle$ is $1$-sub-Gaussian. Note as well $\zeta _{p,j}\langle x_{p,j},u_{q,t}\rangle$ is mean zero and independent for each $j$. So, $\frac{1}{b}\sum^b_{j=1}\zeta _{p,j}\langle x_{p,j},u_{q,t}\rangle$ is centered, $\sqrt{\frac{26R^2\log(nb)}{b}}$-sub-Gaussian, and independent for every $p$. 

    Let $\bar{\sigma}^2$ be the variance of $\frac{1}{b}\sum^b_{j=1}\zeta _{p,j}\langle x_{p,j},u_{q,t}\rangle$, which has $\bar{\sigma}\leq\sqrt{\frac{26R^2\log(nb)}{b}}$. Then, $\left(\frac{1}{m\bar{\sigma}}\sum^b_{j=1}\zeta _{p,j}\langle x_{p,j},u_{q,t}\rangle\right)_{q\in[k]}$ has covariance matrix $I_k $. By the one-sided version of Theorem~\ref{thm:singconc}
    \[\sigma_{\max}\left(\frac{1}{m\bar{\sigma}}\nabla_{V}\langle\cA(V_{t+1}(U_t)^\top),\vec{\zeta} \rangle\right)\leq O\left(\sqrt{n}+\sqrt{k}+w\right)\]
    with probability at least $1-e^{-\alpha^2}$. Multiplying through by $\bar{\sigma}$ and setting $\alpha=\sqrt{n}$, we have
    \[\Verts*{\frac{1}{b}\nabla_{V}\langle\cA(V_{t+1}(U_t)^\top),\vec{\zeta} \rangle}_2\leq\sqrt{\frac{26R^2n\log (nb)}{b}}\]
    with probability at least $1-e^{-n}$ conditioned on the event $\cE$. So, in general
    \[\Verts*{\frac{1}{b}\nabla_{V}\langle\cA(V_{t+1}(U_t)^\top),\vec{\zeta} \rangle}_2\leq\sqrt{\frac{26R^2n\log (nb)}{b}}\]
    with probability at least $1-e^{-n}-e^{-12\log(nb)}\geq1-e^{-11\log (nb)}$. This proves our first claim.

\textbf{Claim 2:}\quad With probability at least $1-O(n^{-10})$, we have
\[\Verts*{\frac{1}{nb}\nabla_{U}\langle\cA(V_{t+1}(U_t)^\top),\vec{\zeta}  \rangle}_2\leq\frac{4}{3}\sqrt{\frac{2\cdot15R^2\Gamma^2 d\log n}{nb}}.\]

Note that
\begin{equation*}
\begin{aligned}
\frac{1}{nb}\nabla_{U}\langle\cA(V_{t+1}(U_t)^\top),\vec{\zeta} \rangle&=\nabla_U\left\langle\frac{1}{nb}\sum^n_{i=1}\sum^b_{j=1}\zeta _{i,j}A_{i,j},V_{t+1}(U_t)^\top\right\rangle_F\\
&=\nabla_U\left\langle\frac{1}{nb}\sum^n_{i=1}\sum^b_{j=1}\zeta _{i,j}(V_{t+1})^\top A_{i,j},(U_t)^\top\right\rangle_F\\
&=\nabla_U\left\langle\frac{1}{nb}\sum^n_{i=1}\sum^b_{j=1}\zeta _{i,j}(A_{i,j})^\top V_{t+1},U_t\right\rangle_F\\
&=\frac{1}{nb}\sum^n_{i=1}\sum^b_{j=1}\zeta _{i,j}(A_{i,j})^\top V_{t+1}.
\end{aligned}
\end{equation*}
Observe that since $(A_{i,j})^\top$ is a matrix with one non-zero column $x'_{i,j}$, we have $(A_{i,j})^\top V_{t+1}=x'_{i,j} (v_{i,t+1})^\top $ where $v_{i,t+1}$ is the $i$-th row of $V_{t+1}$. Let $\cN^d$ and $\cN^k$ be Euclidean $\frac{1}{4}$-covers of the $d$ and $k$-dimensional unit spheres, respectively. Then, by Lemma~\ref{lem:speccoverbd}
\begin{equation*}
\begin{aligned}
\Verts*{\frac{1}{nb}\sum^n_{i=1}\sum^b_{j=1}\zeta _{i,j}(A_{i,j})^\top V_{t+1}}_2&=\Verts*{\frac{1}{nb}\sum^n_{i=1}\sum^b_{j=1}\zeta _{i,j}x'_{i,j} (v_{i,t+1})^\top }_2\\
&=\frac{4}{3}\max_{a\in\cN^d,b\in\cN^k}a^\top\left(\frac{1}{nb}\sum^n_{i=1}\sum^b_{j=1}\zeta _{i,j}x'_{i,j} (v_{i,t+1})^\top \right)b\\
&=\frac{4}{3}\max_{a\in\cN^d,b\in\cN^k}\left(\frac{1}{nb}\sum^n_{i=1}\sum^b_{j=1}\zeta _{i,j}\langle a,x'_{i,j}\rangle \langle v_{i,t+1},b\rangle\right).
\end{aligned}
\end{equation*}
Let $\Verts{v_{i,t+1}}_2\leq \frac{5}{4}\Gamma$ for all $i$ with probability at least $1-O(n^{-14})$ via Proposition~\ref{prop:vparambd}. As well, we condition on the event 
\[\cE=\left\{\Verts{v_{i,t+1}}_2\leq \frac{5}{4}\Gamma\text{ for all $i$ simultaneously}\right\}\]  
which has probability $\Prob\left[\cE\right]\geq1-O(n^{-13})$.

Since $\abs{\langle v_{i,t+1},b\rangle}\leq \frac{25}{16}\Gamma$ by Cauchy-Schwarz, the random variable $\zeta _{i,j}\langle a,x'_{i,j}\rangle \langle v_{i,t+1},b\rangle$ is sub-exponential. Furthermore, the variable $\frac{1}{nb}\zeta _{i,j}\langle a,x'_{i,j}\rangle \langle v_{i,t+1},b\rangle$ has sub-exponential norm bounded by $\frac{125R\Gamma}{64nb}\leq\frac{2 R\Gamma}{nb}$ and is mean zero. Following from Lemma~\ref{lem:bernstein} conditioned on $\cE$ and for $nb\geq15d\log n$, we have
\[\frac{4}{3nb}\sum^n_{i=1}\sum^b_{j=1}\zeta _{i,j}\langle a,x'_{i,j}\rangle \langle v_{i,t+1},b\rangle\leq\frac{4}{3}\sqrt{\frac{2\cdot15R^2\Gamma^2 d\log n}{nb}}\]
with probability at least $1-e^{-15d\log n}$. Via the union bound over $\cN^d,\cN^k$
\begin{equation*}
\begin{aligned}
\Verts*{\frac{1}{nb}\sum^n_{i=1}\sum^b_{j=1}\zeta _{i,j}(A_{i,j})^\top V_{t+1}}_2
&\leq\frac{4}{3}\max_{a\in\cN^d,b\in\cN^k}\frac{1}{nb}\sum^n_{i=1}\sum^b_{j=1}\zeta _{i,j}\langle a,x'_{i,j}\rangle \langle v_{i,t+1},b\rangle\\
&\leq\frac{4}{3}\sqrt{\frac{2\cdot15R^2\Gamma^2 d\log n}{nb}}
\end{aligned}
\end{equation*}
with probability at least $1-9^{d+k}e^{-15d\log n}\geq1-e^{-10d\log n}$. Let $E$ be the event
\[E=\left\{\Verts*{\frac{1}{nb}\sum^n_{i=1}\sum^b_{j=1}\zeta _{i,j}(A_{i,j})^\top V_{t+1}}_2\leq\frac{4}{3}\sqrt{\frac{2\cdot15R^2\Gamma^2 d\log n}{nb}}\right\}.\]
Applying the fact that $\Prob[E^c]\leq\Prob[E^c|\cE]+\Prob[\cE^c]\leq e^{-10d\log n}+O(n^{-13})=O(n^{-10})$ finishes the second claim.
\end{proof}

Recall the following definitions given prior to Lemma~\ref{lem:lem1} where $A_{i,j}=e_ix^\top_{i,j}$. Denote
\begin{equation}\label{matrices1}
\begin{aligned}
    &G_{p,q}=\frac{1}{b}\sum^n_{i=1}\sum^b_{j=1}\left(A_{i,j}u_{p,t}u^\top_{q,t}(A_{i,j})^\top \right)\in\re^{n\times n}\\
    &C_{p,q}=\frac{1}{b}\sum^n_{i=1}\sum^b_{j=1}\left(A_{i,j}u_{p,t}(u^*_q)^\top  (A_{i,j})^\top \right)\in\re^{n\times n}\\
    &D_{p,q}=\langle u_{p,t},u^*_q\rangle I_{n\times n}\in\re^{n\times n}\\
    &W_p=\frac{2}{b}\sum^n_{i=1}\sum^b_{j=1}\zeta _{i,j}A_{i,j}u_{p,t}\in\re^n.\\
\end{aligned}
\end{equation} 
Take $G,C,D$ to be $nk\times nk$ block matrices with blocks $G_{p,q},C_{p,q},C_{p,q}$ and $W$ an $nk$-dimensional vector created by concatenating $W_p$ for each $p\in[k]$. Let $G^i,C^i,D^i$ be the $k\times k$ matrices formed by taking the $i$-th diagonal element from each $G_{p,q},C_{p,q},D_{p,q}$, respectively.

\begin{lem}\label{lem:lem4combined}
Let $\tau_k=c_\tau\sqrt{\frac{ 35k\log n}{{b}}}$ for some $c>0$. Then
\[\emph{\text{(1)}}\quad\Verts{G^{-1}}_2\leq\frac{1}{1-\tau_k}\]
with probability at least $1-O(n^{-13})$.

Furthermore, 
\[\emph{\text{(2)}}\quad\Verts*{F}_F\leq\frac{\nu\tau_k}{1-\tau_k}\Verts{V^*}_2\emph{\text{dist}}(U_t,U^*)+\sqrt{\frac{26R^2nk\log(nb)}{(1-\tau_k)^2b}}\]
with probability at least $1-O(n^{-11})$.
\end{lem}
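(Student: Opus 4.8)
The plan is to prove the two claims in order, both resting on the sub-Gaussian singular-value concentration bound (Theorem~\ref{thm:singconc}) applied per user and stitched together by a union bound over $i\in[n]$; part (1) then supplies the operator-norm control on $G^{-1}$ that part (2) uses.

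\textbf{Part (1).} First I would exploit the block structure of $G$ to reduce $\sigma_{\min}(G)$ to the minimum over the $k\times k$ diagonal blocks $G^i$. Writing a unit vector $a\in\re^{nk}$ in terms of its sub-vectors $a^i\in\re^k$ (one coordinate from each of the $k$ blocks), one has $a^\top G a=\sum_{i=1}^n (a^i)^\top G^i a^i$, so $\sigma_{\min}(G)\geq\min_{i\in[n]}\sigma_{\min}(G^i)$. Since $G^i=U_t^\top\Pi^i U_t$ with $\Pi^i=\tfrac1b\sum_{j=1}^b x_{i,j}x_{i,j}^\top$, and the rows $\tfrac1{\sqrt b}U_t^\top x_{i,j}\in\re^k$ are independent, mean-zero, isotropic and $\tfrac1{\sqrt b}$-sub-Gaussian (Assumption~\ref{aspt:sub-Gaussian-covar} and orthonormality of $U_t$), Theorem~\ref{thm:singconc} gives $\sigma_{\min}(G^i)\geq 1-c(\sqrt{k/b}+w/\sqrt b)$ with probability $1-e^{-w^2}$; taking $w=\Theta(\sqrt{k\log n})$ makes the per-user failure probability $n^{-\Omega(k)}$, so a union bound over $i\in[n]$ yields $\sigma_{\min}(G)\geq 1-\tau_k$ (constants absorbed into $c_\tau$) with failure probability $O(n^{-13})$, hence $G$ is invertible and $\Verts{G^{-1}}_2\leq(1-\tau_k)^{-1}$. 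This needs $b\gtrsim k\log n$, which is guaranteed by Assumption~\ref{aspt:samplenum} and $b=\lfloor m/2T\rfloor$.

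\textbf{Part (2).} I would use the closed form $F=[(G^{-1}((GD-C)\widetilde v^*+W))_1\cdots]$ from Lemma~\ref{lem:lem1} and split $F$ additively into a ``signal'' piece $G^{-1}(GD-C)\widetilde v^*$ and a ``noise'' piece $G^{-1}W$, bounding the Frobenius norm of each and combining by the triangle inequality and a union bound. For the signal piece, the key quantity is the spectral norm of each diagonal block $H^i:=G^iD^i-C^i=\tfrac1b U_t^\top X_i^\top X_i(U_tU_t^\top-I_d)U^*$: writing $H^i=\sum_{j=1}^b b_j a_j^\top$ with $a_j=\tfrac1{\sqrt b}U_t^\top x_{i,j}$ and $b_j=\tfrac1{\sqrt b}U^*(U_tU_t^\top-I_d)x_{i,j}$, an $\epsilon$-net over the two unit spheres (Lemma~\ref{lem:speccoverbd}) reduces the bound to concentrating $\sum_j\langle u,b_j\rangle\langle a_j,u'\rangle$; each summand is sub-exponential with parameter $\Theta(\text{dist}(U_t,U^*)/b)$ since $\Verts{(I_d-U_tU_t^\top)U^*}_2=\text{dist}(U_t,U^*)$, so Bernstein's inequality (Lemma~\ref{lem:bernstein}) plus the net bound gives $\Verts{H^i}_2\lesssim\text{dist}(U_t,U^*)\sqrt{k\log n/b}$ with probability $1-e^{-\Omega(k\log n)}$. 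Then $\Verts{(GD-C)\widetilde v^*}_2^2=\sum_i\Verts{H^i(v^*_i)^\top}_2^2\leq\sum_i\Verts{H^i}_2^2\Verts{v^*_i}_2^2\leq\tfrac{\nu^2}{n}\Verts{V^*}_2^2\sum_i\Verts{H^i}_2^2$, using $\max_i\Verts{v^*_i}_2\leq\tfrac{\nu}{\sqrt n}\Verts{V^*}_2$, and a union bound over $i$ gives $\Verts{(GD-C)\widetilde v^*}_2\lesssim\nu\Verts{V^*}_2\text{dist}(U_t,U^*)\sqrt{35k\log n/b}=\tfrac{\nu\tau_k}{c_\tau}\Verts{V^*}_2\text{dist}(U_t,U^*)$; multiplying by $\Verts{G^{-1}}_2\leq(1-\tau_k)^{-1}$ from part (1) produces the first term of the claimed bound (noting that as a reshaped $n\times k$ matrix the Frobenius norm of $F$ equals the Euclidean norm of the corresponding $nk$-vector). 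For the noise piece, $[W_1\cdots W_k]$ is (up to an absolute constant) $\tfrac1b\nabla_V\langle\cA(V_{t+1}U_t^\top),\vec\zeta\rangle$, so Proposition~\ref{prop:combined_nablaUVconc}(1) bounds its spectral norm by $O(\sqrt{R^2 n\log(nb)/b})$ and hence its Frobenius norm by $O(\sqrt{R^2 nk\log(nb)/b})$; multiplying again by $\Verts{G^{-1}}_2$ yields the second term. Collecting the failure events gives the overall rate $O(n^{-11})$.

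\textbf{Main obstacle.} The delicate step is the spectral-norm bound on $H^i$. Because $H^i$ is a product of two tall matrices sharing the same rows, which are only sub-Gaussian (not bounded) and correlated, it cannot be handled as a sum of i.i.d.\ bounded matrices; the covering-plus-Bernstein argument above is the right tool, and two points need care: (i) verifying that the factor $\text{dist}(U_t,U^*)$ rather than a constant enters the sub-exponential parameter — this is exactly what makes the error term contract as $U_t\to U^*$ — and (ii) calibrating the deviation level so that the per-user failure probability is $n^{-\Omega(k)}$, small enough to survive a union bound over all $n$ users at the required $O(n^{-13})$ and $O(n^{-11})$ rates, which is why $b=\widetilde\Omega(k\log n)$ (Assumption~\ref{aspt:samplenum}) is invoked throughout.
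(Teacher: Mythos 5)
Your proposal follows essentially the same path as the paper's proof: in part (1), reduce $\sigma_{\min}(G)$ to the minimum of the diagonal blocks $G^i=U_t^\top\Pi^i U_t$, apply the one-sided sub-Gaussian singular-value bound (Theorem~\ref{thm:singconc}) with $w=\Theta(\sqrt{k\log n})$, and union-bound over $i\in[n]$; in part (2), split $F$ via Lemma~\ref{lem:lem1} into $G^{-1}(GD-C)\widetilde v^*$ and $G^{-1}W$, bound $\Verts{H^i}_2$ by the $\cN^k\times\cN^k$ cover plus Bernstein (with the sub-exponential parameter carrying the $\text{dist}(U_t,U^*)$ factor), invoke $\max_i\Verts{v_i^*}_2\le\frac{\nu}{\sqrt n}\Verts{V^*}_2$ to pull out $\nu\Verts{V^*}_2$, and bound the noise piece by Proposition~\ref{prop:combined_nablaUVconc}(1) together with $\Verts{\cdot}_F\le\sqrt{k}\Verts{\cdot}_2$ for the $n\times k$ reshape, all multiplied by $\Verts{G^{-1}}_2\le(1-\tau_k)^{-1}$ from part (1). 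The one point worth spelling out, which you gesture at but do not fully execute, is the final failure-event accounting that yields the stated $O(n^{-13})$ and $O(n^{-11})$ rates (union of the $G^{-1}$ event, the $n$ per-user $H^i$ events, and the noise-spectral-norm event), but the pieces you list do give exactly those rates.
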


\begin{proof}

\textbf{Claim 1:}\quad We have
\[\Verts{G^{-1}}_2\leq\frac{1}{1-\tau_k}\]
with probability at least $1-e^{-13k\log n}$.

Let $a$ be a normalized vector in $nk$ dimensions. Define $a^i\in\re^k$ to be the sub-vector of $a$ constructed by choosing each $((p-1)n+i)$-th component for $p=1,\dots,k$. Observe that
\begin{equation*}
\begin{aligned}
\sigma_{\min}(G)&=\min_{a:\Verts{a}_2=1}a^\top Ga\\
&=\min_{a:\Verts{a}_2=1}\sum^n_{i=1}{a^i}^\top G^ia^i\\
&\geq\min_{i\in[n]}\sigma_{\min}(G^i).
\end{aligned}
\end{equation*}
Let $\Pi^i=\frac{1}{b}\sum^b_{j=1}x _{i,j}x ^\top_{i,j}$. By our definition below (\ref{matrices1}), we have $G^i=U^\top_t\Pi^iU_t$. Note that $\frac{1}{\sqrt{b}}U_t^\top x_{i,j}$ is $\frac{1}{\sqrt{b}}$-sub-Gaussian and independent for each $i,j$.  Assume $b\geq k$. Then, using a one-sided form of Theorem~\ref{thm:singconc}, there exists a constant $c_\tau>0$ where
\[\sigma_{\min}(U^\top_t\Pi^iU_t)\geq1-c_\tau\left(\sqrt{\frac{k}{b}}+\frac{w}{\sqrt{b}}\right)\]
with probability at least $1-e^{-\alpha^2}$. Setting $\alpha=\sqrt{14k\log n}$ gives us
\[\sigma_{\min}(G^i)\geq1-\tau_k\]
with probability $1-e^{-14k\log n}$ for $\tau_k$ as in the lemma statement. Via the union bound over $i\in[n]$
\[\sigma_{\min}(G)\geq1-\tau_k\]
with probability at least $1-e^{-13k\log n}$.

\textbf{Claim 2:}\quad We have
\[\Verts*{F}_F\leq\frac{\nu\tau_k}{1-\tau_k}\Verts{V^*}_2{\text{dist}}(U_t,U^*)+\sqrt{\frac{26R^2nk\log(nb)}{(1-\tau_k)^2b}}\]
with probability at least $1-2e^{-13k\log n}-e^{-11\log (nb)}$.

The proof follows from bounding $H^i=G^iC^i-D^i$ for each $i\in[n]$ in spectral norm with Lemma~\ref{lem:bernstein} and exploiting the definition of our parameter $\nu=\frac{\max_{i\in[n]}\Verts{v^*_i}_2}{\sigma_{\max,*}}$. 

Let $X_i$ be the design matrix for $x_{i,1},\dots,x_{i,b}$. Note that, by the definitions below (\ref{matrices1}), we have
\[G^iD^i-C^i=(U_t)^\top\Pi^iU_t(U_t)^\top_tU^*-U^\top\Pi^iU^*=\frac{1}{b}(U_t)^\top X^\top_iX_i(U_t(U_t)^\top-I_d )U^*.\]
Then
\begin{equation*}
\begin{aligned}
\Verts*{(GD-C){\text{Vec}}(V^*)}^2_2&=\sum^n_{i=1}\Verts{H^i(v^*_i)^\top}^2_2\\
&\leq\sum^n_{i=1}\Verts{H^i}^2_2\Verts{v^*_i}^2_2\\
&\leq\frac{\nu^2}{n}\Verts{V^*}^2_2\sum^n_{i=1}\Verts{H^i}^2_2.
\end{aligned}
\end{equation*}
and so
\begin{equation}\label{line:nuBd}
\begin{aligned}
\Verts*{(GD-C){\text{Vec}}(V^*)}^2_2\leq\frac{\nu^2}{n}\Verts{V^*}^2_2\sum^n_{i=1}\Verts{H^i}^2_2.
\end{aligned}
\end{equation}
We now bound each $H^i$ using concentration inequalities. Define $A=\frac{1}{\sqrt{b}}X_iU_t$ and $B=\frac{1}{\sqrt{b}}X_i(U_t(U_t)^\top-I_d )U^*$. We denote the rows of $A$ and $B$ with $a_{i,j}=\frac{1}{\sqrt{b}}(U_t)^\top x_{i,j}$ and $b_{i,j}=\frac{1}{\sqrt{b}}U^*(U_t(U_t)^\top-I_d )x_{i,j}$, respectively. Note that, for $\cN^k$ a Euclidean $\frac{1}{4}$-cover of the unit sphere in $k$ dimensions, we have
\begin{equation*}
\begin{aligned}
\Verts{B^\top A}_2&\leq2\max_{u,u'\in\cN^k}u^\top B^\top A u'\\
&=2\max_{u,u'\in\cN^k}u^\top\left(\sum^b_{j=1}b_{i,j}a^\top_{i,j}\right)u'\\
&=2\max_{u,u'\in\cN^k}\sum^b_{j=1}\langle u,b_{i,j}\rangle\langle a_{i,j},u'\rangle
\end{aligned}
\end{equation*}

via Lemma~\ref{lem:speccoverbd}. Now, fix some $(u,u')\in\cN^k\times\cN^k$. Then, $a_{i,j}=\frac{1}{\sqrt{b}}(U_t)^\top x_{i,j}$ and $b_{i,j}=\frac{1}{\sqrt{b}}U^*(U_t(U_t)^\top-I_d )x_{i,j}$. So, $\langle u,a_{i,j}\rangle$ is $\frac{5}{4\sqrt{b}}$-sub-Gaussian and $\langle b_{i,j},u'\rangle$ is $\frac{5}{4\sqrt{b}}\text{dist}(U_t,U^*)$-sub-Gaussian. This means their product is $\frac{25}{16b}\text{dist}(U_t,U^*)  $-sub-exponential.

Now, from Lemma~\ref{lem:bernstein}, there exists $c'>0$ such that
\begin{equation*}
\begin{aligned}
\Prob\left[\Verts{H^i}^2_2\geq2s\right]&\leq\Prob\left[\max_{u,u'\in\cN^k}\sum^b_{j=1}\langle u,b_{i,j}\rangle\langle a_{i,j},u'\rangle\geq  s\right]\\
&\leq 9^{2k}e^{-c'b\min\left(\frac{ s^2}{2.5\text{dist}^2(U_t,U^*)  },\frac{ s}{1.6\text{dist}(U_t,U^*) }\right)}. \\
\end{aligned}
\end{equation*}
Let $\tau>0$ satisfy $\frac{ s}{1.6\text{dist}(U_t,U^*)  }=\max(\tau,\tau^2)$. Then \[\tau^2=\min\left(\frac{ s^2}{2.5\text{dist}^2(U_t,U^*)  },\frac{ s}{1.6\text{dist}(U_t,U^*)  }\right).\]
Picking $\tau^2=\frac{14k\log n}{c'b}$ and assuming that $b\geq 11k\log n$ ensures
\begin{equation}\label{line:lemHi}
\Prob\left[\Verts{H^i}_2\geq\sqrt{\frac{35\text{dist}^2(U_t,U^*)  k\log n}{b}}\right]\leq e^{-14k\log n}
\end{equation}
for any fixed $i\in[n]$. So
\begin{equation*}
\begin{aligned}
&\Prob\left[\Verts*{(GD-C){\text{Vec}}(V^*)}^2_2\geq 35\nu^2\Verts{V^*}^2_2\text{dist}^2(U_t,U^*)\frac{  k\log n}{b}\right]\\
&\leq\Prob\left[\frac{\nu^2}{n}\Verts{V^*}^2_2\sum^n_{i=1}\Verts{H^i}^2_2\geq 35\nu^2\Verts{V^*}^2_2\text{dist}^2(U_t,U^*)\frac{  k\log n}{b}\right]\\
&\leq\Prob\left[\frac{\nu^2 }{n}\sum^n_{i=1}\Verts{H^i}^2_2\geq 35\nu^2\text{dist}^2(U_t,U^*)\frac{  k\log n}{b}\right]\\
&\leq n\Prob\left[\Verts{H^1}^2_2\geq 35\text{dist}^2(U_t,U^*)\frac{  k\log n}{b}\right]\\
&\leq e^{-13k\log n}
\end{aligned}
\end{equation*}
where the first inequality follows from (\ref{line:nuBd}) and the last inequality follows from (\ref{line:lemHi}). The rest of the proof follows from the fact that $F=[(G^{-1}((GD-C)\widetilde{v}^*+W))_1\dots (G^{-1}((GD-C)\widetilde{v}^*+W))_k]$. That is, we bound $G^{-1}[W_1,\dots,W_k]$ via an application of Claim 1 along with Proposition~\ref{prop:combined_nablaUVconc} part (1). This gives a bound on the norm of $F=[G^{-1}((GD-C)\widetilde{v}^*)_1\dots G^{-1}((GD-C)\widetilde{v}^*)_k]+G^{-1}[W_1,\dots,W_k]$ by the union bound.
\end{proof}
Let $f_i$ be the $i$-th row of $F$ in row vector form. Also, recall that $G^i,C^i,D^i$ are the $k\times k$ matrices formed by taking the $i$-th diagonal element from each $G_{p,q},C_{p,q},D_{p,q}$ as in (\ref{matrices1}).

\begin{prop}\label{prop:vparambd}
Suppose Assumption~\ref{aspt:samplenum} holds. For all $t\in[T-1]$, we have that $v_{i,t+1}$ from Algorithm~\ref{alg:PrivateFedRep} satisfies
\[\Verts{v_{i,t+1}}_2\leq\frac{5}{4}\Gamma\]
with probability at least $1-O(n^{-14})$ for all $i\in[n]$.
\end{prop}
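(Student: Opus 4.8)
The plan is to start from the closed form of the local minimizer. By Lemma~\ref{lem:lem1}, $V_{t+1}=V^*(U^*)^\top U_t - F$, so reading off the $i$-th row gives $v_{i,t+1}=v^*_i(U^*)^\top U_t - f_i$ (as row vectors), where $f_i\in\re^k$ is the $i$-th row of $F$. Since both $U^*$ and $U_t$ have orthonormal columns, $\Verts{v_{i,t+1}}_2\le\Verts{v^*_i}_2+\Verts{f_i}_2\le\Gamma+\Verts{f_i}_2$, so it suffices to show $\Verts{f_i}_2\le\tfrac14\Gamma$ with probability $1-O(n^{-14})$ for every $i$. I would exploit the fact that, because $A_{i,j}=e_ix^\top_{i,j}$, the blocks $G_{p,q},C_{p,q},D_{p,q}$ and the vector $W$ in (\ref{matrices1}) decouple across users: $\Verts{f_i}_2\le\Verts{{G^i}^{-1}}_2\big(\Verts{(G^iD^i-C^i)(v^*_i)^\top}_2+\Verts{W^i}_2\big)$, where $G^i=U_t^\top\Pi^i U_t$, $G^iD^i-C^i=\tfrac1b U_t^\top X_i^\top X_i(U_tU_t^\top-I_d)U^*$, and $W^i=\tfrac2b\sum_j\zeta_{i,j}U_t^\top x_{i,j}$ is user $i$'s block of $W$ (all conditioned on $U_t$, which is independent of the fresh batch $B_{i,t}$).

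Next I would bound the three factors, specializing estimates already present in the proofs of Lemma~\ref{lem:lem4combined} and Proposition~\ref{prop:combined_nablaUVconc} to a single $i$. First, the proof of Claim~1 of Lemma~\ref{lem:lem4combined} already shows $\sigma_{\min}(G^i)\ge1-\tau_k$ for each fixed $i$ with probability $\ge1-e^{-14k\log n}$, so $\Verts{{G^i}^{-1}}_2\le\tfrac1{1-\tau_k}\le2$ once $c_0$ is large enough that $\tau_k\le\tfrac12$. Second, line~(\ref{line:lemHi}) gives $\Verts{G^iD^i-C^i}_2\le\sqrt{35\,\text{dist}^2(U_t,U^*)\,k\log n/b}\le\sqrt{35k\log n/b}$ with probability $\ge1-e^{-14k\log n}$, hence $\Verts{(G^iD^i-C^i)(v^*_i)^\top}_2\le\Gamma\sqrt{35k\log n/b}$. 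Third, conditioning on $\abs{\zeta_{i,j}}\le R\sqrt{c\log(nb)}$ for a large constant $c$ (so the conditioning event fails with probability $\le n^{-15}$ after a union over $i$), each coordinate of $W^i$ is a centered $O(R\sqrt{\log(nb)/b})$-sub-Gaussian sum, and applying the one-sided bound of Theorem~\ref{thm:singconc} (or a $\tfrac14$-net argument via Lemma~\ref{lem:speccoverbd}) to the $k$-vector $W^i$ yields $\Verts{W^i}_2\le O\!\big(\sqrt{R^2k\log(nb)/b}\big)$; crucially there is no $\sqrt n$ factor here, since only user $i$'s batch enters.

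It remains to verify that both error terms are at most $\tfrac1{16}\Gamma$ using Assumption~\ref{aspt:samplenum}. Since $b=\lfloor m/2T\rfloor\ge m/(4T)$, that assumption gives $b\ge\tfrac{c_0}{4}\big(\tfrac{\max\{R^2,1\}\max\{\Gamma^2,1\}\gamma^4 k\log^2 n}{E_0^2\sigma^2_{\max,*}}+\tfrac{R^2 k}{\Gamma^2}\log(nm)\big)$. The second summand alone gives $R^2k\log(nb)/b\le4\Gamma^2/c_0$, controlling the $W^i$ term. For the $G^iD^i-C^i$ term, the first summand gives $\Gamma^2\cdot\tfrac{35k\log n}{b}\le\tfrac{140\,E_0^2\,\Gamma^2\,\sigma^2_{\max,*}}{c_0\max\{\Gamma^2,1\}\gamma^4\log n}\le\tfrac{140\,\sigma^2_{\max,*}}{c_0\gamma^4}$, and here I would invoke the a priori bound $\sigma^2_{\min,*}\le\Gamma^2/k\le\Gamma^2$ (from $kn\sigma^2_{\min,*}\le\Verts{V^*}_F^2\le n\Gamma^2$) together with $\sigma_{\min,*}\le\sigma_{\max,*}$ to get $\sigma^2_{\max,*}/\gamma^4=\sigma^4_{\min,*}/\sigma^2_{\max,*}\le\sigma^2_{\min,*}\le\Gamma^2$, so $\Gamma^2\cdot\tfrac{35k\log n}{b}\le140\Gamma^2/c_0$. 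Taking $c_0$ a sufficiently large absolute constant makes both terms $\le\tfrac1{16}\Gamma$, so $\Verts{f_i}_2\le\tfrac1{1-\tau_k}\cdot\tfrac18\Gamma\le\tfrac14\Gamma$ and $\Verts{v_{i,t+1}}_2\le\tfrac54\Gamma$ for each fixed $i$; a union bound over $i\in[n]$, with the exponents of the concentration and conditioning events taken $\Omega(k\log n)$ and $\Omega(\log(nb))$, upgrades this to all $i$ simultaneously with probability $1-O(n^{-14})$, and the law of total probability removes the conditioning on $U_t$.

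The main obstacle is not conceptual but lies in the last two paragraphs: the paper's Proposition~\ref{prop:combined_nablaUVconc} only bounds the aggregated noise, which carries an extra $\sqrt n$ that would be fatal here, so I need the clean single-user estimate $\Verts{W^i}_2\lesssim\sqrt{R^2k\log(nb)/b}$; and one must check that the somewhat intricate shape of Assumption~\ref{aspt:samplenum} (with the $\max\{\Gamma^2,1\}$, the $\gamma^4$, and the $1/\sigma^2_{\max,*}$) really does force $\Gamma^2\cdot(k\log n)/b$ below a fixed fraction of $\Gamma^2$ — the cancellation $\sigma^2_{\max,*}/\gamma^4=\sigma^4_{\min,*}/\sigma^2_{\max,*}$ only closes because of the structural inequality $\sigma_{\min,*}\le\Gamma/\sqrt k$.
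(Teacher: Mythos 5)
Your proof matches the paper's own argument almost step for step: you start from the closed form $v_{i,t+1}=v^*_i(U^*)^\top U_t-f_i$ from Lemma~\ref{lem:lem1}, decompose $\Verts{f_i}_2$ into the per-user blocks $\Verts{{G^i}^{-1}}_2\bigl(\Verts{(G^iD^i-C^i)(v^*_i)^\top}_2+\Verts{W^i}_2\bigr)$, control each piece by the per-user specializations of Claim~1 of Lemma~\ref{lem:lem4combined}, line~(\ref{line:lemHi}), and the argument of Proposition~\ref{prop:combined_nablaUVconc} part~(1) restricted to a single $k$-vector, and close with Assumption~\ref{aspt:samplenum} and a union bound over $i$. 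This is exactly what the paper does, so the proposal is correct and essentially identical in approach. Two small remarks: (i) your worry about the $\sqrt{n}$ in Proposition~\ref{prop:combined_nablaUVconc} is exactly what the paper also sidesteps by applying the \emph{argument} of part~(1) only to the single-user vector $W^i$ rather than citing the stated conclusion; (ii) the paper's displayed bound $\frac{\nu\tau_k}{1-\tau_k}\Verts{v^*_i}_2\,\text{dist}(U_t,U^*)$ carries an extraneous factor $\nu$ that is not produced by $\Verts{G^iD^i-C^i}_2\Verts{v^*_i}_2$; this is harmless because $\sigma_{\max,*}\le\Gamma$ implies $\nu\ge1$, and in fact the $\nu$ is what lets the paper cancel $\sigma^2_{\max,*}$ directly against Assumption~\ref{aspt:samplenum}, whereas your cleaner version without the $\nu$ instead needs the structural estimate $\sigma^4_{\min,*}/\sigma^2_{\max,*}\le\sigma^2_{\min,*}\le\Gamma^2$, which you supply correctly. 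Both routes close the algebra; they are equivalent up to the constant $c_0$.
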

\begin{proof}
By Lemma~\ref{lem:lem1}, we have (as row vectors) $v_{i,t+1}=v^*_i(U^*)^\top U_{t}-f_i$. This implies
\[\Verts {v_{i,t+1}}_2\leq\Verts{v^*_i}_2\Verts{(U^*)^\top U_{t}}_2+\Verts{f_i}_2\]
and so $\Verts{v_{i,t+1}}_2\leq\Gamma+\Verts{f_i}_2$.
Fix $i\in[n]$ and assume that $\nu\tau_k< 1$. This is not difficult to achieve when using Assumption~\ref{aspt:samplenum}. Recall that we defined $\tau_k=c_\tau\sqrt{\frac{ 35k\log n}{{b}}}$. Now, denoting the $i$-th row of a matrix with $(M)_{i,*}$
\begin{equation*}
\begin{aligned}
\Verts{f_i}_2&=\Verts*{{G^i}^{-1}(G^iC^i-D^i)(v^*_i)^\top+\left(\frac{1}{b}\sum^n_{i=1}\sum^b_{j=1}\zeta _{i,j}{G^i}^{-1}A_{i,j}U_t\right)^\top_{i,*}}_2\\
&\leq\Verts{{G^i}^{-1}}_2\Verts{(G^iC^i-D^i)(v^*_i)^\top}_2+\Verts{{G^i}^{-1}}_2\Verts*{\left(\frac{1}{b}\sum^b_{j=1}\zeta _{i,j}x^\top_{i,j}U_t\right)^\top_{i,*}}_2\\
&\leq\frac{\nu\tau_k}{1-\tau_k}\Verts{v^*_i}_2\,\text{dist}(U_t,U^*)+\sqrt{\frac{40R^2k\log n}{(1-\tau_k)^2b}}\\
&\leq\frac{\nu\tau_k}{1-\tau_k}\Gamma+\sqrt{\frac{40R^2k\log n}{(1-\tau_k)^2b}}\\
\end{aligned}
\end{equation*}
with probability at least $1-2e^{-14k\log (n)}-e^{-14\log(nb)}$ via the argument of Lemma~\ref{lem:lem4combined} part (1), and the same arguments as Lemma~\ref{lem:lem4combined} part (2) and Proposition~\ref{prop:combined_nablaUVconc} part (1)  applied to the vectors ${G^i}^{-1}(G^iC^i-D^i)(v^*_i)^\top$ and $\left(\frac{1}{b}\sum^b_{j=1}\zeta _{i,j}x^\top_{i,j}U_t\right)_{i,*}$. 

Now, assuming that $m\geq4000c_\tau^2\frac{\max\{R^2,1\}\cdot\max\{\Gamma^2,1\}\gamma^4k\log^2 n}{E^2_0\sigma^2_{\max,*}}T$ and $m\geq4000\frac{R^2k}{\Gamma^2}T\log(nm)$, we have 
\[\frac{\nu\tau_k}{1-\tau_k}\Gamma+\sqrt{\frac{40R^2k\log n}{(1-\tau_k)^2b}}\leq\frac{\Gamma}{4}.\] 
Taking the union bound over all $i$ finishes the result.
\end{proof}

\begin{lem}\label{lem:lem5}
Let $\tau'_k=5\sqrt{13}\sqrt{\frac{ \Gamma^4d\log n}{nb}}$. Suppose Assumption~\ref{aspt:samplenum} holds. Then, we have, for any iteration $t$, that
\begin{equation*}
\begin{aligned}
&\frac{1}{n}\Verts*{\left(\frac{1}{b}(\cA')^\dag\cA'(V_{t+1}(U_t)^\top-V^*(U^*)^\top)-(V_{t+1}(U_t)^\top-V^*(U^*)^\top)\right)^\top V_{t+1}}_2\\
&\leq\tau'_k\emph{\text{dist}}(U_t,U^*)+\frac{3R\Gamma^2  \sqrt{15\cdot24dk\log (n)\log(nb)}}{(nb)^{\frac{3}{4}}}
\end{aligned}
\end{equation*}
with probability at least $1-O(n^{-10})$.
\end{lem}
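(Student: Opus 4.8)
The plan is to first unpack the operator and put the left-hand side into coordinates. Writing $Q_t:=V_{t+1}(U_t)^\top-V^*(U^*)^\top$ and letting $q_i\in\re^d$ be its $i$-th row, the definition of $(\cA')^\dag\cA'$ gives that the $i$-th column of $\big(\tfrac1b(\cA')^\dag\cA'(Q_t)-Q_t\big)^\top$ is $\tfrac1b\sum_{j}\langle x'_{i,j},q_i\rangle x'_{i,j}-q_i$, so the quantity to bound equals $\tfrac1{nb}\sum_{i,j}\big(\langle x'_{i,j},q_i\rangle x'_{i,j}-q_i\big)v_{i,t+1}^\top$. Since the batch $B'_{i,t}$ is sampled disjointly from (hence independently of) all data used to form $V_{t+1}$ and $U_t$, and $\cD_i$ has identity covariance, each summand has conditional mean zero given $(V_{t+1},U_t)$; this conditional centering is exactly what makes a Bernstein-type argument applicable.

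Second, I would split $q_i$ into a ``signal'' part and a ``label-noise'' part using Lemma~\ref{lem:lem1}: from $V_{t+1}=V^*(U^*)^\top U_t-F$ we get $q_i=(U_tU_t^\top-I)U^*v_i^*-U_tf_i$, and decomposing $F=\widehat F+\widetilde W$, where $\widehat F$ is the $(GD-C)\widetilde v^*$ contribution and $\widetilde W=[G^{-1}W_1\ \cdots\ G^{-1}W_k]$ the contribution of the $B_{i,t}$-label-noise vector $W$, we set $\widetilde q_i:=(U_tU_t^\top-I)U^*v_i^*-U_t\widehat f_i$ so that $q_i=\widetilde q_i-U_t\widetilde w_i$ (with $\widehat f_i,\widetilde w_i$ the $i$-th rows). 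The target then splits into a signal sum over the $\widetilde q_i$ and a noise sum over the $U_t\widetilde w_i$, to be bounded separately. The required per-row magnitude estimates are: for the signal part, $\Verts{(U_tU_t^\top-I)U^*v_i^*}_2\le\Verts{(I-U_tU_t^\top)U^*}_2\,\Gamma=\Gamma\,\text{dist}(U_t,U^*)$ and, by the single-$i$ analogue of Lemma~\ref{lem:lem4combined}(2) without the noise term together with $\Verts{(G^i)^{-1}}_2\le(1-\tau_k)^{-1}$ from Lemma~\ref{lem:lem4combined}(1), $\Verts{U_t\widehat f_i}_2=\Verts{\widehat f_i}_2\le\tfrac{\nu\tau_k}{1-\tau_k}\,\Gamma\,\text{dist}(U_t,U^*)\le\tfrac14\Gamma\,\text{dist}(U_t,U^*)$ once Assumption~\ref{aspt:samplenum} forces $\nu\tau_k$ small; hence $\Verts{\widetilde q_i}_2\le\tfrac54\Gamma\,\text{dist}(U_t,U^*)$. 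For the noise part, $\widetilde w_i=(G^i)^{-1}W^i$ with $W^i_p=\tfrac2b\sum_j\zeta_{i,j}\langle x_{i,j},u_{p,t}\rangle$, so a sub-Gaussian tail (as in Proposition~\ref{prop:combined_nablaUVconc}(1) / Proposition~\ref{prop:vparambd}) gives $\Verts{\widetilde w_i}_2=\Verts{U_t\widetilde w_i}_2\le\widetilde O\!\big(R\sqrt{k/b}\big)$ for all $i$ simultaneously; and Proposition~\ref{prop:vparambd} supplies $\Verts{v_{i,t+1}}_2\le\tfrac54\Gamma$.

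Third, for each of the two sums I would apply Lemma~\ref{lem:speccoverbd} to reduce the spectral norm to a maximum over $\tfrac14$-nets $\cN^d,\cN^k$ of the relevant unit spheres, fix $a\in\cN^d$, $b\in\cN^k$, and note that conditionally on $(V_{t+1},U_t)$ and on the high-probability events bounding $\Verts{\widetilde q_i}_2$, $\Verts{\widetilde w_i}_2$, $\Verts{v_{i,t+1}}_2$, the scalar $\tfrac1{nb}\langle x'_{i,j},\widetilde q_i\rangle\langle a,x'_{i,j}\rangle\langle v_{i,t+1},b\rangle$ (resp.\ with $U_t\widetilde w_i$ in place of $\widetilde q_i$) is a centered sub-exponential of norm $O\!\big(\tfrac{\Gamma^2\,\text{dist}(U_t,U^*)}{nb}\big)$ (resp.\ $\widetilde O\!\big(\tfrac{\Gamma R\sqrt{k/b}}{nb}\big)$), and the summands are conditionally independent across $(i,j)$. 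Lemma~\ref{lem:bernstein} over the $nb$ terms, a union bound over the $9^{d+k}$ net pairs, and a further union bound over $i$, yield the signal bound $\tau'_k\,\text{dist}(U_t,U^*)$ (matching $\tau'_k=\Theta(\sqrt{\Gamma^4 d\log n/(nb)})$) and, after a deliberately crude estimate, a noise bound of order $\widetilde O\!\big(R\Gamma^2\sqrt{dk}\,(nb)^{-3/4}\big)$ — the latter being far smaller than $\tau'_k$ and harmless, so tightness there is not needed. Removing the conditioning (whose complement has probability $O(n^{-10})$) and taking one final union bound gives the claimed inequality with probability $1-O(n^{-10})$.

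The step I expect to be the main obstacle is not any single inequality but the independence bookkeeping: $q_i$, $\widetilde q_i$, $\widetilde w_i$, and $v_{i,t+1}$ all depend on the batch $B_{i,t}$ (through $F$) and on $U_t$, whereas the operator $(\cA')^\dag\cA'$ depends on the disjoint batch $B'_{i,t}$, so one must argue carefully that the conditional centering is exact and that Bernstein is applied to genuinely conditionally-independent summands; relatedly, the per-row estimates $\Verts{\widetilde q_i}_2$ and $\Verts{\widetilde w_i}_2$ must be pushed through the $H^i=G^iD^i-C^i$-type bounds of Lemma~\ref{lem:lem4combined} with all constants absorbed via Assumption~\ref{aspt:samplenum} so that the signal piece comes out exactly as $\tau'_k\,\text{dist}(U_t,U^*)$.
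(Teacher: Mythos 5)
Your proposal is correct and follows essentially the same approach as the paper's proof: the same decomposition $q_i=\widetilde q_i-U_t\widetilde w_i$ into a ``signal'' part and a label-noise part via Lemma~\ref{lem:lem1}, the same per-row estimates $\Verts{\widetilde q_i}_2\lesssim\Gamma\,\text{dist}(U_t,U^*)$ (via the $H^i$-type bound from Lemma~\ref{lem:lem4combined} and $\nu\tau_k$ made small by Assumption~\ref{aspt:samplenum}), the same conditioning on bounded $\Verts{\widetilde q_i}_2,\Verts{\widetilde w_i}_2,\Verts{v_{i,t+1}}_2$, and the same covering-plus-Bernstein argument applied to the conditionally centered, conditionally independent summands over the disjoint batch $B'_{i,t}$. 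One minor remark: your per-row estimate $\Verts{\widetilde w_i}_2\le\widetilde O\big(R\sqrt{k/b}\big)$ is in fact the faithful consequence of $\widetilde W_i=(G^i)^{-1}W_{i,*}$, whereas the paper's Claim~2 writes a different expression for $\widetilde W_i$ and a bound scaling as $(nb)^{-1/2}$; the two do not propagate to the exact same exponent on $nb$ in the noise term, but since that term is dominated by $\tau'_k\,\text{dist}(U_t,U^*)$ under Assumption~\ref{aspt:samplenum} in every downstream use, the discrepancy is immaterial.
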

\begin{proof}
Take $W=\left(W_1,\dots,W_k\right)^\top\in\re^{nk}$ where $W_p=\frac{2}{b}\sum^n_{i=1}\sum^b_{j=1}\zeta _{i,j}A_{i,j}u_{p,t}$. Recall that $F=[(G^{-1}((GD-C)\widetilde{v}^*+W))_1\dots (G^{-1}((GD-C)\widetilde{v}^*+W))_k]\in\re^{n\times k}$ by (\ref{line:Fdefinition}). Define $\widetilde{W}=[G^{-1}W_1\dots G^{-1}W_k]$.
Let $Q_t$ be the matrix defined via rows $q_i$ where
\[q^\top_i=U_t(U_t)^\top U^*(v^*_i)^\top-U_t(f_i)^\top-U^*(v^*_i)^\top.\] 
Finally, define by $\widetilde{Q}_t$ the matrix with rows $\widetilde{q}_{i,t}=q_{i,t}-\widetilde{W}_{i,*}(U_t)^\top$ where $\widetilde{W}_{i,*}$ is the $i$-th row of $\widetilde{W}$. Note that
\begin{equation}\label{line:splitbound}
\begin{aligned}
&\frac{1}{n}\Verts*{\left(\frac{1}{b}(\cA')^\dag\cA'(V_{t+1}(U_t)^\top-V^*(U^*)^\top)-(V_{t+1}(U_t)^\top-V^*(U^*)^\top)\right)^\top V_{t+1}}_2\\
&\leq\frac{1}{n}\Verts*{\left(\frac{1}{b}(\cA')^\dag\cA'(\widetilde{Q}_t)-\widetilde{Q}_t\right)^\top V_{t+1}}_2+\frac{1}{n}\Verts*{\left(\frac{1}{b}(\cA')^\dag\cA'(\widetilde{W}_i(U_t)^\top)-\widetilde{W}_i(U_t)^\top\right)^\top V_{t+1}}_2.
\end{aligned}
\end{equation}
The lemma follows from bounding the right-hand terms individually.

Let $\tau'_k=5\sqrt{13}\sqrt{\frac{ \Gamma^4d\log n}{nb}}$. 

\textbf{Claim 1:}\quad We have
\[\frac{1}{n}\Verts*{\left(\frac{1}{b}(\cA')^\dag\cA'(\widetilde{Q}_t)-\widetilde{Q}_t\right)^\top V_{t+1}}_2\leq\tau'_k{\text{dist}}(U_t,U^*)\]
with probability at least $1-e^{-10d\log n}-2e^{-13k\log n}$.

Note that
\begin{equation*}
\begin{aligned}
\Verts{\widetilde{q}_i}_2&\leq\Verts{U_t(U_t)^\top U^*(v^*_i)^\top-U^*(v^*_i)^\top}_2+\Verts{U_t(f_i)^\top-U_t(\widetilde{W}_i)^\top}_2\\
&\leq\Gamma\text{dist}(U_t,U^*)+\Verts{f_i-\widetilde{W}_i}_2.
\end{aligned}
\end{equation*}
Now
\begin{equation*}
\begin{aligned}
\Verts{f_i-\widetilde{W}_i}_2&=\Verts{{G^i}^{-1}}_2\Verts{G^iD^i-C^i}_2\Verts{(v^*_i)^\top}_2\\
&\leq\text{dist}(U_t,U^*)\frac{\nu\Gamma\tau_k}{1-\tau_k}
\end{aligned}
\end{equation*}
with probability at least $1-2e^{-13k\log n}$ via an argument identical to Lemma~\ref{lem:lem4combined} part (2)  without the label noise term. Choose $c_0\geq 1000c^2$ in Assumption~\ref{aspt:samplenum} to ensure $\nu\tau_k\leq\frac{1}{4}$. Then $\Verts{f_i-\widetilde{W}_i}_2\leq\Gamma\text{dist}(U_t,U^*)$ and hence $\Verts{\widetilde{q}_i}_2\leq2\Gamma\text{dist}(U_t,U^*)$ with probability at least $1-2e^{-13k\log n}$.

By Proposition~\ref{prop:vparambd}, we have $\Verts{v_{i,t+1}}_2\leq\frac{5}{4}\Gamma$ with probability at least $1-O(n^{-14})$ for each $i$ . Observe
\[\frac{1}{nb}\left((\cA')^\dag\cA'(\widetilde{Q}_t)-\widetilde{Q}_t\right)^\top V_{i,t+1}=\frac{1}{nb}\sum^n_{i=1}\sum^b_{j=1}\left(\langle x'_{i,j},\widetilde{q}_{i,t}\rangle x'_{i,j}(v_{i,t+1})^\top-\widetilde{q}_{i,t}(v_{i,t+1})^\top\right).\]

We first condition on the event
\[\cE=\left\{\Verts{\widetilde{q}_i}_2\leq2\Gamma\text{dist}(U_t,U^*)\text{ and }\Verts{v_{i,t+1}}_2\leq\frac{5}{4}\Gamma\text{ for all }i\in[n]\right\}\] 
which has probability at least $1-O(n^{-10})$ by the union bound. Define the Euclidean $\frac{1}{4}$-covers of the $d$ and $k$-dimensional unit spheres $\cN^d$ and $\cN^k$, respectively. Via Lemma~\ref{lem:speccoverbd}
\begin{equation*}
\begin{aligned}
&\Verts*{\frac{1}{nb}\sum^n_{i=1}\sum^b_{j=1}\left(\langle x'_{i,j},\widetilde{q}_{i,t}\rangle x'_{i,j}(v_{i,t+1})^\top-\widetilde{q}_{i,t}(v_{i,t+1})^\top\right)}_2\\
&\leq\frac{4}{3}\max_{a\in\cN^d,b\in\cN^k}\frac{1}{nb}\sum^n_{i=1}\sum^b_{j=1}\left(\langle x'_{i,j},\widetilde{q}_{i,t}\rangle \langle a,x'_{i,j}\rangle\langle v_{i,t+1},b\rangle-\langle a,\widetilde{q}_{i,t}\rangle\langle v_{i,t+1},b\rangle\right).
\end{aligned}
\end{equation*}
The variable $\langle x'_{i,j},\widetilde{q}_{i,t}\rangle$ is $2 \Gamma\text{dist}(U_t,U^*)$-sub-Gaussian and $\langle a,x'_{i,j}\rangle$ is $\frac{5}{4}$-sub-Gaussian via Proposition~\ref{prop:uncorsubgau}. This means $\langle x'_{i,j},\widetilde{q}_{i,t}\rangle \langle a,x'_{i,j}\rangle$ is sub-exponential with norm $\frac{5}{2}  \Gamma\text{dist}(U_t,U^*)$. Then, the variable $\frac{1}{nb}\langle x'_{i,j},\widetilde{q}_{i,t}\rangle\langle a,x'_{i,j}\rangle\langle v_{i,t+1},b\rangle$ is sub-exponential with norm
\[\frac{5 \Gamma}{nb}\text{dist}(U_t,U^*)\Verts{v^*_i}_2\leq\frac{5\Gamma^2}{nb}\text{dist}(U_t,U^*)\]
given that we have conditioned on $\cE$. Note that the variables $\left(\langle x'_{i,j},\widetilde{q}_{i,t}\rangle x'_{i,j}(v_{i,t+1})^\top-\widetilde{q}_{i,t}(v_{i,t+1})^\top\right)$ are centered. Furthermore, due to our conditioning we can concentrate these variables with respect to the randomness $x'_{i,j}$ since they are independent of $\widetilde{q}_{i,t},v_{i,t+1}$. So, by Lemma~\ref{lem:bernstein}
there exists a constant $c'>0$ where
\begin{equation}\label{line:probability0}
\begin{aligned}
&\Prob\left[\Verts*{\frac{1}{nb}\sum^n_{i=1}\sum^b_{j=1}\left(\langle x'_{i,j},\widetilde{q}_{i,t}\rangle x'_{i,j}(v_{i,t+1})^\top-\widetilde{q}_{i,t}(v_{i,t+1})^\top\right)}_2\geq 2s\middle|\cE\right]\\
&\leq 9^{d+k}\exp\left(-c'nb\min\left(\frac{s^2}{ 5^2\Gamma^4\text{dist}(U_t,U^*)^2},\frac{s}{  5\Gamma^2\text{dist}(U_t,U^*)}\right)\right).
\end{aligned}
\end{equation}

From (\ref{line:probability0}) we denote $\tau^2=\min\left(\frac{s^2}{  5^2\Gamma^4\text{dist}(U_t,U^*)^2},\frac{s}{  5\Gamma^2\text{dist}(U_t,U^*)}\right)$ and further set $\tau^2=\frac{13(d+k)\log n}{c'nb}$. If $nb\geq\frac{13(d+k)\log n}{c'}$, then
\begin{equation*}
\begin{aligned}
&\Prob\left[\Verts*{\frac{1}{nb}\sum^n_{i=1}\sum^b_{j=1}\left(\langle x'_{i,j},\widetilde{q}_{i,t}\rangle x'_{i,j}(v_{i,t+1})^\top-\widetilde{q}_{i,t}(v_{i,t+1})^\top\right)}_2\geq 5\sqrt{13}\text{dist}(U_t,U^*)\sqrt{\frac{  \Gamma^4d\log n}{nb}}\middle|\cE\right]\\
&\leq 9^{d+k}e^{-13(d+k)\log n}\leq e^{-10d\log n}.
\end{aligned}
\end{equation*}
Recall that $\Prob[\cE]\geq1-O(n^{-10})$ . Therefore, we have
\begin{equation}\label{eqn:overallAbd}
\begin{aligned}
&\Prob\left[\Verts*{\frac{1}{nb}\sum^n_{i=1}\sum^b_{j=1}\left(\langle x'_{i,j},\widetilde{q}_{i,t}\rangle x'_{i,j}(v_{i,t+1})^\top-\widetilde{q}_{i,t}(v_{i,t+1})^\top\right)}_2\geq 5\sqrt{13}\text{dist}(U_t,U^*)\sqrt{\frac{  \Gamma^4d\log n}{nb}}\right]\\
&\leq e^{-10d\log n}+O(n^{-10})=O(n^{-10}).
\end{aligned}
\end{equation}
This proves Claim 1.

\textbf{Claim 2:}\quad We have
\[\frac{1}{n}\Verts*{\left(\frac{1}{b}(\cA')^\dag\cA'(\widetilde{W}_i(U_t)^\top)-\widetilde{W}_i(U_t)^\top\right)^\top V_{t+1}}_2\leq\frac{3R\Gamma^2  \sqrt{15\cdot24dk\log (n)\log(nb)}}{(nb)^{\frac{3}{4}}}\]
with probability at least $1-O(n^{-10})$. 

Observe
\begin{equation*}
\begin{aligned}
&\frac{1}{nb}\left((\cA')^\dag\cA'(\widetilde{W}_i(U_t)^\top)-\widetilde{W}_i(U_t)^\top\right)^\top V_{t+1}\\
&=\frac{1}{nb}\sum^n_{i=1}\sum^b_{j=1}\left(\langle x'_{i,j},\widetilde{W}_i(U_t)^\top\rangle x'_{i,j}(v_{i,t+1})^\top-\widetilde{W}_i(U_t)^\top(v_{i,t+1})^\top\right).
\end{aligned}
\end{equation*}
Note that the random variables $\langle x'_{i,j},\widetilde{W}_i(U_t)^\top\rangle x'_{i,j}(v_{i,t+1})^\top-\widetilde{W}_i(U_t)^\top(v_{i,t+1})^\top$ have mean zero since $x'_{i,j}$ and $\widetilde{W}_i(U_t)^\top$ are independent along with $\E\left[\widetilde{W}_i(U_t)^\top\right]=\vec{0}$. Using a covering argument identical to Claim 1, we have
\begin{equation}\label{line:claim2bd}
\begin{aligned}
&\Verts*{\frac{1}{nb}\sum^n_{i=1}\sum^b_{j=1}\left(\langle x'_{i,j},\widetilde{W}_i(U_t)^\top\rangle x'_{i,j}(v_{i,t+1})^\top-\widetilde{W}_i(U_t)^\top(v_{i,t+1})^\top\right)}_2\\
&\leq\frac{4}{3}\max_{a\in\cN^d,b\in\cN^k}\frac{1}{nb}\sum^n_{i=1}\sum^b_{j=1}\left(\langle x'_{i,j},\widetilde{W}_i(U_t)^\top\rangle \langle a,x'_{i,j}\rangle\langle v_{i,t+1},b\rangle-\langle a,\widetilde{W}_i(U_t)^\top\rangle\langle v_{i,t+1},b\rangle\right).
\end{aligned}
\end{equation}
Recall that $\widetilde{W}_i=\left(\frac{1}{nb}\sum^n_{i=1}\sum^b_{j=1}\zeta' _{i,j}x'_{i,j} (v_{i,t+1})^\top\right)_{i,*} \in \re^k$. Conditioning on $\Verts{v_{i,t+1}}_2\leq\frac{5}{4}\Gamma$, we have $\Verts{\widetilde{W}_i}_2\leq\frac{5R\Gamma  \sqrt{24k\log(nb)}}{4\sqrt{nb}}$ with probability at least $1-ke^{-12\log(nb)}\geq 1-e^{-11\log(nb)}$ by the union bound on the components of $\widetilde{W}_i$. Multiplication by $U^\top_t$ on the right does not change this bound given orthonormality. 

We thus condition on the new event
\[\cE=\left\{\Verts{\widetilde{W}_iU^\top_t}_2\leq\frac{5R\Gamma  \sqrt{24k\log(nb)}}{4\sqrt{nb}}\text{ and }\Verts{v_{i,t+1}}_2\leq\frac{5}{4}\Gamma\text{ for all }i\in[n]\right\}\] 
which has probability at least $1-O(n^{-11})$ via our above work and Proposition~\ref{prop:vparambd}. The variable $\langle x'_{i,j},\widetilde{W}_i(U_t)^\top\rangle$ is $\left(\frac{5R\Gamma  \sqrt{24k\log(nb)}}{4\sqrt{nb}}\right)$-sub-Gaussian via conditioning on $\cE$. This means 
\[\frac{1}{nb}\left(\langle x'_{i,j},\widetilde{W}_i(U_t)^\top\rangle \langle a,x'_{i,j}\rangle\langle v_{i,t+1},b\rangle-\langle a,\widetilde{W}_i(U_t)^\top\rangle\langle v_{i,t+1},b\rangle\right)\] 
is $\left(\frac{3R\Gamma^2  \sqrt{24k\log(nb)}}{\sqrt{n^3b^3}}\right)$-sub-exponential. Using an argument identical to how we showed (\ref{eqn:overallAbd}), the inequality (\ref{line:claim2bd}) implies
\begin{equation}\label{eqn:overallnoisebd}
\frac{1}{n}\Verts*{\left(\frac{1}{b}(\cA')^\dag\cA'(\widetilde{W}_i(U_t)^\top)-\widetilde{W}_i(U_t)^\top\right)^\top V_{t+1}}_2\leq\frac{3R\Gamma^2  \sqrt{15\cdot24dk\log (n)\log(nb)}}{(nb)^{\frac{3}{4}}}
\end{equation}
with probability at least $1-O(n^{-10})$. Combining (\ref{eqn:overallAbd}) and (\ref{eqn:overallnoisebd}) via the union bound finishes the second claim.

Combining (\ref{line:splitbound}) with Claims 1 and 2 via the union bound finishes the proof.
\end{proof}
Recall the exact statements of Assumption ~\ref{aspt:samplenum} and \ref{aspt:newaspt}. That is, there exist $c_0,c_1>1$ such that
\[m\geq c_0\left(\frac{\max\{R^2,1\}\cdot\max\{\Gamma^2,1\}\gamma^4k\log^2 n}{E^2_0\sigma^2_{\max,*}}+\frac{ R^2k}{\Gamma^2}\log(nm)\right)T\]
and
\[n\geq c_1\left(\frac{\max\left\{\Delta_{\epsilon,\delta},1\right\}\left(R+\Gamma\right)\Gamma d\sqrt{k}\log^{2} (nm)}{E^2_0\lambda^{2}}+\frac{ R^2\Gamma^2d\log(nm)}{m}\right)T.\]
Lower bounds on the constants $c_0,c_1$ are used many times in the proof for the following lemma. 
\begin{lemre}[Restatement of Lemma~\ref{lem:Rbd0}]
Let $E_0=1-\emph{\text{dist}}^2(U_0,U^*)$ and $\psi=\widetilde{O}\left((  R+\Gamma)  \Gamma \sqrt{dk}\right)$. Suppose Assumption~\ref{aspt:samplenum} and \ref{aspt:newaspt} hold. Then, for any iteration $t$, we have that $P_{t+1}$ is invertible and

\[\Verts{P^{-1}_{t+1}}_2\leq\left(1-\frac{\eta\sigma^2_{\min,*}E_0}{\sqrt{2\log n}}\right)^{-\frac{1}{2}}\]
with probability at least $1-O(n^{-10})$.
\end{lemre}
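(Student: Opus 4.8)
The plan is to start from the \emph{QR} step in Recursion~(\ref{recursion}): since $U_{t+1}$ has orthonormal columns, $\hat{U}_{t+1}=U_{t+1}P_{t+1}$ gives $P_{t+1}^\top P_{t+1}=\hat{U}_{t+1}^\top\hat{U}_{t+1}$, hence $\Verts{P^{-1}_{t+1}}_2=\sigma_{\min}(P_{t+1})^{-1}=\lambda_{\min}(\hat{U}_{t+1}^\top\hat{U}_{t+1})^{-1/2}$, so it is enough to prove $\lambda_{\min}(\hat{U}_{t+1}^\top\hat{U}_{t+1})\geq 1-\frac{\eta\sigma^2_{\min,*}E_0}{\sqrt{2\log n}}$ (this simultaneously yields invertibility of $P_{t+1}$, which is also needed for the principal-angle bound in Lemma~\ref{lem:initiallem}). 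Writing $\hat{U}_{t+1}=U_t-\frac{\eta}{n}H'_t+W_t$ with $H'_t=-\frac{1}{b}(\cA')^\dag\cA'(Q_t)V_{t+1}+\frac{n}{\eta}\xi_t$, $H_t=-\frac{1}{b}(\cA')^\dag\cA'(Q_t)$, $Q_t=V_{t+1}U_t^\top-V^*(U^*)^\top$, and $W_t=\frac{\eta}{nb}\nabla_U\langle\cA'(V_{t+1}U_t^\top),\vec{\zeta}\rangle$, I would expand $\hat{U}_{t+1}^\top\hat{U}_{t+1}$, use $U_t^\top U_t=I_k$, and apply Weyl's inequality. This reduces the claim to upper-bounding the four perturbation quantities $\frac{\eta}{n}\lambda_{\max}(U_t^\top H_t V_{t+1}+V_{t+1}^\top H_t^\top U_t)$, $\lambda_{\max}(U_t^\top\xi_t+\xi_t^\top U_t)$, $\lambda_{\max}(W_t^\top\xi_t+\xi_t^\top W_t)$, $\frac{\eta}{n}\lambda_{\max}(W_t^\top H_t V_{t+1}+V_{t+1}^\top H_t^\top W_t)$, and lower-bounding $\lambda_{\min}(U_t^\top W_t+W_t^\top U_t)$.

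Next I would bound each of these using the machinery already established. The $\xi_t$ terms are handled by Corollary~\ref{cor:singconccor} (equivalently Theorem~\ref{thm:singconc}), giving $\Verts{\xi_t}_2=\widetilde{O}(\eta\hat{\sigma}\sqrt{d})$ with $\hat{\sigma}$ as in Theorem~\ref{thm:privacyparam}; the $W_t$ terms by Proposition~\ref{prop:combined_nablaUVconc}(2), giving $\Verts{W_t}_2=\widetilde{O}\big(\sqrt{\eta^2 R^2\Gamma^2 d/(nb)}\big)$ and, via a $k\times k$ version of the same proposition, the dimension-independent bound $\Verts{U_t^\top W_t}_2=\widetilde{O}\big(\sqrt{\eta^2 R^2\Gamma^2 k/(nb)}\big)$. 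The cross terms are products of these small factors with $\Verts{\tfrac{1}{b}(\cA')^\dag\cA'(Q_t)V_{t+1}}_2\leq\Verts{(\tfrac{1}{b}(\cA')^\dag\cA'-\mathrm{Id})(Q_t)\,V_{t+1}}_2+\Verts{Q_t V_{t+1}}_2$, where the first piece is controlled by Lemma~\ref{lem:lem5} and the second is expanded through $Q_t=U_tU_t^\top U^*(V^*)^\top-U_t F^\top-U^*(V^*)^\top$, $V_{t+1}=V^*(U^*)^\top U_t-F$, $\Verts{V^*}_2=\sqrt{n}\sigma_{\max,*}$, $\Verts{F}_F$ from Lemma~\ref{lem:lem4combined}(2), $\Verts{G^{-1}}_2$ from Lemma~\ref{lem:lem4combined}(1), and $\Verts{v_{i,t+1}}_2\leq\tfrac54\Gamma$ from Proposition~\ref{prop:vparambd}. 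The main term is the first one: I would split $\tfrac{2\eta}{n}a^\top V_{t+1}^\top H_t U_t a$ into the operator-deviation part $\tfrac{1}{b}(\cA')^\dag\cA'(Q_t)-Q_t$ (Lemma~\ref{lem:lem5}) plus $-\tfrac{2\eta}{n}\langle Q_t,V_{t+1}aa^\top U_t^\top\rangle_F$; substituting $V_{t+1}=V^*(U^*)^\top U_t-F$ and using the trace-cyclic property together with $(U_t)_\bot^\top U_t=0$ kills the exact component, leaving only $F$-dependent terms of sizes $\tfrac{\eta}{n}\Verts{F}_2\Verts{V^*}_2$ and $\tfrac{\eta}{n}\Verts{F}_2^2$, so term~1 is at most $4\eta\frac{\bar{\tau}_k}{(1-\bar{\tau}_k)^2}\sigma^2_{\max,*}+4\sqrt{26\eta^2 R^2\sigma^2_{\max,*}\log(nb)/b}$ with $\bar{\tau}_k=\nu\tau_k+\tau'_k/\sigma^2_{\max,*}$.

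Finally, I would combine the five bounds and invoke Assumptions~\ref{aspt:samplenum} and \ref{aspt:newaspt} (the lower bounds on $m$ and $n$), $\eta\leq\frac{1}{2\sigma^2_{\max,*}}$, the choice $\psi=\widetilde{O}((R+\Gamma)\Gamma\sqrt{dk})$, $\hat{\sigma}$ from Theorem~\ref{thm:privacyparam}, and the definitions $\tau_k=c_\tau\sqrt{35k\log n/b}$, $\tau'_k=5\sqrt{13}\sqrt{\Gamma^4 d\log n/(nb)}$, $\nu=\Gamma/\sigma_{\max,*}$, $\gamma=\sigma_{\max,*}/\sigma_{\min,*}$. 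Taking the absolute constants $c_0,c_1$ large enough (on the order of those appearing in Lemma~\ref{lem:exponentialtermbd}), each error term becomes a small fraction of $\eta\sigma^2_{\min,*}E_0/\sqrt{\log n}$, whence $\sigma^2_{\min}(P_{t+1})\geq 1-\eta\sigma^2_{\min,*}E_0/\sqrt{2\log n}>0$, giving invertibility of $P_{t+1}$ and $\Verts{P^{-1}_{t+1}}_2\leq(1-\eta\sigma^2_{\min,*}E_0/\sqrt{2\log n})^{-1/2}$. A union bound over the $O(1)$ many events, each of probability $1-O(n^{-10})$ or better, finishes the argument. The hard part is term~1: one must simultaneously (i) peel off the exact component $Q_t V_{t+1}$ through the trace identities so that only the small $F$-dependent pieces remain, and (ii) carry the label-noise contributions — absent in the realizable FedRep analysis of \cite{collins2021exploiting} — through every estimate, and then balance all five contributions against $\eta\sigma^2_{\min,*}E_0/\sqrt{\log n}$ using only the two sample-size assumptions; the polylog and constant bookkeeping needed to close this is the delicate step.
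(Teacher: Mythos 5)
Your proposal matches the paper's proof essentially step for step: the same reduction via $P_{t+1}^\top P_{t+1}=\hat{U}_{t+1}^\top\hat{U}_{t+1}$, the same expansion into the five Weyl terms, the same handling of $\xi_t$ via Corollary~\ref{cor:singconccor}, $W_t$ via Proposition~\ref{prop:combined_nablaUVconc}, the operator-deviation split via Lemma~\ref{lem:lem5}, the trace-cyclic cancellation of $Q_tV_{t+1}$ leaving only $F$-dependent remainders, and the final balancing against $\eta\sigma^2_{\min,*}E_0/\sqrt{\log n}$ under Assumptions~\ref{aspt:samplenum} and \ref{aspt:newaspt}. The only differences are cosmetic constant factors (e.g., your $4$ versus the paper's $8$ in term~1), which you correctly flag as bookkeeping to be tightened.
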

\begin{proof}
Recall that $F=G^{-1}(GD-C)V^*+\frac{1}{b}G^{-1}\nabla_{V}\langle\cA(V_{t+1}(U_t)^\top),\zeta \rangle$ and  let $\tau_k=c_\tau\sqrt{\frac{35k\log n}{b}},\tau'_k=15\sqrt{13}\sqrt{\frac{\Gamma ^4d\log n}{nb}}$ for some constant $c>0$. Letting $c_0\geq2000c_\tau^2$ in Assumption~\ref{aspt:samplenum}, we have $\frac{1}{1-\tau_k}\leq\frac{4}{3}$ and noting $\Verts{G^{-1}}_2\leq\frac{1}{1-\tau_k}$ by Lemma~\ref{lem:lem4combined} part (1), we have
\begin{equation}\label{eqn:Fbd}
    \Verts{F}_2\leq \frac{4}{3}\nu\tau_k\Verts{V^*}_2+\sqrt{\frac{47R^2n\log(nb)}{b}}
\end{equation}
with probability at least $1-O(n^{-11})$ via the argument Lemma~\ref{lem:lem4combined} part (2) for the spectral norm.

Recall that $Q_t=V_{t+1}(U_{t})^\top - V^* (U^*)^\top$ and  $V_t=V^*(U^*)^\top U_t-F$ by Lemma~\ref{lem:lem1}. Now, denote $H'_t=-\frac{1}{b}(\cA')^\dag\cA'(Q_t)V_{t+1}+{n}\xi_{t},H_t=-\frac{1}{b}(\cA')^\dag\cA'(Q_t)$, and $W_t=\frac{\eta}{nb}\nabla_{U}\langle\cA(V_{t+1}(U_t)^\top),\zeta \rangle$. By Recursion~\ref{recursion}, we have
\begin{equation*}
\begin{aligned}
P^\top_{t+1}P_{t+1}&=\hat{U}^\top_{t+1}\hat{U}_{t+1}\\
&=(U_t)^\top U_t-\frac{\eta}{n}((U_t)^\top H'_t+(H'_t)^\top U_t)+((U_t)^\top W_t+(W_t)^\top  U_t)-\frac{\eta}{n}((W_t)^\top  H'_t+{H'}^\top_tW_t)\\
&+\frac{\eta^2}{n^2}(H'_t)^\top H'_t+(W_t)^\top  W_t\\
\end{aligned}
\end{equation*}
By Weyl's inequality, the above implies
\begin{equation*}
\begin{aligned}
\sigma ^2_{\min}(P_{t+1})&\geq1-\frac{\eta}{n}\lambda_{\max}((U_t)^\top H_tV_{t+1}+(V_{t+1})^\top (H_t)^\top U_t)+\lambda_{\min}((U_t)^\top W_t+(W_t)^\top  U_t)\\
&-\frac{\eta}{n}\lambda_{\max}((W_t)^\top H_tV_{t+1}+(V_{t+1})^\top (H_t)^\top W_t)-\lambda_{\max}((U_t)^\top \xi_{t}+\xi^\top_{t}U_t)\\
&-\lambda_{\max}((W_t)^\top \xi_{t}+\xi^\top_{t}W_t)+\frac{\eta^2}{n^2}\lambda_{\min}((H'_t)^\top H'_t)+\lambda_{\min}((W_t)^\top W_t)\\
&\geq1-\frac{\eta}{n}\lambda_{\max}((U_t)^\top H_tV_{t+1}+(V_{t+1})^\top (H_t)^\top U_t)+\lambda_{\min}((U_t)^\top W_t+(W_t)^\top  U_t)\\
&-\frac{\eta}{n}\lambda_{\max}((W_t)^\top H_tV_{t+1}+(V_{t+1})^\top (H_t)^\top W_t)-\lambda_{\max}((U_t)^\top \xi_{t}+\xi^\top_{t}U_t)\\
&-\lambda_{\max}((W_t)^\top \xi_{t}+\xi^\top_{t}W_t).\\
\end{aligned}
\end{equation*}
To complete this proof we must bound each of these terms individually. That is, we upper bound
\begin{subequations}
\begin{align}
&\frac{\eta}{n}\lambda_{\max}((U_t)^\top H_tV_{t+1}+(V_{t+1})^\top (H_t)^\top U_t)\label{line:Rbd1}\\
&\lambda_{\max}((U_t)^\top \xi_{t}+\xi^\top_{t}U_t)\label{line:Rbd2}\\
&\lambda_{\max}((W_t)^\top \xi_{t}+\xi^\top_{t}W_t)\label{line:Rbd3}\\
&\frac{\eta}{n}\lambda_{\max}((W_t)^\top H_tV_{t+1}+(V_{t+1})^\top (H_t)^\top W_t)\label{line:Rbd4}
\end{align}
\end{subequations}
and lower bound
\begin{align}\label{line:Rbd5}
\lambda_{\min}((U_t)^\top W_t+(W_t)^\top  U_t).
\end{align}
These bounds follow from our previous propositions and lemmas.

Term (\ref{line:Rbd1}) has
\begin{equation*}
\begin{aligned}
&\frac{\eta}{n}\lambda_{\max}((U_t)^\top H_tV_{t+1}+(V_{t+1})^\top (H_t)^\top U_t)\\
&=\max_{\Verts{a}_2=1}\frac{\eta}{n}a^\top (U_t)^\top H_tV_{t+1}+(V_{t+1})^\top (H_t)^\top U_ta\\
&=\max_{\Verts{a}_2=1}\frac{2\eta}{n}a^\top (V_{t+1})^\top H_tU_ta\\
&\leq\max_{\Verts{a}_2=1}\frac{2\eta}{n}a^\top (V_{t+1})^\top\left(\frac{1}{b}(\cA')^\dag\cA'(Q_t)-Q_t\right)U_ta+\max_{\Verts{a}_2=1}\frac{2\eta}{n}a^\top (V_{t+1})^\top Q_tU_ta.
\end{aligned}
\end{equation*}
We have
\[\max_{\Verts{a}_2=1}\frac{2\eta}{n}a^\top (V_{t+1})^\top\left(\frac{1}{b}(\cA')^\dag\cA'(Q_t)-Q_t\right)U_ta\leq2\eta\tau'_k+\frac{3R\Gamma^2  \sqrt{15\cdot24dk\log (n)\log(nb)}}{(nb)^{\frac{3}{4}}}\]
by Lemma~\ref{lem:lem5}. We are able to drop the second term on the right-hand side above later due to its very fast rate of decay in $n,b$. Now
\begin{equation*}
\begin{aligned}
\max_{\Verts{a}_2=1}\frac{2\eta}{n}a^\top (V_{t+1})^\top Q_tU_ta&=\max_{\Verts{a}_2=1}\frac{2\eta}{n}\left\langle Q_t,V_{t+1}aa^\top (U_t)^\top\right\rangle_F\\
&=\max_{\Verts{a}_2=1}\frac{2\eta}{n}\left\langle Q_t,V^*(U^*)^\top U_taa^\top (U_t)^\top\right\rangle_F-\frac{2\eta}{n}\left\langle Q_t,Faa^\top (U_t)^\top\right\rangle_F.\\
\end{aligned}
\end{equation*}
We bound the first term above via

\begin{equation}\label{line:innerprod0}
\begin{aligned}
&\frac{2\eta}{n}\left\langle Q_t,V^*(U^*)^\top U_taa^\top (U_t)^\top\right\rangle_F\\
&=\frac{2\eta}{n}\tr\left(\left(U_t(V_{t+1})^\top-U^*(V^*)^\top\right)V^*(U^*)^\top U_taa^\top (U_t)^\top\right)\\
&=\frac{2\eta}{n}\tr\left(\left(U_t(U_t)^\top U^*(V^*)^\top-U_t(F)^\top-U^*(V^*)^\top\right)V^*(U^*)^\top U_taa^\top (U_t)^\top\right)\\
&=\frac{2\eta}{n}\tr\left(\left(U_t(U_t)^\top -I\right)U^* (V^*)^\top V^*(U^*)^\top U_taa^\top (U_t)^\top\right)-\frac{2\eta}{n}\tr\left(U_t(F)^\top V^*(U^*)^\top U_taa^\top (U_t)^\top\right)\\
&=-\frac{2\eta}{n}\tr\left((F)^\top V^*(U^*)^\top U_taa^\top\right)\\
&\leq\frac{2\eta}{n}\abs{\tr\left((F)^\top V^*(U^*)^\top U_taa^\top\right)}\\
&=\frac{2\eta}{n}\abs{a^\top (F)^\top V^*(U^*)^\top U_ta}\\
&\leq\frac{2\eta}{n}\Verts{a}_2\Verts{(F)^\top V^*(U^*)^\top U_t}_2\Verts{a}_2\\
&\leq\frac{2\eta}{n}\Verts{F}_2\Verts{V^*}_2\\
&\leq\frac{8}{3}\eta\frac{\nu\tau_k}{1-\tau_k}\sigma ^2_{\max,*}+\sqrt{\frac{47\eta^2 R^2\sigma ^2_{\max,*}\log(nb)}{b}} \\
\end{aligned}
\end{equation}
with probability at least $1-O(n^{-11})$ by (\ref{eqn:Fbd}), where equalities 3 and 4 hold by the cycling property of the trace and $((U_t)_\bot U_t)^\top=0$. Then
\begin{equation}\label{line:innerprod1}
\begin{aligned}
&-\frac{2\eta}{n}\left\langle Q_t,Faa^\top (U_t)^\top\right\rangle_F\\
&=-\frac{2\eta}{n}\tr\left(\left(U_t(U_t)^\top U^*(V^*)^\top-U_t(F)^\top-U^*(V^*)^\top\right)Faa^\top (U_t)^\top\right)\\
&=-\frac{2\eta}{n}\tr\left(\left(U_t(U_t)^\top -I\right)U^* (V^*)^\top Faa^\top (U_t)^\top\right)+\frac{2\eta}{n}\tr\left(Faa^\top (U_t)^\top U_t(F)^\top\right)\\
&=\frac{2\eta}{n}a^\top (F)^\top Fa\\
&\leq\frac{2\eta}{n}\Verts{F}^2_2\\
&\leq\frac{16}{3}\eta\frac{\nu^2\tau_k^2}{(1-\tau_k)^2}\sigma ^2_{\max,*}+\frac{154\eta^2 R^2\log(nb)}{b}
\end{aligned}
\end{equation}
with probability at least $1-O(n^{-11})$ where the third equality follows from the cycling property of the trace and $((U_t)_\bot U_t)^\top=0$.
Combining (\ref{line:innerprod0}) and (\ref{line:innerprod1}) gives us
\[\max_{\Verts{a}_2=1}\frac{2\eta}{n}a^\top (V_{t+1})^\top Q_tU_ta\leq8\eta\frac{\nu\tau_k}{(1-\tau_k)^2}\sigma ^2_{\max,*}+\frac{154\eta^2 R^2\log(nb)}{b}\]
given that $\tau^2_k\leq\tau_k$ by selecting $c_0\geq35c^2_\tau$ in Assumption~\ref{aspt:samplenum}. Define $\bar{\tau}_k=\nu\tau_k+\frac{\tau'_k}{\sigma^2_{\max,*}}$.
Letting $c_0,c_1\geq4000$, for (\ref{line:Rbd1}), we have
\begin{equation*}
\begin{aligned}
&\frac{\eta}{n}\lambda_{\max}((U_t)^\top H_tV_{t+1}+(V_{t+1})^\top (H_t)^\top U_t)\leq2\eta\tau'_k+8\eta\frac{\nu\tau_k}{(1-\tau_k)^2}\sigma ^2_{\max,*}+\frac{154\eta^2 R^2\log(nb)}{b}\\
&+\sqrt{\frac{47\eta^2 R^2\sigma ^2_{\max,*}\log(nb)}{b}}+\frac{3R\Gamma^2  \sqrt{15\cdot24dk\log (n)\log(nb)}}{(nb)^{\frac{3}{4}}}\\
&\leq8\eta\frac{\bar{\tau}_k}{(1-\bar{\tau}_k)^2}\sigma ^2_{\max,*}+4\sqrt{\frac{47\eta^2 R^2\max\{\sigma ^2_{\max,*},1\}\log(nb)}{b}}.
\end{aligned}
\end{equation*}

Note that $(U_t)^\top \xi_{t}$ in (\ref{line:Rbd2}) is a $k\times k$ Gaussian matrix with independent columns. Now, by an equivalent statement as Corollary~\ref{cor:singconccor} for a $k\times k$ matrix
\begin{equation*}
\begin{aligned}
\lambda_{\max}((U_t)^\top \xi_{t}+\xi^\top_{t}U_t)
&=\max_{a:\Verts{a}_2=1}a^\top (U_t)^\top \xi_{t}+\xi^\top_{t}U_ta\\
&=2\max_{a:\Verts{a}_2=1}a^\top (U_t)^\top \xi_{t}a\\
&\leq2\Verts{(U_t)^\top \xi_{t}}_2\\
&\leq 2\eta\hat{\sigma}(2\sqrt{k}+w)
\end{aligned}
\end{equation*}
for $w>0$ with probability at least $1-2e^{-\alpha^2}$ for some constant $c'>0$. We select $\alpha=\sqrt{10k\log n}$, which means
\begin{equation*}
\begin{aligned}
\lambda_{\max}((U_t)^\top \xi_{t}+\xi^\top_{t}U_t)\leq 4\eta\hat{\sigma}\sqrt{10k\log n}
\end{aligned}
\end{equation*}
with probability at least $1-O(n^{-10})$.

Now, for (\ref{line:Rbd3})
\begin{equation*}
\begin{aligned}
\lambda_{\max}((W_t)^\top \xi_{t}+\xi^\top_{t}W_t)&=\max_{\Verts{a}_2=1}a^\top((W_t)^\top \xi_{t}+\xi^\top_{t}W_t)a\\
&\leq2\max_{\Verts{a}_2=1}a^\top \xi^\top_{t}W_ta\\
&\leq2\Verts{\xi_{t}}_2\Verts{W_t}_2\\
&\leq 2\eta\hat{\sigma}\sqrt{10d\log n}\left(\frac{4}{3}\sqrt{\frac{2\cdot15R^2\eta^2\Gamma^2 d\log n}{nb}}\right)
\end{aligned}
\end{equation*}
with probability at least $1-O(n^{-10})$ via Corollary~\ref{cor:singconccor} and Proposition~\ref{prop:combined_nablaUVconc} part (1). Now we bound the term (\ref{line:Rbd4}). Note that
\begin{equation}
\begin{aligned}
&\frac{\eta}{n}\lambda_{\max}((W_t)^\top H_tV_{t+1}+(V_{t+1})^\top (H_t)^\top W_t)\leq\frac{2\eta}{n}\Verts*{\frac{1}{b}(\cA')^\dag\cA'(Q_t)V_{t+1}}_2\Verts{W_t}_2\\
&\leq\frac{2\eta}{n}\left(\Verts*{\left(\frac{1}{b}(\cA')^\dag\cA'(Q_t)-Q_t\right)V_{t+1}}_2+\Verts{Q_tV_{t+1}}_2\right)\Verts{W_t}_2\\
&\leq\left(3\eta\tau'_k\text{dist}(U_t,U^*)+\frac{2\eta}{n}\Verts{Q_tV_{t+1}}_2\right)\Verts{W_t}_2
\end{aligned}
\end{equation}
by Lemma~\ref{lem:lem5} and taking $c_0\geq4000c_\tau^2$ and $c_1\geq4000$ from Assumptions~\ref{aspt:samplenum} and \ref{aspt:newaspt} so $2\eta\tau'_k\text{dist}(U_t,U^*)$ is dominant over the noise term. Then
\begin{equation}\label{line:42deqn}
\frac{\eta}{n}\lambda_{\max}((W_t)^\top H_tV_{t+1}+(V_{t+1})^\top (H_t)^\top W_t)\leq\left(2\eta\tau'_k\text{dist}(U_t,U^*)+\frac{2\eta}{n}\Verts{Q_tV_{t+1}}_2\right)\Verts{W_t}_2.
\end{equation}
Note that $2\tau'_k\text{dist}(U_t,U^*)\leq1$ by taking $c_0\geq4000c_\tau^2$ and $c_1\geq4000$ from Assumptions~\ref{aspt:samplenum} and \ref{aspt:newaspt} since $\tau'_k=5\sqrt{13}\sqrt{\frac{\Gamma^4d\log n}{nb}}$. Then, by (\ref{eqn:Fbd}), the orthonormality of $U_t,U^*$, and the definition of $Q_t$
\begin{equation*}
\begin{aligned}
\Verts{Q_tV_{t+1}}_2&\leq\Verts{U_t(U_t)^\top U^*(V^*)^\top-U_t(F)^\top-U^*(V^*)^\top}_2\Verts{V^*(U^*)^\top U_t-F}_2\\
&\leq(2\Verts{V^*}_2+\Verts{F}_2)(\Verts{V^*}_2+\Verts{F}_2)\\
&\leq\left((2+2\nu\tau_k)\Verts{V^*}_2+\sqrt{\frac{47R^2n\log(nb)}{b}}\right)\left((1+2\nu\tau_k)\Verts{V^*}_2+\sqrt{\frac{47R^2n\log(nb)}{b}}\right)\\
&\leq(2+2\nu\tau_k)^2\Verts{V^*}^2_2+\sqrt{\frac{47(2+2\nu\tau_k)^2 R^2\Verts{V^*}^2_2n\log(nb)}{b}}+{\frac{47R^2n\log(nb)}{b}}.
\end{aligned}
\end{equation*}

Then
\begin{equation}\label{line:QVeqn}
\begin{aligned}
\frac{2\eta}{n}\Verts{Q_tV_{t+1}}_2\leq2\eta(2+\nu\tau_k)^2\sigma ^2_{\max,*}+\frac{39}{5}\sqrt{\frac{47\eta^2 R^2\sigma ^2_{\max,*}\log(nb)}{b}}+{\frac{154\eta R^2\log(nb)}{b}}
\end{aligned}
\end{equation}
by selecting the constant $c_0$ in Assumption~\ref{aspt:samplenum} to have $c_0\geq 4000c_\tau^2$ such that $\nu\tau_k\leq\frac{1}{10}$. Recall that
\[\Verts{W_t}_2\leq\frac{4}{3}\sqrt{\frac{2\cdot15\eta^2R^2\Gamma^2d\log n}{nb}}\]
with probability at least $1-O(n^{-10})$ by Proposition~\ref{prop:combined_nablaUVconc}. So, combining (\ref{line:42deqn}) and  (\ref{line:QVeqn})
\begin{equation*}
\begin{aligned}
&\frac{\eta}{n}\lambda_{\max}((W_t)^\top H_tV_{t+1}+(V_{t+1})^\top (H_t)^\top W_t)\\
&\leq\frac{4}{3}\sqrt{\frac{2\cdot15\eta^2R^2\Gamma^2d\log n}{nb}}+ \frac{4}{3}\sqrt{\frac{2\cdot15\eta^2(1+\nu\tau_k)^2  R^2 \Gamma^2d\sigma ^2_{\max,*}\log(nb)}{nb}}\\
&+\frac{4}{3}\sqrt{\frac{2\cdot15\eta^2R^2\Gamma^2d\log n}{nb}}\left(\frac{39}{5}\sqrt{\frac{47\eta^2 R^2\sigma ^2_{\max,*}\log(nb)}{b}}+{\frac{154\eta R^2\log(nb)}{b}}\right)
\end{aligned}
\end{equation*}
with probability at least $1-O(n^{-11})$. Recall $\nu\tau_k\leq\frac{1}{10}$ since $c_0\geq4000c_\tau^2$. Moreover, for $c_1\geq4000$
\begin{equation*}
\begin{aligned}
&\frac{4}{3}\sqrt{\frac{2\cdot15\eta^2 R^2 \Gamma^2d\log(nb)}{nb}}+\frac{4}{3}\sqrt{\frac{2\cdot15\eta^2(2+2\nu\tau_k)^2  R^2 \Gamma^2d\sigma ^2_{\max,*}\log(nb)}{nb}}\\
&+\frac{4}{3}\sqrt{\frac{2\cdot15\eta^2R^2\Gamma^2d\log n}{nb}}\left(\frac{39}{5}\sqrt{\frac{47\eta^2 R^2\sigma ^2_{\max,*}\log(nb)}{b}}+{\frac{154\eta R^2\log(nb)}{b}}\right)\\
&\leq\frac{16}{3}\sqrt{\frac{2\cdot15\eta^2 R^2 \Gamma^2d\max\{\sigma ^2_{\max,*},1\}\log(nb)}{nb}}
\end{aligned}
\end{equation*}
by Assumptions~\ref{aspt:samplenum} and \ref{aspt:newaspt}.

Observe that for (\ref{line:Rbd5})
\[\lambda_{\min}((U_t)^\top W_t+(W_t)^\top  U_t)\geq2\lambda_{\min}((U_t)^\top W_t)\geq-2\abs{\lambda_{\min}((U_t)^\top W_t)}\]
and 
\[-2\abs{\lambda_{\min}((U_t)^\top W_t)}\geq-2\sigma_{\min}((U_t)^\top W_t))\geq-2\sigma_{\max}((U_t)^\top W_t)).\]
Note that concentration inequalities hold for the sum of uncorrelated sub-Gaussian random variables and $U^\top_t$ having orthonormal rows. Then, by a modified version of Lemma~\ref{prop:combined_nablaUVconc} part (2)  for a $k\times k$-dimensional version of $W_t$
\begin{equation*}
\begin{aligned}
\sigma_{\max}((U_t)^\top W_t)
&=\max_{\Verts{a}_2=1}a^\top (U_t)^\top W_ta\\
&\leq\Verts{(U_t)^\top W_t}_2\\
&\leq\frac{4}{3}\sqrt{\frac{2\cdot15\eta^2 R^2 \Gamma^2k\log(nb)}{nb}}
\end{aligned}
\end{equation*}
with probability at least $1-O(n^{-10})$. Notice that our upper bound is independent of the data dimension $d$. Thus
\[\lambda_{\min}((U_t)^\top W_t+(W_t)^\top  U_t)\leq\frac{8}{3}\sqrt{\frac{2\cdot15\eta^2 R^2 \Gamma^2k\log(nb)}{nb}}.\]

Via the above work we have, simultaneously by the union bound, there exists a constant $\hat{c}>0$ such that
\begin{equation}\label{line:allbds}
\begin{aligned}
\frac{\eta}{n}\lambda_{\max}((U_t)^\top H_tV_{t+1}+(V_{t+1})^\top (H_t)^\top U_t)&\leq\hat{c}\eta{\bar{\tau}_k}\sigma ^2_{\max,*}+\hat{c}\sqrt{\frac{\eta^2 R^2\max\{\sigma ^2_{\max,*},1\}\log(nb)}{b}}\\
\lambda_{\max}((U_t)^\top \xi_{t}+\xi^\top_{t}U_t)&\leq \hat{c}\eta\hat{\sigma}\sqrt{k\log n}\\
\lambda_{\max}((W_t)^\top \xi_{t}+\xi^\top_{t}W_t)&\leq  \hat{c}\eta\hat{\sigma}\sqrt{d\log n}\left(\sqrt{\frac{R^2\eta^2\Gamma^2 d\log n}{nb}}\right)\\
\frac{\eta}{n}\lambda_{\max}((W_t)^\top H_tV_{t+1}+(V_{t+1})^\top (H_t)^\top W_t)&\leq\hat{c}\sqrt{\frac{\eta^2 R^2 \Gamma^2d\max\{\sigma ^2_{\max,*},1\}\log(nb)}{nb}}\\
\lambda_{\min}((U_t)^\top W_t+(W_t)^\top  U_t)&\leq\hat{c}\sqrt{\frac{\eta^2 R^2 \Gamma^2k\log(nb)}{nb}}
\end{aligned}
\end{equation}
with probability at least $1-O(n^{-10})$. The value of $\hat{c}$ may change between lines. Its precise value does not matter and this constant is used to simplify our proof. By our choice of $\psi$, we have $\hat{\sigma}=O\left(\frac{(   R+\Gamma)  \Gamma \sqrt{Tdk}\log(nb)}{n\epsilon}\right)$. Putting together the bounds in (\ref{line:allbds}) and  selecting $c_0\geq4000c_\tau^2$ and $c_1\geq4000$ in Assumptions~\ref{aspt:samplenum} and \ref{aspt:newaspt}

\begin{equation*}
\begin{aligned}
\sigma ^2_{\min}(P_{t+1})&\geq1-\hat{c}\eta{\bar{\tau}_k}\sigma ^2_{\max,*}- \hat{c}\eta\hat{\sigma}\sqrt{k\log n}\\
&-\hat{c}\sqrt{\frac{\eta^2 R^2\max\{\sigma ^2_{\max,*},1\}\log(nb)}{b}}- \hat{c}\eta\hat{\sigma}\sqrt{d\log n}\left(\sqrt{\frac{R^2\eta^2\Gamma^2 d\log n}{nb}}\right)\\
&-\hat{c}\sqrt{\frac{\eta^2 R^2 \Gamma^2k\log(nb)}{nb}}-\hat{c}\sqrt{\frac{\eta^2 R^2 \Gamma^2d\max\{\sigma ^2_{\max,*},1\}\log(nb)}{nb}}\\
&\geq1-\hat{c}\eta{\bar{\tau}_k}\sigma ^2_{\max,*}-2\hat{c} \eta\hat{\sigma}\sqrt{d\log n}\\
&-2\hat{c}\sqrt{\frac{\eta^2 R^2\max\{\Gamma ^2,1\}\log(nb)}{b}}-\hat{c}\sqrt{\frac{\eta^2 R^2 \Gamma^2d\max\{\sigma ^2_{\max,*},1\}\log(nb)}{nb}}\\
\end{aligned}
\end{equation*}
with probability at least $1-O(n^{-10})$.
Via our choice of $\hat{\sigma}$,  there is a constant $\hat{c}>0$ such that
\begin{equation*}
\begin{aligned}
\sigma ^2_{\min}(P_{t+1})
&\geq1-\hat{c}\eta{\bar{\tau}_k}\sigma ^2_{\max,*}-\frac{2\hat{c}\eta (R+\Gamma)\Gamma d\sqrt{Tk\log(1.25/\delta)\log^3(nb)}}{n\epsilon}\\
&-2\hat{c}\sqrt{\frac{\eta^2 R^2\max\{\Gamma ^2,1\}\log(nb)}{b}}-\hat{c}\sqrt{\frac{\eta^2 R^2 \Gamma^2d\max\{\sigma ^2_{\max,*},1\}\log(nb)}{nb}}\\
\end{aligned}
\end{equation*}
Recall again that $\tau_k=c_\tau\sqrt{\frac{35k\log n}{b}},$ $\tau'_k=5\sqrt{13}\sqrt{\frac{\Gamma^4d\log n}{nb}}$, and $\nu=\frac{\Gamma}{\sigma_{\max,*}}\geq1$. Then
\begin{equation*}
\begin{aligned}
\eta\bar{\tau}_k\sigma^2_{\max,*}&=\eta\nu\tau_k\sigma^2_{\max,*}+\eta{\tau'_k}\\
&=c_\tau\eta\nu\sigma^2_{\max,*}\sqrt{\frac{35k\log n}{b}}+5\sqrt{13}\eta\nu^2\sigma^2_{\max,*}\sqrt{\frac{d\log n}{nb}}.\\
\end{aligned}
\end{equation*}
Recall the exact statements of Assumption~\ref{aspt:samplenum} and \ref{aspt:newaspt}. That is, there exist $c_0,c_1>1$ such that
\[m\geq c_0\left(\frac{\max\{R^2,1\}\cdot\max\{\Gamma^2,1\}\gamma^4k\log^2 n}{E^2_0\sigma^2_{\max,*}}+\frac{ R^2k}{\Gamma^2}\log(nm)\right)T\]
and
\[n\geq c_1\left(\frac{\max\left\{\Delta_{\epsilon,\delta},1\right\}\left(R+\Gamma\right)\Gamma d\sqrt{k}\log^{2} (nm)}{E^2_0\lambda^2}+\frac{ R^2\Gamma^2d\log(nm)}{m}\right)T.\]
The problem parameters that lower bound $m,n$ now come into use during the following steps. Furthermore, recall that by Assumption~\ref{aspt:client diversity} we know some $\lambda>0$ such that $\lambda\leq\sigma_{\min,*}$. Then, there exists a constant $\hat{c}>0$ such that
\begin{equation*}
\hat{c}\eta{\bar{\tau}_k}\sigma ^2_{\max,*}\leq\frac{\hat{c}\eta\sigma ^2_{\min,*}E_0}{\sqrt{c_0\log n}}+\frac{\hat{c}\eta\lambda^2E^2_0}{\sqrt{c_0c_1\gamma^2k^{\frac{3}{2}}\log^{2}(nm)\log n}}.
\end{equation*}
and thus
\begin{equation}\label{eqn:bartau}
\hat{c}\eta{\bar{\tau}_k}\sigma ^2_{\max,*}\leq\frac{\hat{c}\eta\sigma^2_{\min,*}E_0}{\sqrt{c_0\log n}}+\frac{\hat{c}\eta\sigma^2_{\min,*}E^2_0}{\sqrt{c_0c_1\gamma^2k^{\frac{3}{2}}\log^{2}(nm)\log n}}.
\end{equation}
since $\lambda\leq\sigma_{\min,*}$. Further, this same constant $\hat{c}$ satisfies
\begin{equation*}
\begin{aligned}
&\frac{2\hat{c}\eta(R+\Gamma)\Gamma d\sqrt{Tk\log(1.25/\delta)\log^3(nb)}}{n\epsilon}+  \hat{c}\sqrt{\frac{\eta^2 R^2\max\{\Gamma^2,1\}\log(nb)}{b}}\\
&+\hat{c}\sqrt{\frac{\eta^2 R^2 \Gamma^2d\max\{\sigma ^2_{\max,*},1\}\log(nb)}{nb}}\\
&\leq\frac{2\hat{c} \eta\lambda^2E^2_0}{c_1\sqrt{\log(nm)}}+\hat{c}\eta\sigma^2_{\min,*}E_0 \sqrt{\frac{1}{c_0\max\{\Gamma^2,1\}k\log n}}\\
&+\hat{c}\eta\lambda\sigma_{\min,*}E^2_0\sqrt{\frac{\max\{\sigma ^2_{\max,*},1\}}{c_0c_1 \max\{\Gamma^2,1\}k^{\frac{3}{2}}\log^2(n){\log(nm)}}}
\end{aligned}
\end{equation*}
and thus
\begin{equation}\label{line:triplelowerbd}
\begin{aligned}
&\frac{2\hat{c}\eta(R+\Gamma)\Gamma d\sqrt{Tk\log(1.25/\delta)\log^3(nb)}}{n\epsilon}+  \hat{c}\sqrt{\frac{\eta^2 R^2\max\{\Gamma^2,1\}\log(nb)}{b}}\\
&+\hat{c}\sqrt{\frac{\eta^2 R^2 \Gamma^2d\max\{\sigma ^2_{\max,*},1\}\log(nb)}{nb}}\\
&\leq\frac{2\hat{c} \eta\sigma^2_{\min,*}E^2_0}{c_1\sqrt{\log(nm)}}+\hat{c}\eta\sigma^2_{\min,*}E_0 \sqrt{\frac{1}{c_0\max\{\Gamma^2,1\}k\log n}}\\
&+\hat{c}\eta\sigma^2_{\min,*}E^2_0\sqrt{\frac{\max\{\sigma ^2_{\max,*},1\}}{c_0c_1 \max\{\Gamma^2,1\}k^{\frac{3}{2}}\log^2(n){\log(nm)}}}
\end{aligned}
\end{equation}
since $\lambda\leq\sigma_{\min,*}$. Note that $E^2_0\leq E_0$ since $E_0\in(0,1)$. Then, combining $c_0,c_1\geq\max\{10\sqrt{2}\hat{c},10\sqrt{2}\hat{c}^2, 4000c^2_{\tau},4000\}$ with (\ref{eqn:bartau}) and  (\ref{line:triplelowerbd}) gives us
\begin{equation}\label{line:Pfinal}
\begin{aligned}
\sigma ^2_{\min}(P_{t+1})&\geq1-\frac{\eta\sigma^2_{\min,*}E_0}{\sqrt{2\log n}}
\end{aligned}
\end{equation}
with probability at least $1-O(n^{-10})$.
\end{proof}

\subsection{Private FedRep experiments}\label{app:Bexperiments}
In this subsection we describe the synthetic data experiments we designed to compare our Private FedRep (Algorithm~\ref{alg:PrivateFedRep}) to the Private Alternating Minimization Meta-Algorithm (Priv-AltMin) of \cite{jain2021differentially}. Our comparison is described in Figure~\ref{fig:comparison0} as a graph of population mean square error (MSE) over choice of privacy parameter $\epsilon>0$. Note that we also experimented with non-private AltMin, which performs similarly to the non-private FedRep in Figure~\ref{fig:comparison0}.
\begin{figure}[h]
\centering
\includegraphics[width=.65\linewidth]{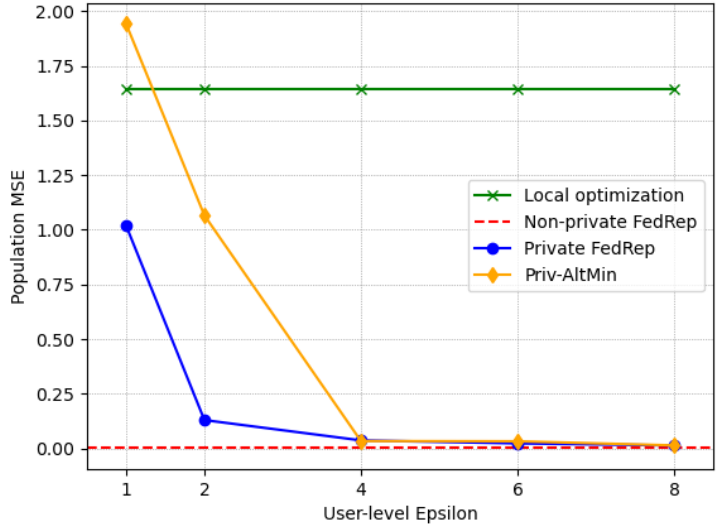}
\caption{Graph of population MSE over choice of privacy parameter $\epsilon\in[1,8]$. Local optimization is done via non-private gradient descent on each user's data separately and their population MSEs averaging over $n$ users.}\label{fig:comparison0}
\end{figure}

The features $x\in\re^d$ of our synthetic data are sampled from a standard normal Gaussian distribution. We select optimal parameters $U^*,v^*_1,\dots,v^*_n$ by sampling $U$ from a $d\times k$-dimensional Gaussian distribution then generating an orthonormal matrix $U^*$ via $(U^*,P)=\text{QR}(U)$ and sampling $v^*_i\sim\cN(0,I_k)$ for all $i\in[n]$. Labels are generated as in Assumption~\ref{aspt:datagen} and we choose Gaussian noise with standard deviation $R=0.01$. Further, both Private FedRep and Priv-AltMin are initialized using an implementation of Algorithm~\ref{alg:newinit}.

Our problem is instantiated with $d=50$, $k=2$, $m=10$, and $n=20,000$. For FedRep we prune our hyperparameters, deciding on $T=5$ and learning rate $\eta=2.5$ with clipping parameter $\psi=10$. Similarly, Priv-AltMin with iterations optimized for $T=5$ and clipping parameter $10^{-4}$. The Gaussian mechanism variance for both algorithms is calculated using the privacy parameter $\Delta_{\epsilon,\delta}=\frac{\sqrt{16\log(1.25/\delta)}}{\epsilon}$ with $\delta=10^{-6}$.

\newpage
\section{Missing Proofs for Section~\ref{sec:JL}}\label{app:C}

We first restate our algorithm along with some initial definitions and results.

Define $\cB_F=\{U\in\re^{d\times k}:\Verts{U}_F\leq\sqrt{2k}\}$. Assume for simplicity that $m$ is even. 
We also partitioned $S_i=S^0_i\cup S^1_i$ where $S^0_i=\{z_{1,j},\dots,z_{\frac{m}{2},j}\}$ and $S^1_i=\{z_{\frac{m}{2}+1,j},\dots,z_{m,j}\}$ for each $i\in[n]$. Further, we denote $S^t =(S^t_1,\dots,S^t _n)$ where $t\in\{0,1\}$. 
Suppose that $S=(S_1,\dots,S_n)\subset\cZ^{nm(d+1)}$ is a sequence of $n$ datasets with $m$ samples each. Let $S_M = ((S_1)_M,\dots (S_n)_M)$ where $(S_i)_M = \{ (Mx_{i,j}, y_{i,j}):j\in[m]\}$ for all $i\in [n]$
\begin{algorithm}[ht]
\caption*{\textbf{Algorithm \ref{alg:JLmethod}} \textbf{Private Representation Learning for Personalized Classification}}
\begin{algorithmic}[1]

\REQUIRE dataset sequences $S^0$ and $S^1$ of equal size, score function $f(U',\ \cdot\ )=-\min_{V\in\cV}\widehat{L}_\rho(U',V;\ \cdot\ )$ over matrices $U'\in\re^{k'\times k},$ privacy parameter $\epsilon>0$, target dimension $k' = O\left(\frac{r^2\Gamma^2\log(nm/\delta)}{\rho^2}\right),$

\STATE Sample $M\in\re^{k'\times d}$ with entries drawn i.i.d uniformly from $\left\{\pm\frac{1}{\sqrt{k'}}\right\}$
\STATE Let $S_M = ((S_1)_M,\dots, (S_n)_M)$ where $(S_i)_M = \left\{(Mx,y): (x, y)\in S_i^0\right\}$ for $i\in [n]$
\STATE Let $\cN^\gamma$ be a Frobenius norm $\gamma$-cover of $\cB_F$
\STATE Run the exponential mechanism over $\cN^\gamma$, privacy parameter $\epsilon$, sensitivity $\frac{1}{n}$, and score function $f(U', S_M)$, to select $\widetilde{U}\in \cN^\gamma$
\STATE Let $U^{\text{priv}}\gets M^\top \widetilde{U}$
\STATE Each user $i\in [n]$ independently computes $v_i^\priv \gets \argmin_{\Verts{v}_2\leq \Gamma} \widehat{L}(U^\priv, v, S_i^1)$ 
\STATE\textbf{Return:} $U^\priv, V^\priv=[v_1^\priv, \dots v_n^\priv]^\top $
\end{algorithmic}
\end{algorithm}

\begin{defnre}[Restatement of Definition~\ref{def:losses}]
Let $(U,v)\in\re^{d\times k}\times\re^k$ and $(x,y)\in\re^d\times\{-1,1\}$ any data point. We define the \textbf{margin loss} as 
\[\ell_\rho(U,v,z)=\mathbbm{1}\left[y\langle x,Uv\rangle\leq\rho\right]\]
and denote the \textbf{0-1 loss}
\[\ell(U,v,z)=\ell_0(U,v,z)=\mathbbm{1}\left[y\langle x,Uv\rangle\leq0\right].\]
\end{defnre}

\begin{defnre}[Restatement of Definition~\ref{def:jldef}]
Let $G\subset\re^d$ be any set of $t$ vectors. Fix $\tau,\beta\in(0,1)$. We call the random matrix $M\in\re^{k'\times d}$ a \textbf{$(t,\tau,\beta)$-Johnson-Lindenstrauss (JL) transform} if for any $u,u'\in G$
\[\abs{\langle Mu,Mu'\rangle-\langle u,u'\rangle}\leq{\tau}\Verts{u}_2\Verts{u'}_2\]
with probability at least $1-\beta$ over $M$. 
\end{defnre}

\begin{prop}[\cite{woodruff2014sketching}]\label{prop:JLnormprop}
Let $G\subset\re^q$ be any set of $t$ vectors. Fix $\tau,\beta\in(0,1)$. For $M\in\re^{k\times q}$ a $(t,\tau,\beta)$-JL transform for any $u\in G$
\[\left(1-{\tau}\right)\Verts{u}^2_2\leq\Verts{Mu}^2_2\leq\left(1+{\tau}\right)\Verts{u}^2_2\]
with probability at least $1-\beta$ over $M$, which holds simultaneously with Definition~\ref{def:jldef}.
\end{prop}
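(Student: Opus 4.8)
The plan is to obtain the two-sided norm bound as the special case $u'=u$ of the inner-product-preservation property that is built directly into the definition of a $(t,\tau,\beta)$-JL transform (Definition~\ref{def:jldef}). No new probabilistic machinery is needed; the entire content is an algebraic rearrangement carried out on the single good event that already witnesses Definition~\ref{def:jldef}.

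First I would fix the random matrix $M$ and let $E$ denote the event, over the randomness of $M$, on which the defining inequality
\[
\abs{\langle Mu,Mu'\rangle-\langle u,u'\rangle}\leq\tau\Verts{u}_2\Verts{u'}_2
\]
holds for all $u,u'\in G$ simultaneously; by Definition~\ref{def:jldef} we have $\Prob[E]\geq 1-\beta$. The argument then proceeds deterministically on $E$, which is precisely what yields the ``holds simultaneously'' clause for free: we never leave the event that certifies Definition~\ref{def:jldef}. On $E$, I would specialize the inequality to $u'=u$ for an arbitrary $u\in G$. Since $\langle Mu,Mu\rangle=\Verts{Mu}_2^2$ and $\langle u,u\rangle=\Verts{u}_2^2$, this gives $\abs{\Verts{Mu}_2^2-\Verts{u}_2^2}\leq\tau\Verts{u}_2^2$, and unpacking the absolute value gives $-\tau\Verts{u}_2^2\leq\Verts{Mu}_2^2-\Verts{u}_2^2\leq\tau\Verts{u}_2^2$, i.e. $(1-\tau)\Verts{u}_2^2\leq\Verts{Mu}_2^2\leq(1+\tau)\Verts{u}_2^2$, which is exactly the claimed bound. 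No union bound over $u\in G$ is required, since $E$ already covers every $u$; thus the bound holds for all $u\in G$ on $E$, and $\Prob[E]\geq 1-\beta$ completes the proof.

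The only point that requires any care — and hence the ``main obstacle,'' though it is a very mild one — is the reading of the quantifier in Definition~\ref{def:jldef}: one must interpret it, as is standard in the JL literature and as the explicit union-bound construction of Lemma~\ref{lem:randJL} produces, so that a single draw of $M$ succeeds for all pairs in $G$ at once with probability at least $1-\beta$. Under that reading the proof above is immediate. If instead the definition were read per-pair, one simply observes that $(u,u)$ is itself an admissible pair, so the per-pair guarantee still delivers the norm bound with probability at least $1-\beta$, and the phrase ``simultaneously with Definition~\ref{def:jldef}'' is then understood for the pair $(u,u)$. Either way the conclusion follows.
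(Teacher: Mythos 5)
Your proof is correct: the paper states this proposition as a cited result from \cite{woodruff2014sketching} without supplying its own proof, and your argument of specializing the inner-product guarantee of Definition~\ref{def:jldef} to the pair $u'=u$ is exactly the standard derivation that underlies it. Your remark about the quantifier placement is apt, and as you note the conclusion follows under either reading.
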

\begin{lemre}[Restatement of Lemma~\ref{lem:randJL}]
Let $\tau,\beta\in(0,1)$. Take $G\subset\re^d$ to be any set of $t$ vectors. Setting $k'=O\left(\frac{\log\left(\frac{t}{\beta}\right)}{ \tau^2}\right)$ for a $k'\times d$ matrix $M$ with entries drawn uniformly and independently from $\left\{\pm\frac{1}{\sqrt{k'}}\right\}$ implies that $M$ is a $(t,\tau,\beta)$-JL transform.
\end{lemre}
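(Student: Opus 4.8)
\textbf{Proof proposal for Lemma~\ref{lem:randJL}.}
This is the classical Johnson--Lindenstrauss guarantee for $\pm$-Rademacher sketching matrices, and the plan is the standard one: reduce inner-product preservation to norm preservation via polarization, prove norm preservation for a single vector by a Bernstein-type concentration bound, and finish with a union bound. Concretely, I would first reduce to unit vectors: both sides of the defining inequality $\abs{\langle Mu,Mu'\rangle-\langle u,u'\rangle}\le\tau\Verts{u}_2\Verts{u'}_2$ are homogeneous of degree one in each of $u$ and $u'$, so (discarding the trivial case of zero vectors) we may replace each $u\in G$ by $u/\Verts{u}_2$ and assume $G$ lies on the unit sphere of $\re^d$. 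Then, by the polarization identity $\langle Mu,Mu'\rangle=\tfrac14\bigl(\Verts{M(u+u')}_2^2-\Verts{M(u-u')}_2^2\bigr)$ and the analogous identity for $\langle u,u'\rangle$, it suffices to control $\Verts{Mv}_2^2$ for every $v$ in the finite set $G':=G\cup\{u+u':u,u'\in G\}\cup\{u-u':u,u'\in G\}$, whose cardinality is $O(t^2)$. Indeed, if $\bigl|\Verts{Mv}_2^2-\Verts{v}_2^2\bigr|\le\tau'\Verts{v}_2^2$ holds for all $v\in G'$ with $\tau'=\tau/2$, then since $\Verts{u\pm u'}_2^2\le4=4\Verts{u}_2\Verts{u'}_2$ on the unit sphere, subtracting the two polarization identities and applying the triangle inequality gives $\abs{\langle Mu,Mu'\rangle-\langle u,u'\rangle}\le 2\tau'=\tau$ simultaneously for all pairs $u,u'\in G$, which is exactly Definition~\ref{def:jldef} (and, with Proposition~\ref{prop:JLnormprop}, the norm statement as well).

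For the single-vector estimate, fix any $v$ and write $\Verts{Mv}_2^2=\sum_{i=1}^{k'}Z_i^2$ where $Z_i=\sum_{j=1}^{d}M_{ij}v_j$. The $Z_i$ are i.i.d., the $M_{ij}$ are mean-zero and independent with $\E[M_{ij}^2]=1/k'$, so $\E[Z_i^2]=\Verts{v}_2^2/k'$ and $\E[\Verts{Mv}_2^2]=\Verts{v}_2^2$. Each $Z_i$ is a scaled Rademacher sum, hence sub-Gaussian with parameter $O(\Verts{v}_2/\sqrt{k'})$; therefore $Z_i^2$ is sub-exponential, and $k'Z_i^2/\Verts{v}_2^2-1$ is a centered, $O(1)$-sub-exponential random variable. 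Applying a standard Bernstein / sub-exponential concentration bound (in the form of Lemma~\ref{lem:bernstein}, used for both tails) to the average $\tfrac1{k'}\sum_i\bigl(k'Z_i^2/\Verts{v}_2^2-1\bigr)=\Verts{Mv}_2^2/\Verts{v}_2^2-1$ yields, for an absolute constant $c>0$,
\[
\Prob\!\left[\,\bigl|\Verts{Mv}_2^2-\Verts{v}_2^2\bigr|\ge\tau'\Verts{v}_2^2\,\right]\le 2\exp\!\bigl(-c\,k'\min(\tau'^2,\tau')\bigr).
\]

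To conclude, choose $k'=C\log(t/\beta)/\tau'^2$ for a large enough absolute constant $C$; since $\tau'\in(0,1)$ we have $\min(\tau'^2,\tau')=\tau'^2$, so the right-hand side above is at most $\beta/\abs{G'}$, and a union bound over the $O(t^2)$ vectors of $G'$ shows that with probability at least $1-\beta$ all the required norms are preserved to relative error $\tau'$, hence all pairwise inner products in $G$ are preserved to additive error $\tau$. Because $\tau'=\tau/2$, this amounts to $k'=O\bigl(\log(t/\beta)/\tau^2\bigr)$, as claimed; the absorption of the $\Theta(t^2)$ union-bound factor is harmless since $\log(t^2/\beta)=O(\log(t/\beta))$. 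The only thing requiring any care is this constant bookkeeping—tracking the factor-two loss from polarization and the $t^2$ from the union bound into the logarithm—so there is no real obstacle; the argument is entirely routine, and since the statement is quoted from \cite{bassily2022differentially} (and is classical, cf.\ \cite{woodruff2014sketching}), one may alternatively simply cite it.
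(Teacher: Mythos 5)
Your proposal is correct. The paper does not prove this lemma at all — it is imported verbatim from \cite{bassily2022differentially} (and is the classical distributional JL guarantee for Rademacher sketches), so there is no in-paper argument to compare against. Your polarization-plus-Bernstein-plus-union-bound derivation is the standard proof and is sound, including the constant bookkeeping (the factor-two loss from polarization and the absorption of the $O(t^{2})$ union-bound factor into $\log(t/\beta)$); the concluding remark that one may simply cite the result is exactly what the paper does.
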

\begin{lem}[Lemma D.1 \cite{bassily2022differentially}]\label{lem:inequalities}
Fix $\beta\in(0,1)$. Suppose $U\in\re^{d\times k}$ and that $x_{i,j}\in\re^d$, $v_i\in\re^k$ for all $(i,j)\in[n]\times[m]$. Let $M\in\re^{ k'\times d}$ be a $((n+1)m,\gamma,\beta/2)$-JL transform in the sense of Proposition~\ref{prop:JLnormprop}. Then, there exists a constant $c\geq1$ such that, with probability at least $1-\beta$ over the randomness of $M$, we have
\setcounter{equation}{0}
\begin{equation}
\Verts*{Mx_{i,j}}^2_{2}\leq \left(1+c\sqrt{\frac{\log\left(nm/\beta\right)}{k'}}\right)\Verts*{x_{i,j}}^2_2    
\end{equation}

\begin{equation}
\Verts*{MU}^2_{F}\leq \left(1+c\sqrt{\frac{\log\left(nm/\beta\right)}{k'}}\right)\Verts{U}^2_F 
\end{equation}

\begin{equation}
\begin{aligned}
\abs{v_i^\top U^\top M^\top Mx_{i,j}-v_i^\top U^\top x_{i,j}}
&\leq c\Verts*{Uv_i}_2\Verts*{x_{i,j}}_2\sqrt{\frac{\log\left(nm/\beta\right)}{k'}}.    
\end{aligned}
\end{equation}
for all $(i,j)\in[n]\times[m]$ simultaneously.
\end{lem}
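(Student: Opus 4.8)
The plan is to instantiate the JL guarantees on a single finite set built from the data and the model. Let $u_1,\dots,u_k$ denote the columns of $U$ and set
\[
G=\{x_{i,j}:(i,j)\in[n]\times[m]\}\cup\{Uv_i:i\in[n]\}\cup\{u_1,\dots,u_k\},
\]
so $\abs{G}\le nm+n+k$. Since $k\le nm$, the quantity $\log(\abs{G}/\beta)$ equals $\log((n+1)m/\beta)$ up to an absolute constant factor, so by Lemma~\ref{lem:randJL} the same $k'\times d$ matrix $M$ that is assumed to be a $((n+1)m,\gamma,\beta/2)$-JL transform is — up to a constant factor in the distortion — also a JL transform for $G$; enlarging the constant $c$ if necessary, we take $G$ to be the set witnessing the hypothesis. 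By Definition~\ref{def:jldef} together with Proposition~\ref{prop:JLnormprop}, with probability at least $1-\beta$ over $M$ the following hold simultaneously for all $u,u'\in G$:
\[
(1-\gamma)\Verts{u}_2^2\le\Verts{Mu}_2^2\le(1+\gamma)\Verts{u}_2^2,\qquad
\abs{\langle Mu,Mu'\rangle-\langle u,u'\rangle}\le\gamma\,\Verts{u}_2\Verts{u'}_2.
\]

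Conditioning on this event, inequality~(1) is immediate from the upper norm bound applied to $x_{i,j}\in G$. For inequality~(2) I would write $\Verts{MU}_F^2=\tr(U^\top M^\top MU)=\sum_{l=1}^k\Verts{Mu_l}_2^2\le(1+\gamma)\sum_{l=1}^k\Verts{u_l}_2^2=(1+\gamma)\Verts{U}_F^2$, using the upper norm bound on each column $u_l\in G$. For inequality~(3), note
\[
v_i^\top U^\top M^\top Mx_{i,j}-v_i^\top U^\top x_{i,j}=\langle M(Uv_i),Mx_{i,j}\rangle-\langle Uv_i,x_{i,j}\rangle,
\]
and apply the inner-product bound with $u=Uv_i\in G$, $u'=x_{i,j}\in G$ to obtain the bound $\gamma\,\Verts{Uv_i}_2\Verts{x_{i,j}}_2$. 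Finally, to express $\gamma$ in the stated form: Lemma~\ref{lem:randJL} shows a matrix with entries uniform over $\{\pm 1/\sqrt{k'}\}$ is a $(t,\tau,\delta)$-JL transform once $k'=\Theta(\log(t/\delta)/\tau^2)$, so the achievable distortion with $t\le 2nm$ and $\delta=\beta/2$ satisfies $\gamma\le c\sqrt{\log(nm/\beta)/k'}$ for an absolute constant $c\ge 1$; substituting this into the three bounds and absorbing constants yields exactly~(1)--(3), holding simultaneously for all $(i,j)$ with probability at least $1-\beta$.

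The calculations themselves are elementary; the points needing care are the construction of $G$ and the cardinality/probability bookkeeping. In particular, the tight factor $\Verts{Uv_i}_2$ in~(3) — as opposed to the looser $\Verts{v_i}_2\Verts{U}_F$ one gets by expanding over columns and using Cauchy--Schwarz — forces the vectors $Uv_i$ themselves, not merely the columns of $U$, to lie in $G$, which is why $G$ is slightly larger than the nominal $(n+1)m$ vectors; the reconciliation uses only that the logarithm of the cardinality is unchanged up to constants, which is harmless inside the $\widetilde O$/constant $c$. For the probability, Proposition~\ref{prop:JLnormprop} guarantees the norm bounds on the same event as the Definition~\ref{def:jldef} inner-product bounds, which has failure probability at most $\beta/2$ since $M$ is a $((n+1)m,\gamma,\beta/2)$-JL transform, so the overall failure probability stays below $\beta$. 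Threading the distortion parameter and the failure probability through the enlarged set $G$ while preserving the factor-of-two slack is the main (mild) obstacle.
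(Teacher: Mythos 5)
The paper does not prove this statement: it is imported verbatim as Lemma D.1 of \cite{bassily2022differentially}, so there is no in-paper proof to compare against. Your argument is the standard one and is correct: putting the $x_{i,j}$, the vectors $Uv_i$, and the columns of $U$ into a single finite set $G$, invoking Definition~\ref{def:jldef} and Proposition~\ref{prop:JLnormprop} on the event that the JL guarantees hold for $G$, and reading off (1) from norm preservation, (2) from $\Verts{MU}_F^2=\sum_l\Verts{Mu_l}_2^2$, and (3) from the identity $v_i^\top U^\top M^\top Mx_{i,j}=\langle M(Uv_i),Mx_{i,j}\rangle$ applied with the pair $(Uv_i,x_{i,j})$ — you are right that the factor $\Verts{Uv_i}_2$ forces $Uv_i$ itself into $G$. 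The only point I would tighten is the cardinality reconciliation: since $\abs{G}\le nm+n+k$ can exceed $(n+1)m$, an \emph{arbitrary} $((n+1)m,\gamma,\beta/2)$-JL transform need not cover $G$; your fix — that the $M$ actually used is the random sign matrix of Lemma~\ref{lem:randJL}, for which the same $k'$ handles any set of size $O(nm)$ at the cost of a constant in the distortion absorbed into $c$ — is the right one and should be stated as relying on that instantiation rather than on the abstract hypothesis alone. This matches how the lemma is consumed downstream (e.g.\ in Proposition~\ref{prop:doubleineq}), where the bounds are applied for the fixed minimizing $v_i$'s, consistent with the lemma being stated for fixed $v_i$.
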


Recall that $\cU$ is the space of orthonormal $d\times k$ matrices and $\cV$ the set of $n\times k$ matrices with columns whose Euclidean norms are bounded by $\Gamma>0$. Throughout the following results we assume that $\gamma\leq c\sqrt{\frac{\log\left(nm/\beta\right)}{k'}}$ for some constant $c\geq1$ and a target dimension $k'$ for a JL transform. Furthermore, whenever we define a JL transform $M$, we assume that it preserves the norm of data points $x_{i,j}$ for all $(i,j)\in[n]\times[m]$ and some fixed matrix in $U\in\cU$. Let $\cN^\gamma$ be a Frobenius $\gamma$-cover of the $\sqrt{2k}$-radius Frobenius ball $\cB_F$. By cover we mean that $\cN^\gamma$ contains the center points of Frobenius balls of $\gamma$-radius whose union contain $\cB_F$. Note as well that $\cN^\gamma\subset\cB_F$.
\setcounter{equation}{46}
\begin{prop}\label{prop:doubleineq}
Let $M\in\re^{ k'\times d}$ be a $((n+1)m,\gamma,\beta/2)$-JL transform in the sense of Proposition~\ref{prop:JLnormprop}. Assume $c^2\log\left(nm/\beta\right)\leq k'$ and that $x_{i,j}$ is a sequence of $r$-bounded feature vectors for each $(i,j)\in[n]\times[m]$. Suppose $\cN^\gamma$ is a Frobenius norm $\gamma$-cover of the $k'\times k$-dimensional $\sqrt{2k}$ radius ball. Moreover, let $U\in\cU$ and $\widetilde{U}\in\cN^\gamma$ where $\Verts{\widetilde{U}-MU}_F\leq\gamma$. Then, there exists a constant $c
\geq1$ where, with probability at least $1-\beta$ over the randomness of $M$, we have
\begin{equation*}
\begin{aligned}
\abs{{v}_i^\top \widetilde{U}^\top Mx_{i,j}-v_i^\top U^\top x_{i,j}}&\leq (\sqrt{2}+1)cr\Gamma\sqrt{\frac{\log\left(nm/\beta\right)}{k'}}
\end{aligned}
\end{equation*}
for any $V=[v_1,\dots,v_n]\in\cV$ and all $(i,j)\in[n]\times[m]$ simultaneously.
\end{prop}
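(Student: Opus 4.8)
The plan is to bound $\abs{v_i^\top \widetilde{U}^\top M x_{i,j} - v_i^\top U^\top x_{i,j}}$ by splitting it into two pieces via the triangle inequality: the ``JL distortion'' term $\abs{v_i^\top U^\top M^\top M x_{i,j} - v_i^\top U^\top x_{i,j}}$, and the ``cover'' term $\abs{v_i^\top \widetilde{U}^\top M x_{i,j} - v_i^\top U^\top M^\top M x_{i,j}}$. Note that $\widetilde{U}^\top M x_{i,j} = (M\widetilde{U})^\top$ would not be right; instead I should insert the intermediate quantity $v_i^\top (MU)^\top M x_{i,j} = v_i^\top U^\top M^\top M x_{i,j}$, so that the cover term involves the difference $\widetilde{U} - MU$ contracted against $M x_{i,j}$. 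Thus
\[
\abs{v_i^\top \widetilde{U}^\top M x_{i,j} - v_i^\top U^\top x_{i,j}} \leq \abs{v_i^\top U^\top M^\top M x_{i,j} - v_i^\top U^\top x_{i,j}} + \abs{v_i^\top (\widetilde{U} - MU)^\top M x_{i,j}}.
\]

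For the first (JL distortion) term, I would apply inequality (3) of Lemma~\ref{lem:inequalities} directly. Since $U$ has orthonormal columns, $\Verts{Uv_i}_2 = \Verts{v_i}_2 \leq \Gamma$, and since the feature vectors are $r$-bounded, $\Verts{x_{i,j}}_2 \leq r$. This gives a bound of $cr\Gamma\sqrt{\log(nm/\beta)/k'}$, holding for all $(i,j)$ simultaneously with probability at least $1-\beta/2$ (absorbing constants). For the second (cover) term, I would use Cauchy--Schwarz twice: $\abs{v_i^\top (\widetilde{U} - MU)^\top M x_{i,j}} \leq \Verts{v_i}_2 \Verts{(\widetilde{U} - MU)^\top M x_{i,j}}_2 \leq \Verts{v_i}_2 \Verts{\widetilde{U} - MU}_F \Verts{M x_{i,j}}_2$. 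Here $\Verts{v_i}_2 \leq \Gamma$, $\Verts{\widetilde{U} - MU}_F \leq \gamma$ by choice of $\widetilde{U}$ in the cover, and $\Verts{M x_{i,j}}_2 \leq \sqrt{1 + c\sqrt{\log(nm/\beta)/k'}}\,\Verts{x_{i,j}}_2 \leq \sqrt{2}\,r$ by inequality (1) of Lemma~\ref{lem:inequalities} together with the assumption $c^2\log(nm/\beta) \leq k'$ (which forces the distortion factor to be at most $2$). This yields a bound of $\sqrt{2}\,r\Gamma\gamma$, and then invoking the standing assumption $\gamma \leq c\sqrt{\log(nm/\beta)/k'}$ turns this into $\sqrt{2}\,cr\Gamma\sqrt{\log(nm/\beta)/k'}$.

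Adding the two bounds gives $(\sqrt{2}+1)cr\Gamma\sqrt{\log(nm/\beta)/k'}$, which is exactly the claimed inequality. The probability bookkeeping is the only place requiring mild care: both Lemma~\ref{lem:inequalities} inequalities (1) and (3) hold simultaneously on the same high-probability event over the randomness of $M$ (that event being the one where $M$ acts as a genuine $((n+1)m,\gamma,\beta/2)$-JL transform with the stated norm-preservation on all the $x_{i,j}$ and on the columns of $U$), so a single union bound suffices and the final failure probability is $\beta$. I do not expect any genuine obstacle here; the statement is essentially a mechanical combination of Lemma~\ref{lem:inequalities}, Cauchy--Schwarz, and the two standing smallness assumptions on $\gamma$ and $k'$. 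The only thing to be careful about is keeping the right matrix identities straight---in particular that $\widetilde{U}$ lives in $\re^{k'\times k}$ and is compared to $MU \in \re^{k'\times k}$, not to $U$ itself---and making sure the orthonormality of $U$ is what lets us replace $\Verts{Uv_i}_2$ by $\Verts{v_i}_2 \leq \Gamma$ in the application of inequality (3).
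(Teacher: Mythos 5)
Your proof is correct and follows essentially the same approach as the paper's: the same triangle-inequality split into a JL-distortion piece (handled by part (3) of Lemma~\ref{lem:inequalities} together with orthonormality of $U$ and $r$-boundedness) and a cover piece (handled by Cauchy--Schwarz, part (1) of Lemma~\ref{lem:inequalities}, and the standing bound $\gamma \leq c\sqrt{\log(nm/\beta)/k'}$). If anything, your probability bookkeeping is slightly cleaner---you correctly note that all parts of Lemma~\ref{lem:inequalities} hold on a common event with probability at least $1-\beta$---and you use the assumption $c^2\log(nm/\beta)\leq k'$ exactly as stated in the proposition, whereas the paper's write-up momentarily assumes the stronger $c^2r^2\Gamma^2\log(nm/\beta)\leq k'$.
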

\begin{proof}

Note that since $U$ has orthonormal columns, we have $\Verts*{Uv_i}_2=\Verts*{v_i}_2\leq \Gamma$ for all $j$. So
\begin{equation}\label{line:prop41line0}
\begin{aligned}
\abs{v_i^\top U^\top M^\top Mx_{i,j}-v_i^\top U^\top x_{i,j}}&\leq cr\Gamma\sqrt{\frac{\log\left(nm/\beta\right)}{k'}}   
\end{aligned}
\end{equation}
with probability at least $1-\frac{1}{2}\beta$ by part (3) of Lemma~\ref{lem:inequalities}. %
Recall that $\gamma\leq c\sqrt{\frac{\log\left(nm/\beta\right)}{k'}}$ for some constant $c\geq1$. Assume $c^2r^2\Gamma^2\log\left(nm/\beta\right)\leq k'$, which implies $\gamma\leq 1$. Let $\cB_F$ be the $k'\times k$-dimensional Frobenius ball of radius $\sqrt{2k}$. We define $\cN^\gamma$ to be a Frobenius norm $\gamma$-cover of $\cB_F$. Note 
$\Verts{MU}_F\leq\sqrt{2k}$ with probability at least $1-\frac{1}{2}\beta$ by part (2) of Lemma~\ref{lem:inequalities}. That is, with probability at least $1-\frac{1}{2}\beta$, there exists $\widetilde{U}\in\cN^\gamma$ such that $\Verts{\widetilde{U}-MU}\leq\gamma$. 
Let $\widetilde{U}\in\cN^\gamma$ be within $\gamma$ Frobenius-distance of $MU$ conditioned on the event $\Verts{MU}_F\leq\sqrt{2k}$.

Now, there exists a constant $c\geq 1$ such that
\begin{equation*}
\begin{aligned}
\abs{v_i^\top \widetilde{U}^\top Mx_{i,j}-v_i^\top U^\top M^\top Mx_{i,j}}
&\leq\Verts*{v_i}_2\Verts*{\widetilde{U}^\top Mx_{i,j}-U^\top M^\top Mx_{i,j}}_2\\
&\leq\Verts*{v_i}_2\Verts{\widetilde{U}-MU}_F\Verts*{Mx_{i,j}}_2\\
&\leq\sqrt{2}r\Gamma\gamma\\
&\leq \sqrt{2}cr\Gamma\sqrt{\frac{\log\left(nm/\beta\right)}{k'}}
\end{aligned}
\end{equation*}
where the third inequality holds by part (1) of Lemma~\ref{lem:inequalities} along with our choice of $\widetilde{U}$ and the fourth inequality by our choice of $\gamma$. Hence
\begin{equation}\label{line:prop41line1}
\abs{{v}_i^\top \widetilde{U}^\top Mx_{i,j}-v_i^\top U^\top M^\top Mx_{i,j}}\leq \sqrt{2}cr\Gamma\sqrt{\frac{\log\left(nm/\beta\right)}{k'}}
\end{equation}
with probability at least $1-\frac{1}{2}\beta$. Combining (\ref{line:prop41line0}) and (\ref{line:prop41line1}) with the union bound and triangle inequality completes the proof.
\end{proof}

\begin{lemre}[Restatement of Lemma~\ref{lem:JLprivbd}]
Fix $\epsilon,\rho>0,\beta\in(0,1)$. Algorithm~\ref{alg:JLmethod} is $(\epsilon,0)$-user-level DP. Sample $S\sim\cD^m$. Then, Algorithm~\ref{alg:JLmethod} returns $U^\emph{\priv}$ from input $S$ such that
\begin{equation*}
\begin{aligned}
\min_{V\in\cV}\widehat{L}(U^\emph{\priv},V;S^0)&\leq \min_{(U,V)\in\cU\times\cV}\widehat{L}_\rho(U,V;S^0)+\widetilde{O}\left(\frac{r^2\Gamma^2k}{\epsilon\rho^2n}\right)
\end{aligned}
\end{equation*}
with probability at least $1-\beta$ over the randomness of $S$ and the internal randomness of the algorithm.
\end{lemre}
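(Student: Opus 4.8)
The plan is to push everything through the identity $\langle x,\,M^{\top}\widetilde U v\rangle=\langle Mx,\,\widetilde U v\rangle$, which turns the $d$-dimensional problem into the $k'$-dimensional problem on the compressed data $(S^{0})_{M}$, and then to combine the standard exponential-mechanism accuracy bound over the cover $\cN^{\gamma}$ with the fact that a cover element lying near $M\bar U$ inherits, through the JL transform, the margin of the empirical margin-loss optimum $(\bar U,\bar V)$.

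For privacy I would first observe that $M$ is drawn independently of the data, hence is public randomness; conditioned on $M$, the map $S\mapsto\widetilde U$ is exactly the exponential mechanism with score $f(\cdot,S_{M})$ and sensitivity $s=\tfrac1n$, since $0\le\ell_{\rho}\le1$ forces $|f(\widetilde U,S_{M})-f(\widetilde U,S'_{M})|\le\tfrac1n$ whenever $S,S'$ differ in one user's dataset. So the selection of $\widetilde U$ is $(\epsilon,0)$-DP, $U^{\priv}=M^{\top}\widetilde U$ is post-processing, and in the billboard model each $v_{i}^{\priv}$ is computed locally by user $i$; hence Algorithm~\ref{alg:JLmethod} is $(\epsilon,0)$-user-level DP.

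For utility, let $(\bar U,\bar V)$ with $\bar V=[\bar v_{1},\dots,\bar v_{n}]^{\top}\in\cV$ attain (up to arbitrarily small slack) $\mathrm{OPT}_{\rho}:=\min_{(U,V)\in\cU\times\cV}\widehat L_{\rho}(U,V;S^{0})$. Taking the target dimension $k'=\Theta\big(r^{2}\Gamma^{2}\log(nm/\beta)/\rho^{2}\big)$ and the cover radius $\gamma=\Theta(\rho/(r\Gamma))$, Lemma~\ref{lem:randJL} makes $M$ a $((n+1)m,\gamma,\beta/2)$-JL transform, Lemma~\ref{lem:inequalities}(2) gives $M\bar U\in\cB_{F}$, and Proposition~\ref{prop:doubleineq} yields (with probability $\ge1-\beta/2$ over $M$) some $\widetilde U_{0}\in\cN^{\gamma}$ with $\|\widetilde U_{0}-M\bar U\|_{F}\le\gamma$ and $|\bar v_{i}^{\top}\widetilde U_{0}^{\top}Mx_{i,j}-\bar v_{i}^{\top}\bar U^{\top}x_{i,j}|\le\rho/2$ for all $(i,j)$. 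Thus $y_{i,j}\langle x_{i,j},\bar U\bar v_{i}\rangle>\rho$ forces $y_{i,j}\langle Mx_{i,j},\widetilde U_{0}\bar v_{i}\rangle>\rho/2>0$, so pointwise the margin-$\rho/2$ loss (and the $0$-$1$ loss) of $(\widetilde U_{0},\bar v_{i})$ on $(Mx_{i,j},y_{i,j})$ is dominated by the margin-$\rho$ loss of $(\bar U,\bar v_{i})$ on $z_{i,j}$; running the exponential mechanism's score at the margin-$\rho/2$ level on the compressed data, this gives $\min_{V}\widehat L_{\rho/2}(\widetilde U_{0},V;(S^{0})_{M})\le\widehat L_{\rho/2}(\widetilde U_{0},\bar V;(S^{0})_{M})\le\mathrm{OPT}_{\rho}$. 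Applying the exponential-mechanism guarantee \cite{mcsherry2007mechanism} (with probability $\ge1-\beta/2$), the selected $\widetilde U$ satisfies $\min_{V}\widehat L_{\rho/2}(\widetilde U,V;(S^{0})_{M})\le\mathrm{OPT}_{\rho}+\tfrac{2}{\epsilon n}\ln(|\cN^{\gamma}|/\beta)$; since $\widehat L\le\widehat L_{\rho/2}$ and $\min_{V}\widehat L(U^{\priv},V;S^{0})=\min_{V}\widehat L(\widetilde U,V;(S^{0})_{M})$, chaining these inequalities proves the bound. Plugging $|\cN^{\gamma}|=O\big((\sqrt k/\gamma)^{k'k}\big)$ with the chosen $k',\gamma$ gives $\ln|\cN^{\gamma}|=\widetilde O(r^{2}\Gamma^{2}k/\rho^{2})$, hence the additive term $\widetilde O\big(r^{2}\Gamma^{2}k/(\epsilon\rho^{2}n)\big)$; a union bound over the JL event and the exponential-mechanism event yields overall failure probability $\beta$.

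The main obstacle I expect is precisely this margin bookkeeping: one has to coordinate the JL target dimension, the cover radius, and the margin used in the exponential mechanism's score so that the combined JL-plus-cover distortion is a strict fraction of the comparator margin $\rho$, so that a cover element near $M\bar U$ genuinely attains value $\le\mathrm{OPT}_{\rho}$ while the left-hand side of the stated bound stays a $0$-$1$ empirical loss — the $0$-$1$-versus-margin domination is exactly what reconciles the two sides. The remaining, more routine, point is verifying $\ln|\cN^{\gamma}|=\widetilde O(r^{2}\Gamma^{2}k/\rho^{2})$, which is where the $r^{2}\Gamma^{2}$ factor and the absence of any dependence on $d$ come from.
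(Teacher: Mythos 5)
Your proposal is correct and follows essentially the same route as the paper: JL-compress, cover the compressed ball, run the exponential mechanism over the cover, and use the margin-$\rho$ slack of the empirical optimizer to absorb the JL-plus-cover distortion. The one place you diverge is the intermediate loss used in the exponential mechanism's score: you run it at margin $\rho/2$ and finish with $\widehat L\le\widehat L_{\rho/2}$, whereas the paper's proof drops straight to the $0$-$1$ loss $\widehat L$ after the JL step; both choices are valid ways of absorbing the distortion $a\rho<\rho$, and yours is arguably cleaner since it keeps the score a margin loss, consistent with Algorithm~\ref{alg:JLmethod}'s description (which states the score as $-\min_V \widehat L_\rho$ but could not work literally at margin $\rho$ without halving the parameter).
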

\begin{proof}
Let $M\in\re^{k'\times d}$ be a $((n+1)m,\gamma,\beta/4)$-JL transform in the sense of Proposition~\ref{prop:JLnormprop}. Further, let $\cB_F\subseteq\re^{k'\times k}$ be the Frobenius ball of radius $\sqrt{2k}$. Recall 
that $\gamma\leq c\sqrt{\frac{\log\left(nm/\beta\right)}{k'}}$ for a constant $c\geq1$. Assume $c^2\log\left(nm/\beta\right)\leq k'$, which implies $\gamma\leq 1$. 

Define $V_{U'}\in\argmin_{V\in\cV}\widehat{L}_\rho(U',V;S^0)$ for each $U'\in\re^{d\times k}$ and fix $U\in\argmin_{U'\in\cU}\widehat{L}_\rho(U',V_{U'};S^0)$. Denote the columns of $V_{U}$ as $v_1,\dots,v_n$. Let $\cN^\gamma$ be a Frobenius norm $\gamma$-cover over $\mathcal{B}_F$. We have $\Verts{MU}_F\leq\sqrt{2k}$ with probability at least $1-\frac{1}{4}\beta$ by part (2) of Lemma~\ref{lem:inequalities}. Choose $\gamma=\frac{a\rho}{(\sqrt{2}+1)cr\Gamma}$ for some constant $a\in(0,1)$ 
along with $\hat{U}\in\cN^\gamma$ within Frobenius distance $\gamma$ of $MU$ conditioned on the event $\Verts{MU}_F\leq\sqrt{2k}$. 
Then, by Proposition~\ref{prop:doubleineq}, for all $(i,j)\in[n]\times[m]$ simultaneously, we have
\[\abs{v_i^\top \hat{U}^\top Mx_{i,j}-v_i^\top U^\top x_{i,j}}\leq a\rho\]
with probability at least $1-\frac{1}{2}\beta$. Assuming $1-a\rho\geq 0$, the above implies, with probability at least $1-\frac{1}{2}\beta$, that
\begin{equation}\label{eqn:marginlossineq}
\widehat{L}(M^\top \hat{U},V_{M^\top \hat{U}};S^0)\leq \widehat{L}(M^\top \hat{U},V_{U};S^0)\leq \widehat{L}_\rho(U,V_{U};S^0)
\end{equation}
where the left-hand inequality holds by $V_{M^\top \hat{U}}$ being a minimizer for $\widehat{L}(M^\top \hat{U},\ \cdot\;S^0)$.

Let $\bar{U}\in\argmin_{U'\in\cB_F}\widehat{L}(M^\top U',V_{M^\top U'};S^0)$. Then, by definition of $\bar{U}$ and the fact that $\cN^\gamma\subset\cB_F$, we have 
\begin{equation}\label{line:intermediatestep}
\widehat{L}(M^\top \bar{U},V_{M^\top \bar{U}};S^0)\leq \widehat{L}(M^\top \hat{U},V_{M^\top \hat{U}};S^0).
\end{equation}
Let $(S^0_i)_M=\left\{(Mx_{i,j},y_{i,j}):j\in\left[m/2\right]\right\}$ for all $i\in[n]$ and $S_M=((S^0_i)_M)_{i\in[n]}$. This definition and the characterization of our losses in Definition~\ref{def:losses} imply $\widehat{L}(\bar{U},V_{M^\top \bar{U}};S^0_M)=\widehat{L}(M^\top \bar{U},V_{M^\top \bar{U}};S^0)$. Via the exponential mechanism over $\cN^\gamma$ given score function $-\widehat{L}(U',V_{M^\top U'};S^0_M)$ for $U'\in\cN^\gamma$ and the usual empirical loss guarantees for the exponential mechanism, we obtain $U^\priv\in\re^{d\times k}$ where $U^\priv=M^\top \widetilde{U}$ for some $\widetilde{U}\in\cN^\gamma$ such that
\[\widehat{L}(U^\priv,V_{U^\priv};S^0)=\widehat{L}(\widetilde{U},V_{M^\top \widetilde{U}};S^0_M)\leq \widehat{L}(\bar{U},V_{M^\top \bar{U}};S^0_M)+O\left(\frac{\log\abs{\cN^\gamma}}{\epsilon n}\right)\]
with probability at least $1-\frac{1}{2}\beta$. This gives us
\begin{equation}\label{eqn:Empexpbd}
\widehat{L}(U^\priv,V_{U^\priv};S^0)\leq \widehat{L}(M^\top \bar{U},V_{M^\top \bar{U}};S^0)+O\left(\frac{r^2\Gamma^2k\log\left(\frac{r\Gamma\sqrt{k}}{\rho}\right)\log\left(\left(nm/\beta\right)\right)}{\epsilon \rho^2n}\right)
\end{equation}
since $\abs{\cN^{\gamma}}=O\left(\left(\frac{\sqrt{k}}{\gamma}\right)^{k'k}\right)$ and $k'=O\left(\frac{r^2\Gamma^2\log\left(\left(nm/\beta\right)\right)}{\rho^2}\right)$ by our choice of $\gamma$. Combining (\ref{eqn:marginlossineq}), (\ref{line:intermediatestep}), and (\ref{eqn:Empexpbd}) via the union bound along with recalling our definition $V_{U'}\in\argmin_{V\in\cV}\widehat{L}_\rho(U',V;S^0)$ finishes the proof.
\end{proof}

\begin{thmre}[Restatement of Theorem~\ref{thm:probexcessloss}]
Fix $\epsilon,\rho>0,\beta\in(0,1)$. Algorithm~\ref{alg:JLmethod} is $(\epsilon,0)$-user-level DP in the billboard model. Sample user data $S\sim\cD^m$. Then, Algorithm~\ref{alg:JLmethod} returns $U^\emph{\priv},V^\emph{\priv}$ from input $S$ such that
\begin{equation*}
\begin{aligned}
&L(U^\emph{\priv},V^\emph{\priv};\cD)\leq \min_{(U,V)\in\cU\times\cV}\widehat{L}_\rho(U,V;S^0)+\widetilde{O}\left(\frac{r^2\Gamma^2k}{n\epsilon\rho^2}+\sqrt{\frac{r^2\Gamma^2}{m\rho^2}}\right)
\end{aligned}
\end{equation*}
with probability at least $1-\beta$ over the randomness of $S$ and the internal randomness of the algorithm.
\end{thmre}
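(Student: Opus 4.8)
The plan is to combine the empirical guarantee of Lemma~\ref{lem:JLprivbd} with a uniform-convergence argument carried out in the reduced dimension $k'$, exploiting the sample split $S=S^0\cup S^1$. Write $U^\priv=M^\top\widetilde U$ with $\widetilde U\in\cN^\gamma$, and let $(\cD_i)_M$ denote the law of $(Mx,y)$ when $(x,y)\sim\cD_i$, with $\cD_M=((\cD_1)_M,\dots,(\cD_n)_M)$. Since $x^\top U^\priv v=(Mx)^\top\widetilde U v$, the $0$--$1$ loss is invariant under the map: $L(U^\priv,V;\cD)=L(\widetilde U,V;\cD_M)$ and $\widehat L(U^\priv,V;S^t)=\widehat L(\widetilde U,V;S^t_M)$ for $t\in\{0,1\}$ and every $V$. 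All estimates will therefore be performed on the dimension-reduced data. First I would invoke Lemma~\ref{lem:JLprivbd} to get, with probability $1-\beta/2$,
\[
\min_{V\in\cV}\widehat L(U^\priv,V;S^0)\le\min_{(U,V)\in\cU\times\cV}\widehat L_\rho(U,V;S^0)+\widetilde O\!\left(\frac{r^2\Gamma^2k}{n\epsilon\rho^2}\right),
\]
and note that $U^\priv$ is measurable with respect to $S^0$ alone (the exponential mechanism is run on $S_M$, built from $S^0$), so $S^1$ is a fresh sample.

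Next, the hypothesis class $\mathcal H_{\widetilde U}=\{x'\mapsto\mathrm{sign}(v^\top\widetilde U^\top x'):\|v\|_2\le\Gamma\}$ on $\re^{k'}$ is a subclass of homogeneous linear classifiers in $\re^{k'}$ and hence has VC dimension at most $k'+1$. Let $v_i^\priv=\argmin_{\|v\|_2\le\Gamma}\widehat L(U^\priv,v;S^1_i)$ be the algorithm's output and $\bar v_i\in\argmin_{\|v\|_2\le\Gamma}\widehat L(U^\priv,v;S^0_i)$, with $\bar V=[\bar v_1,\dots,\bar v_n]^\top$. Conditioning on $S^0$ (which fixes $U^\priv$, $\widetilde U$, and $\bar V$) and applying a standard VC-dimension–based uniform convergence bound to the independent sample $(S^1_i)_M$ for each $i$, then union-bounding over $i\in[n]$ and averaging, yields with probability $1-\beta/4$ that both $\bigl|L(U^\priv,V^\priv;\cD_M)-\widehat L(U^\priv,V^\priv;S^1)\bigr|$ and $\bigl|L(U^\priv,\bar V;\cD_M)-\widehat L(U^\priv,\bar V;S^1)\bigr|$ are $\widetilde O(\sqrt{k'/m})$. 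Since $v_i^\priv$ minimizes the empirical $0$--$1$ loss on $S^1_i$ we have $\widehat L(U^\priv,V^\priv;S^1)\le\widehat L(U^\priv,\bar V;S^1)$, so chaining the two estimates gives $L(U^\priv,V^\priv;\cD)=L(U^\priv,V^\priv;\cD_M)\le L(U^\priv,\bar V;\cD_M)+\widetilde O(\sqrt{k'/m})$.

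Finally I would apply a second VC uniform-convergence bound (probability $1-\beta/4$), now comparing $L(U^\priv,\bar V;\cD_M)$ with $\widehat L(U^\priv,\bar V;S^0)$, and use that $\bar V$ minimizes over $S^0$ so that $\widehat L(U^\priv,\bar V;S^0)=\min_{V\in\cV}\widehat L(U^\priv,V;S^0)$. Combining this with the previous paragraph and the displayed consequence of Lemma~\ref{lem:JLprivbd} via a union bound, and substituting $k'=O\!\big(r^2\Gamma^2\log(nm/\delta)/\rho^2\big)$ so that $\sqrt{k'/m}=\widetilde O\big(\sqrt{r^2\Gamma^2/(m\rho^2)}\big)$, yields the claimed bound. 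Privacy is immediate: $U^\priv$ is produced $(\epsilon,0)$-user-level DP by Lemma~\ref{lem:JLprivbd}, and in the billboard model each client derives $v_i^\priv$ from the public $U^\priv$ and its own data alone, so the overall output remains $(\epsilon,0)$-user-level DP.

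The main obstacle is the second half: arranging the uniform-convergence step so the generalization terms scale with $k'$ rather than $d$. This needs (i) the loss invariance under the JL map, so all comparisons happen in $\re^{k'}$; (ii) the observation that $U^\priv$ depends only on $S^0$, which makes the $S^1$-bound a genuine fresh-sample estimate; and (iii) threading the $S^0$-minimizer $\bar V$ through both a fresh-sample ($S^1$) VC bound and an in-sample ($S^0$) VC bound, so that the right-hand side collapses to $\min_{V\in\cV}\widehat L(U^\priv,V;S^0)$, to which Lemma~\ref{lem:JLprivbd} applies. One also has to split the failure probability $\beta$ across the JL and exponential-mechanism events inside Lemma~\ref{lem:JLprivbd} and the two VC events, which is routine.
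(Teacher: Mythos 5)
Your proposal is correct and follows essentially the same approach as the paper: the same three-term decomposition (compare $V^{\priv}$ to the $S^0$-minimizer $\bar V$ in population via a fresh-sample VC bound on $S^1$, generalize $\bar V$ between population and $S^0$-empirical via a second VC bound, and connect the $S^0$-empirical loss to the margin objective via Lemma~\ref{lem:JLprivbd}), the same use of the loss invariance $x^\top U^{\priv} v = (Mx)^\top\widetilde U v$ to do all VC estimates in dimension $k'$, and the same billboard-model privacy argument. The only deviations are cosmetic (how the failure probability $\beta$ is split across the union bound).
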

\begin{proof}
Let $V^\priv\in\cV$ be the matrix with columns $v^\priv_i\in\argmin_{\Verts{v}\leq\Gamma}\widehat{L}(U^\priv,v;S^1_{i})$ and $\hat{V}\in\cV$ with columns $\hat{v}_i\in\argmin_{\Verts{v}\leq\Gamma}\widehat{L}( U^\priv,v;S^0_{i})$ for each $i\in[n]$. We first note that
\begin{equation}\label{line:initialequality}
\begin{aligned}
    &L(U^{\priv},V^{\priv};\cD) -  \min_{(U,V)\in\cU\times\cV}\widehat{L}_\rho(U,V;S^0) \\
    &= \left(L(U^{\priv},V^{\priv};\cD) - L(U^{\priv},\hat{V};\cD) \right)
    + \left(L(U^{\priv},\hat{V};\cD) - \widehat{L}(U^{\priv}, \hat{V}, S^0)\right) \\
    &+ \left(\widehat{L}(U^{\priv}, \hat{V}, S^0) - \min_{(U,V)\in\cU\times\cV}\widehat{L}_\rho(U,V;S^0)\right)
\end{aligned}
\end{equation}
Let $M\in\re^{k'\times d}$ be a $((n+1)m,\gamma,\beta/6)$-JL transform in the sense of Proposition~\ref{prop:JLnormprop}. Let $\gamma=c\sqrt{\frac{\log\left(nm/\beta\right)}{k'}}$ for a constant $c>0$ and $\gamma=O\left(\frac{\rho}{r\Gamma}\right)$. Assume $c^2\log\left(nm/\beta\right)\leq k'$, which implies $\gamma\leq 1$. Denote by $\cB_F$ the $k'\times k$ Frobenius ball of radius $\sqrt{2k}$. 
Let $\cN^\gamma$ be a $\gamma$-cover of $\cB_F$. Using Lemma~\ref{lem:JLprivbd}, we have
\begin{equation}\label{eqn:initialineq}
\widehat{L}(U^\priv,\hat{V};S^0)- \min_{(U,V)\in\cU\times\cV}\widehat{L}_\rho(U,V;S^0) \leq\widetilde{O}\left(\frac{r^2\Gamma^2k}{\epsilon\rho^2n}\right)
\end{equation}
with probability at least $1-\frac{1}{3}\beta$ over the randomness of $M$.

\textbf{Claim 1:}\quad We have
\begin{align*}
L( U^\priv,V^\priv;\cD)-L(U^\priv,\hat{V};\cD) \leq \widetilde{O}\left(\sqrt{\frac{k'}{m}}\right)
\end{align*}
with probability at least $1-\frac{1}{3}\beta$. 

By the definition of $V^\priv$, we have
\begin{equation}\label{eqn:empineq}
\widehat{L}(U^\priv,V^\priv;S^1)\leq \widehat{L}(U^\priv,\hat{V};S^1).
\end{equation}
Our proof strategy is to obtain generalization error bounds for the parameters $(U^\priv,V^\priv)$ and $(U^\priv,\hat{V})$ with respect to the loss $\widehat{L}(\ \cdot\ ;S^1)$, which we leverage to prove the claim. We first analyze the generalization properties with respect to $\ell$, $\cD$ of our $2$-layer linear networks induced by the matrix product $Uv$ for any $U\in\cB_F$ and $v\in\re^{k'}$ with $\Verts{v}_2\leq\Gamma$. We denote the family of $2$-layer linear networks induced by the matrix product $Uv$ as $\mathcal{L}$ and define $\cB_{\Gamma}\subseteq\re^{k'}$ to be the centered $\sqrt{2k}\Gamma$-radius Euclidean ball. Let $\mathcal{H}_0$ be the space of binary classifiers induced by taking the sign of the functionals $\langle\ \cdot\ ,Uv\rangle$ with $Uv\in\mathcal{L}$. Similarly, define $\mathcal{H}$ to be the space of binary linear classifiers induced by functions $\langle\ \cdot\ ,w\rangle$ for all $w\in\cB_{\Gamma}$. 

Since $\mathcal{L}\subseteq\cB_{\Gamma}$ each functional $\langle\ \cdot\ ,Uv\rangle$ must also be contained in the space of functionals $\langle\ \cdot\ ,w\rangle$ with $w\in\cB_{\Gamma}$. This naturally implies $\mathcal{H}_0\subseteq\mathcal{H}$. Hence, the VC dimension of $\mathcal{H}_0$ is no larger than the VC dimension of $\mathcal{H}$, i.e. at most $k'+1$. Recall $\cN^\gamma\subseteq\cB_F$ and that, by the description of Algorithm~\ref{alg:JLmethod}, there is a particular $\widetilde{U}\in\cN^\gamma$ such that $U^\priv=M^\top\widetilde{U}$. Then, we have $\widetilde{U}\in\cB_F$ and thus the binary classifier induced by $\widetilde{U}v$ is in $\mathcal{H}_0$ for any $v$ with $\Verts{v}_2\leq\Gamma$. 

To garner a generalization guarantee, we use the fact that the VC dimension of $\mathcal{H}_0$ is $O(k')$. 
Recall that $(S_i)_M=\{(Mx_{i,j},y_{i,j}):(x_{i,j},y_{i,j})\in S_i\}$ for each $i\in[n]$. Denote $S_M=((S_i)_M)_{i\in[n]}$ and define the sequence of distributions $\cD_M=((\cD_1)_M,\dots,(\cD_n)_M)$ where for, each $i\in[n]$, we have that $x'\sim(\cD_i)_M$ has the same distribution as $Mx$ with $x\sim\cD_i$.  
Thus, we obtain, from the VC dimension generalization error bounds on $\mathcal{H}$ that, for each $i\in[n]$, the following
\begin{equation*}
\begin{aligned}
\abs{L(\widetilde{U},v^\priv_i;(\cD_i)_M)-\widehat{L}(\widetilde{U},v^\priv_i;(S^1_i)_M)}& \leq\widetilde{O}\left(\sqrt{\frac{k'}{m}}\right)
\end{aligned}
\end{equation*}
and
\begin{equation*}
\begin{aligned}
\abs{L(\widetilde{U},\hat{v}_i;(\cD_i)_M)-\widehat{L}(\widetilde{U},\hat{v}_i;(S^1_i)_M)}& \leq \widetilde{O}\left(\sqrt{\frac{k'}{m}}\right)
\end{aligned}
\end{equation*}
with probability at least $1-\frac{1}{6n}\beta$. Then, by the union bound and taking the arithmetic mean over the $n$ users
\begin{equation*}
\begin{aligned}
\abs{L(\widetilde{U},V^\priv;\cD_M)-\widehat{L}(\widetilde{U},V^\priv;S^1_M)}& \leq \widetilde{O}\left(\sqrt{\frac{k'}{m}}\right)
\end{aligned}
\end{equation*}
and
\begin{equation*}
\begin{aligned}
\abs{L(\widetilde{U},\hat{V};\cD_M)-\widehat{L}(\widetilde{U},\hat{V};S^1_M)}& \leq \widetilde{O}\left(\sqrt{\frac{k'}{m}}\right)
\end{aligned}
\end{equation*}
with probability at least $1-\frac{1}{3}\beta$. Via the inner product that characterizes our losses in Definition~\ref{def:losses}, we have $L(\widetilde{U},\hat{V};\cD_M)=L( U^\priv,\hat{V};\cD)$ and $L(\widetilde{U},V^\priv;\cD_M)=L( U^\priv,V^\priv;\cD)$. Using the above inequalities and (\ref{eqn:empineq}), we obtain
\begin{equation*}
\begin{aligned}
L( U^\priv,V^\priv;\cD)- L( U^\priv,\hat{V};\cD)\leq\widetilde{O}\left(\sqrt{\frac{k'}{m}}\right)
\end{aligned}
\end{equation*}
with probability at least $1-\frac{1}{3}\beta$, which proves Claim 1. 

Another application of the VC bound and the union bound gives us, with probability at least $1-\frac{1}{3}\beta$, that
\begin{equation}\label{eqn:finalgenbd}
L(U^\priv,\hat{V};\cD) - \widehat{L}(U^\priv,\hat{V};S^0)\leq\widetilde{O}\left(\sqrt{\frac{k'}{m}}\right).
\end{equation}
Combining (\ref{eqn:initialineq}), Claim 1, and (\ref{eqn:finalgenbd}) via the union bound, and 
recalling that $k'=O\left(\frac{r^2\Gamma^2\log(nm/\beta)}{\rho^2}\right)$ by our choice of $\gamma$, finishes the proof.
\end{proof}

\end{document}